
\documentclass{article}

\usepackage{microtype}
\usepackage{graphicx}
\usepackage{subfigure}
\usepackage{booktabs} 

\usepackage{hyperref}



\usepackage[accepted]{icml2025}

\usepackage{amsmath}
\usepackage{amssymb}
\usepackage{mathtools}
\usepackage{amsthm}

\usepackage[capitalize,noabbrev]{cleveref}

\theoremstyle{plain}
\newtheorem{theorem}{Theorem}[section]
\newtheorem{proposition}[theorem]{Proposition}
\newtheorem{lemma}[theorem]{Lemma}
\newtheorem{corollary}[theorem]{Corollary}
\theoremstyle{definition}

\newtheorem{assumption}[theorem]{Assumption}
\theoremstyle{remark}
\newtheorem{remark}[theorem]{Remark}


\icmltitlerunning{Clipping Improves Adam-Norm and AdaGrad-Norm}

\usepackage{amssymb, amsmath, amsthm, latexsym}
\usepackage{url}
\usepackage{tabularx}
\usepackage{paralist}
\usepackage{mathtools}

\usepackage{bbm} 
\usepackage{bm}
\usepackage{wrapfig}
\usepackage{makecell}
\usepackage{multirow}
\usepackage{booktabs}

\usepackage{nicefrac}       

\usepackage[flushleft]{threeparttable} 

\usepackage{caption}
\usepackage{multirow}
\usepackage{colortbl}
\definecolor{bgcolor}{rgb}{0.8,1,1}
\definecolor{bgcolor2}{rgb}{0.8,1,0.8}
\definecolor{niceblue}{rgb}{0.0,0.19,0.56}

\usepackage{hyperref}
\hypersetup{colorlinks,linkcolor={blue},citecolor={niceblue},urlcolor={blue}}

\newcommand*{\R}{{\mathbb R}}
\newcommand*{\E}{{\mathbb E}}

\newcommand*{\PP}{{\mathbb P}}

\renewcommand{\|}{\parallel}

\newcommand\ev[1]{\left \langle #1 \right \rangle}

\newcommand{\norm}[1]{\left\lVert#1\right\rVert}
\newcommand{\sqn}[1]{{\left\lVert#1\right\rVert}^2}
\newcommand{\abs}[1]{\left\lvert#1\right\rvert}

\newcommand{\eqdef}{\coloneqq}

\usepackage{pifont}
\definecolor{PineGreen}{RGB}{0,110,51}
\definecolor{BrickRed}{RGB}{143,20,2}

\usepackage{subfigure}
\usepackage{graphicx}

\newcolumntype{Y}{>{\centering\arraybackslash}X}

\usepackage{xspace}


\definecolor{mycolor}{RGB}{0,150,70}
\newcommand{\algname}[1]{{\color{PineGreen}\sf  #1}\xspace}

\newcommand{\circledOne}{\text{\ding{172}}}
\newcommand{\circledTwo}{\text{\ding{173}}}
\newcommand{\circledThree}{\text{\ding{174}}}
\newcommand{\circledFour}{\text{\ding{175}}}
\newcommand{\circledFive}{\text{\ding{176}}}

\usepackage[colorinlistoftodos,bordercolor=orange,backgroundcolor=orange!20,linecolor=orange,textsize=scriptsize]{todonotes}


\newcommand{\cD}{{\cal D}}

\newcommand{\cL}{{\cal L}}

\newcommand{\cN}{{\cal N}}
\newcommand{\cO}{{\cal O}}
\newcommand{\cP}{{\cal P}}


\newcommand{\EE}{\mathbb{E}}

\def\clip{\texttt{clip}}

\def\clip{\texttt{clip}}

\usepackage{hyperref}

\usepackage{makecell}

\usepackage{accents}
\newlength{\dhatheight}

\usepackage{pgfplotstable} 
\usetikzlibrary{automata, positioning, arrows, shapes, fit, calc, intersections}
\usepgfplotslibrary{statistics}
\include{figure}

\newcommand{\revision}[1]{{\color{black}#1}}

\newcommand{\revisionn}[1]{{\color{black}#1}}

\newcommand{\revisionicml}[1]{{\color{black}#1}}

\usepackage{enumitem}

\begin{document}

\twocolumn[
\icmltitle{Clipping Improves Adam-Norm and AdaGrad-Norm \\ when the Noise Is Heavy-Tailed}



\icmlsetsymbol{equal}{*}

\begin{icmlauthorlist}
\icmlauthor{Savelii Chezhegov}{mipt,ispras,sber}
\icmlauthor{Yaroslav Klyukin}{mipt}
\icmlauthor{Andrei Semenov}{epfl}
\icmlauthor{Aleksandr Beznosikov}{mipt,ispras,innopolis,ranepa}
\icmlauthor{Alexander Gasnikov}{innopolis,mipt,skoltech}
\icmlauthor{Samuel Horv\'ath}{mbzuai}
\icmlauthor{Martin Tak\'a\v{c}}{mbzuai}
\icmlauthor{Eduard Gorbunov}{mbzuai}
\end{icmlauthorlist}

\icmlaffiliation{mipt}{Moscow Institute of Physics and Technology, Russia}
\icmlaffiliation{ispras}{Ivannikov Institute for System Programming RAS, Russia}
\icmlaffiliation{sber}{Sber AI Lab, Russia}
\icmlaffiliation{innopolis}{Innopolis University, Russia}
\icmlaffiliation{ranepa}{The Russian Presidential Academy of National Economy and Public Administration, Russia}
\icmlaffiliation{skoltech}{Skolkovo Institute of Science and Technology, Russia}
\icmlaffiliation{mbzuai}{Mohamed bin Zayed University of Artificial Intelligence, UAE}
\icmlaffiliation{epfl}{Machine Learning and Optimization Laboratory (MLO), EPFL, Lausanne, Switzerland}

\icmlcorrespondingauthor{Eduard Gorbunov}{eduard.gorbunov@mbzuai.ac.ae}

\icmlkeywords{Machine Learning, ICML, Optimization, AdaGrad, Adam, Clipping, High-Probability Convergence, Heavy-Tailed Noise}

\vskip 0.3in
]



\printAffiliationsAndNotice{}  

\begin{abstract}
Methods with adaptive stepsizes, such as \algname{AdaGrad} and \algname{Adam}, are essential for training modern Deep Learning models, especially Large Language Models. Typically, the noise in the stochastic gradients is heavy-tailed for the later ones. Gradient clipping provably helps to achieve good high-probability convergence for such noises. However, despite the similarity between \algname{AdaGrad}/\algname{Adam} and \algname{Clip-SGD}, the current understanding of the high-probability convergence of \algname{AdaGrad}/\algname{Adam}-type methods is limited in this case. In this work, we prove that \algname{AdaGrad}/\algname{Adam} (and their delayed version) can have provably bad high-probability convergence if the noise is heavy-tailed. We also show that gradient clipping fixes this issue, i.e., we derive new high-probability convergence bounds with polylogarithmic dependence on the confidence level for \algname{AdaGrad-Norm} and \algname{Adam-Norm} with clipping and with/without delay for smooth convex/non-convex stochastic optimization with heavy-tailed noise. We extend our results to the case of \algname{Clip-AdaGrad}/\algname{Clip-Adam} with delayed stepsizes. Our empirical evaluations highlight the superiority of clipped versions of \algname{AdaGrad}/\algname{Adam} in handling the heavy-tailed noise.
\end{abstract}

\section{Introduction}\label{section:intro}

Stochastic first-order optimization methods such as Stochastic Gradient Descent (\algname{SGD}) \citep{robbins1951stochastic} are the methods of choice in training modern Machine Learning (ML) and Deep Learning (DL) models \citep{shalev2014understanding, goodfellow2016deep}. There are multiple reasons for that, including but not limited to their simplicity, computation cost, memory usage, and generalization. However, standard \algname{SGD} is rarely used due to its sensitivity to the choice of stepsize. Therefore, methods such as \algname{AdaGrad} \citep{streeter2010less, duchi2011adaptive} and \algname{Adam} \citep{kingma2014adam}, which use adaptive\footnote{Throughout the paper, we use the word ``adaptivity'' in its general meaning: stepsizes are adaptive if they depend on the (stochastic) gradients or function values. We emphasize that, in this sense, an adaptive method can still have parameters affecting its convergence.} stepsizes, are much more popular in the DL community \revision{\citep{vaswani2017attention,you2019large,nikishina2022cross,
li2022sp2,abdukhakimov2024stochastic,abdukhakimov2023sania,li2024enhancing,schaipp2023momo,loizou2021stochastic,moskvoretskii2024large,moskvoretskii-etal-2024-low,shi2023ai}}.
In particular, \algname{Adam}-type methods are not just easier to tune but they also achieve better results in terms of the model performance than \algname{SGD} in the training of Large Language Models (LLMs) \citep{devlin2019bert, zhang2020adaptive}.

In the attempt to explain the later phenomenon, \citet{zhang2020adaptive} consider the noise distribution in the stochastic gradients appearing in the pre-training of the BERT model \citep{devlin2019bert} 
and show that (i) the gradient noise is heavy-tailed in this case, (ii) \algname{Adam} significantly outperforms \algname{SGD} (with momentum), (iii) \algname{Clip-SGD} \citep{pascanu2013difficulty} also converges better than \algname{SGD} for such problems, and (iv) \algname{Clip-SGD} is provably convergent (in-expectation) when the noise has bounded $\alpha$-th moment for some $\alpha \in (1,2]$ while \algname{SGD} can diverge for $\alpha < 2$. Moreover, gradient clipping also plays a central role in the recent advances on the \emph{high-probability convergence} of stochastic methods under the heavy-tailed noise \citep{gorbunov2020stochastic, cutkosky2021high, sadiev2023high, nguyen2023improved}. Taking into account the similarities between \algname{Adam} and \algname{Clip-SGD} (the former one can be seen as \algname{Clip-SGD} with momentum and iteration-dependent clipping level), one can conjecture that \algname{Adam} enjoys good theoretical high-probability convergence when the gradient noise is heavy-tailed. If this was true, it would be perfectly aligned with the observations from \citep{zhang2020adaptive} about the connection between the noise in the gradients and \algname{Adam}'s performance. Moreover, some recent works show that \algname{AdaGrad}/\algname{Adam} have provable convergence under generalized smoothness assumptions \citep{faw2023beyond, wang2023convergence, li2023convergence, wang2024provable}. Since \algname{Clip-SGD} has similar convergence properties and since some authors explicitly mention that in this regard \algname{Adam} and \algname{Clip-SGD} are similar\footnote{\citet{pan2023toward} write in the abstract: \emph{``We
conclude that the sharpness reduction effect of adaptive coordinate-wise scaling is the reason for Adam’s success in practice.''} In addition, \citet{zhou2020towards} mention in the discussion of the related work: \emph{``... adaptation in ADAM provides a clipping effect.''} }, it is natural to conjecture that clipping is not needed in \algname{Adam}/\algname{AdaGrad}.

However, there are no theoretical results showing the high-probability convergence with \emph{polylogarithmic dependence on the confidence level} of \algname{Adam} under the heavy-tailed noise and even in the case of the bounded variance. Even for simpler \revision{``twin''}\footnote{The existing convergence results for \algname{Adam} \revision{often require} the choice of parameters that make \algname{Adam} very similar to \algname{AdaGrad} with momentum \citep{defossez2022simple}; see also Section~\ref{section:related}.} such as \algname{AdaGrad} there exists a similar gap in the literature. Moreover, \citet{mosbach2020stability} apply gradient clipping even for \algname{Adam} in the fine-tuning of BERT and ALBERT \citep{lan2019albert}  models. However, \citet{mosbach2020stability} do not report the results that can be achieved by \algname{Adam} without clipping. Therefore, it remains unclear whether and when the gradient clipping is needed for \algname{AdaGrad}/\algname{Adam} and whether \algname{AdaGrad}/\algname{Adam} enjoy desirable high-probability convergence under the heavy-tailed noise.

In this work, we address this gap in the literature, i.e., we consider the following questions:
\begin{gather*}
    \textit{Does the high-probability complexity of \algname{Adam}/\algname{AdaGrad}}\\
    \textit{without clipping have polylogarithmic dependence}\\
    \textit{on the confidence level under the heavy-tailed noise?}\\
    \textit{Does clipping improve the convergence of \algname{AdaGrad}/\algname{Adam}}\\
    \textit{under the heavy-tailed noise?}
\end{gather*}
We provide a negative answer to the first question and a positive answer to the second one.

\subsection{Our Contributions}\label{section:contribution}

The main contributions of this work are summarized below.

\begin{itemize}[leftmargin=*]
    \item \textbf{Negative results for \algname{Adam} and \algname{AdaGrad}.} We show that the high-probability complexities of \algname{Adam} and \algname{AdaGrad} and their variants with delay by \citet{li2020high} do not have polylogarithmic dependence on the confidence level in the worst case when the noise is heavy-tailed. In particular, we design an example of a convex stochastic optimization problem such that the noise is heavy-tailed and the high-probability convergence complexity of \algname{Adam}/\algname{AdaGrad} has the inverse-power dependence on the target accuracy and confidence level.

    \item \textbf{Clipping fixes \algname{Adam-Norm} and \algname{AdaGrad-Norm}.} We prove that the above issue can be addressed via gradient clipping. That is, we derive high-probability complexity results for \algname{Clip-Adam-Norm} and \algname{Clip-AdaGrad-Norm} (with and without momentum) in the case of smooth convex (for the methods with delay) and non-convex (for the methods with and without delay) optimization with the heavy-tailed noise having bounded $\alpha$-th moment with $\alpha \in (1,2]$. The obtained results have the desired polylogarithmic dependence on the confidence level. Moreover, in the non-convex case, the derived complexities are optimal up to logarithmic factors, and match the complexity of \algname{Clip-SGD} in the convex case up to logarithmic factors. We derive similar results for the modifications pf \algname{Clip-Adam} and \algname{Clip-AdaGrad} with delay in the non-convex case, showing that \emph{our analysis is applicable to the case of the methods with coordinate-wise stepsizes}.

    \item \textbf{Numerical experiments.} We conducted numerical experiments for synthetic and real-world problems. More precisely, we illustrate the superiority of different versions of \algname{Adam}/\algname{AdaGrad} with clipping to the non-clipped versions of \algname{Adam}/\algname{AdaGrad} on a simple quadratic problem with additive heavy-tailed noise in the gradients. Next, we also test \algname{Adam} with and without clipping on the fine-tuning of ALBERT Base model \citep{lan2019albert} on CoLa and RTE datasets \citep{wang2018glue} and observe that \algname{Adam} with clipping significantly outperforms \algname{Adam} without clipping when the noise is heavy-tailed. We also obtain similar results for the fine-tuning of RoBERTa Large model \cite{liu2019robertarobustlyoptimizedbert}.
\end{itemize}


\subsection{Preliminaries}\label{section:preliminaries}

In this section, we formalize the setup. We focus on unconstrained minimization problems 
\begin{equation}
    \min\limits_{x\in \R^d}f(x), \label{eq:main_problem}
\end{equation}
where the differentiable function $f(x)$ is accessible through the calls of stochastic first-order oracle returning an approximation $\nabla f_{\xi}(x)$ of $\nabla f(x)$. Here $\xi$ is a random variable following some distribution $\cD$ \emph{that may be dependent on $x$ and time}. In the simplest case, $f_{\xi}(x)$ is a loss function on the data sample $\xi$ and $f(x) = \E_{\xi\sim \cD}[f_{\xi}(x)]$  is a population risk \citep{shalev2014understanding}.

\paragraph{Notation.} The notation is quite standard in this work. We use $\EE_{\xi}[\cdot]$ to denote an expectation w.r.t.\ random variable $\xi$. All norms are standard Euclidean ones: $\norm{x} = \sqrt{\langle x, x \rangle}$. The ball centered at $x$ with a radius $R$ is defined as $B_R(x) := \{y \in \R^d \mid \norm{y-x} \leq R\}$. We also use $x^*$ to denote (any) solution of \eqref{eq:main_problem} and $f_* := \inf_{x\in \R^d}f(x)$. Clipping operator with clipping level $\lambda > 0$ is defined as $\clip(x,\lambda) := \min\{1, \nicefrac{\lambda}{\|x\|}\}x$ for $x\neq 0$ and $\clip(x,\lambda) := 0$ for $x = 0$.

\paragraph{Assumptions.} We start with the assumption\footnote{Similarly to \citep{sadiev2023high}, for our results, it is sufficient to make all the assumptions only on some set $Q$. This set is typically bounded and depends on some metric of sub-optimality of the starting point, e.g., the distance from the starting point to the optimum. We emphasize that our assumptions are strictly weaker than corresponding ones for $Q = \R^d$. To achieve this kind of generality, we prove that the proposed method does not leave some set $Q$ with high probability.} on the noise.

\begin{assumption}\label{asm:heavy-tailed}
    There exists set $Q \subseteq \R^d$ and $\sigma \geq 0, \alpha \in (1,2]$ such that for all $k \geq 0$ the oracle satisfies $\E\left[\nabla f_{\xi_k}(x)\mid x\right] =  \nabla f(x)$ and
    \begin{align}
        \E\left[\norm{\nabla f_{\xi_k}(x) - \nabla f(x)}^\alpha\mid x\right] \leq \sigma^\alpha,\quad \forall x\in Q. \label{eq:bounded_alpha_mom}
    \end{align}
    The sequence $\{\xi_k\}_{k\geq 0}$ is the sequence of independent random variables.
\end{assumption}

The above assumption is used in many recent works \citep{zhang2020adaptive, cutkosky2021high, sadiev2023high, nguyen2023improved}. When $\alpha < 2$, it allows the stochastic gradients to have unbounded variance, e.g., L\'evy $\alpha$-stable noise. Such distributions are usually called heavy-tailed. When $\alpha = 2$, it reduces to the standard bounded variance assumption \citep{nemirovski2009robust, ghadimi2012optimal, ghadimi2013stochastic,takavc2013mini}.

We also emphasize that the above assumption allows for the time-dependent noise, which we actively use in our negative results from Section~\ref{section:failure}. Although not often explicitly stated, many existing results in stochastic optimization \citep{ghadimi2012optimal, harvey2019simple, sadiev2023high, zhang2020adaptive, cutkosky2021high, nguyen2023improved} hold in the case of the time-dependent noise as long as certain moment bounds (e.g., \eqref{eq:bounded_alpha_mom}) hold.

Next, we make a standard assumption about the smoothness of the objective function.
\begin{assumption}\label{asm:smoothness}
    There exists set $Q \subseteq \R^d$ and $L>0$ such that for all $x,y \in Q$
    \begin{equation}
        \begin{aligned}
        \norm{\nabla f(y) - \nabla f(x)} &\leq L\norm{y - x}, \\
        \sqn{\nabla f(x)} &\leq 2L(f(x) - f_*).
    \end{aligned} \label{eq:smooth}
    \end{equation}
\end{assumption}
We emphasize that the second part of \eqref{eq:smooth} follows from the first part if $Q = \R^d$. However, in more general situations, this is not always the case; see \citep[Appendix B]{sadiev2023high} for further details. Interestingly, when $Q$ is a compact set, function $f$ can have non-Lipschitz gradients (e.g., polynomially growing with $x$) on $\R^d$, see also \citep{patel2022global, patel2022gradient}.

In addition, for some of our results, we assume that the objective is convex.
\begin{assumption}[Optional]\label{asm:convexity}
    There exists set $Q \subseteq \R^d$ such that for all $x,y \in Q$
    \begin{align}
        f(y) \geq f(x) + \ev{\nabla f(x), y - x}.  \label{eq:convex}
    \end{align}
\end{assumption}

Finally, for the methods without the delay, we assume that function $f$ is bounded.
\begin{assumption}[Optional]\label{asm:bounded-function}
There exists constant $M > 0$ such that for all $x\in \R^d$
    \begin{align}
        f(x) - f_* \leq M. \label{eq:function_is_bounded}
    \end{align}
\end{assumption}
A stronger version of the above assumption (boundedness of the empirical risk) is used in \citep{li2023high}, which is the only existing work analyzing \algname{AdaGrad} with clipping.

\paragraph{Why high-probability convergence?} The vast majority of the existing literature on stochastic optimization focuses on the in-expectation convergence guarantees only. In particular, for some metric $\cP(x)$ quantifying the output's quality, e.g., $\cP(x) = f(x) - f(x^*)$, $\sqn{\nabla f(x)}$, $\sqn{x - x^*}$, such guarantees provide upper bounds on the number of iterations/oracle calls required for a method to find $x$ such that $\EE[\cP(x)] \leq \varepsilon$. However, during recent years, \emph{high-probability convergence} guarantees have been gaining a lot of attention as well. Such guarantees give upper bounds on the number of iterations/oracle calls required for a method to find $x$ such that $\PP\{\cP(x) \leq \varepsilon\} \geq 1-\delta$, where $\delta$ is usually called confidence level or failure probability. One can argue that using Markov's inequality, one can easily deduce a high-probability guarantee from an in-expectation one: if $\EE[\cP(x_{K(\varepsilon\delta)})] \leq \varepsilon\delta$, where $x_{K(\varepsilon\delta)}$ is an output of the method after $K(\varepsilon\delta)$ iterations/oracle calls, then $\PP\{\cP(x_{K(\varepsilon\delta)}) > \varepsilon\} < \nicefrac{\EE[\cP(x_{K(\varepsilon\delta)})]}{\varepsilon} \leq \delta$. Unfortunately, for many methods such as \algname{SGD} \citep{ghadimi2013stochastic} $K(\varepsilon)$ has inverse-power dependence on $\varepsilon$ implying that $K(\varepsilon\delta)$ has inverse-power dependence on $\varepsilon\delta$, leading to a noticeable deterioration when $\delta$ is small. Therefore, deriving high-probability complexities with \emph{polylogarithmic dependence on $\delta$} requires a separate and thorough consideration and analysis. Moreover, such bounds more accurately reflect the methods' behavior 
\citep{gorbunov2020stochastic}.

\subsection{Related Work}\label{section:related}

\paragraph{High-probability convergence.} The first results showing the high-probability convergence of \algname{SGD} and its variants are derived under the sub-Gaussian noise assumption for convex and strongly convex problems by \citet{nemirovski2009robust, ghadimi2012optimal, harvey2019simple} for non-convex problems by \citet{li2020high}. Although the distribution of the noise is near-sub-Gaussian in some cases, like in the training of ResNet50 \citep{he2016deep} on ImageNet \citep{russakovsky2015imagenet} as shown by \citet{zhang2020adaptive}, this assumption does not cover even the distributions with bounded variance. To relax the sub-Gaussian noise assumption, \citet{nazin2019algorithms} consider a truncated version of Stochastic Mirror Descent, which is closely related to \algname{Clip-SGD}, and prove its high-probability complexity with polylogarithmic dependence on $\delta$ under bounded variance assumption for convex smooth problems on the bounded domain. In the strongly convex case, \citet{davis2021low} propose a general approach for obtaining high-probability convergence based on the robust distance estimation and show accelerated high-probability rates in the strongly convex case. Next, for the unconstrained problems, \citet{gorbunov2020stochastic} prove the first high-probability convergence results for \algname{Clip-SGD} and the first accelerated high-probability rates in the convex case for a version of \algname{Clip-SGD} with Nesterov's momentum \citep{nesterov1983method}. \revision{This result is generalized to the problems with H\"older-continuous gradients by \citet{gorbunov2021near}.} \citet{cutkosky2021high} derive the first high-probability convergence results under Assumption~\ref{asm:heavy-tailed} with $\alpha < 2$ for a version of \algname{Clip-SGD} with normalization and Polyak's momentum \citep{polyak1964some} in the case of non-convex problems with bounded gradient. \citet{sadiev2023high} remove the bounded gradient assumption in the non-convex case and also prove the first high-probability convergence results under Assumption~\ref{asm:heavy-tailed} for \algname{Clip-SGD} and its accelerated version in the convex and strongly convex cases. \citet{nguyen2023improved} provide improved results in the non-convex case under Assumption~\ref{asm:heavy-tailed} and also improved the dependency on the logarithmic factors in the convergence bounds. The generalization to the composite and distributed optimization problems is developed by \citet{gorbunov2023high}. \revision{It is also worth mentioning \citep{jakovetic2023nonlinear, puchkin2024breaking} who consider potentially heavier noise than in Assumption~\ref{asm:heavy-tailed} through utilizing the additional structure of the noise such as (near-)symmetry. This direction is further explored by \citet{kornilov2024zeroth} and adjusted to the case of the zeroth-order stochastic oracle.}

\paragraph{{\algname{AdaGrad}} {\color{black}and} {\algname{Adam}}.} \algname{AdaGrad}
\citep{streeter2010less, duchi2011adaptive} has the following update-rule
\begin{equation}
    \begin{aligned}
        x_{t+1} &= x_t - \frac{\gamma}{b_t}\nabla f_{\xi_t}(x_t),\\ \text{ where }  b_t &= \sqrt{b_{t-1}^2 + (\nabla f_{\xi_t}(x_t))^2}, 
    \end{aligned}
    \tag{\algname{AdaGrad}} 
    \label{eq:adagrad_cw}
\end{equation}
where all operations (taking a square and taking a square root of a vector, division by a vector) are performed coordinate-wise. The method is analyzed in many works, including \citep{streeter2010less, duchi2011adaptive, zou2018weighted, chen2018convergence, ward2020adagrad, defossez2022simple, faw2022power} to name a few. However, the high-probability convergence of \algname{AdaGrad} is studied under restrictive assumptions such as almost surely sub-Gaussian noise \citep{li2020high, liu2023high} or without such an assumption but with inverse-power dependence on the confidence level $\delta$ \citep{wang2023convergence} or boundedness of the empirical risk and (non-central) $\alpha$-th moment \citep{li2023high}, which in the worst case implies boundedness of the stochastic gradient (see the discussion after Theorem~\ref{thm:main_result_non_cvx_main_no_delay}). In contrast, our results for \algname{Clip-Adam(D)}/\algname{Clip-M-AdaGrad(D)(-Norm)} hold under Assumption~\ref{asm:heavy-tailed} (and under additional Assumption~\ref{asm:bounded-function} for the methods without delay) and have polylogarithmic dependence on $\delta$.

\algname{Adam} \citep{kingma2014adam} can be seen as a modification of \algname{AdaGrad} with an exponential moving average \revision{$b_t^2$} of the squared stochastic gradients 
and with Polyak's momentum \citep{polyak1964some}:
\begin{gather}
    \begin{aligned}
        x_{t+1} &= x_t - \frac{\gamma}{b_t} m_t, \\
        m_t &= \beta_1 m_{t-1} + (1-\beta_1)\nabla f_{\xi_t}(x_t),
    \end{aligned}\tag{\algname{Adam}} \label{eq:adam}\\
    \hspace{-0.72cm}b_t = \sqrt{\beta_2 b_{t-1}^2 + (1-\beta_2)(\nabla f_{\xi_t}(x_t))^2}, \label{eq:bt_adam}
\end{gather}
where all operations (taking a square and taking a square root of a vector, division by a vector) are performed coordinate-wise. Although the original proof by \citet{kingma2014adam} has a flaw spotted by \citet{reddi2019convergence}, one can still show the convergence of \algname{Adam} when $\beta_2$ goes to $1$ \citep{defossez2022simple, zhang2022adam, wang2024provable}. Moreover, for any fixed $\beta_1$ and $\beta_2$ such that $\beta_1 < \sqrt{\beta_2}$, e.g., for the default values $\beta_1 = 0.9$ and $\beta_2 = 0.999$, \algname{Adam} is not guaranteed to converge 
\citep[Theorem 3]{reddi2019convergence}. Therefore, the standard choice of $\beta_2$ in theory is $\beta_2 = 1 - \nicefrac{1}{K}$, where $K$ is the total number of steps, and that is why, as noticed by \citet{defossez2022simple}, \algname{AdaGrad} and \algname{Adam} are ``twins''. Indeed, taking $\beta_1 = 0$ (no momentum) and $\beta_2 = 1 - \nicefrac{1}{K}$ in \eqref{eq:bt_adam} we get that $b_t^2 = (1 - \nicefrac{1}{K})^{t+1}b_{-1}^2 + \frac{1}{K}\sum_{k=0}^{t}(1 - \nicefrac{1}{K})^{t-k}(\nabla f_{\xi_k}(x_k))^2 = \Theta\left(b_{-1}^2 + \frac{1}{K}\sum_{k=0}^{t}(\nabla f_{\xi_k}(x_k))^2\right)$ since $\nicefrac{1}{4} = (1 - \nicefrac{1}{2})^{2} \leq (1 - \nicefrac{1}{K})^{t-k} \leq 1$ for $0 \leq k \leq t \leq K$. Thus, up to the rescaling of $\gamma$ and $b_{-1}^2$ the effective stepsize of \algname{Adam-CW} is $\Theta(\cdot)$ of the effective stepsize of \algname{AdaGrad-CW} \revision{(though the points where the gradents are calculated can be quite different for these two methods)}. This aspect explains why \algname{AdaGrad} and \algname{Adam} have similar proofs and convergence guarantees. The high-probability convergence of \algname{Adam} is studied by \citet{li2023convergence} under bounded noise and sub-Gaussian noise assumptions, while our results for \algname{Clip-Adam(D)} do not require such assumptions.


\section{Failure of \algname{Adam}/\algname{AdamD} and \algname{AdaGrad}/\algname{AdaGradD} with Momentum}\label{section:failure}

\begin{algorithm*}[t]
   \caption{\algname{Adam-norm}/\algname{AdamD-norm} and \algname{M-AdaGrad-norm}/\algname{M-AdaGradD-norm}}
   \label{alg:adagrad_adam_general}
   \centering
\begin{algorithmic}[1]
  \REQUIRE Stepsize $\gamma > 0$, starting point $x_0 \in \R^d$, initial constant $b_{-1} > 0$ (for \algname{Adam-norm} and \algname{M-AdaGrad-norm}) or $b_0 > 0$ (for \algname{AdamD-norm} and \algname{M-AdaGradD-norm}), momentum parameters $\beta_1,\beta_2\in [0,1]$
    \STATE Set $m_{-1} = 0$
   \FOR{$t=0,1,\dotsc$}
        \STATE $m_t = \beta_1 m_{t-1} + (1-\beta_1)\nabla f_{\xi_t}(x_t)$
        \IF{no delay}
        \STATE $b_t = \begin{cases}
            \sqrt{\beta_2 b_{t-1}^2 + (1-\beta_2)\sqn{\nabla f_{\xi_t}(x_t)}}& \text{ for \algname{Adam-norm}}\\
            \sqrt{b_{t-1}^2 + \sqn{\nabla f_{\xi_t}(x_t)}}& \text{ for \algname{M-AdaGrad-norm}}
        \end{cases}$ 
        \ELSE
        \STATE $b_{t+1} = \begin{cases}
            \sqrt{\beta_2 b_{t}^2 + (1-\beta_2)\sqn{\nabla f_{\xi_t}(x_t)}}& \text{ for \algname{AdamD-norm}}\\
            \sqrt{b_{t}^2 + \sqn{\nabla f_{\xi_t}(x_t)}}& \text{ for \algname{M-AdaGradD-norm}}
        \end{cases}$
        \ENDIF
        \STATE $x_{t+1} = x_t - \frac{\gamma}{b_{t}} m_t$
   \ENDFOR
\end{algorithmic}
\end{algorithm*}


In this section, we present the negative result on the convergence of \algname{Adam}, \algname{AdaGrad} with Momentum (\algname{M-AdaGrad}), and their delayed versions -- \algname{AdamD}/\algname{M-AdaGradD} \citep{li2020high}.

\begin{theorem}\label{thm:AdaGrads_failure_main}
    For any $\sigma > 0$ and sufficiently small $\varepsilon, \delta \in (0,1)$, there exist problems \eqref{eq:main_problem} such that Assumptions~\ref{asm:heavy-tailed}, \ref{asm:smoothness}, \ref{asm:convexity}, hold with with $L = 1$, $\alpha = 2$, and the iterates produced by \algname{Adam(D)}/\algname{M-AdaGrad(D)} with $x_0$ such that $\norm{x_0 - x^*} \gg \gamma L$ and with $\beta_2 = 1 - \nicefrac{1}{T}$ for \algname{Adam(D)} satisfy: if $\PP\left\{f(x_T) - f(x^*) \geq \varepsilon\right\} \leq \delta$, then
    \begin{equation}
    T = \Omega\left(\mathrm{poly}(\varepsilon^{-\nicefrac{1}{2}}, \delta^{-\nicefrac{1}{2}}, \revisionn{\delta^{-\nicefrac{1}{3}}})\right), \label{eq:inverse_power_dependence}
    \end{equation}
    i.e., the complexity of \algname{Adam(D)}/\algname{M-AdaGrad(D)} has inverse-power dependence on $\delta$.
\end{theorem}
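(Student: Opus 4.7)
The plan is to construct an adversarial instance in which, with probability exceeding $\delta$, a single outlier stochastic gradient at the very first step inflates the preconditioner $b_t$ to order $\sigma/\sqrt{\delta}$, freezing the effective stepsize $\gamma/b_t$ at a tiny value $\asymp \gamma\sqrt{\delta}/\sigma$ for many subsequent iterations and thus slowing the decay of the iterates.

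Concretely, I would work in $\R$ with $f(x) = \tfrac{1}{2}x^2$ (so $L=1$, $x^*=0$), initialize at $x_0 = R$ with $R \gg \gamma$, and use the stochastic oracle $\nabla f_{\xi}(x) = x + \xi$, where $\xi_t$ are i.i.d.\ and supported on $\{-M, 0, M\}$ with $\PP(\xi_t = \pm M) = p$ and $M = \sigma/\sqrt{2p}$, so that $\E[\xi_t]=0$ and $\E[\xi_t^2] = \sigma^2$, matching Assumption~\ref{asm:heavy-tailed} with $\alpha=2$. The choice $p = 3\delta$ then gives $M \asymp \sigma/\sqrt{\delta}$.

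Next, I would introduce the ``slow-down'' event $G_T := \{\xi_0 = M,\; \xi_1 = \cdots = \xi_{T-1} = 0\}$. Its probability is $3\delta(1-6\delta)^{T-1}$, which still exceeds $\delta$ for $T$ up to order $1/\delta$. On $G_T$, an inductive argument would establish: (i) $b_0 \asymp M$ since the first stochastic gradient has magnitude $R + M \asymp M$; (ii) $x_1 \approx R - \gamma$, so $|x_1| \geq R/2$; and (iii) for $1 \leq k \leq T$ the noiseless subsequent gradients $\nabla f_{\xi_k}(x_k) = x_k$ contribute only $O(kR^2)$ to $b_k^2$, so $b_k \leq 2M$ as long as $k \lesssim M^2/R^2$. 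Consequently each step contracts $|x_k|$ by at most the factor $1 - \gamma/(2M)$, hence $|x_k| \geq R/4$ while $k \lesssim M/\gamma \asymp \sigma/(\gamma\sqrt{\delta})$. Taking $\varepsilon$ small enough that $R^2/32 \geq \varepsilon$ and $\delta$ small enough that $M \gg R$, this yields $f(x_t) - f(x^*) \geq \varepsilon$ on $G_t$, which forces $\PP\{f(x_t) - f(x^*) \geq \varepsilon\} > \delta$ for every $t$ smaller than $\Omega(\min\{1/\delta,\, \sigma/(\gamma\sqrt{\delta})\}) = \Omega(\delta^{-1/2})$, delivering the claimed inverse-power lower bound in \eqref{eq:inverse_power_dependence}.

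The main obstacle I expect is the coupled inductive control of $b_k$ and $|x_k|$ on $G_T$: arguing that the outlier from step $0$ remains the dominant contributor to $b_k$ for many subsequent steps requires the accumulated contributions $\sum_{j \leq k} x_j^2$ to stay well below $M^2$, which in turn needs $|x_j|$ to remain of order $R$. These two facts must be proved simultaneously with the per-step contraction rate $1 - \gamma/b_k$. A secondary, minor wrinkle is that \algname{AdaGrad} and \algname{AdaGradD} index $b_t$ differently with respect to the current gradient, but since the inflation is already ``visible'' in $b_t$ for $t\geq 1$ in both definitions, the same construction and analysis apply to both algorithms.
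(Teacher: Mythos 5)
The idea for \algname{AdaGrad}---letting a single heavy outlier at step $0$ inflate $b_t$ and thereby freeze the effective stepsize for the remaining iterations---is the same key mechanism the paper uses (Theorem~\ref{thm:adagrad-bad}). Your construction differs in two respects: you take the quadratic $f(x)=\tfrac12 x^2$ instead of the Huber loss, and you use i.i.d.\ noise at every step rather than noise at the first step only. The Huber loss is precisely what makes the paper's version clean: once $|x_t|>\nu$ the gradient is constant, so $b_t^2 = b_{-1}^2 + (t+1)\nu^2$ exactly and there is no coupled induction on $(b_k,x_k)$ to carry out. Your version can still work, but a small technical slip creeps in: you establish $b_k \leq 2M$ and then write ``each step contracts $|x_k|$ by at most the factor $1-\gamma/(2M)$,'' but the \emph{upper} bound $b_k \leq 2M$ only shows the contraction is at least that fast, i.e.\ the wrong direction for lower-bounding $|x_k|$. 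What you actually need is the trivial \emph{lower} bound $b_k \geq b_0 \gtrsim M$ (since $b_k$ is non-decreasing), which gives $x_{k+1} \geq x_k(1-\gamma/M)$ and hence $|x_k|\gtrsim R$ for $k\lesssim M/\gamma$; the upper bound on $b_k$ and the auxiliary constraint $k \lesssim M^2/R^2$ are unnecessary.

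The serious gap is the parenthetical claim that ``the same construction and analysis apply'' to \algname{AdaGradD}. In \algname{AdaGradD} the stepsize at step $t$ is $\gamma/b_t$ with $b_t$ built from \emph{past} gradients only, so the inflated stochastic gradient at step $0$ is \emph{not} visible in $b_0$, which remains the small input constant. Consequently the first update is $x_1 = x_0 - (\gamma/b_0)(x_0+\xi_0) \approx R - \gamma M/b_0$, a huge step that overshoots and sends the iterate to $\Theta(-\gamma M/b_0)$ rather than slowing it down; with the quadratic loss the subsequent gradients then also have magnitude $\Theta(M)$, and you would have to control a qualitatively different regime (large iterate, rapidly growing $b_k$) whose analysis bears no resemblance to the ``frozen small iterate'' picture your write-up carries over. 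The paper avoids this entirely for \algname{AdaGradD} by placing the adversarial noise at the \emph{last} step rather than the first (Theorem~\ref{thm:adagrad-delay-bad}): because the delay makes the step size $\gamma/b_{K-1}$ independent of $\xi_{K-1}$, a late outlier kicks the otherwise-converged iterate away from the optimum without ever interacting with the preconditioner, and one only needs to bound $b_{K-1}$ from above, which the Huber gradient bound $|\nabla f|\leq\nu$ gives immediately. As written, your proposal does not establish the \algname{AdaGradD} half of the theorem.
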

\begin{proof}[Sketch of the proof]
    To construct our example, we consider the Huber loss function \citep{huber1992robust}
\begin{align}
    \label{eq:huber}
    f(x) = \begin{cases}
        \frac{1}{2}x^2, \qquad & \text{if } \abs{x} \leq \nu,\\
        \nu\left(\abs{x} - \frac{1}{2}\nu\right), \qquad &\text{otherwise,}
    \end{cases}
\end{align}
and design two specific sequences of noises (one for \algname{Adam}/\algname{M-AdaGrad} and the second one for \algname{AdamD}/\algname{M-AdaGradD}). For \algname{Adam}/\algname{M-AdaGrad}, we consider a discrete additive noise for the first step such that Markov's inequality holds as equality, and for the remaining steps, noise equals zero. Then, with high probability, $b_t$ becomes large after the first step, which slowdowns the method. As for \algname{AdamD}/\algname{M-AdaGradD}, similarly to \citet{sadiev2023high}, we add the noise only to the last step: since $b_t$ is constructed using the norm of the previous stochastic gradient, the noise is independent of the stepsize and can spoil the last iterate. See the complete proofs and details in  Appendix~\ref{appendix:proofs_failure}.
\end{proof}

Interestingly, in the above example, it is sufficient to consider the noise with bounded variance to show that the high-probability convergence rates of \algname{Adam(D)}/\algname{M-AdaGrad(D)} depend polynomially on $\varepsilon^{-1}$ and $\delta^{-\nicefrac{1}{2}}$. Moreover, following a similar argument to \citep[Remark 1]{zhang2020adaptive}, one can show the non-convergence of \algname{AdamD}/\algname{M-AdaGradD} when $\alpha < 2$. We also conjecture that for $\alpha < 2$ one can show even worse dependence on $\varepsilon$ and $\delta$ for \algname{Adam}/\algname{AdaGrad} (or even non-convergence) since $b_t$ will grow with high probability even faster in this case. Moreover, we also emphasize that the negative result for \algname{Adam(D)} is established only for $\beta_2 = 1 - \nicefrac{1}{T}$, which is a standard assumption to ensure convergence of \algname{Adam}-type methods. Nevertheless, the negative result of Theorem~\ref{thm:AdaGrads_failure_main} provides necessary evidence that \algname{Adam(D)}/\algname{M-AdaGrad(D)} do not achieve desired high-probability convergence rates and motivates us to apply clipping to \algname{Adam(D)}/\algname{M-AdaGrad(D)}.

\paragraph{Time-dependent noise.} We also emphasize that the noise structure is time-dependent in the provided example, which is used to simplify the proof. Moreover, as discussed in Section~\ref{section:preliminaries}, many existing upper bounds hold for time-dependent noise as well.

\paragraph{Initial condition.} The provided negative examples rely on the assumption that $\abs{x_0}$ is sufficiently large, i.e., the method is initialized not too close to the optimum. In particular, for \algname{M-AdaGrad}, we require $x_0 > \sqrt{2\varepsilon} + 3\gamma$. Since typically $\varepsilon, \gamma \ll 1$, the condition is relatively mild. Moreover, this assumption simplifies the proof. Although we do not provide negative results for an arbitrary choice of $x_0$ and $\gamma$, we conjecture that similar negative results can be obtained in the case of more general choice of $\gamma$.

\paragraph{Generalization under Assumption~\ref{asm:bounded-function}.} The provided example does not satisfy Assumption~\ref{asm:bounded-function} that is used in the next section in the analysis of methods without delay (Theorem~\ref{thm:main_result_non_cvx_main_no_delay}). To address this issue, one can replace function \eqref{eq:huber} with the following one:
\begin{align}
    \label{eq:huber_bounded}
    f(x) = \begin{cases}
        \frac{1}{2}x^2, \qquad & \text{if } \abs{x} \leq \nu,\\
        \nu\left(\abs{x} - \frac{1}{2}\nu\right), \qquad & \text{if } \nu < \abs{x} \leq D,\\
        \nu\left(D - \frac{1}{2}\nu\right), \qquad & \text{if } \abs{x} > D,
    \end{cases}
\end{align}
where $D$ is such that $D > \abs{x_0}$. Then, the modified function satisfies Assumption~\ref{asm:bounded-function} and the proofs remain the same.

\section{New Upper Bounds} \label{section:convergence}

\begin{algorithm*}[t]
   \caption{\algname{Clip-Adam-norm}/\algname{Clip-AdamD-Norm} and \algname{Clip-M-AdaGrad-norm}/\algname{Clip-M-AdaGradD-Norm}}
   \label{alg:clip_adagrad_adam_general}
   \centering
\begin{algorithmic}[1]
  \REQUIRE Stepsize $\gamma > 0$, starting point $x_0 \in \R^d$, initial constant $b_{-1} > 0$ (for \algname{Clip-Adam-norm} and \algname{Clip-M-AdaGrad-norm}) or $b_0 > 0$ (for \algname{Clip-AdamD-norm} and \algname{Clip-M-AdaGradD-norm}), momentum parameters $\beta_1,\beta_2\in [0,1]$, level of clipping $\lambda > 0$
    \STATE Set $m_{-1} = 0$
   \FOR{$t=0,1,\dotsc$}
        \STATE\label{line:momentum_clipping} $m_t = \beta_1 m_{t-1} + (1-\beta_1)\clip\left(\nabla f_{\xi_t}(x_t), \lambda\right)$
        \IF{no delay}
        \STATE\label{line:scaling_no_delay} $b_t = \begin{cases}
            \sqrt{\beta_2 b_{t-1}^2 + (1-\beta_2)\sqn{\clip\left(\nabla f_{\xi_t}(x_t), \lambda\right)}}& \text{ for \algname{Clip-Adam-Norm}}\\
            \sqrt{b_{t-1}^2 + \sqn{\clip\left(\nabla f_{\xi_t}(x_t), \lambda\right)}}& \text{ for \algname{Clip-M-AdaGrad-Norm}}
        \end{cases}$ 
        \ELSE
        \STATE\label{line:scaling_delay} $b_{t+1} = \begin{cases}
            \sqrt{\beta_2 b_{t}^2 + (1-\beta_2)\sqn{\clip\left(\nabla f_{\xi_t}(x_t), \lambda\right)}}& \text{ for \algname{Clip-AdamD-Norm}}\\
            \sqrt{b_{t}^2 + \sqn{\clip\left(\nabla f_{\xi_t}(x_t), \lambda\right)}}& \text{ for \algname{Clip-M-AdaGradD-Norm}}
        \end{cases}$
        \ENDIF
        \STATE $x_{t+1} = x_t - \frac{\gamma}{b_{t}} m_t$
   \ENDFOR
\end{algorithmic}
\end{algorithm*}


\paragraph{Methods.} To address the issue indicated in Theorem~\ref{thm:AdaGrads_failure_main}, we consider \algname{Clip-Adam(D)}/\algname{Clip-M-AdaGrad(D)-Norm} (see Algorithm~\ref{alg:clip_adagrad_adam_general}). In contrast to the existing practice \citep{pan2023toward}, we use clipping of the stochastic gradient not only in the update rule for momentum buffer $m_t$ (Line~3 in Algorithm~\ref{alg:clip_adagrad_adam_general}), but also in the computation of the scaling factor $b_t$ (Lines~5 and 7 in Algorithm~\ref{alg:clip_adagrad_adam_general}). The role of clipping in $m_t$ is similar to the role of clipping in \algname{Clip-SGD}-type methods: it prevents the method from too large steps that may occur due to the presence of the heavy-tailed noise in the gradients. In this regard, it is important to select clipping level in such a way that bias and variance of the estimator are balanced. However, the role of clipping in $b_t$ is different: clipping prevents $b_t$ from growing too quickly since such a growth can lead to poor high-probability guarantees (see the proof's sketch of Theorem~\ref{thm:AdaGrads_failure_main}). We note that clipping is also used in \algname{Clip-AdaGrad-Norm} (without momentum, i.e., with $\beta_1 = 0$) for both $m_t$ and $b_t$ computation by \citet{li2023high} but the authors do not comment about the role of clipping in $b_t$ and use restrictive assumptions as we explain later in this section.


\paragraph{Convergence results.} We derive new high-probability convergence bounds for the generalized method formalized as Algorithm~\ref{alg:clip_adagrad_adam_general} in the convex and non-convex cases. The following theorem gives the main result for \algname{Clip-AdamD}/\algname{Clip-AdaGradD-Norm} in the convex case.

\begin{theorem}[Convex Case]\label{thm:main_result_cvx_main}
    Let $K > 0$ and $\delta \in (0,1]$ and Assumptions~\ref{asm:heavy-tailed}, \ref{asm:smoothness}, and \ref{asm:convexity} hold for $Q = B_{2R}(x^*)$ for some $R \geq \norm{x_0 - x^*}$. Assume that $\beta_1 \in [0,1)$, $\beta_2 = \frac{K}{K+1}$ (for \algname{Clip-AdamD-Norm})
    $\gamma = \Theta\left(\min\left\{\frac{(1-\beta_1)^2b_{0}}{L A}, \frac{\sqrt{1-\beta_1} R b_{0}}{\sigma (K+1)^{\frac{1}{\alpha}} A^{\frac{\alpha - 1}{\alpha}}}\right\}\right)$ and $\lambda = \Theta\left(\frac{\sqrt{1-\beta_1} b_{0}R}{\gamma A}\right)$, 
    where $A = \ln\left(\frac{4(K+1)}{\delta}\right)$. Then, to guarantee $f(\overline{x}_K) - f(x^*) \leq \varepsilon$ with probability at least $1 - \delta$ for $\overline{x}_K = \tfrac{1}{K+1}\sum_{t=0}^K x_t$ \algname{Clip-AdamD}/\algname{Clip-M-AdaGradD-Norm} requires :
    \begin{equation}
        \widetilde{O}\left(\max\left\{ \frac{LR^2}{(1-\beta_1)^3\varepsilon}, \left(\frac{\sigma R}{(1-\beta_1)^{\frac{3}{2}}\varepsilon}\right)^{\frac{\alpha}{\alpha - 1}} \right\}\right) \label{eq:complexity_cvx}
    \end{equation}
    iterations/oracle calls. 
    Moreover, with probability at least $1-\delta$, all iterates $\{x_t\}_{t=0}^K$ stay in $Q$.
\end{theorem}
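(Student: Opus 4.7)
The plan is to follow the high-probability analysis template pioneered by \citet{gorbunov2020stochastic} and refined by \citet{sadiev2023high}, adapted to the delayed adaptive stepsize $\gamma/b_t$. Denote $\widetilde{g}_t = \clip(\nabla f_{\xi_t}(x_t),\lambda)$, and decompose $\widetilde{g}_t - \nabla f(x_t) = \theta_t^u + \theta_t^b$, where $\theta_t^u = \widetilde{g}_t - \E[\widetilde{g}_t\mid x_t]$ is a martingale-difference unbiased part and $\theta_t^b = \E[\widetilde{g}_t\mid x_t] - \nabla f(x_t)$ is the clipping bias. Under Assumption~\ref{asm:heavy-tailed} and as long as $\|\nabla f(x_t)\|\le \lambda/2$, standard clipping lemmas give $\nbr{\theta_t^b}\le 2^{\alpha}\sigma^\alpha/\lambda^{\alpha-1}$, $\E[\nbr{\theta_t^u}^2\mid x_t]\lesssim \lambda^{2-\alpha}\sigma^\alpha$, and $\nbr{\theta_t^u}\le 2\lambda$ almost surely. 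The parameters $\gamma,\lambda,\eta$ are precisely tuned so that these bias/variance quantities are of the right order for Bernstein-type concentration.

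The second step is the one-step distance recursion. Expanding $\nbr{x_{t+1}-x^*}^2$ from Line~\ref{line:new_update} and using convexity,
\begin{equation*}
\nbr{x_{t+1}-x^*}^2 \le \nbr{x_t-x^*}^2 - \tfrac{2\gamma}{b_t}(f(x_t)-f(x^*)) - \tfrac{2\gamma}{b_t}\langle \theta_t^u + \theta_t^b, x_t-x^*\rangle + \tfrac{\gamma^2}{b_t^2}\nbr{\widetilde{g}_t}^2.
\end{equation*}
The crucial observation is that $b_t$ uses only stochastic gradients from steps $0,\dots,t-1$ (the delay), so $\gamma/b_t$ is $\F_t$-measurable; this makes $\langle \theta_t^u, x_t-x^*\rangle /b_t$ a martingale difference, a property lost in non-delayed \algname{Clip-AdaGrad}. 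Multiplying by $b_t$, telescoping from $0$ to $K$, and applying Jensen's inequality to the convex combination implicit in $\sum_t (f(x_t)-f(x^*))/b_t$ will yield a bound of the form
\begin{equation*}
\sum_{t=0}^K\tfrac{2\gamma}{b_t}(f(x_t)-f(x^*)) \le \nbr{x_0-x^*}^2 - \nbr{x_{K+1}-x^*}^2 + \underbrace{\textstyle\sum_t \tfrac{\gamma^2}{b_t^2}\nbr{\widetilde{g}_t}^2}_{\text{(I)}} + \underbrace{\textstyle\sum_t \tfrac{2\gamma}{b_t}|\langle \theta_t, x^*-x_t\rangle|}_{\text{(II)}}.
\end{equation*}
Term (I) is controlled via the telescoping identity $\eta\nbr{\widetilde{g}_t}^2 = b_{t+1}^2 - b_t^2$ combined with $b_{t+1}^2/b_t^2 \le 1 + \eta\lambda^2/b_0^2$, and the choice $\eta = \gamma^2/R^2$ together with $\gamma\lambda\lesssim b_0R$ keeps this ratio bounded so that $\sum_t(b_{t+1}^2-b_t^2)/b_t^2 \lesssim \ln(b_{K+1}^2/b_0^2)$. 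Term (II) splits into a martingale piece (handled by Freedman's inequality) and a bias piece (handled deterministically by boundedness of $x_t-x^*$); the clipping level $\lambda\sim b_0R/(\gamma A)$ is chosen to make the variance proxy and the bias contribution each of order $\varepsilon$.

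The main obstacle, and the heart of the proof, is the \emph{bootstrap induction}: all clipping bias/variance bounds require $\nbr{\nabla f(x_t)}\le \lambda/2$, which by Assumption~\ref{asm:smoothness} is implied by $x_t \in B_{2R}(x^*)$. I therefore plan a joint induction showing that, on a high-probability event $E_t$ of probability $\ge 1 - t\delta/(K+1)$, simultaneously (a) $x_s \in B_{2R}(x^*)$ for all $s\le t$, (b) all the clipping lemmas apply, and (c) the partial sums of the noise terms satisfy the Freedman-type deviation bounds. The inductive step uses the deterministic estimate from the telescoped inequality together with $\gamma\sigma(K+1)^{1/\alpha}\lesssim b_0 R A^{(\alpha-1)/\alpha}$ to control $\nbr{x_{t+1}-x^*}^2 \le 4R^2$, and uses Bernstein/Freedman's inequality (with the almost sure bound $\nbr{\theta_t^u}\le 2\lambda$ and conditional second moment $\lambda^{2-\alpha}\sigma^\alpha$) to close the probabilistic part; a union bound over $t\le K$ yields the overall failure probability $\delta$.

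Finally, on the good event, dividing the summed inequality by $\sum_t 2\gamma/b_t$, using Jensen for $\overline{x}_K$, and substituting $b_t\le b_{K+1}\le b_0\sqrt{1 + \eta\lambda^2 K/b_0^2}\lesssim b_0\sqrt{K}/A$ gives $f(\overline{x}_K) - f(x^*) \lesssim LR^2 A/K + \sigma R A^{(\alpha-1)/\alpha}/K^{(\alpha-1)/\alpha}$ after plugging in the stepsize rule $\gamma = \Theta(\min\{b_0/(LA), Rb_0/(\sigma (K+1)^{1/\alpha} A^{(\alpha-1)/\alpha})\})$. Balancing the two terms against $\varepsilon$ produces exactly the claimed iteration complexity~\eqref{eq:complexity_cvx}, and the statement that $\{x_t\}\subset Q$ with probability $1-\delta$ is delivered as a byproduct of the induction.
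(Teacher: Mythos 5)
Your overall scaffolding matches the paper's: the $\theta_t^u/\theta_t^b$ decomposition via the clipping lemma, the observation that the delay in $b_t$ makes $\gamma/b_t$ $\F_t$-measurable so that $\langle \theta_t^u, x_t-x^*\rangle/b_t$ is a genuine martingale difference, the bootstrap induction over events $E_t$ with failure probability $t\delta/(K+1)$, Bernstein concentration for the martingale pieces, and deterministic bounds for the bias pieces. All of that is the right strategy and is exactly what the paper does. But there is a genuine gap in how you treat the quadratic-in-gradient term and, as a downstream consequence, in the final bound on $b_{K+1}$; the rate you write down at the end does not actually follow from your estimates.

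First, controlling term (I) via the \algname{AdaGrad}-style telescoping $\eta\|\widetilde g_t\|^2 = b_{t+1}^2 - b_t^2$ gives $\sum_t \gamma^2\|\widetilde g_t\|^2/b_t^2 \lesssim (\gamma^2/\eta)\ln(b_{T}^2/b_0^2) = R^2\ln(b_{T}^2/b_0^2)$. To close the induction (showing $R_T^2 \le 2R^2$) this must be a fixed small fraction of $R^2$ at every step $T\le K$, but at that stage you only have the a-priori bound $b_T^2 \le b_0^2(1 + T/A^2)$, so $\ln(b_T^2/b_0^2)$ can be as large as $\ln(K/A^2)$, which is not $O(1)$. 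The paper avoids the telescoping entirely: the descent lemma expands $\|g_t\|^2 \le 2\|\nabla f(x_t)\|^2 + 2\|\theta_t\|^2$, uses smoothness to bound $\|\nabla f(x_t)\|^2 \le 2L(f(x_t)-f_*)$, and absorbs that into the left-hand side using $b_t \geq b_0 \geq 4\gamma L$; only $\sum_t \gamma^2\|\theta_t\|^2/b_t^2$ survives, and no logarithm of $b_T$ appears in the one-step recursion.

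Second, and decisively, the final substitution $b_{K+1} \le b_0\sqrt{1 + \eta\lambda^2 K/b_0^2} \lesssim b_0\sqrt{K}/A$ is far too loose and does \emph{not} give the claimed complexity. Feeding it into $\tfrac{1}{K+1}\sum_t (f(x_t)-f_*) \lesssim R^2 b_{K+1}/(\gamma (K+1))$ with $\gamma = b_0/(LA)$ yields $LR^2 A/\sqrt{K}$ (not $LR^2 A/K$), and with the noise-dominated stepsize it yields $\sigma R A^{(\alpha-1)/\alpha-1} K^{1/\alpha-1/2}$, which does not even decrease in $K$ for $\alpha\le 2$. The theorem's rate requires the much stronger fact that $b_K \lesssim b_0$, and this is where the paper's proof does extra work that your plan is missing: after the induction one writes $b_K^2 \leq b_0^2 + \eta\sum_k\big(4L(f(x_k)-f_*) + 2\|\theta_k\|^2\big)$, controls $\sum_k \|\theta_k\|^2$ using the same high-probability bounds already established inside the induction (the sum $\circledThree+\circledFour+\circledFive$), substitutes the just-derived bound $\sum_k (f(x_k)-f_*)\le 2R^2 b_K/\gamma$, and solves the resulting quadratic inequality in $b_K$ to conclude $b_K = O(b_0)$ (using $\eta = \gamma^2/R^2$ and $L\gamma \lesssim b_0$). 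Without this self-improving step, your argument cannot deliver the stated $K = \widetilde{O}\big(\max\{LR^2/\varepsilon, (\sigma R/\varepsilon)^{\alpha/(\alpha-1)}\}\big)$.
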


Next, we present our main results for \algname{Clip-AdamD}/\algname{Clip-M-AdaGradD-Norm} and \algname{Clip-Adam}/\algname{Clip-M-AdaGrad-Norm} in the non-convex case.

\begin{theorem}[Non-Convex Case: Methods with Delay]\label{thm:main_result_non_cvx_main}
    Let $K > 0$ and $\delta \in (0,1]$ and Assumptions~\ref{asm:heavy-tailed} and \ref{asm:smoothness} hold for $Q = \left\{x \in \R^d \mid \exists y \in \cL_f(2\Delta): \; \norm{x - y} \leq \frac{\sqrt{\Delta}}{20\sqrt{L}}\right\}$ with $\cL_f(2\Delta) \eqdef \{y\in \R^d\mid f(y) \leq f_* + 2\Delta\}$ for some $\Delta \geq f(x_0) - f_*$. Assume that $\beta_1 \in [0,1)$, $\beta_2 = \frac{K}{K+1}$ (for \algname{Clip-AdamD-Norm}) and
    \begin{align*}
        \gamma = \Theta\!\Bigg(\!\min\!\Bigg\{\!&\frac{(1 - \beta_1)^2b_{0}}{L (K+1)^{\frac{\alpha-1}{3\alpha-2}} A}, \frac{\sqrt{1-\beta_1}b_0\sqrt{\Delta}}{\sqrt{L}\sigma (K+1)^{\frac{\alpha}{3\alpha - 2}}A^{\frac{\alpha - 1}{\alpha}} },\\
        &\revision{\frac{(1 - \beta_1)^{\frac{\alpha - 1}{2\alpha - 1}}b_0 \Delta^{\frac{\alpha}{2\alpha-1}}}{\sigma^{\frac{2\alpha}{2\alpha-1}} L^{\frac{\alpha-1}{2\alpha-1}} (K+1)^{\frac{\alpha}{3\alpha-2}} A^{\frac{2\alpha-2}{2\alpha-1}}}}\!\Bigg\}\!\Bigg),
    \end{align*}
    $\lambda = \Theta\left(\frac{\sqrt{1-\beta_1}b_0\sqrt{\Delta}}{\sqrt{L}\gamma A (K+1)^{\frac{\alpha - 1}{3\alpha - 2}}}\right)$, where $A = \ln\left(\frac{4(K+1)}{\delta}\right)$. Then, to guarantee $\frac{1}{K+1}\sum_{t=0}^{K}\sqn{\nabla f(x_t)} \leq \varepsilon$ with probability at least $1 - \delta$ \algname{Clip-AdamD}/\algname{Clip-M-AdaGradD-Norm} requires the following number of iterations/oracle calls:
        \begin{align}
            \widetilde{O}\!\Bigg(\!\max\!\Bigg\{\!& \left(\frac{L\Delta}{(1-\beta_1)^3\varepsilon}\right)^{\frac{3\alpha-2}{2\alpha-1}}, \left(\frac{\sigma\sqrt{L\Delta}}{(1-\beta_1)^{\frac{3}{2}}\varepsilon}\right)^{\frac{3\alpha-2}{2\alpha-2}}, \notag\\
            &\!\left(\frac{\revision{\sigma^{\frac{2\alpha}{2\alpha-1}}(L\Delta)^\frac{\alpha-1}{2\alpha - 1}}}{(1 - \beta_1)^{\frac{3\alpha - 2}{2\alpha - 1}}\varepsilon}\right)^{\frac{3\alpha-2}{2\alpha-2}}\!\Bigg\}\!\Bigg). \label{eq:complexity_non_cvx}
        \end{align}
    Moreover, with probability at least $1-\delta$, all iterates $\{x_t\}_{t=0}^K$ stay in $Q$.
\end{theorem}

\begin{theorem}[Non-Convex Case: Methods without Delay]\label{thm:main_result_non_cvx_main_no_delay}
    Let $K > 0$ and $\delta \in (0,1]$ and Assumptions~\ref{asm:heavy-tailed}, \ref{asm:smoothness}, \ref{asm:bounded-function} hold for $Q = \R^d$. Assume that $\beta_1 \in [0,1)$, $\beta_2 = 1 - \frac{1}{K}$ (for \algname{Clip-Adam-Norm}) and
    \begin{align*}
        \gamma = \Theta\!\Bigg(\!\min\!\Bigg\{\!&\frac{b_{-1}}{L (K+1)^{\frac{\alpha-1}{3\alpha-2}} A}, \frac{b_{-1}\sqrt{M}}{\sqrt{L}\sigma (K+1)^{\frac{\alpha}{3\alpha - 2}}A^{\frac{\alpha - 1}{\alpha}} },\\
        &\revision{\frac{b_{-1} M^{\frac{\alpha}{2\alpha-1}}}{\sigma^{\frac{2\alpha}{2\alpha-1}} L^{\frac{\alpha-1}{2\alpha-1}} (K+1)^{\frac{\alpha}{3\alpha-2}} A^{\frac{2\alpha-2}{2\alpha-1}}}}\!\Bigg\}\!\Bigg),
    \end{align*}
    $\lambda = \Theta\left(\frac{b_{-1}\sqrt{M}}{\sqrt{L}\gamma A (K+1)^{\frac{\alpha - 1}{3\alpha - 2}}}\right)$, where $A = \ln\left(\frac{4}{\delta}\right)$. Then, to guarantee $\frac{1}{K+1}\sum_{t=0}^{K}\sqn{\nabla f(x_t)} \leq \varepsilon$ with probability at least $1 - \delta$ \algname{Clip-Adam}/\algname{Clip-M-AdaGrad-Norm} requires the following number of iterations/oracle calls:
    \begin{align}
        \widetilde{O}\!\Bigg(\!\frac{1}{(1-\beta_1)^{\frac{3}{2}}}\max&\Bigg\{\! \left(\frac{LM}{\varepsilon}\right)^{\frac{3\alpha-2}{2\alpha-1}}\!, \left(\frac{\sigma\sqrt{LM}}{\varepsilon}\right)^{\frac{3\alpha-2}{2\alpha-2}}\!,\notag\\
        &\left(\frac{\revision{\sigma^{\frac{2\alpha}{2\alpha-1}}(LM)^\frac{\alpha-1}{2\alpha - 1}}}{\varepsilon}\right)^{\frac{3\alpha-2}{2\alpha-2}}\!\Bigg\}\!\Bigg). \label{eq:complexity_non_cvx_no_delay}
    \end{align}
\end{theorem}

\paragraph{Discussion of the results.} Theorems~\ref{thm:main_result_cvx_main}, \ref{thm:main_result_non_cvx_main}, and \ref{thm:main_result_non_cvx_main_no_delay} provide high-probability complexities for \algname{Clip-Adam(D)}\algname{Clip-M-AdaGrad(D)-Norm} with \emph{polylogarithmic} dependence on the confidence level $\delta$. Up to the differences in logarithmic factors, these complexities coincide with the best-known ones for \algname{Clip-SGD} \citep{sadiev2023high, nguyen2023improved}. Moreover, the leading terms in \eqref{eq:complexity_non_cvx} and \eqref{eq:complexity_non_cvx_no_delay} are optimal up to logarithmic factors \citep{zhang2020adaptive}, though the first terms in \eqref{eq:complexity_non_cvx} and \eqref{eq:complexity_non_cvx_no_delay} can be improved \citep{arjevani2023lower}. In the convex case, the first term in \eqref{eq:complexity_cvx} is not optimal \citep{nemirovskij1983problem} and can be improved \citep{gorbunov2020stochastic, sadiev2023high}. The optimality of the second term in \eqref{eq:complexity_cvx} is still an open question.

It is also worth mentioning that the existing high-probability complexities for \algname{Adam}/\algname{AdaGrad}-type (without clipping) methods either have inverse power dependence on $\delta$ \citep{wang2023convergence} or have polylogarithmic dependence on $\delta$ but rely on the assumption that the noise is sub-Gaussian/bounded \citep{li2020high, liu2023high, li2023convergence}, which is stronger than bounded variance assumption. Under the additional assumption that the emprical risk is bounded and the (non-central) $\alpha$-th moment of the stochastic gradient are bounded and the empirical risk is smooth, which are stronger than Assumptions~\ref{asm:bounded-function}, \ref{asm:heavy-tailed} and \ref{asm:smoothness} respectively, \citet{li2023high} derive a similar bound to \eqref{eq:complexity_non_cvx_no_delay} for \algname{Clip-AdaGrad-Norm}. We emphasize that boundedness and smoothness of the empirical risk imply the boundedness and smoothness of all $f_{\xi}(x)$ in the worst case (e.g., when the distribution $\cD$ is discrete). Therefore, in the worst case, these assumptions imply the boundedness of $\nabla f_{\xi}(x)$ (in view of the second part of \eqref{eq:smooth} for function $f_\xi$), meaning that the noise is bounded and, thus, sub-Gaussian. In this case, clipping is not needed for \algname{AdaGrad} to achieve good high-probability convergence guarantees as shown by \citet{li2020high, liu2023high}. Our Theorem~\ref{thm:main_result_non_cvx_main_no_delay} extends this result to the momentum version of \algname{Clip-AdaGrad-Norm} under less restrictive assumptions (not implying sub-Gaussianity of the noise) and gives the first high-probability convergence bounds for \algname{Clip-Adam} with polylogarithmic dependence on $\delta$. 

Moreover, to the best of our knowledge, Theorems~\ref{thm:main_result_cvx_main} and \ref{thm:main_result_non_cvx_main} are the first results showing high-probability convergence of \algname{Adam}/\algname{AdaGrad}-type methods with polylogarithmic dependence on the confidence level in the case of the heavy-tailed noise without extra assumptions such as Assumption~\ref{asm:bounded-function}. We also show that the iterates of \algname{Clip-AdamD}/\algname{Clip-M-AdaGradD} do not leave set $Q$ with high probability, where $Q = B_{2R}(x^*)$ in the convex case and $Q = \left\{x \in \R^d \mid \exists y \in \cL_f(2\Delta): \; \norm{x - y} \leq \frac{\sqrt{\Delta}}{20\sqrt{L}}\right\}$ with $\cL_f(2\Delta) \eqdef \{y\in \R^d\mid f(y) \leq f_* + 2\Delta\}$ in the non-convex case. Further details and proofs are deferred to Appendix~\ref{appendix:proofs-convergence}.

\paragraph{Assumption~\ref{asm:bounded-function} in Theorem~\ref{thm:main_result_non_cvx_main_no_delay}.} As we explain above, Assumption~\ref{asm:bounded-function} is weaker than the one used in \citet{li2023high}. It is worth mentioning that Assumption~\ref{asm:bounded-function} is relatively restrictive. Nevertheless, we need this assumption in our proof to overcome the difficulty of analyzing stochastic methods with correlated stepsizes, i.e., to handle the fact that $g_t$ and $b_t$ are dependent. The existing approaches typically use boundedness of the variance and the norm of the gradient (see Lemma 5.1 in \citet{defossez2022simple}) or assume that the noise is sub-Gaussian \citep{li2023high} to tackle this issue. In the heavy-tailed noise regime, these assumptions do not hold. Therefore, we use Assumption~\ref{asm:bounded-function}. More precisely, to decouple $b_t$ and $g_t$ in the analysis, we multiply inequality \eqref{eq:problematic_inequality} by $b_t$. However, it eventually leads to the non-trivial weighted sum of function values $\sum_{t=1}^{T-1}\left(\frac{b_t}{p_t} - \frac{b_{t-1}}{p_{t-1}}\right)(f(x_t) - f_*)$ in inequality \eqref{eq:problematic_sum}. To estimate this sum, we apply Assumption~\ref{asm:bounded-function}. We are not aware of the alternative ways of analyzing versions of \algname{AdaGrad}/\algname{Adam} or closely related methods in the heavy-tailed noise regime.

\paragraph{Analysis of coordinate-wise methods.} In Appendix~\ref{appendix:coordinate-wise_methods}, we derive new results for \algname{Clip-AdamD} and \algname{Clip-M-AdaGradD} in the non-convex case, i.e., for the methods with coordinate-wise scaling. The analysis and the derived bounds are similar to the ones from Theorem~\ref{thm:main_result_non_cvx_main}. The new bounds explicitly depend on the dimension of the problem, which is standard for the methods with coordinate-wise scaling. In particular, under the coordinate-wise version of Assumption~\ref{asm:heavy-tailed} (see Assumption~\ref{asm:heavy-tailed_coordinate}), we show that there exists a proper choice of $\gamma$ and $\lambda$ ensuring that $\frac{1}{K+1}\sum_{k=0}^{K}\sqn{\nabla f(x_k)} \leq \varepsilon$ with probability at least $1-\delta$ after the following number of iterations/oracle calls of \algname{Clip-AdamD}/\algname{Clip-M-AdaGradD}:
\begin{align*}
        &\widetilde{\cO}\left(\max\left\{\left(\frac{\sqrt{d}L\Delta}{(1-\beta_1)^3 \varepsilon}\right)^{\frac{3\alpha-2}{2\alpha-1}}, \right.\right.\\
        &\qquad \left(\frac{\sqrt{L\Delta}\left(\sum_{i=1}^d\sigma_i^\alpha\right)^\frac{1}{\alpha}}{(1-\beta_1)^\frac{3}{2}\sqrt{d}^{\frac{2 - \alpha}{\alpha}}\varepsilon}\right)^\frac{3\alpha-2}{2\alpha-2},\\
        &\qquad\left.\left. \left(\frac{d^{\frac{\alpha-1}{2\alpha - 1}}\left(\sum_{i=1}^d\sigma_i^{2\alpha}\right)^\frac{1}{2\alpha - 1}(L\Delta)^\frac{\alpha-1}{2\alpha - 1}}{(1-\beta_1)^\frac{\alpha - 1}{2\alpha -1}\varepsilon}\right)^\frac{3\alpha-2}{2\alpha-2}\right\}\right).
    \end{align*}

\paragraph{Discussion of \citet{kunstner2023noise}.} \citet{kunstner2023noise} show that \algname{Adam} outperforms \algname{SGD} even in full-batch settings, hinting that the success of \algname{Adam} is not only related to its better interaction with the heavy-tailed noise. Our focus is on high-probability convergence of \algname{AdaGrad}/\algname{Adam}-based methods, showing that without clipping, they have poor high-probability complexities, similar to \algname{SGD}. Thus, our results complement \citet{kunstner2023noise} by highlighting the necessity of gradient clipping in \algname{AdaGrad}/\algname{Adam} for high-probability convergence.

\section{Numerical Experiments}\label{section:experiments}

\begin{figure*}[t]
        \centering
            \includegraphics[width=.25\linewidth]{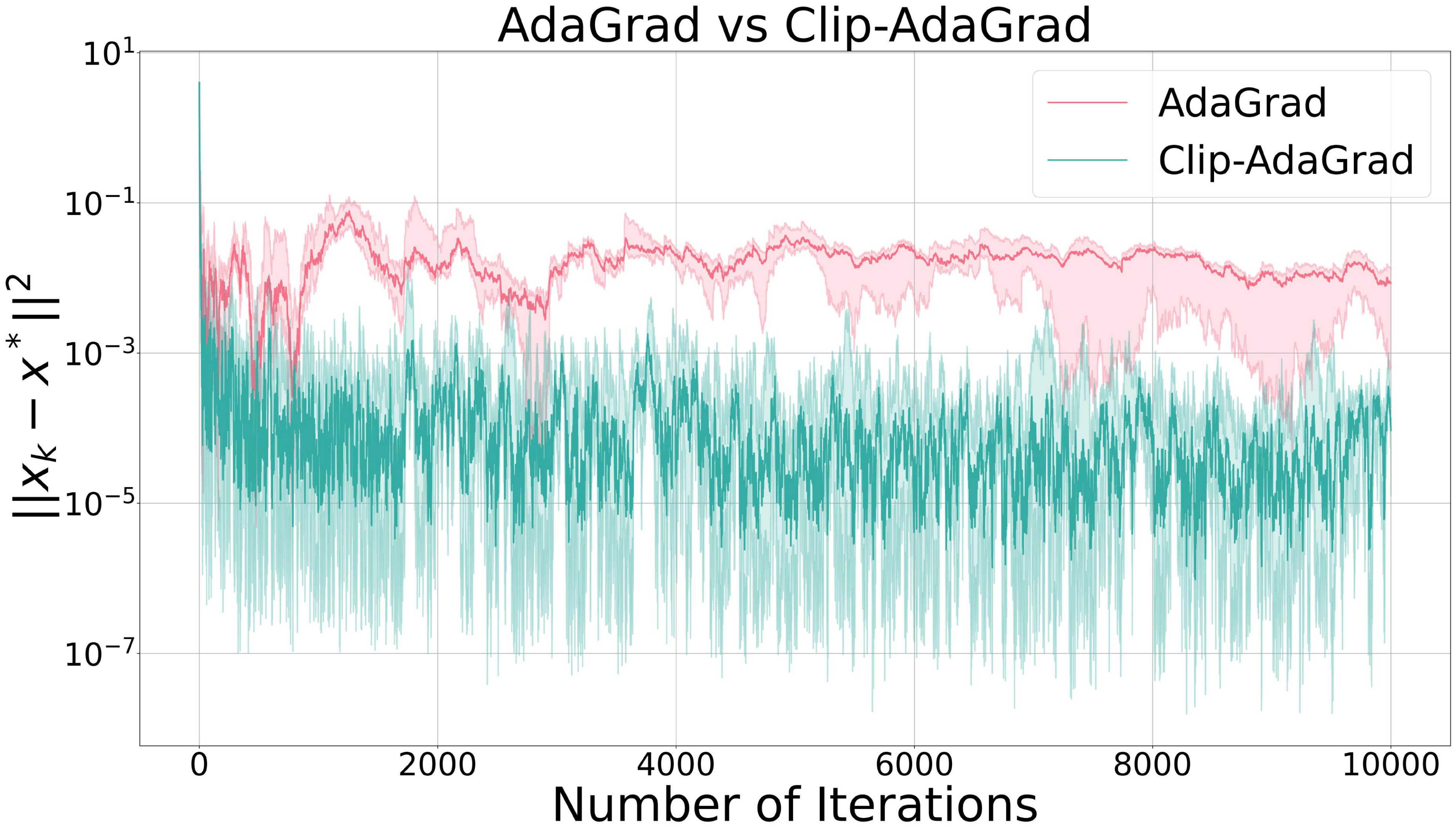}%
            \label{subfig:ada_vs_clip}%
            \includegraphics[width=.25\linewidth]{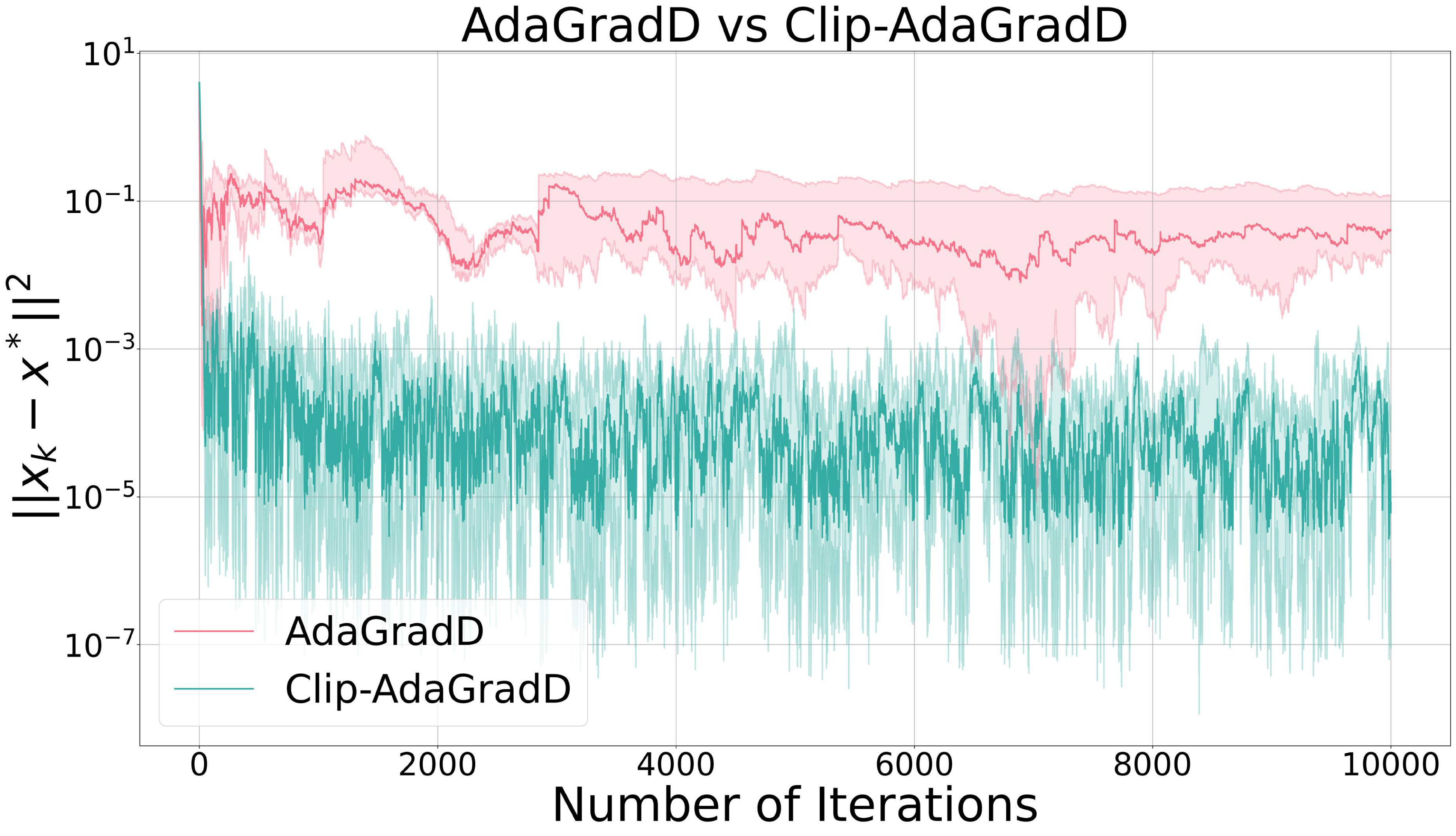}%
            \label{subfig:adad_vs_clip}%
            \includegraphics[width=.25\linewidth]{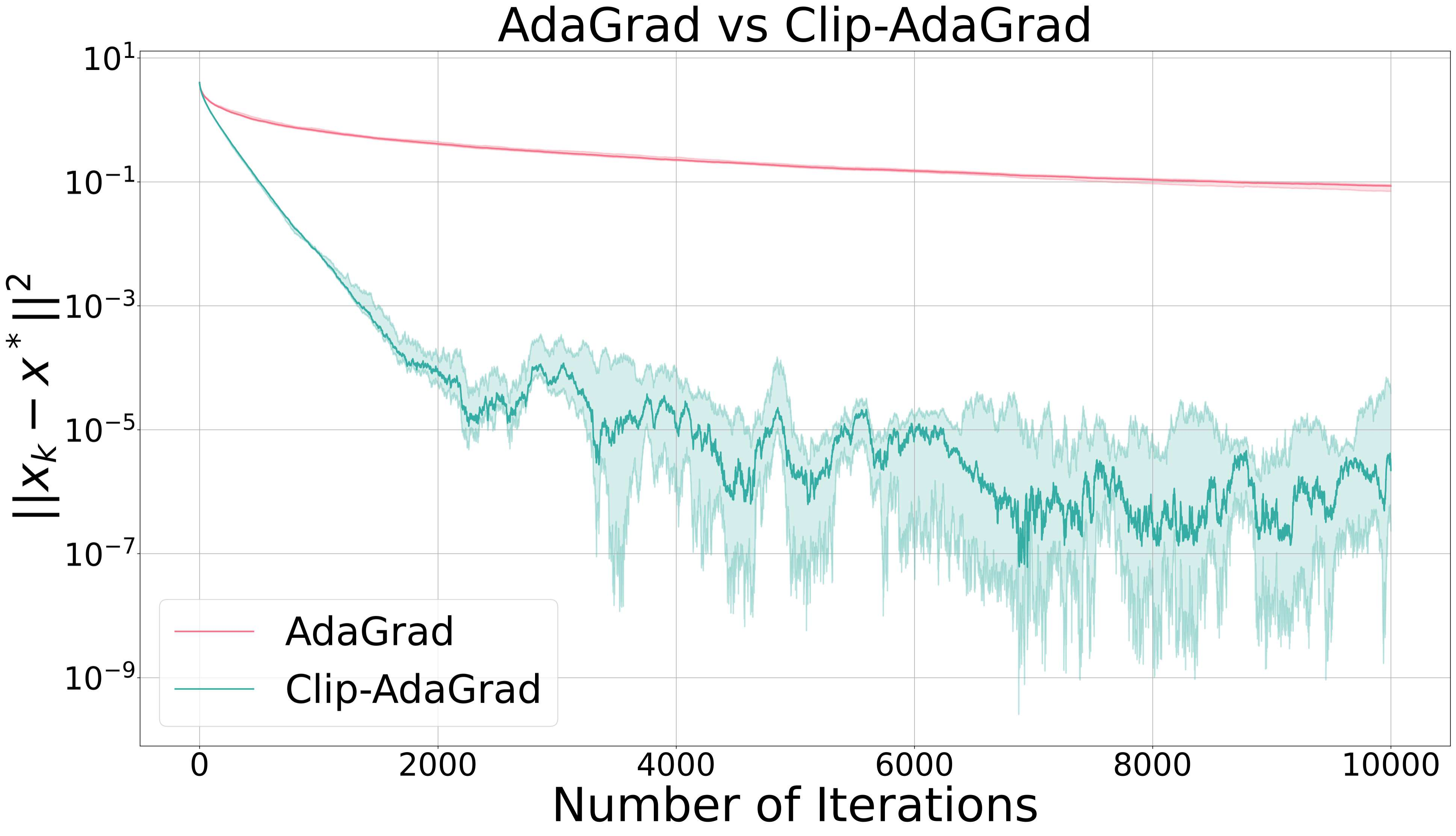}%
            \label{subfig:16_adagrad_vs_clip}%
            \includegraphics[width=.25\linewidth]{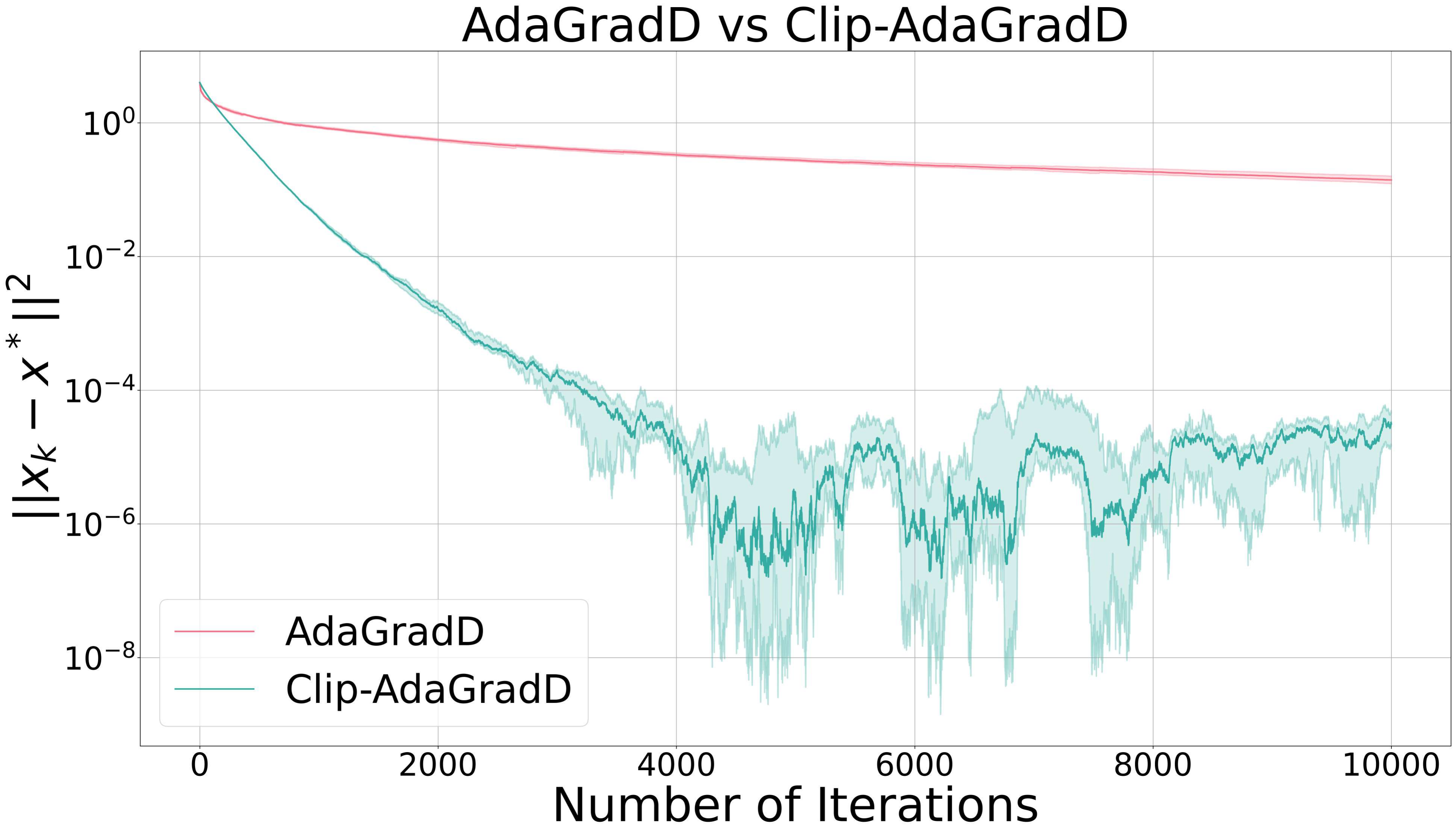}%
            \label{subfig:16_adagradd_vs_clip}%
        \caption{Performance of different versions of \algname{AdaGrad} (with and without clipping/delay) with stepsizes $\gamma = 1$ (two left plots) and $\gamma = \nicefrac{1}{16}$ (two right plots) on the quadratic problem.}
        \label{fig:vs_clip}
\end{figure*}

\begin{figure*}[t]
        \centering
            \includegraphics[width=.45\linewidth]{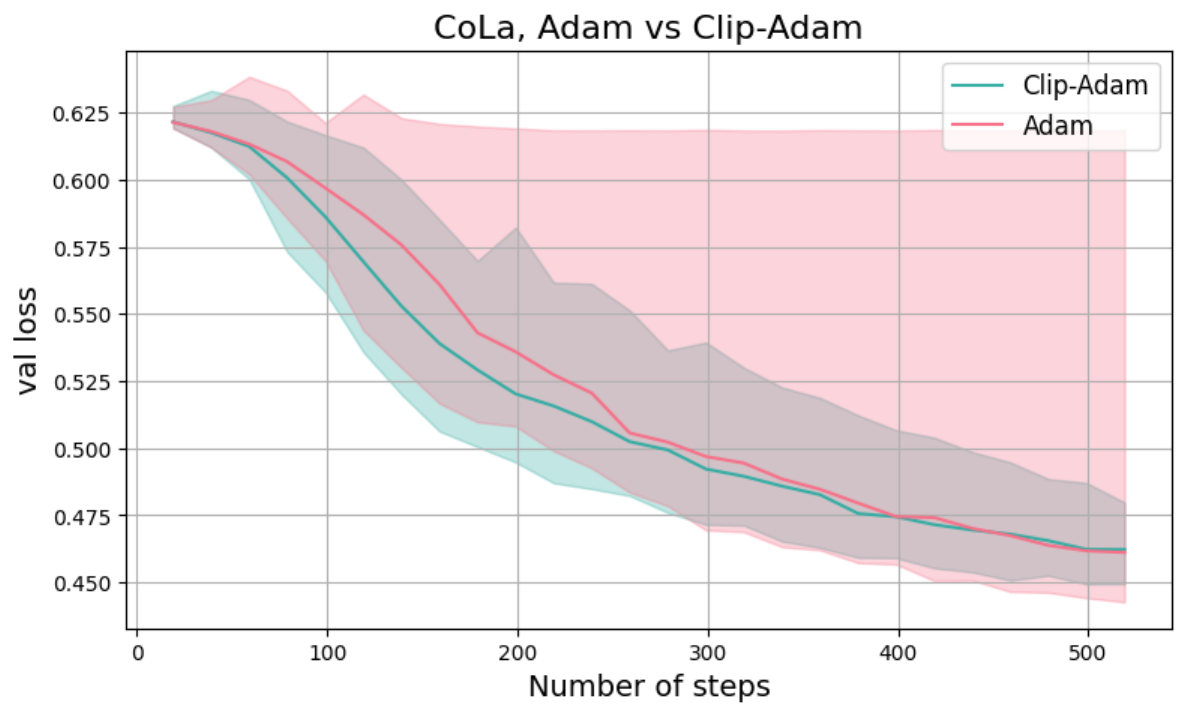}%
            \label{subfig:cola_adam}%
            \includegraphics[width=.45\linewidth]{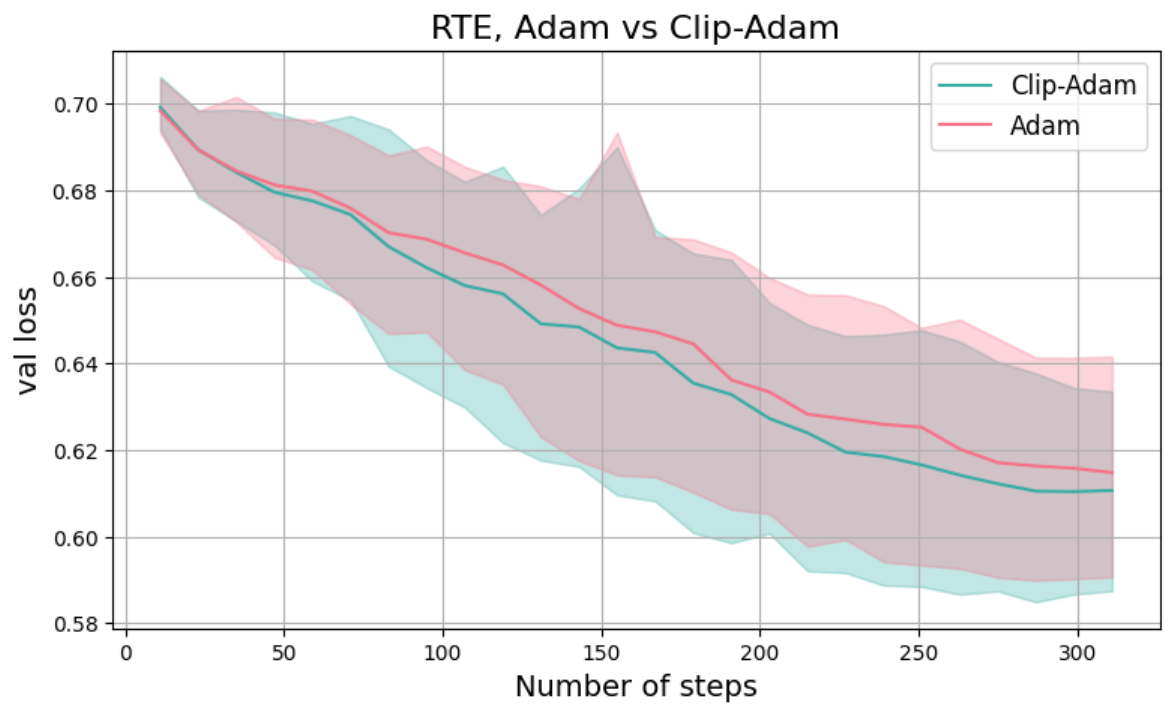}%
            \label{subfig:rte_adam}%
        \caption{Validation loss for ALBERT Base v2 fine-tuning task on the CoLa and RTE datasets.}
        \label{fig:adam_vs_clip_adam}
\end{figure*}

In this section, we illustrate numerically that clipping indeed helps \algname{AdaGrad} and \algname{Adam} \revision{to achieve better high-probability convergence}. Our code is available online: \url{https://github.com/yaroslavkliukin/Clipped-AdaGrad-and-Adam}.

\paragraph{Quadratic problem.} In the first experiment, we test the performance of different versions of \algname{AdaGrad} with and without clipping on the $1$-dimensional quadratic objective with additive heavy-tailed noise: $f(x) = \nicefrac{x^2}{2}$, $\nabla f_{\xi}(x) = x + \xi$, where the noise $\xi$ has probability density function $p(t) = \frac{3}{4(1+\abs{t})^{2.5}}$. In this case, Assumption~\ref{asm:heavy-tailed} is satisfied with any $\alpha \in (1,1.5)$ and the $\alpha$-th moment is unbounded for $\alpha \geq 1.5$. Moreover, the function is strongly convex and $L$-smooth with $L = 1$. We choose $x_0 = 2$, $b_0 = 3$ (for the versions of \algname{AdaGrad} with delay), $b_{-1} = 3$ (for other cases), $\lambda = \nicefrac{1}{2}$ for the methods with clipping, and choose $\gamma$ from $\{1, \nicefrac{1}{16}, \nicefrac{1}{128}\}$. Each method was run $100$ times with different seeds.

The results are given in Figure~\ref{fig:vs_clip}, where for each method, we show its trajectory in terms of the squared distance to the solution for $\gamma = 1$ and $\gamma = \nicefrac{1}{16}$ (the results for $\gamma = \nicefrac{1}{128}$ are given in Appendix~\ref{appendix:quadratic}). Solid lines correspond to the median value of the squared distances, and the error bands cover the areas from the $10$-th to $90$-th percentiles of $(x_t - x^*)^2$. These results show that clipped versions of \algname{AdaGrad} (with and without delay) achieve better convergence with higher probability \revision{than their non-clipped counterparts}. Moreover, versions with clipping exhibit similar behavior to each other. That is, the error bands for \algname{Clip-AdaGrad(D)} are lower than for \algname{AdaGrad(D)} (note that the vertical axis is shown in the logarithmic scale making the error bands for \algname{Clip-AdaGrad(D)} look wider than for \algname{AdaGrad(D)}, while they are not). In general, the \revision{observed} results \revision{for \algname{AdaGrad}-type methods} are perfectly aligned with the theory developed in this paper. We provide the results for \algname{Adam} with and without clipping/delay in Appendix~\ref{appendix:quadratic}.

\paragraph{ALBERT Base v2 fine-tuning.} In the second part of our experiments, we consider fine-tuning the pre-trained ALBERT Base v2 model \citep{lan2019albert} on CoLa and RTE datasets \citep{wang2018glue}. Since \algname{Adam}-based algorithms are the methods of choice for NLP tasks, in the main part of the paper, we focus on \algname{Adam} and its clipped versions -- \algname{Clip-Adam} and \algname{Clip-AdamD} -- and provide additional experiments with \algname{AdaGrad}-based methods in Appendix~\ref{appendix:albert}. We took a pre-trained model from the \href{https://huggingface.co/albert/albert-base-v2}{Hugging Face library}. Then, the model was fine-tuned following the methodology suggested by \citet{mosbach2020stability}. 
For the methods with clipping, we used the same batchsize and stepsize as for \algname{Adam} and tuned the clipping level for the two types of clipping\footnote{We did not consider the global/norm clipping (the considered in theory), since typically coordinate-wise or layer-wise clipping work better in fine-tuning, see \cite{lv2024parameterfinetuninglargelanguage}.}. 
In the main text, we show the results with layer-wise clipping. Further details and additional results are deferred to Appendix~\ref{appendix:albert}.

Before comparing the methods, we ran \algname{Adam} and checked how heavy-tailed the noise in the stochastic gradients is along the trajectory. In particular, for both tasks, we selected $4$ iterates corresponding to the starting point, points generated after $\approx \nicefrac{1}{3}$ and $\approx \nicefrac{2}{3}$ of all steps, and the last iterate. Then, for each of these points, we sampled size-$16$ (for CoLa) and size-8 (for RTE) mini-batched estimator $\nabla f_{\xi}(x)$ of the gradient $1000$ times, saved the resulting norms of the differences $\norm{\nabla f_{\xi}(x) - \nabla f(x)}$, and plotted their histogram, i.e., we plotted the histograms of the noise norm. Moreover, we also measure the heavy-tailedness of the noise following the approach from \citep{gorbunov2022clipped}: we compute two metrics $p_{mR} = F_{1.5}(\norm{\nabla f_{\xi}(x) - \nabla f(x)})$, which quantifies ``mild'' heavy tails, and $p_{eR} = F_3(\norm{\nabla f_{\xi}(x) - \nabla f(x)})$ introduced by \citet{jordanova2017measuring}, which quantifies ``extreme'' heavy tails, where $F_{a}(\norm{\nabla f_{\xi}(x) - \nabla f(x)}) = \PP\{\norm{\nabla f_{\xi}(x) - \nabla f(x)} > Q_3 + a(Q_3 - Q_1)\}$ and $Q_i$ is the $i$-th quartile of $\norm{\nabla f_{\xi}(x) - \nabla f(x)}$. To illustrate the heavy-tailedness clearly, we divide these metrics to the ones computed for the standard normal distribution ($p_{mR\cN}$ and $p_{eR\cN}$) and show $\rho_{mR} = \nicefrac{p_{mR}}{p_{mR\cN}}$ and $\rho_{eR} = \nicefrac{p_{eR}}{p_{eR\cN}}$ on the plots. The histograms are provided in Figure~\ref{fig:cola_rte_distribution} (see Appendix~\ref{appendix:albert}). They show that the noise distribution has much heavier tails for CoLa than for RTE.


Then, similarly to the experiments with the quadratic problem, we ran the methods $100$ times, and for each step, we computed the median value of the validation loss and its $5$-th and $95$-th percentiles. The results are presented in Figure~\ref{fig:adam_vs_clip_adam}, where the solid lines correspond to the medians and the error bands cover the areas between $5$-th and $95$-th percentiles. As expected, \algname{Adam} exhibits poor high-probability convergence on the CoLa datasets where the noise is significantly heavy-tailed, and \algname{Clip-Adam} shows much better performance: the area between $5$-th and $95$-th percentiles is relatively narrow for \algname{Clip-Adam}. In contrast, for the RTE dataset, \algname{Clip-Adam} performs similarly to \algname{Adam}. This is also expected since the noise is much less heavy for RTE, as Figure~\ref{fig:cola_rte_distribution} shows. Taking into account the negative results from Section~\ref{section:failure}, and the upper bounds from Section~\ref{section:convergence}, we conclude that these numerical results are well-aligned with the theory developed in the paper.

In Appendix~\ref{appendix:scalingup}, we also provide additional experiments with the fine-tuning the $355\mathbf{M}$ parameter RoBERTa Large model \cite{liu2019robertarobustlyoptimizedbert} on the two GLUE \cite{wang2018glue} datasets: QNLI ($116\mathbf{k}$ question-answer pairs) and CoLa ($10.7\mathbf{k}$ linguistic acceptability examples). Similarly to the previous results, the clipped variants consistently outperform their unclipped counterparts for the larger model.

\section*{Acknowledgements}

The work of Savelii Chezhegov and Aleksandr Beznosikov on the final version of this paper was supported by the Ministry of Economic Development of the Russian Federation (agreement with MIPT No. 139-15-2025-013, dated June 20, 2025, IGK 000000C313925P4B0002).

\section*{Impact Statement}

This paper presents work whose goal is to advance the field of 
Machine Learning. There are many potential societal consequences 
of our work, none of which we feel must be specifically highlighted here.

\bibliography{refs}
\bibliographystyle{icml2025}

\newpage
\appendix
\onecolumn

\section{Technical Details and Auxiliary Results}\label{appendix-auxiliary}
\paragraph{Additional notation.} \revision{For the ease of exposition, we introduce the following notation for the proofs:}
\begin{align*}
    g_t &= \clip\left(\nabla f_{\xi_t}(x_t), \lambda\right),\\
    \theta_t &= g_t - \nabla f(x_t),\\
    \theta_t^u &= g_t - \E_{\xi_t} [g_t],\\ 
    \theta_t^b &= \E_{\xi_t}[g_t] - \nabla f(x_t),\\
    R_t &= \norm{x_t - x^*},\\
    \Delta_t &= f(x_t) - f_*.
\end{align*}


\paragraph{Auxiliary results.} We also use the following \revision{standard} results.

\begin{proposition}[Young's inequality.]
    \label{prop: young}
    For any $x, y \in \R^d$ and $p > 0$ the following inequality holds:
    \begin{align*}
        \sqn{x + y} \leq \left(1 + p\right)\sqn{x} + \left(1 + \frac{1}{p}\right)\sqn{y}.
    \end{align*}
    In particular, for $p = 1$
    \begin{align*}
        \sqn{x + y} \leq 2\sqn{x} + 2\sqn{y}.
    \end{align*}
\end{proposition}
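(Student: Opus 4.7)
The plan is to prove Proposition~\ref{prop: young} directly from the definition of the Euclidean norm via an elementary expansion and an AM-GM-style bound on the cross term. Since the statement is a classical form of Young's inequality, the proof should be short, with no substantive obstacle; the challenge is mostly cosmetic, namely choosing the weighted AM-GM constants so that they match $1+p$ and $1+1/p$ exactly.

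First I would expand the squared norm using bilinearity of the inner product:
\begin{equation*}
    \|x+y\|^2 = \|x\|^2 + 2\langle x, y\rangle + \|y\|^2.
\end{equation*}
Next I would bound the cross term. By Cauchy--Schwarz, $\langle x, y\rangle \leq \|x\|\cdot\|y\|$, and then by the weighted AM-GM inequality applied to $\sqrt{p}\,\|x\|$ and $\|y\|/\sqrt{p}$,
\begin{equation*}
    2\|x\|\cdot\|y\| = 2\left(\sqrt{p}\,\|x\|\right)\left(\tfrac{1}{\sqrt{p}}\|y\|\right) \leq p\|x\|^2 + \tfrac{1}{p}\|y\|^2,
\end{equation*}
which is valid for any $p > 0$. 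Substituting this bound into the expansion yields
\begin{equation*}
    \|x+y\|^2 \leq (1+p)\|x\|^2 + \left(1+\tfrac{1}{p}\right)\|y\|^2,
\end{equation*}
as required. The special case $p=1$ follows immediately by plugging $p=1$ into this inequality, giving $\|x+y\|^2 \leq 2\|x\|^2 + 2\|y\|^2$.

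The hardest part, if anything, would be to verify that the same constants arise naturally from the Fenchel-style Young's inequality $ab \leq \tfrac{a^2}{2c} + \tfrac{cb^2}{2}$ with the choice $c = p$; this is the same calculation repackaged. Since no tightness, equality condition, or regime-dependent argument is needed in the statement, I would not bother with a more refined derivation. The proof therefore reduces to one line of expansion followed by one application of weighted AM-GM.
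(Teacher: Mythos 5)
Your proof is correct and follows the standard argument: expand $\|x+y\|^2$, bound the cross term via Cauchy--Schwarz, and then apply weighted AM-GM with weights $\sqrt{p}$ and $1/\sqrt{p}$. The paper itself states Proposition~\ref{prop: young} without proof (it is listed under ``standard results''), so there is no paper proof to compare against; your derivation is the conventional one and fills in the omitted verification cleanly.
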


\begin{lemma}[Lemma B.2 from \citep{defossez2022simple}]
    \label{lem: numerical}
    Let $0 \leq a \leq b$ be some non-negative integers and $0 \leq q < 1$. Then,
    \begin{align*}
        \sum\limits_{k = a}^b q^kk \leq \frac{q}{(1 - q)^2}.
    \end{align*}
\end{lemma}

\begin{lemma}[Lemma 1 from \citep{streeter2010less}]
    \label{lem: numerical-integral}
    Let $\{a_i\}_{i=1}^n$ and $c$ be non-negative reals. Then,
    \begin{align*}
        \sum\limits_{k=1}^n \frac{a_k}{\sqrt{c + \sum_{i=1}^k a_i}} \leq 2\sqrt{c + \sum\limits_{k=1}^n a_k}
    \end{align*}
\end{lemma}

The following lemma by \citet{sadiev2023high} helps to estimate bias and variance of the clipped stochastic gradient \revision{satisfying} Assumption~\ref{asm:heavy-tailed}.
\begin{lemma}[Lemma 5.1 from \citep{sadiev2023high}]
\label{lem: clip-effect}
    Let $X$ be a random vector from $\R^d$ and $\widehat{X} = \clip(X, \lambda)$. Then, $\norm{\widehat{X} - \E\left[\widehat{X}\right]} \leq 2\lambda$. Moreover, if for some $\sigma \geq 0$ and $\alpha \in (1, 2]$ we have $\E\left[X\right] = x \in \R^d$, $\E\left[\norm{X - x}^\alpha\right] \leq \sigma^\alpha$, and $\norm{x} \leq \frac{\lambda}{2}$, then
    \begin{align*}
        \norm{\E\left[\widehat{X}\right] - x} &\leq \frac{2^\alpha\sigma^\alpha}{\lambda^{\alpha - 1}},\\
        \E\left[\norm{\widehat{X} - x}^2\right] &\leq 18\lambda^{2-\alpha}\sigma^\alpha,\\
        \E\left[\norm{\widehat{X} - \E\left[\widehat{X}\right]}^2\right] &\leq 18\lambda^{2-\alpha}\sigma^\alpha.\\
    \end{align*}
\end{lemma}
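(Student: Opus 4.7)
\textbf{Proof plan for Lemma~\ref{lem: clip-effect}.}

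\textbf{Step 1 (Deterministic bound).} The inequality $\norm{\widehat{X} - \E[\widehat{X}]} \leq 2\lambda$ is immediate from the definition of clipping. By construction, $\norm{\widehat{X}} \leq \lambda$ pointwise, so Jensen's inequality gives $\norm{\E[\widehat{X}]} \leq \E\norm{\widehat{X}} \leq \lambda$, and the triangle inequality finishes this part.

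\textbf{Step 2 (Bias bound).} Since $\E[X] = x$, I would write $\E[\widehat{X}] - x = \E[\widehat{X} - X]$, so $\norm{\E[\widehat{X}] - x} \leq \E\norm{\widehat{X} - X}$. The integrand vanishes unless $\norm{X} > \lambda$, in which case $\widehat{X} = (\lambda/\norm{X})X$ and $\norm{\widehat{X} - X} = \norm{X} - \lambda \leq \norm{X}$. Using $\norm{x} \leq \lambda/2$, on the event $\{\norm{X} > \lambda\}$ we have $\norm{X-x} > \lambda/2$ and $\norm{X} \leq \norm{X-x} + \lambda/2 \leq 2\norm{X-x}$. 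Hence
\begin{equation*}
\norm{\E[\widehat{X}] - x} \;\leq\; 2\,\E\!\left[\norm{X-x}\,\mathbb{1}\{\norm{X-x} > \lambda/2\}\right] \;\leq\; \frac{2\,\E[\norm{X-x}^\alpha]}{(\lambda/2)^{\alpha-1}} \;\leq\; \frac{2^\alpha \sigma^\alpha}{\lambda^{\alpha-1}},
\end{equation*}
where the penultimate step is a standard Markov-type trick: $\norm{X-x}\mathbb{1}\{\norm{X-x} > \lambda/2\} \leq \norm{X-x}^\alpha/(\lambda/2)^{\alpha-1}$.

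\textbf{Step 3 (Second-moment bound around $x$).} I would split $\E\norm{\widehat{X}-x}^2$ according to the event $E := \{\norm{X-x} \leq \lambda/2\}$. On $E$, the triangle inequality gives $\norm{X} \leq \lambda$, so $\widehat{X} = X$, and I bound $\norm{X-x}^2 = \norm{X-x}^\alpha \cdot \norm{X-x}^{2-\alpha} \leq (\lambda/2)^{2-\alpha}\norm{X-x}^\alpha$; taking expectation contributes at most $2^{\alpha-2}\lambda^{2-\alpha}\sigma^\alpha$. On the complement of $E$, a triangle inequality with $\norm{\widehat{X}} \leq \lambda$ and $\norm{x} \leq \lambda/2$ yields $\norm{\widehat{X}-x}^2 \leq 9\lambda^2/4$, and Markov's inequality on $\PP\{\norm{X-x} > \lambda/2\} \leq \sigma^\alpha/(\lambda/2)^\alpha$ contributes at most $(9 \cdot 2^\alpha/4)\lambda^{2-\alpha}\sigma^\alpha$. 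Summing the two pieces and using $\alpha \in (1,2]$ gives a constant $\leq 10 \leq 18$, yielding the claimed bound.

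\textbf{Step 4 (Variance-type bound).} The last inequality is free: the scalar $c \mapsto \E\norm{\widehat{X}-c}^2$ is minimized at $c = \E[\widehat{X}]$, so $\E\norm{\widehat{X}-\E[\widehat{X}]}^2 \leq \E\norm{\widehat{X}-x}^2 \leq 18\lambda^{2-\alpha}\sigma^\alpha$.

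\textbf{Expected difficulty.} There is no real obstacle here; every step is elementary. The only subtle choice is the splitting event. A naive split on $\{\norm{X} \leq \lambda\}$ versus $\{\norm{X} > \lambda\}$ also works but makes it slightly awkward to apply the $\alpha$-th moment bound, whereas splitting on $\{\norm{X-x} \leq \lambda/2\}$ cleanly invokes Assumption~\ref{asm:heavy-tailed} and the hypothesis $\norm{x}\leq \lambda/2$ together. Getting the explicit constant $18$ is non-tight but comfortable; one could even trade a sharper constant for simpler case analysis without affecting downstream use.
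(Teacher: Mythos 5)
Your proof is correct, and it follows the standard argument for this type of clipping lemma; note the paper itself does not prove this statement but cites it directly from Sadiev et al.\ (2023), so there is no ``paper's own proof'' to compare against in-text. Every step checks out: the deterministic bound via Jensen and the triangle inequality, the bias bound via the inclusion $\{\norm{X}>\lambda\}\subseteq\{\norm{X-x}>\lambda/2\}$ together with $\norm{X}\le 2\norm{X-x}$ on that event and the Markov-type truncation $\norm{X-x}\mathbb{1}\{\norm{X-x}>\lambda/2\}\le\norm{X-x}^\alpha/(\lambda/2)^{\alpha-1}$ giving exactly $2\cdot 2^{\alpha-1}\sigma^\alpha/\lambda^{\alpha-1}=2^\alpha\sigma^\alpha/\lambda^{\alpha-1}$, the second-moment split on $\{\norm{X-x}\le\lambda/2\}$ producing $2^{\alpha-2}\lambda^{2-\alpha}\sigma^\alpha + (9\cdot 2^\alpha/4)\lambda^{2-\alpha}\sigma^\alpha = 5\cdot 2^{\alpha-1}\lambda^{2-\alpha}\sigma^\alpha\le 10\lambda^{2-\alpha}\sigma^\alpha\le 18\lambda^{2-\alpha}\sigma^\alpha$, and the variance bound via the variational characterization of the mean. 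This is essentially the same route as the cited reference; the only cosmetic difference is that you carry an extra slack factor of $2$ in the bias step (you could keep the $-\lambda$ from $\norm{\widehat X - X}=\norm{X}-\lambda$ to shave the constant to $2^{\alpha-1}$), but that costs nothing since the target constant is $2^\alpha$.
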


Finally, in the analysis of \algname{Clip-RAdaGradD}, we face \revision{the} sums of martingale-difference sequences. One of the tools that we use to handle them is Bernstein's inequality \citep{bennett1962probability, dzhaparidze2001bernstein, freedman1975tail}.

\begin{lemma}[Bernstein's inequality]
\label{lem: bernstein}
    Let the sequence of random variables $\{X_i\}_{i \geq 1}$ form a martingale difference sequence, i.e., $\E\left[X_i \ | \ X_{i-1}, \ldots, X_1\right] = 0$ for all $i \geq 1$. Assume that conditional variances $\sigma_i^2 = \E\left[X_i^2 \ | \ X_{i-1}, \ldots, X_1\right]$ exist and are bounded and also assume that there exists deterministic constant $c > 0$ such that $|X_i| \leq c$ almost surely for all $i \geq 1$. Then for all $b > 0$, $G > 0$ and $n \geq 1$
    \begin{align*}
        \mathbb{P}\left\{\abs{\sum\limits_{i=1}^n X_i} > b \text{ and } \sum\limits_{i=1}^n \sigma_i^2~\revision{\leq}~G\right\} \leq 2\exp\left(-\frac{b^2}{2G + \frac{2cb}{3}}\right).
    \end{align*}

\end{lemma}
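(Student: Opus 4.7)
The plan is to follow the classical Chernoff--Cramér exponential-moment method for martingales, combined with a stopping-time truncation that lets us condition on the variance event $\{\sum_i \sigma_i^2 \leq G\}$ without losing the supermartingale structure. Throughout, let $\mathcal{F}_i = \sigma(X_1,\ldots,X_i)$, so that $\sigma_i^2 = \EE[X_i^2 \mid \mathcal{F}_{i-1}]$ is $\mathcal{F}_{i-1}$-measurable and $V_n := \sum_{i=1}^n \sigma_i^2$ is $\mathcal{F}_{n-1}$-measurable.

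First I would introduce the stopping time $\tau := \inf\{n \geq 1 : V_n > G\}$ (with $\inf\varnothing = \infty$). Since $V_n$ is $\mathcal{F}_{n-1}$-adapted, $\tau$ is indeed a stopping time, and on the event of interest $\{V_n \leq G\}$ we have $\tau > n$, so $S_{n\wedge\tau} = \sum_{i=1}^n X_i$. It therefore suffices to control $\PP\{S_{n\wedge\tau} > b,\ V_{n\wedge\tau} \leq G\}$ and combine it with the analogous bound on $-S_{n\wedge\tau}$ via a union bound to obtain the factor of two.

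Second, for $0 < \lambda < 3/c$ I would invoke the elementary inequality $e^y \leq 1 + y + \frac{y^2/2}{1 - |y|/3}$ valid for $|y| < 3$, applied to $y = \lambda X_i$ (which satisfies $|y| \leq \lambda c < 3$ by assumption). Taking conditional expectations and using $\EE[X_i \mid \mathcal{F}_{i-1}] = 0$ gives
\begin{equation*}
\EE[e^{\lambda X_i} \mid \mathcal{F}_{i-1}] \;\leq\; \exp\!\left( \frac{\lambda^2 \sigma_i^2}{2(1 - \lambda c/3)} \right).
\end{equation*}
With $\psi(\lambda) := \lambda^2/(2(1-\lambda c/3))$, this shows that $M_n := \exp(\lambda S_{n\wedge\tau} - \psi(\lambda) V_{n\wedge\tau})$ is a nonnegative supermartingale with $\EE[M_n] \leq M_0 = 1$; here the stopping-time truncation is crucial because it freezes the process on $\{\tau \leq n\}$ and guarantees that the relevant bound on the moment generating function is used only while $V_{i-1} \leq G$, where $\lambda c < 3$ suffices.

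Third, I would apply Markov's inequality: on the event $\{S_{n\wedge\tau} > b,\ V_{n\wedge\tau} \leq G\}$ we have $M_n \geq \exp(\lambda b - \psi(\lambda) G)$, so
\begin{equation*}
\PP\{S_{n\wedge\tau} > b,\ V_{n\wedge\tau} \leq G\} \;\leq\; \exp(-\lambda b + \psi(\lambda) G).
\end{equation*}
Finally I would optimize, choosing $\lambda = b/(G + cb/3)$ (which satisfies $\lambda c/3 < 1$), so that $\lambda b - \psi(\lambda) G = \tfrac{b^2}{2(G + cb/3)}$. Combining with the analogous one-sided bound for $-S_{n\wedge\tau}$ yields the claimed two-sided inequality. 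The main technical nuisance is the bookkeeping around the stopping time---ensuring $\tau$ is adapted to the right filtration and that $V_{n\wedge\tau} \leq G$ on $\{V_n \leq G\}$---but the martingale structure and the explicit MGF bound make the rest routine.
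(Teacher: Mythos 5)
The paper does not actually prove this lemma: it is stated as a standard auxiliary result and cited to Bennett, Dzhaparidze--van~Zanten, and Freedman, so there is no in-paper proof to compare against. Your argument is the classical Chernoff--Cram\'er route to Freedman-type concentration and it is correct in substance. The elementary inequality $e^y \le 1 + y + \tfrac{y^2/2}{1-|y|/3}$ on $|y|<3$ (valid for $y\ge 0$ via $k!\ge 2\cdot 3^{k-2}$, and trivially for $y<0$ since there $e^y \le 1+y+y^2/2$) gives, after conditioning and using $|X_i|\le c$ a.s., the bound $\EE[e^{\lambda X_i}\mid\cF_{i-1}]\le\exp\bigl(\psi(\lambda)\sigma_i^2\bigr)$ with $\psi(\lambda)=\lambda^2/(2(1-\lambda c/3))$ whenever $\lambda c<3$; the choice $\lambda=b/(G+cb/3)$ indeed satisfies $\lambda c<3$ and yields exponent $b^2/(2G+2cb/3)$, and the factor two is the union bound over $\pm S_n$. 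So the proof goes through.

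One piece of your reasoning is, however, wrong as stated, even though it does not break the proof. You assert that the stopping time $\tau=\inf\{n:V_n>G\}$ is \emph{crucial} because it ``guarantees that the relevant bound on the moment generating function is used only while $V_{i-1}\le G$.'' That is not what the stopping time does, and the claimed dependence does not exist: the conditional MGF bound $\EE[e^{\lambda X_i}\mid\cF_{i-1}]\le\exp(\psi(\lambda)\sigma_i^2)$ holds for \emph{every} $i$, requiring only $|X_i|\le c$ a.s.\ and $\lambda c<3$; it makes no reference whatsoever to $V_{i-1}$ or to $G$. Consequently $M'_n:=\exp(\lambda S_n-\psi(\lambda)V_n)$ is already a nonnegative supermartingale with $\EE[M'_n]\le 1$ with no stopping at all, and on the event $\{S_n>b,\ V_n\le G\}$ one has $M'_n>\exp(\lambda b-\psi(\lambda)G)$ directly, so Markov's inequality finishes the one-sided bound. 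Stopping at $\tau$ is harmless (a stopped nonnegative supermartingale is still one, and $\{V_n\le G\}\subset\{\tau>n\}$), and a stopping time \emph{is} genuinely needed in time-uniform or optional-stopping variants of Freedman's inequality, but for the fixed-$n$ statement here it is dead weight. You should either drop it or correct the justification; as written, the explanation would mislead a reader into thinking the MGF bound fails when $V_{i-1}>G$.
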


\newpage

\section{Missing Proofs from \cref{section:failure}}\label{appendix:proofs_failure}

In this section, we provide further details regarding Theorem~\ref{thm:AdaGrads_failure_main} giving a negative result about high-probability convergence of \algname{Adam}/\algname{M-AdaGrad} and \algname{AdamD}/\algname{M-AdaGradD}. For all methods, we use the $1$-dimensional Huber loss function:
\begin{align*}
    f(x) = \begin{cases}
        \frac{1}{2}x^2, \qquad & \text{if } \abs{x} \leq \nu,\\
        \nu\left(\abs{x} - \frac{1}{2}\nu\right), \qquad &\text{otherwise.}
    \end{cases}
\end{align*}
This function is convex and $L$-smooth with $L = 1$. However, the construction of noises and proofs are different for \algname{Adam}, \algname{M-AdaGrad}, \algname{AdamD}, and \algname{M-AdaGradD}. Therefore, we provide the negative results for these methods separately in the following subsections. \revisionicml{We emphasize that the constructed examples are $1$-dimensional, meaning that they hold for coordinate-wise and norm-versions of the considered methods.}

\subsection{Failure of \algname{M-AdaGrad}}

We start with the following lemma giving a closed-form expression for the iterates of deterministic \algname{M-AdaGrad} applied to \eqref{eq:huber}.

\begin{lemma}
\label{lem:analytical-point}
    Suppose that the starting point $x_0$ is such that $x_0 > 0$. If after $T$ iterations of deterministic \algname{M-AdaGrad} with initial momentum\footnote{By default, $m_{-1} = 0$ but \revisionicml{the result of this lemma} holds for non-zero $m_{-1} \geq 0$ as well. \revisionicml{If $m_{-1} \leq 0$, then one can select $x_0 < 0$.}} \revisionicml{$m_{-1} \geq 0$} we have $\abs{x_t} > \nu$ and $x_t > 0$ for all $t=\overline{1, T-1}$, then
    \begin{align*}
        x_T = x_0 - \gamma\nu\sum\limits_{t = 0}^{T-1}\frac{1 -\beta_1^{t+1} + \beta_1^{t+1}\frac{m_{-1}}{\nu}}{\sqrt{b_{-1}^2 + (t+1)\nu^2}}.
    \end{align*}
\end{lemma}

\begin{proof}
    Since $\abs{x_t} > \nu$ and $x_t$ is positive, the gradient at $x_t$ is equal to $\nu$. Hence, \revision{by} substituting the gradient into the algorithm, we get the final result.
\end{proof}

The above lemma relies on the condition that $\abs{x_t} > \nu$ and $x_t > 0$ for all $t=\overline{1, T-1}$. For any $\gamma, b_{-1}$ and $T$ this condition can be achieved if we choose sufficiently small $\nu$.


Next, we estimate the interval where $x_T$ lies.

\begin{lemma}\label{lem:iter-bounds-huber}
Let the conditions of Lemma~\ref{lem:analytical-point} hold. Then, we have
    \begin{align*}
        &x_T \geq x_0 - \gamma \left(\frac{1}{\sqrt{1 + a_0}} + 2\sqrt{a_0 + T} - 2\sqrt{a_0 + 1} + \frac{m_{-1}\beta_1}{\nu(1-\beta_1)}\right),\\
        &x_T \leq x_0 - \gamma\left(1 - \beta_1\right)\left(2\sqrt{a_0 + T + 1} - 2\sqrt{a_0 + 1}\right),
    \end{align*}
    where $a_0 = \frac{b_{-1}^2}{\nu^2}$.
\end{lemma}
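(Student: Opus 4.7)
The plan is to start by invoking Lemma~\ref{lem:analytical-point} to obtain the closed-form expression for $x_T$, and then convert the problem into bounding the scalar sum
\[
    S \;=\; \sum_{t=0}^{T-1} \frac{1}{\sqrt{b_{-1}^2 + (t+1)\nu^2}} \;=\; \frac{1}{\nu}\sum_{k=1}^{T} \frac{1}{\sqrt{a_0+k}},
\]
after pulling out $\nu$ and substituting $a_0 = b_{-1}^2/\nu^2$. Since $x_T = x_0 - \gamma\nu S$, the upper bound on $x_T$ in the lemma is equivalent to a lower bound on $S$, and the lower bound on $x_T$ is equivalent to an upper bound on $S$.

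Both bounds on $S$ will come from the standard integral-comparison trick applied to the decreasing function $\varphi(s) = 1/\sqrt{a_0+s}$. For the \emph{lower} bound on $S$, on each interval $[k, k+1]$ with $k \geq 1$ we use $1/\sqrt{a_0+k} \geq \int_k^{k+1} \varphi(s)\,ds$, summing from $k=1$ to $T$ to get
\[
    S \;\geq\; \int_1^{T+1}\frac{ds}{\sqrt{a_0+s}} \;=\; 2\sqrt{a_0+T+1} - 2\sqrt{a_0+1},
\]
which yields the upper bound on $x_T$. For the \emph{upper} bound on $S$, I separate out the first term and use $1/\sqrt{a_0+k} \leq \int_{k-1}^{k} \varphi(s)\,ds$ for $k \geq 2$, giving
\[
    S \;\leq\; \frac{1}{\sqrt{a_0+1}} + \int_1^{T}\frac{ds}{\sqrt{a_0+s}} \;=\; \frac{1}{\sqrt{a_0+1}} + 2\sqrt{a_0+T} - 2\sqrt{a_0+1},
\]
which yields the matching lower bound on $x_T$ after multiplying by $-\gamma\nu$ and adding $x_0$.

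There is essentially no obstacle in this proof; the only mild subtlety is handling the boundary term $1/\sqrt{a_0+1}$ carefully in the upper bound on $S$, since for a monotone decreasing function one cannot directly compare $1/\sqrt{a_0+1}$ to an integral over $[0,1]$ without losing the whole point of the lemma (the lemma's lower bound on $x_T$ keeps this leading term explicit rather than absorbing it into $2\sqrt{a_0+T} - 2\sqrt{a_0}$). Once the integral comparison is written out correctly, the two bounds assemble directly into the statement.
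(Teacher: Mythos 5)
Your proof is correct and matches the paper's argument essentially verbatim: both invoke Lemma~\ref{lem:analytical-point}, reindex the sum, and apply the standard integral-comparison bounds for the decreasing function $s \mapsto 1/\sqrt{a_0+s}$, peeling off the first term for the upper estimate on the sum. The integrals you write, $\int_1^{T+1}\frac{ds}{\sqrt{a_0+s}}$ and $\int_1^{T}\frac{ds}{\sqrt{a_0+s}}$, are the paper's $\int_{a_0}^{a_0+T}\frac{dx}{\sqrt{1+x}}$ and $\int_{a_0}^{a_0+T-1}\frac{dx}{\sqrt{1+x}}$ after the substitution $x = a_0 + s - 1$, so there is no substantive difference.
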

\begin{proof}
    From Lemma~\ref{lem:analytical-point} we have:
    \begin{align*}
        x_T = x_0 - \gamma\sum\limits_{t = 0}^{T-1}\frac{1 - \beta_1^{t+1} + \beta_1^{t+1}\frac{m_{-1}}{\nu}}{\sqrt{a_0 + (t+1)}},
    \end{align*}
    \revision{where $a_0 = \frac{b_{-1}^2}{\nu^2}$.}  Next, we bound the second term in the following way:
    \begin{align}
    \label{eq:lower-adagrad}
        \sum\limits_{t = 0}^{T-1}\frac{1 - \beta_1^{t+1} + \beta_1^{t+1}\frac{m_{-1}}{\nu}}{\sqrt{a_0 + (t+1)}} &\geq \left(1 - \beta_1\right)\int\limits_{a_0}^{a_0 + T}\frac{1}{\sqrt{1 + x}}\,dx = \left(1 - \beta_1\right)(2\sqrt{a_0 + T + 1} - 2\sqrt{a_0 + 1}), 
    \end{align}
    \begin{align}
    \label{eq:upper-adagrad}
        \sum\limits_{t = 0}^{T-1}\frac{1 - \beta_1^{t+1} + \beta_1^{t+1}\frac{m_{-1}}{\nu}}{\sqrt{a_0 + (t+1)}} &= \sum\limits_{t = 0}^{T-1}\frac{1 - \beta_1^{t+1}}{\sqrt{a_0 + (t+1)}} + \sum\limits_{t = 0}^{T-1}\frac{\beta_1^{t+1}\frac{m_{-1}}{\nu}}{\sqrt{a_0 + (t+1)}} \nonumber\\&\leq \frac{1}{\sqrt{1 + a_0}} + \int\limits_{a_0}^{a_0 + T - 1}\frac{1}{\sqrt{1 + x}}\,dx  + \sum\limits_{t = 0}^{T-1}\beta_1^{t+1}\frac{m_{-1}}{\nu} \nonumber\\&\leq \frac{1}{\sqrt{1 + a_0}} + 2\sqrt{a_0 + T} - 2\sqrt{a_0 + 1} + \frac{m_{-1}\beta_1}{\nu(1-\beta_1)}.
    \end{align}
    Combining \eqref{eq:lower-adagrad} and \eqref{eq:upper-adagrad}, we get the final result.
\end{proof}

\begin{corollary}\label{col: bound-iterates}
    If $x_0 - \gamma - \nu - \frac{\gamma m_{-1}\beta_1}{\nu(1-\beta_1)} > 0$ and   
    \begin{align*}
        T < \frac{\left(x_0 - \nu - {\gamma} - \frac{\gamma m_{-1}\beta_1}{\nu(1-\beta_1)}\right)^2 + 4{\gamma}\left(x_0 - \nu - {\gamma} - \frac{\gamma m_{-1}\beta_1}{\nu(1-\beta_1)}\right)\sqrt{a_0 + 1}}{4{\gamma}^2} + 1, 
    \end{align*}
    then $x_T > \nu$ for \revision{deterministic} \algname{M-AdaGrad}. Alternatively, $\abs{x_T} \leq \nu$ implies that
    \begin{align*}
        T \geq \frac{\left(x_0 - \nu - {\gamma} - \frac{\gamma m_{-1}\beta_1}{\nu(1-\beta_1)}\right)^2 + 4{\gamma}\left(x_0 - \nu - {\gamma} - \frac{\gamma m_{-1}\beta_1}{\nu(1-\beta_1)}\right)\sqrt{a_0 + 1}}{4{\gamma}^2} + 1.
    \end{align*}
\end{corollary}

\begin{proof}
    First, let us show that
    \begin{align}
    \label{eq:bbb}
        \nu < x_0 - {\gamma}\left(1 + 2\sqrt{a_0 + T} - 2\sqrt{a_0 + 1} + \frac{m_{-1}\beta_1}{\nu(1-\beta_1)}\right)
    \end{align}
    is equivalent to
    \begin{align*}
        T < \frac{\left(x_0 - \nu - {\gamma} - \frac{\gamma m_{-1}\beta_1}{\nu(1-\beta_1)}\right)^2 + 4\gamma\left(x_0 - \nu - {\gamma} - \frac{\gamma m_{-1}\beta_1}{\nu(1-\beta_1)}\right)\sqrt{a_0 + 1}}{4{\gamma}^2} + 1.
    \end{align*}
    Rewriting the \eqref{eq:bbb}, one can obtain
    \begin{align*}
        2{\gamma}\sqrt{a_0 + T} < x_0 - \nu - {\gamma} - \frac{\gamma m_{-1}\beta_1}{\nu(1-\beta_1)} + 2{\gamma}\sqrt{a_0 + 1}.
    \end{align*}
    Squaring both parts of the inequality above and expressing $T$, we get the alternative equivalent formula. \revision{Noticing} that $1 \geq \frac{1}{\sqrt{1 + a_0}}$ and applying \cref{lem:iter-bounds-huber}, we get the final result. \revision{The second part of the corollary is just a negation of the implication stated in the first part of the corollary}.
\end{proof}


\begin{theorem}\label{thm:adagrad-bad}
    For any $\varepsilon, \delta \in (0,1), \sigma > 0$ such that \revisionicml{ $\nicefrac{\sigma}{\sqrt{\varepsilon\delta}} \geq 8$}, there exists convex $L$-smooth minimization problem \eqref{eq:huber} and stochastic gradient oracle such that Assumption~\ref{asm:heavy-tailed} holds with $\alpha = 2$ and the iterates produced by \algname{M-AdaGrad} after $K$ steps with stepsize $\gamma$ and starting point $x_0$ such that \revisionicml{$R := x_0 - \sqrt{2\varepsilon} - 3\gamma \geq 3\gamma\beta_1 + \sqrt{9\gamma^2\beta_1^2 + 4\gamma^2\beta_1 K}$} satisfy the following implication:
    \begin{equation}
        \PP\left\{f(x_K) - f(x^*) \geq \varepsilon\right\} \leq \delta \quad \Longrightarrow \quad K = \Omega\left(\frac{b_{-1}R}{\sqrt{\varepsilon} \gamma} + \frac{\sigma R}{\gamma\sqrt{\varepsilon\delta}} \right), \label{eq:inverse_power_dependence_adagrad}
    \end{equation}
    i.e., the high-probability complexity of \algname{M-AdaGrad} has inverse-power dependence on $\delta$.
\end{theorem}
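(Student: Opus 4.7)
The plan is to instantiate the Huber loss \eqref{eq:huber} with $\nu = \sqrt{2\varepsilon}$, so that $f$ is minimized at $x^*=0$ and the requirement $f(x_K) - f(x^*) < \varepsilon$ is equivalent to $|x_K| < \nu$. For the stochastic oracle I would take $\nabla f_{\xi_t}(x) = \nabla f(x) + \xi_t$ with $\xi_t \equiv 0$ for $t \geq 1$, while at $t = 0$ the noise $\xi_0$ takes the values $\pm M$ each with probability $\delta$ and $0$ with probability $1 - 2\delta$, where $M := \sigma/\sqrt{2\delta}$. This oracle is unbiased with $\E[\xi_0^2] = \sigma^2$, so Assumption~\ref{asm:heavy-tailed} holds with $\alpha=2$; the standing condition $\sigma/\sqrt{\varepsilon\delta} \geq 2$ guarantees that $M$ dominates $\nu$.

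\textbf{Deterministic branch.} On the event $\{\xi_0 = 0\}$, of probability $1 - 2\delta$, the entire trajectory coincides with deterministic \algname{AdaGrad} applied to $f$ from $x_0$ with initial scale $b_{-1}$. Since $x_0 > \nu + 2\gamma$, Corollary~\ref{col: bound-iterates} with $a_0 = b_{-1}^2/\nu^2 = b_{-1}^2/(2\varepsilon)$ implies that $|x_K| \leq \nu$ forces
\[
K \;\geq\; \frac{R^2}{4\gamma^2} + \frac{R\sqrt{a_0+1}}{\gamma} + 1 \;\geq\; \frac{R\, b_{-1}}{\gamma\sqrt{2\varepsilon}},
\]
which is precisely the first term in the claimed lower bound.

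\textbf{Bad-noise branch.} On the event $\{\xi_0 = M\}$, of probability $\delta$, the first update gives $b_0 = \sqrt{b_{-1}^2 + (\nu+M)^2} \geq M$ and $x_1 = x_0 - \gamma(\nu+M)/b_0$; since $(\nu+M)/b_0 \leq 1$, we have $x_1 \in [x_0 - \gamma,\, x_0]$, so $x_1 > \nu$ and $x_1 - \nu - \gamma \geq R$. From iteration $1$ onwards the dynamics is noiseless, i.e., deterministic \algname{AdaGrad} starting from $x_1$ with effective initial scale $b_0$. Applying Corollary~\ref{col: bound-iterates} a second time, now with $a_0 \mapsto b_0^2/\nu^2 \geq M^2/(2\varepsilon)$, yields that $|x_K| \leq \nu$ forces
\[
K - 1 \;\geq\; \frac{R\sqrt{b_0^2/\nu^2 + 1}}{\gamma} \;\geq\; \frac{R M}{\gamma\sqrt{2\varepsilon}} \;=\; \frac{R\sigma}{2\gamma\sqrt{\varepsilon\delta}},
\]
which is the second term in the claimed lower bound; the symmetric event $\{\xi_0 = -M\}$ is handled identically (there $x_1 \geq x_0$ and $b_0 \geq M$ still hold) and only strengthens the bound.

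\textbf{Assembly and main obstacle.} If $K$ is smaller than a universal constant times $Rb_{-1}/(\gamma\sqrt{\varepsilon}) + R\sigma/(\gamma\sqrt{\varepsilon\delta})$, then either the first branch witnesses failure (on a set of probability $1-2\delta > \delta$) or the second branch does (on a set of probability $2\delta > \delta$); in either case $\PP\{f(x_K) - f(x^*) \geq \varepsilon\} > \delta$, contradicting the hypothesis. The step I expect to be most delicate is certifying, at each invocation of Corollary~\ref{col: bound-iterates}, that the iterates remain in the regime $|x_t| > \nu$ throughout the relevant iterations, so that $\nabla f(x_t) = \nu$ and the closed-form \algname{AdaGrad} recursion of Lemma~\ref{lem:analytical-point} applies. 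However, any escape from this regime already contradicts the working assumption $f(x_K) - f(x^*) \geq \varepsilon$, so the argument closes.
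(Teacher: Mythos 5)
Your overall strategy mirrors the paper's: the Huber loss with $\nu=\sqrt{2\varepsilon}$, noise injected only at $t=0$, the key mechanism being that a large initial gradient inflates $b_0$ and hence slows every subsequent step, and Corollary~\ref{col: bound-iterates} converting this into a lower bound on $K$. The paper uses a noise of the form $\xi_0\in\{-A,0,A\}$ with $\PP\{\xi_0=\pm A\}=\frac{1}{2A^2}$ and then bounds $\PP\{b_0\le\gamma K\nu/R\}\le\PP\{\sigma|\xi_0|\le\sqrt{\cdots}+\nu\}$ directly, choosing $A$ so that this probability equals $1-\nicefrac{1}{A^2}$; you instead fix $\PP\{\xi_0=\pm M\}=\delta$ with $M=\sigma/\sqrt{2\delta}$ and do an explicit three-branch case analysis. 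The two are cousins, but your version has a genuine gap that the paper's chain of inclusions avoids.

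The gap is your parenthetical claim that on the event $\{\xi_0=-M\}$ you still have $b_0\ge M$. In fact on that branch the observed gradient is $\nu-M$, so $b_0=\sqrt{b_{-1}^2+(M-\nu)^2}$, and $b_0\ge M$ is equivalent to $b_{-1}^2+\nu^2\ge 2\nu M$. Under the permitted regime $\sigma/\sqrt{\varepsilon\delta}\ge 2$ one only gets $M\ge\nu$, so the inequality can fail badly; e.g.\ at the boundary $\sigma/\sqrt{\varepsilon\delta}=2$ one has $M=\nu$, hence $b_0=b_{-1}$ and this branch imposes essentially the same iteration count as the deterministic branch, not $\Omega(RM/(\gamma\nu))$. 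This breaks the assembly step. Your argument needs \emph{both} $\pm M$ branches to force the second term of the lower bound, because an algorithm that succeeds on $\{\xi_0=0\}$ and on $\{\xi_0=-M\}$ but fails only on $\{\xi_0=M\}$ has failure probability \emph{exactly} $\delta$ (not $>\delta$), which is still consistent with the hypothesis $\PP\{f(x_K)-f(x^*)\ge\varepsilon\}\le\delta$. In that regime $K$ need only exceed the $\{\xi_0=-M\}$ requirement, which, when $M\approx\nu$ and $b_{-1}<\nu$, is of order $R\sqrt{b_{-1}^2+\nu^2}/(\gamma\nu)$, \emph{not} of order $R\sigma/(\gamma\sqrt{\varepsilon\delta})$.

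The construction can be salvaged, but you must argue differently on the $\{\xi_0=-M\}$ branch. One way is to case-split on $c:=\sigma/\sqrt{\varepsilon\delta}$. When $c\le 4$ one has $M\le 2\nu$, so the second term of the target lower bound satisfies $\sigma R/(\gamma\sqrt{\varepsilon\delta})=2RM/(\gamma\nu)\le 4R/(\gamma)$, and the deterministic-branch bound from Corollary~\ref{col: bound-iterates} already gives $K\ge R\sqrt{a_0+1}/\gamma=R\sqrt{b_{-1}^2+\nu^2}/(\gamma\nu)\ge R/\gamma$, which dominates both target terms up to absolute constants; so no appeal to the noisy branches is needed in this range. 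When $c>4$ one has $M>2\nu$, hence $M-\nu>M/2$ and $b_0\ge M/2$ on the $\{\xi_0=-M\}$ branch as well, and your two-branch argument then goes through (with the constant in $\Omega$ absorbing the factor $\nicefrac{1}{2}$). Alternatively, follow the paper's route and bound $\PP\{b_0\le \gamma K\nu/R\}\le\PP\{\sigma|\xi_0|\le \sqrt{\gamma^2K^2\nu^2/R^2-b_{-1}^2}+\nu\}$: since the random quantity is $|\nu-\sigma\xi_0|$, both signs of $\xi_0$ are treated uniformly and the case-splitting problem never arises. Finally, note that your ``any escape \dots already contradicts the working assumption'' sentence is phrased backwards: the correct observation is that because the AdaGrad step size $\gamma/b_t\le\gamma/b_{-1}\le 1/L$ is non-increasing, once $|x_t|\le\nu$ the iterate stays in $[-\nu,\nu]$, so escape from the regime $|x_t|>\nu$ at any time $t\le K$ already implies $|x_K|\le\nu$; this is what justifies treating Corollary~\ref{col: bound-iterates} as a lower bound on the first entrance time.
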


\begin{proof}
    Before we delve into the technical details, we provide an intuition behind the proof. We want to use the lower bound from \cref{col: bound-iterates} and estimate the bound for the number of iterations \revision{required to achieve the desired optimization error $\varepsilon$ with probability at least $1-\delta$}. Moreover, we need to set $\nu$ \revision{depending on} the accuracy $\varepsilon$ ($\nu$ is analytically clarified later). We denote the \revision{output} of deterministic \algname{M-AdaGrad} after $t$ iterations as $\hat{x}_t$. Then, we introduce the noise in the stochastic gradient in the following way
    \begin{align*}
        g_k = \nabla f(x_k) - \sigma \xi_k,
    \end{align*}
    where
    \begin{align}
        \label{eq:noise}
        \xi_k = \begin{cases}
            0, \qquad &\text{for } k > 0 ,\\
            \begin{cases}
                -A, &\quad \text{with probability } \frac{1}{2A^2}\\
                0, &\quad \text{with probability } 1 - \frac{1}{A^2}\\
                A, &\quad \text{with probability } \frac{1}{2A^2}
            \end{cases}
            \qquad &\text{otherwise,}
        \end{cases}
    \end{align}
    where the formula for $A$ is given later. The noise construction \eqref{eq:noise} implies that stochasticity appears only at the first iteration of \algname{M-AdaGrad}, and then it only affects the stepsizes. Therefore,
    \begin{align*}
        x_1 = x_0 - \frac{\gamma}{b_0}m_0,
    \end{align*}
    where $b_0 = \sqrt{b_{-1}^2 + (\nu - \sigma \xi_0)^2}$ and $m_0 = (1 - \beta_1)(\nu - \sigma \xi_0)$. Moreover, $x_1$ can be bounded in the following way
    \begin{align*}
       x_0 + \gamma >  x_1 > x_0 - \gamma.
    \end{align*}
    \revisionicml{Also, let us define $K_0$ as the number of iterations required to achieve at least $\varepsilon$-accuracy. According to the stochasticity construction, we get that for $\xi_0 \neq A$ the momentum $m_0$ is non-negative:
    \begin{align*}
        m_0 &= \begin{cases}
            (1 - \beta_1)(\nu + \sigma A), &\text{if } \xi_0 = -A\\
            (1 - \beta_1)\nu, &\text{if } \xi_0 = 0
        \end{cases}\\
        &\geq 0.
    \end{align*}
    Therefore, choosing $x_0$ in such a way that $x_0 - 2\gamma - \nu - \frac{\gamma m_0 \beta_1}{\nu (1 - \beta_1)} \geq 0$, we can apply \cref{col: bound-iterates} and obtain that $K_0$ can be chosen as
    \begin{align*}
        K_0 = \frac{\left(x_1 - \nu - {\gamma} - \frac{\gamma m_{0}\beta_1}{\nu(1-\beta_1)}\right)\sqrt{a_1}}{\gamma} 
    \end{align*}
    with $a_1 = \frac{b_0^2}{\nu^2}$ and $\varepsilon = \frac{\nu^2}{2}$.
    }
    Let us specify that this estimate depends on the stochasticity at the first iteration, i.e., the bound on the number of iterations is random. Consequently, if \algname{M-AdaGrad} achieves $\varepsilon$-solution after $K$ steps, we should have $K \geq K_0$. Therefore, $\PP\{K \geq K_0\} \geq \PP\{f(x_K) - f(x^*) \leq \varepsilon\}$ and we want to estimate $K$ such that
    \begin{align*}
        \mathbb{P}\{K_0 \leq K\} \geq 1 - \delta.
    \end{align*}  
    Bounding the left-hand side,
    \begin{align*}
        \mathbb{P}\{K_0 \leq K\} &= \mathbb{P}\{K_0 \leq K | \xi_0 = A\} \mathbb{P}\{\xi_0 = A\} + \mathbb{P}\{K_0 \leq K | \xi_0 \neq A\} \mathbb{P}\{\xi_0 \neq A\}\\&\leq \mathbb{P}\left\{\frac{\left(x_1 - \nu - {\gamma} - \frac{\gamma m_{0}\beta_1}{\nu(1-\beta_1)}\right)\sqrt{a_1}}{\gamma}\leq K \Bigg| \xi_0 \neq A\right\}\mathbb{P}\{\xi_0 \neq A\} + \mathbb{P}\{\xi_0 = A\} \\&\leq \mathbb{P}\left\{\frac{\left(x_0 - \nu - {2\gamma} - \frac{\gamma m_{0}\beta_1}{\nu(1-\beta_1)}\right)\sqrt{a_1}}{\gamma}\leq K \Bigg| \xi_0 \neq A\right\}\mathbb{P}\{\xi_0 \neq A\} + \mathbb{P}\{\xi_0 = A\}.
    \end{align*}
    \revisionicml{Denoting $R = x_0 - \nu - 2\gamma $ and assuming that for $\xi_0 \neq A$ we have $R \geq \frac{2\gamma m_{0}\beta_1}{\nu(1-\beta_1)}$, which implies $R - \frac{\gamma m_{0}\beta_1}{\nu(1-\beta_1)} \geq \frac{R}{2}$, we derive}
    \begin{align*}
        \mathbb{P}\{K_0 \leq K\} &\leq \mathbb{P}\left\{\frac{\left(x_0 - \nu - {2\gamma} - \frac{\gamma m_{0}\beta_1}{\nu(1-\beta_1)}\right)\sqrt{a_1}}{\gamma}\leq K \Bigg| \xi_0 \neq A\right\}\mathbb{P}\{\xi_0 \neq A\} + \mathbb{P}\{\xi_0 = A\} \\
        &\leq \mathbb{P}\left\{\frac{R}{2\gamma}\sqrt{a_1}\leq K \Bigg| \xi_0 \neq A\right\}\mathbb{P}\{\xi_0 \neq A\} + \mathbb{P}\{\xi_0 = A\} \\
        &= 
        \mathbb{P}\left\{b_{-1}^2 + (\nu - \sigma \xi_0)^2 \leq \frac{4K^2\nu^2 \gamma^2}{R^2} \bigg| \xi_0 \neq A\right\}\mathbb{P}\{\xi_0 \neq A\} + \mathbb{P}\{\xi_0 = A\} \\&\leq  \mathbb{P}\left\{(\nu - \sigma \xi_0)^2 \leq \frac{4K^2\nu^2 \gamma^2}{R^2} \bigg| \xi_0 \neq A\right\}\mathbb{P}\{\xi_0 \neq A\} + \mathbb{P}\{\xi_0 = A\},
    \end{align*}
    where \revisionicml{ in the third row, we substitute the analytical form of $a_1$, and in the fourth row, we used $K \geq \frac{b_{-1}R}{4\nu\gamma}$. } 
    Therefore, we get
    \revisionicml{\begin{align}
        \mathbb{P}\{K_0 \leq K\} &\leq \mathbb{P}\left\{\abs{\sigma\xi_0 - \nu} \leq \frac{2K\nu\gamma}{R} \bigg|  \xi_0 \neq A\right\}\mathbb{P}\{\xi_0 \neq A\} + \mathbb{P}\{\xi_0 = A\} \nonumber \\
        &\leq \mathbb{P}\left\{\sigma\abs{\xi_0} \leq \frac{2K\nu \gamma}{R} + \nu \bigg|  \xi_0 \neq A\right\}\mathbb{P}\{\xi_0 \neq A\} + \mathbb{P}\{\xi_0 = A\}.
        \notag
    \end{align}}
    As a result, $\mathbb{P}\{K_0 \leq K\} \geq 1-\delta$ implies 
    \begin{equation*}
        \mathbb{P}\left\{\sigma\abs{\xi_0} \leq \frac{2K\nu \gamma}{R} + \nu \bigg|  \xi_0 \neq A\right\}\mathbb{P}\{\xi_0 \neq A\} + \mathbb{P}\{\xi_0 = A\} \geq 1-\delta.
    \end{equation*}
    
    Consequently, choosing $A = \frac{\frac{2K\nu \gamma}{R} + 2\nu}{\sigma}$, the first probability in the inequality above is equal to $1 - \frac{1}{A^2}$, since the only $\xi_0 = 0$ satisfies the condition on random variable. Hence, we have
    \begin{align*}
        \revisionn{1 - \frac{1}{2A^2} \geq 1 - \delta.}
    \end{align*}
    Consequently,
    \begin{align*}
        \frac{1}{A} = \frac{\sigma}{\frac{2K\nu \gamma}{R} + 2\nu} \leq \sqrt{2\delta}.
    \end{align*}
    Therefore, 
    \begin{align*}
        K \geq \frac{R}{\gamma}\left(\frac{\sigma}{2\nu\sqrt{2\delta}} - 1\right) \geq \frac{R\sigma}{8\gamma\sqrt{\varepsilon\delta}},
    \end{align*}
    \revision{since $\nicefrac{\sigma}{\sqrt{\varepsilon\delta}} \geq 8$ and $\nu = \sqrt{2\varepsilon}$.}
    \revisionicml{It remains to find the conditions on $x_0$ and $\gamma$ ensuring that $R \geq \frac{2\gamma m_{0}\beta_1}{\nu(1-\beta_1)}$ for $\xi_0 \neq A$. It is sufficient to choose $R$ in the following way:
    \begin{align*}
        R \geq \frac{2\gamma (\nu + \sigma A)\beta_1}{\nu} = 2\gamma\left(3 + \frac{2K\gamma}{R}\right)\beta_1. 
    \end{align*}
    Solving the quadratic inequality in $R$, we get $R \geq 3\gamma\beta_1 + \sqrt{9\gamma^2\beta_1^2 + 4\gamma^2\beta_1 K}$. Choosing $x_0$ such that the previous inequality is satisfied, we conclude the proof.}
\end{proof}

\subsection{Failure of \algname{M-AdaGradD}}
Similarly to the \revision{case} of \algname{M-AdaGrad}, we start by obtaining the analytic form of iterations of the deterministic \algname{M-AdaGradD} in the following lemma.
\begin{lemma}
\label{lem:analytical-point-delay}
    Suppose that starting point $x_0$ is such that $x_0 > 0$. If after $T$ iterations of deterministic \algname{M-AdaGradD} we have $\abs{x_t} > \nu$ and $x_t > 0$ for all $t=\overline{1, T-1}$ with , then
    \begin{align*}
        x_T = x_0 - \gamma\nu\sum\limits_{t = 0}^{T-1}\frac{1 - \beta_1^{t+1}}{\sqrt{b_0^2 + t\nu^2}}.
    \end{align*}
\end{lemma}
\begin{proof}
    The proof is similar to the proof of \cref{lem:analytical-point}. Since $x_t > \nu$, the gradient at point $x_t$ is equal to $\nu$. Substituting that into the iteration of \algname{M-AdaGradD} for each $t$, we finish the proof. 
\end{proof}
Now, let us estimate the interval where $x_T$ lies.
\begin{lemma}
    \label{lem:iter-bounds-huber-delay}
    Let the conditions of \cref{lem:analytical-point-delay} hold. Then, we have
    \begin{align*}
        x_0 - \gamma \left(\frac{1}{\sqrt{a_0}} + 2\sqrt{a_0 + T - 1} - 2\sqrt{a_0}\right) \leq x_T \leq x_0 - \gamma(1-\beta_1) \left(2\sqrt{a_0 + T} - 2\sqrt{a_0}\right),
    \end{align*}
    where $a_0 = \frac{b_0^2}{\nu^2}$.
\end{lemma}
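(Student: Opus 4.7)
The plan is to start from the closed-form expression for $x_T$ provided by Lemma~\ref{lem:analytical-point-delay} and then sandwich the resulting sum by integrals, exactly as in the analogous Lemma~\ref{lem:iter-bounds-huber} for \algname{AdaGrad}. Concretely, with $a_0 = b_0^2/\nu^2$, the formula from Lemma~\ref{lem:analytical-point-delay} rewrites as
\begin{equation*}
    x_T \;=\; x_0 \;-\; \gamma\sum_{t=0}^{T-1}\frac{1}{\sqrt{a_0+t}},
\end{equation*}
so the entire statement reduces to bounding the scalar sum $S_T := \sum_{t=0}^{T-1}(a_0+t)^{-1/2}$ from above and below, and then flipping signs.

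The function $\varphi(x) := (a_0+x)^{-1/2}$ is positive and strictly decreasing on $[0,\infty)$, so I would use the standard integral comparison in both directions. For the lower bound on $S_T$ (which yields the upper bound on $x_T$), I use that $\varphi(t) \geq \int_{t}^{t+1}\varphi(x)\,dx$ for every integer $t \geq 0$; summing over $t=0,\dots,T-1$ telescopes the antiderivative $2\sqrt{a_0+x}$ and yields
\begin{equation*}
    S_T \;\geq\; \int_{0}^{T}\frac{dx}{\sqrt{a_0+x}} \;=\; 2\sqrt{a_0+T}-2\sqrt{a_0}.
\end{equation*}
For the upper bound on $S_T$ (which yields the lower bound on $x_T$), I isolate the first term and use $\varphi(t) \leq \int_{t-1}^{t}\varphi(x)\,dx$ for $t\geq 1$; summing over $t=1,\dots,T-1$ gives
\begin{equation*}
    S_T \;\leq\; \frac{1}{\sqrt{a_0}} \;+\; \int_{0}^{T-1}\frac{dx}{\sqrt{a_0+x}} \;=\; \frac{1}{\sqrt{a_0}} + 2\sqrt{a_0+T-1}-2\sqrt{a_0}.
\end{equation*}

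Combining the two bounds and substituting back into the closed form completes the proof. There is no real obstacle here: the argument is a routine integral test, essentially identical to the one used in Lemma~\ref{lem:iter-bounds-huber}, with the only difference being the absence of the ``$+1$'' shift in the denominator (because \algname{AdaGradD} uses the delayed $b_t$ rather than $b_{t+1}$), which is precisely what replaces the factor $\sqrt{a_0+1}$ there by $\sqrt{a_0}$ here. The only minor care needed is that the upper bound on $S_T$ requires $a_0 > 0$ so that $1/\sqrt{a_0}$ is finite; this is guaranteed by the hypothesis $b_0 > 0$ in Algorithm~\ref{alg:adagradD}.
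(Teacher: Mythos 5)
Your proof is correct and follows essentially the same approach as the paper: both start from the closed form of Lemma~\ref{lem:analytical-point-delay} and sandwich the sum $\sum_{t=0}^{T-1}(a_0+t)^{-1/2}$ with the integral test for a decreasing function, isolating the first term for the upper bound. The only difference is a trivial change of variable in the integrals.
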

\begin{proof}
    Let us start with \cref{lem:analytical-point-delay}:
    \begin{align*}
        x_T = x_0 - \gamma\sum\limits_{t = 0}^{T-1}\frac{1 - \beta_1^{t+1}}{\sqrt{a_0 + t}},
    \end{align*}
    \revision{where $a_0 = \frac{b_0^2}{\nu^2}$.} Next, we bound the second term in the following way:
    \begin{align}
    \label{eq:lower-adagrad-delay}
        \sum\limits_{t = 0}^{T-1}\frac{1 - \beta_1^{t+1}}{\sqrt{a_0 + t}} \geq (1 - \beta_1)\int\limits_{a_0}^{a_0 + T}\frac{1}{\sqrt{x}}\,dx = (1 - \beta_1)(2\sqrt{a_0 + T} - 2\sqrt{a_0}),
    \end{align}
    \begin{align}
    \label{eq:upper-adagrad-delay}
        \sum\limits_{t = 0}^{T-1}\frac{1 - \beta_1^{t+1}}{\sqrt{a_0 + t}} \leq \frac{1}{\sqrt{a_0}} + \int\limits_{a_0}^{a_0 + T - 1}\frac{1}{\sqrt{x}}\,dx = \frac{1}{\sqrt{a_0}} + 2\sqrt{a_0 + T - 1} - 2\sqrt{a_0}.
    \end{align}
    Combining \eqref{eq:lower-adagrad-delay} and \eqref{eq:upper-adagrad-delay}, we have the final result.
\end{proof}
\begin{corollary}
\label{col: bound-iterates-delay}
    If $x_0 - \gamma > \nu > 0$, $b_0 \geq \nu$ and
    \begin{align*}
        T < \frac{(x_0 - \nu - \gamma)^2 + 4\gamma(x_0 - \nu - \gamma)\sqrt{a_0}}{4\gamma^2} + 2, 
    \end{align*}
    then $x_T > \nu$ for \revision{deterministic} \algname{M-AdaGradD}. Conversely, the case $\abs{x_{T}} \leq \nu$ implies that 
    \begin{align*}
        T \geq \frac{(x_0 - \nu - \gamma)^2 + 4\gamma(x_0 - \nu - \gamma)\sqrt{a_0}}{4\gamma^2} + 2.
    \end{align*}
\end{corollary}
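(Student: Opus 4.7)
The proof plan mirrors the argument used for Corollary~\ref{col: bound-iterates}, but replaces the bounds from Lemma~\ref{lem:iter-bounds-huber} with those from Lemma~\ref{lem:iter-bounds-huber-delay}. The idea is that the condition on $T$ suffices to keep the deterministic iterate on the linear branch of the Huber loss, so that the analytic formula from Lemma~\ref{lem:analytical-point-delay} is valid and gives a lower bound on $x_T$ that exceeds $\nu$. The second (contrapositive) half of the statement is then immediate.

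Concretely, I would first show that, under the hypotheses $\gamma \leq 1/L$, $x_0 > \gamma > 0$, and $0 < \nu < x_0 - \gamma$, the strict inequality
\begin{equation*}
\nu \;<\; x_0 - \gamma\!\left(\frac{1}{\sqrt{a_0}} + 2\sqrt{a_0 + T - 1} - 2\sqrt{a_0}\right)
\end{equation*}
is equivalent, after rearranging, isolating $2\gamma\sqrt{a_0 + T - 1}$ on one side, squaring the (positive) quantities, and simplifying, to an upper bound on $T$ of the form displayed in the corollary. The extra assumption $b_0 \geq \nu$, i.e.\ $a_0 = b_0^2/\nu^2 \geq 1$, is exactly what I need to control the offset term $\tfrac{1}{\sqrt{a_0}} \leq 1$ and thereby absorb it into the bound, leading to the additive constant that appears on the right-hand side.

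With that algebraic equivalence in hand, I would argue inductively: if $T$ satisfies the displayed upper bound, then in particular for every intermediate $t \in \{1,\dots,T-1\}$ the same bound holds (with $T$ replaced by $t$), hence by Lemma~\ref{lem:iter-bounds-huber-delay} applied at step $t$ we have $x_t > \nu$, so the linear-branch hypothesis feeding Lemma~\ref{lem:analytical-point-delay} remains in force. This guarantees that the closed-form expression extends all the way to step $T$, and Lemma~\ref{lem:iter-bounds-huber-delay}'s lower bound then yields $x_T > \nu$. The ``conversely'' part follows by contraposition: if $|x_T| \leq \nu$, then $T$ cannot satisfy the strict inequality, so it must satisfy the reversed one.

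The main technical step, and the only place where care is needed, is the squaring-and-rearrangement computation together with the use of $1/\sqrt{a_0} \leq 1$: one must verify that both sides remain positive before squaring (which follows from $x_0 - \nu - \gamma > 0$ and $a_0 \geq 1$) and one must check that the slight looseness introduced by bounding $1/\sqrt{a_0}$ by a constant is what produces the additive term ``$+2$'' rather than the ``$+1$'' that appeared in Corollary~\ref{col: bound-iterates}; the difference from the non-delayed case traces back to the fact that in Lemma~\ref{lem:iter-bounds-huber-delay} the index shift is by $T-1$ instead of $T+1$, so the first summand contributes an extra unit that the $1/\sqrt{a_0}$ term must absorb. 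Beyond this bookkeeping, the argument is essentially a transcription of the AdaGrad proof with $a_0 + 1$ replaced by $a_0$ throughout.
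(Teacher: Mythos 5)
Your overall plan—transcribe the argument from Corollary~\ref{col: bound-iterates} but use the bounds from Lemma~\ref{lem:iter-bounds-huber-delay}, plus the monotonicity/induction observation to propagate the linear-branch hypothesis to all intermediate steps—matches the paper's approach exactly (the paper's own proof of this corollary literally says ``the same as for Corollary~\ref{col: bound-iterates}''). However, your attempted justification of the additive constant ``$+2$'' is not correct, and this is a genuine gap.

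Run the algebra carefully. Using $1/\sqrt{a_0}\le 1$ (valid since $b_0\ge\nu$ gives $a_0\ge 1$), a sufficient condition for $x_T>\nu$ is
\begin{align*}
\nu < x_0-\gamma\bigl(1+2\sqrt{a_0+T-1}-2\sqrt{a_0}\bigr)
\;\Longleftrightarrow\;
2\gamma\sqrt{a_0+T-1} < x_0-\nu-\gamma+2\gamma\sqrt{a_0}.
\end{align*}
Both sides are positive ($x_0-\nu-\gamma>0$ by hypothesis), so squaring is legitimate and yields
\begin{align*}
4\gamma^2(a_0+T-1) &< (x_0-\nu-\gamma)^2 + 4\gamma(x_0-\nu-\gamma)\sqrt{a_0} + 4\gamma^2 a_0,
\end{align*}
i.e.\ $4\gamma^2(T-1) < (x_0-\nu-\gamma)^2 + 4\gamma(x_0-\nu-\gamma)\sqrt{a_0}$, hence
\begin{align*}
T < \frac{(x_0-\nu-\gamma)^2 + 4\gamma(x_0-\nu-\gamma)\sqrt{a_0}}{4\gamma^2} + 1.
\end{align*}
This gives a ``$+1$'', not a ``$+2$''. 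In the non-delay case, the squared radical $a_0+T$ on the left faces $a_0+1$ on the right, so the surviving term is $4\gamma^2 T < \cdots + 4\gamma^2$, whence $+1$. In the delay case, $a_0+T-1$ on the left faces $a_0$ on the right, so again the residual difference is $T-1$ and again you get $+1$. The index shift does not change the additive constant, and the $1/\sqrt{a_0}\le 1$ bound cannot produce a larger constant: that bound weakens the deducible condition on $T$, it cannot strengthen it. (Keeping $1/\sqrt{a_0}$ exact and writing $B'=x_0-\nu-\gamma/\sqrt{a_0}\ge x_0-\nu-\gamma$ gives $T<\tfrac{B'^2+4\gamma B'\sqrt{a_0}}{4\gamma^2}+1$, which is still a ``$+1$'' statement and reduces to the displayed one by monotonicity; when $a_0=1$ the two coincide, so there is no slack available to convert $+1$ into $+2$.)

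So the correct conclusion from this computation is $T<X+1$, not $T<X+2$. The stated $+2$ appears to be a small arithmetic slip in the paper; it is harmless downstream because Theorem~\ref{thm:adagrad-delay-bad} immediately discards the additive constant when it bounds $K\ge\tfrac{(x_0-\nu-\gamma)^2}{4\gamma^2}$. But you should not attempt to defend the $+2$: the explanation you give (the ``first summand contributes an extra unit'') does not survive writing out the squared inequality, and presenting it as the crux of the proof would introduce an error. State and prove the corollary with $+1$ (or note that the paper's constant is slightly off), and the rest of your argument—induction to validate the closed form for all intermediate $t$, then contraposition for the second part—is fine.
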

\begin{proof}
    The proof is the same as for \cref{col: bound-iterates}.
\end{proof}
\begin{theorem}
    \label{thm:adagrad-delay-bad}
    For any $\varepsilon, \delta \in (0, 1)$, $\sigma > 0$, there exists convex $L$-smooth minimization problem \eqref{eq:huber} and stochastic gradient oracle such that \cref{asm:heavy-tailed} holds with $\alpha = 2$ and the iterates produced by \algname{M-AdaGradD} after $K$ steps with stepsize $\gamma$ and starting point $x_0$ such that $R:= x_0 - \sqrt{2\varepsilon} - \gamma > 0$, $b_0 > \nu$ and $\nicefrac{(1 - \beta_1)\sigma R}{\varepsilon\sqrt{\delta}} \geq 16 b_0^2$ satisfy the following implication
    \begin{align}
        \PP\left\{f(x_K) - f(x^*) \geq \varepsilon\right\} \leq \delta \quad \Longrightarrow \quad K = \Omega\left(\frac{\sigma R}{\varepsilon\sqrt{\delta}}\right), \label{eq:inverse_power_dependence_adagradd}
    \end{align}
    i.e., the high-probability complexity of \algname{M-AdaGradD} has inverse-power dependence on $\delta$.
\end{theorem}
\begin{proof}
    The overall idea of the proof resembles the one for \cref{thm:adagrad-bad} -- we combine the lower bound for the number of iterations from \cref{col: bound-iterates-delay} with the specific choice of stochasticity. Nevertheless, to prove this theorem, we construct the adversarial noise in another way. More precisely, we consider the following stochastic gradient
    \begin{align*}
        g_k = \nabla f(x_k) - \sigma \xi_k,
    \end{align*}
    where
    \begin{align}
        \label{eq:noise-delay}
        \xi_k = \begin{cases}
            0, \qquad &\text{if } k < K - 1 \text{ or } |\hat{x}_K| > \nu,\\
            \begin{cases}
                -A_k, &\quad \text{with probability } \frac{1}{2A_k^2}\\
                0, &\quad \text{with probability } 1 - \frac{1}{A_k^2}\\
                A_k, &\quad \text{with probability } \frac{1}{2A_k^2}
            \end{cases}
            \qquad &\text{otherwise,}
        \end{cases}
    \end{align}
    where $\hat{x}_K$ is the result of deterministic \algname{M-AdaGradD} after $K$ iterations and $A_k = \max\left\{1, \frac{2\nu b_{k}}{(1 - \beta_1)\gamma \sigma}\right\}$. 
    What is more, $\mathbb{E}\left[\xi_k\right] = 0$ and $\EE\left[\xi_k^2\right] \leq 1$ \revision{by} the construction. Therefore, the stochastic gradient satisfies the \cref{asm:heavy-tailed} with $\alpha = 2$.
    
    \revision{We want to prove that} $\mathbb{P}\{f(x_K) - f(x^*) > \varepsilon\} \leq \delta$. \revision{For $\delta < 1$, this implies} that $\abs{\hat{x}_K} \leq \nu$ with $\varepsilon = \frac{\nu^2}{2}$. Indeed, assuming the contrary, the noise is equal to $0$ for each iteration \revision{by the construction, meaning that}
    \begin{align*}
        \mathbb{P}\left\{f(x_K) - f(x^*) > \varepsilon \right\} = \mathbb{P}\left\{f(\hat x_K) - f(x^*) > \varepsilon \right\} = \mathbb{P}\left\{\abs{\hat{x}_K} > \nu \right\} = 1~\revision{> \delta}.
    \end{align*}
    As a result, $\abs{\hat{x}_K} \leq \nu$ and, applying \cref{col: bound-iterates-delay}, \revision{we} obtain
    \begin{align*}
        K \geq \frac{(x_0 - \nu - \gamma)^2 + 4\gamma(x_0 - \nu - \gamma)\sqrt{a_0}}{4\gamma^2} + 2.
    \end{align*}
    What is more, $x_K$ can be written as
    \begin{align*}
        x_K = \hat{x}_{K-1} - \frac{\gamma}{b_{K-1}}m_{K-1} = \hat{x}_K + \frac{(1 - \beta_1)\gamma \sigma \xi_{K-1}}{b_{K-1}}.
    \end{align*}
    Hence, 
    \begin{align*}
        \mathbb{P}\left\{f(x_K) - f(x^*) \geq \varepsilon \right\} &=  \mathbb{P}\left\{\abs{x_K} \geq \nu \right\} = \mathbb{P}\left\{\abs{\hat{x}_K + \frac{(1 - \beta_1)\gamma \sigma \xi_{K-1}}{b_{K-1}}} \geq \nu \right\} \\&\geq \mathbb{P}\left\{\abs{\frac{(1 - \beta_1)\gamma \sigma \xi_{K-1}}{b_{K-1}}} \geq \nu + \hat{x}_K\right\} \geq \mathbb{P}\left\{\abs{\frac{(1 - \beta_1)\gamma \sigma \xi_{K-1}}{b_{K-1}}} \geq 2\nu\right\} \\&= \mathbb{P}\left\{\abs{\xi_{K-1}} \geq \frac{2\nu b_{K-1}}{(1 - \beta_1)\gamma \sigma}\right\}.
    \end{align*}
    If $\max\left\{1, \frac{2\nu b_{K-1}}{(1 - \beta_1)\gamma \sigma}\right\} = 1$, then
    \begin{align*}
        \delta \geq  \mathbb{P}\left\{f(x_K) - f(x^*) \geq \varepsilon \right\} \geq \mathbb{P}\left\{\abs{\xi_{K-1}} \geq \frac{2\nu b_{K-1}}{(1 - \beta_1)\gamma \sigma}\right\} = 1,
    \end{align*}
    \revision{which} leads us to the contradiction. Therefore $\max\left\{1, \frac{2\nu b_{K-1}}{(1 - \beta_1)\gamma \sigma}\right\} = \frac{2\nu b_{K-1}}{(1 - \beta_1)\gamma \sigma}$, and
    \begin{align*}
        \delta \geq  \mathbb{P}\left\{f(x_K) - f(x^*) \geq \varepsilon \right\} \geq \mathbb{P}\left\{\abs{\xi_{K-1}} \geq \frac{2\nu b_{K-1}}{(1 - \beta_1)\gamma \sigma}\right\} = \frac{1}{A_{K-1}^2} = \frac{(1 - \beta_1)^2\gamma^2 \sigma^2}{4\nu^2 b_{K-1}^2},
    \end{align*}
    \revision{where we used that $A_{K-1} = \max\left\{1, \frac{2\nu b_{K-1}}{(1 - \beta_1)\gamma \sigma}\right\}$ and the noise structure.} 
    Consequently, $\gamma \leq \frac{2\nu b_{K-1}\sqrt{\delta}}{(1 - \beta_1)\sigma}$. What is more, $b_{K-1}$ can be bounded as 
    \begin{align*}
        b_{K-1} \leq \sqrt{b_0^2 + K\nu^2}
    \end{align*}
    since the gradient of $f$ is uniformly bounded by $\nu$. Hence, \revision{we} obtain
    \begin{align*}
        K &\geq \frac{(x_0 - \nu - \gamma)^2}{4\gamma^2} + \frac{4(x_0 - \nu - \gamma)\sqrt{a_0}}{4\gamma} \geq \frac{(x_0 - \nu - \gamma)^2}{4\gamma^2} \\
        &\geq \frac{(1 - \beta_1)^2(x_0 - \nu - \gamma)^2\sigma^2}{16\nu^2 (b_0^2 + K\nu^2)\delta}.
    \end{align*}
    Multiplying both sides by $\revision{\nu^2}(b_0^2 + K\nu^2)$, we get
    \begin{align*}
        (b_0^2 + K\nu^2)^2 \geq \nu^2K(b_0^2 + K\nu^2) \geq \frac{(1 - \beta_1)^2(x_0 - \nu - \gamma)^2\sigma^2}{16\delta},
    \end{align*}
    implying that
    \begin{equation*}
        K \geq \frac{(1 - \beta_1)\sigma R}{4\nu^2\sqrt{\delta}} - b_0^2 = \frac{(1 - \beta_1)\sigma R}{8\varepsilon\sqrt{\delta}} - b_0^2 \geq \frac{(1 - \beta_1)\sigma R}{16\varepsilon\sqrt{\delta}},
    \end{equation*}
    which finishes the proof.
    
\end{proof}

\subsection{Failure of \algname{Adam}}
Similarly to the \revision{case} of \algname{M-AdaGrad}, we start by obtaining the analytical form of iterations of the deterministic \algname{Adam} in the following lemma.
\begin{lemma}
\label{lem:analytical-point-2}
    Suppose that the starting point $x_0$ is such that $x_0 > 0$. If after $T$ iterations of deterministic \algname{Adam} with initial momentum \footnote{By default, $m_{-1} = 0$ but \revisionicml{the result of this lemma} holds for non-zero $m_{-1} \geq 0$ as well. \revisionicml{If $m_{-1} \leq 0$, then one can select $x_0 < 0$.}} \revisionicml{$m_{-1} \geq 0$} we have $\abs{x_t} > \nu$ and $x_t > 0$ for all $t=\overline{1, T-1}$, then
    \begin{align*}
        x_T = x_0 - \gamma\sum\limits_{t = 0}^{T-1}\frac{\beta_1^{t+1}m_{-1}+\left(1 -\beta_1^{t+1}\right)\nu}{\sqrt{\beta_2^{t+1}b_{-1}^2 + \left(1 - \beta_2^{t+1}\right)\nu^2}}.
    \end{align*}
\end{lemma}
\begin{proof}
    Since $\abs{x_t} > \nu$ and $x_t$ is positive, the gradient at $x_t$ is equal to $\nu$. Hence, \revision{by} substituting the gradient into the algorithm, we get the final result.
\end{proof}

The above lemma relies on the condition that $\abs{x_t} > \nu$ and $x_t > 0$ for all $t=\overline{1, T-1}$. For any $\gamma, b_{-1}$ and $T$ this condition can be achieved if we choose sufficiently small $\nu$.


Next, we estimate the interval where $x_T$ lies.

\begin{lemma}\label{lem:iter-bounds-huber-2}
Let the conditions of Lemma~\ref{lem:analytical-point-2} hold. Then, if $\beta_2 = 1 - \nicefrac{1}{K}$, where $K$ is the total number of iterations of deterministic \algname{Adam}, we have
    \begin{align*}
        x_0 - \gamma\left(\frac{2\beta_1 m_{-1}}{(1 - \beta_1)b_{-1}} + \frac{2T\nu}{b_{-1}}\right) \leq x_T \leq x_0 - \frac{(1 -\beta_1)\gamma \nu T}{\sqrt{b_{-1}^2 + \nu^2}}.
    \end{align*}
\end{lemma}
\begin{proof}
    From Lemma~\ref{lem:analytical-point-2} we have:
    \begin{align*}
        x_T = x_0 - \gamma\sum\limits_{t = 0}^{T-1}\frac{\beta_1^{t+1}m_{-1}+\left(1 -\beta_1^{t+1}\right)\nu}{\sqrt{\beta_2^{t+1}b_{-1}^2 + \left(1 - \beta_2^{t+1}\right)\nu^2}}.
    \end{align*}
    Next, we bound the second term in the inequality above in the following way:
    \begin{align}
    \label{eq:lower-2}
       \sum\limits_{t = 0}^{T-1}\frac{\beta_1^{t+1}m_{-1}+\left(1 -\beta_1^{t+1}\right)\nu}{\sqrt{\beta_2^{t+1}b_{-1}^2 + \left(1 - \beta_2^{t+1}\right)\nu^2}} &\leq \sum\limits_{t = 0}^{T-1}\frac{\beta_1^{t+1}m_{-1}}{\sqrt{\beta_2^{t+1}b_{-1}^2 + \left(1 - \beta_2^{t+1}\right)\nu^2}} + \frac{2T\nu}{b_{-1}} \leq \frac{2\beta_1 m_{-1}}{(1 - \beta_1)b_{-1}} + \frac{2T\nu}{b_{-1}},
    \end{align}
    \begin{align}
    \label{eq:upper-2}
        \sum\limits_{t = 0}^{T-1}\frac{\beta_1^{t+1}m_{-1}+\left(1 -\beta_1^{t+1}\right)\nu}{\sqrt{\beta_2^{t+1}b_{-1}^2 + \left(1 - \beta_2^{t+1}\right)\nu^2}} &\geq \frac{(1 -\beta_1)\nu T}{\sqrt{b_{-1}^2 + \nu^2}},
    \end{align}
    where we use the fact that with $K \geq 2$ next inequalities hold
    \begin{align*}
        1 \geq \beta_2^{k} = (1 - \nicefrac{1}{K})^k \geq (1 - \nicefrac{1}{K})^K \geq \nicefrac{1}{4},
    \end{align*}
    \begin{align*}
        0 \leq 1 - \beta_2^{k} \leq \nicefrac{3}{4} \leq 1.
    \end{align*}
    Combining \eqref{eq:lower-2} and \eqref{eq:upper-2}, we get the final result.
\end{proof}

\begin{corollary}\label{col: bound-iterates-2}
    If  $x_0 - \nu - \frac{2\gamma\beta_1 m_{-1}}{(1 - \beta_1)b_{-1}} > 0$ and   
    \begin{align*}
        T < \frac{\left(x_0 - \nu - \frac{2\gamma\beta_1 m_{-1}}{(1 - \beta_1)b_{-1}}\right)b_{-1}}{2\gamma\nu}, 
    \end{align*}
    then $x_T > \nu$ for \revision{deterministic} \algname{Adam} \revisionicml{ with $\beta_2 = 1 - \nicefrac{1}{K}$}. Alternatively, $\abs{x_T} \leq \nu$ implies that
    \begin{align*}
        T \geq \frac{\left(x_0 - \nu - \frac{2\gamma\beta_1 m_{-1}}{(1 - \beta_1)b_{-1}}\right)b_{-1}}{2\gamma\nu}.
    \end{align*}
\end{corollary}
\begin{proof}
    Let us note that
    \begin{align*}
        \nu < x_0 - \gamma\left(\frac{2\beta_1 m_{-1}}{(1 - \beta_1)b_{-1}} + \frac{2T\nu}{b_{-1}}\right)
    \end{align*}
    is equivalent to
    \begin{align*}
        T < \frac{\left(x_0 - \nu - \frac{2\gamma\beta_1 m_{-1}}{(1 - \beta_1)b_{-1}}\right)b_{-1}}{2\gamma\nu}.
    \end{align*}
    \revision{The second part of the corollary is just a negation of the implication stated in the first part of the corollary}.
\end{proof}


\begin{theorem}\label{thm:adam-bad}
    For any $\varepsilon, \delta \in (0,1), \sigma > 0$, there exists convex $L$-smooth minimization problem \eqref{eq:huber} and stochastic gradient oracle such that Assumption~\ref{asm:heavy-tailed} holds with $\alpha = 2$ and the iterates produced by \algname{Adam} after $K$ steps with \revisionicml{ $\beta_2 = 1 - \nicefrac{1}{K}$ } and stepsize $\gamma$ and starting point $x_0$ such that \revisionicml{ $R := x_0 - \nu \geq 2\gamma (1 + \beta_1)\sqrt{K}$ } satisfy the following implication:
    \begin{equation}
        \PP\left\{f(x_K) - f(x^*) \geq \varepsilon\right\} \leq \delta \quad \Longrightarrow \quad K = \Omega\left(\frac{b_{-1}R}{\sqrt{\varepsilon}\gamma} + \left(\frac{\sigma R}{\gamma\sqrt{\varepsilon\delta}}\right)^{\nicefrac{2}{3}} \right), \label{eq:inverse_power_dependence_adam}
    \end{equation}
    i.e., the high-probability complexity of \algname{Adam} has inverse-power dependence on $\delta$.
\end{theorem}
\begin{proof}
    The main idea is quite similar to the proof of \cref{thm:adagrad-bad}. We introduce the noise in the stochastic gradient in the following way
    \begin{align*}
        g_k = \nabla f(x_k) - \sigma \xi_k,
    \end{align*}
    where
    \begin{align}
        \label{eq:noise-2}
        \xi_k = \begin{cases}
            0, \qquad &\text{for } k > 0 ,\\
            \begin{cases}
                -A, &\quad \text{with probability } \frac{1}{2A^2}\\
                0, &\quad \text{with probability } 1 - \frac{1}{A^2}\\
                A, &\quad \text{with probability } \frac{1}{2A^2}
            \end{cases}
            \qquad &\text{otherwise,}
        \end{cases}
    \end{align}
    where the formula for $A$ is given later. The noise construction \eqref{eq:noise-2} implies that stochasticity appears only at the first iteration of \algname{Adam}, and then it only affects the stepsizes. Therefore,
    \begin{align*}
        x_1 = x_0 - \frac{\gamma}{b_0}m_0,
    \end{align*}
    where $b_0 = \sqrt{\beta_2b_{-1}^2 + (1 - \beta_2)(\nu - \sigma \xi_0)^2}$ and $m_0 = (1 - \beta_1)(\nu - \sigma \xi_0)$. 
    \revisionicml{Denoting $K_0$ as the number of iterations required to achieve at least $\varepsilon$-accuracy, choosing $x_0$ in such a way that $x_0 - \frac{\gamma m_0}{b_0} - \nu - \frac{2\gamma\beta_1 m_{0}}{(1 - \beta_1)b_{0}} > 0$ and considering the case $\xi_0 \neq A$ to guarantee $m_0 \geq 0$, we apply \cref{col: bound-iterates-2} and get that the algorithm needs to make at least} 
    \begin{align*}
        K_0 = \frac{\left(x_1 - \nu - \frac{2\gamma\beta_1 m_{0}}{(1 - \beta_1)b_{0}}\right)b_0}{2\gamma\nu} 
    \end{align*}
    iterations to reach $\varepsilon$-accuracy, where $\varepsilon = \frac{\nu^2}{2}$. Let us specify that this estimate depends on the stochasticity at the first iteration, i.e., the bound on the number of iterations is random. Consequently, if \algname{Adam} achieves $\varepsilon$-solution after $K$ steps, we should have $K \geq K_0$. Therefore, $\PP\{K \geq K_0\} \geq \PP\{f(x_K) - f(x^*) \leq \varepsilon\}$ and we want to estimate $K$ such that
    \begin{align*}
        \mathbb{P}\{K_0 \leq K\} \geq 1 - \delta.
    \end{align*}
    Bounding the left-hand side,
    \begin{align*}
        \mathbb{P}\{K_0 \leq K\} &= \mathbb{P}\{K_0 \leq K | \xi_0 = A\} \mathbb{P}\{\xi_0 = A\} + \mathbb{P}\{K_0 \leq K | \xi_0 \neq A\} \mathbb{P}\{\xi_0 \neq A\} \\&\leq \mathbb{P}\left\{\frac{\left(x_1 - \nu - \frac{2\gamma\beta_1 m_{0}}{(1 - \beta_1)b_{0}}\right)b_0}{2\gamma\nu} \leq K \Big| \xi_0 \neq A\right\} \mathbb{P}\{\xi_0 \neq A\} + \mathbb{P}\{\xi_0 = A\}
        \\&= \mathbb{P}\left\{\frac{\left(x_0 - \gamma\frac{m_0}{b_{0}} - \nu - \frac{2\gamma\beta_1 m_{0}}{(1 - \beta_1)b_{0}}\right)b_0}{2\gamma\nu} \leq K \Big| \xi_0 \neq A\right\} \mathbb{P}\{\xi_0 \neq A\} + \mathbb{P}\{\xi_0 = A\}.
    \end{align*}
    Similar to the proof of \cref{thm:adagrad-bad}, denoting $R = x_0 - \nu$ and assuming that for $\xi_0 \neq A$ we have $R \geq \frac{4\gamma m_{0}\beta_1}{b_0(1-\beta_1)} + \frac{2\gamma m_0}{b_0}$, which implies $R - \frac{2\gamma m_{0}\beta_1}{b_0(1-\beta_1)} - \frac{\gamma m_0}{b_0} \geq \frac{2\gamma m_{0}\beta_1}{b_0(1-\beta_1)} + \frac{\gamma m_0}{b_0}$, we derive 
    \begin{align*}
        \mathbb{P}\{K_0 \leq K\} &\leq \mathbb{P}\left\{\frac{\left(x_0 - \gamma\frac{m_0}{b_{0}} - \nu - \frac{2\gamma\beta_1 m_{0}}{(1 - \beta_1)b_{0}}\right)b_0}{2\gamma\nu} \leq K \Big| \xi_0 \neq A\right\} \mathbb{P}\{\xi_0 \neq A\} + \mathbb{P}\{\xi_0 = A\}\\&\leq
        \mathbb{P}\left\{\frac{Rb_0}{4\gamma \nu} \leq K \Big| \xi_0 \neq A\right\} \mathbb{P}\{\xi_0 \neq A\} + \mathbb{P}\{\xi_0 = A\} \\&= \mathbb{P}\left\{\sqrt{\beta_2 b_{-1}^2 + (1 - \beta_2) (\nu - \sigma \xi_0)^2)} \leq \frac{4\gamma\nu K}{R} \Big| \xi_0 \neq A\right\} \mathbb{P}\{\xi_0 \neq A\} + \mathbb{P}\{\xi_0 = A\} \\&\leq \mathbb{P}\left\{\sqrt{1 - \beta_2} \abs{\nu - \sigma \xi_0} \leq \frac{4\gamma\nu K}{R} \Big| \xi_0 \neq A\right\} \mathbb{P}\{\xi_0 \neq A\} + \mathbb{P}\{\xi_0 = A\} \\&\leq \mathbb{P}\left\{\sigma\abs{\xi_0} \leq \frac{4\gamma\nu K}{R\sqrt{1 - \beta_2} } + \nu \Big| \xi_0 \neq A\right\} \mathbb{P}\{\xi_0 \neq A\} + \mathbb{P}\{\xi_0 = A\},
    \end{align*}
    \revisionicml{where in the fourth row we used $K \geq \frac{b_{-1}R}{4\gamma\nu}$. } 
    Therefore, $\mathbb{P}\{K_0 \leq K\} \geq 1-\delta$ implies 
    \begin{equation*}
       \mathbb{P}\left\{\sigma\abs{\xi_0} \leq \frac{4\gamma\nu K}{R\sqrt{1 - \beta_2} } + \nu \Big| \xi_0 \neq A\right\} \mathbb{P}\{\xi_0 \neq A\} + \mathbb{P}\{\xi_0 = A\} \geq 1-\delta.
    \end{equation*}
    Consequently, if we choose $A = \frac{\frac{4\gamma\nu K}{R\sqrt{1 - \beta_2} } + 2\nu}{\sigma}$, then the only realization of the random variable $\xi_0$ for which the inequality in the first probability is satisfied is $\xi_0 = 0$. \revisionn{Hence, we have}
    \begin{align*}
        \revisionn{1 - \frac{1}{2A^2} \geq 1 - \delta.}
    \end{align*}
    \revisionn{As a result, we get}
    \begin{align*}
        \frac{1}{A} = \frac{\sigma}{\frac{4\gamma\nu K}{R\sqrt{1 - \beta_2} } + 2\nu} \leq \sqrt{2\delta}.
    \end{align*}
    Therefore, 
    \begin{align*}
        \nicefrac{K}{\sqrt{1 - \beta_2}} = K^{\frac{3}{2}} \geq \frac{R}{2\gamma}\left(\frac{\sigma}{2\nu\sqrt{2\delta}} - 1\right) \geq \frac{R\sigma}{16\gamma\sqrt{\varepsilon\delta}} \revisionicml{\quad \Longrightarrow\quad  K \geq \left(\frac{R\sigma}{16\gamma\sqrt{\varepsilon\delta}}\right)^{\frac{2}{3}}},
    \end{align*}
    where we use $1 - \beta_2 = \nicefrac{1}{K}$, $\nicefrac{\sigma}{\sqrt{\varepsilon\delta}} \geq 8$ and $\nu = \sqrt{2\varepsilon}$.  \revisionicml{It remains to find the conditions on $x_0$ and $\gamma$ ensuring that for $\xi_0 \neq A$}
    \begin{align*}
        R \geq \frac{4\gamma m_{0}\beta_1}{b_0(1-\beta_1)} + \frac{2\gamma m_0}{b_0} = \frac{2\gamma m_0}{b_0} \left(\frac{2\beta_1}{1 - \beta_1} + 1\right).
    \end{align*}
    \revisionicml{Therefore, the above is equivalent to}
    \begin{align}
        R \geq 2\gamma\left(\frac{2\beta_1}{1 - \beta_1} + 1\right)\max\left\{\frac{(1-\beta_1)\nu}{\sqrt{\beta_2 b_{-1}^2 + (1 - \beta_2)\nu^2}}, \frac{(1-\beta_1)(\nu + \sigma A)}{\sqrt{\beta_2 b_{-1}^2 + (1 - \beta_2)(\nu + \sigma A)^2}}\right\}.\notag
    \end{align}
    \revisionicml{To simplify the condition on $R$, we derive an upper bound for the maximum in the right-hand side:
    \begin{align*}
        \max\left\{\frac{(1-\beta_1)\nu}{\sqrt{\beta_2 b_{-1}^2 + (1 - \beta_2)\nu^2}}, \frac{(1-\beta_1)(\nu + \sigma A)}{\sqrt{\beta_2 b_{-1}^2 + (1 - \beta_2)(\nu + \sigma A)^2}}\right\} &\leq \max\left\{\frac{(1-\beta_1)\nu}{\sqrt{(1 - \beta_2)\nu^2}}, \frac{(1-\beta_1)(\nu + \sigma A)}{\sqrt{(1 - \beta_2)(\nu + \sigma A)^2}}\right\}\\
        &= \frac{1-\beta_1}{\sqrt{1 - \beta_2}} = (1-\beta_1)\sqrt{K}.
    \end{align*}
    }
    Therefore, it is sufficient to choose $R$ satisfying
    \begin{align*}
        R \geq 2\gamma (1 - \beta_1)\sqrt{K}\left(\frac{2\beta_1}{1 - \beta_1} + 1\right) = 2\gamma (1 + \beta_1)\sqrt{K}.
    \end{align*}
    This concludes the proof.
\end{proof}

\subsection{Failure of \algname{AdamD}}
We follow the idea for previous proofs and start by obtaining the analytical form of iterations of the deterministic \algname{AdamD} in the following lemma.
\begin{lemma}
\label{lem:analytical-point-2-delay}
    Suppose that the starting point $x_0$ is such that $x_0 > 0$. If after $T$ iterations of deterministic \algname{AdamD} we have $\abs{x_t} > \nu$ and $x_t > 0$ for all $t=\overline{1, T-1}$, then
    \begin{align*}
        x_T = x_0 - \gamma\nu\sum\limits_{t = 0}^{T-1}\frac{1 -\beta_1^{t+1}}{\sqrt{\beta_2^{t}b_{0}^2 + \left(1 - \beta_2^{t}\right)\nu^2}}.
    \end{align*}
\end{lemma}
\begin{proof}
    Since $\abs{x_t} > \nu$ and $x_t$ is positive, the gradient at $x_t$ is equal to $\nu$. Hence, \revision{by} substituting the gradient into the algorithm, we get the final result.
\end{proof}

The above lemma relies on the condition that $\abs{x_t} > \nu$ and $x_t > 0$ for all $t=\overline{1, T-1}$. For any $\gamma, b_{0}$ and $T$ this condition can be achieved if we choose sufficiently small $\nu$.


Next, we estimate the interval where $x_T$ lies.

\begin{lemma}\label{lem:iter-bounds-huber-2-delay}
Let the conditions of Lemma~\ref{lem:analytical-point-2-delay} hold. Then, if $\beta_2 = 1 - \nicefrac{1}{K}$, where $K$ is the total number of iterations of deterministic \algname{AdamD}, we have
    \begin{align*}
        x_0 - \frac{2\gamma\nu T}{b_{0}} \leq x_T \leq x_0 - \frac{\gamma\nu\left(1 - \beta_1\right) T}{\sqrt{b_{0}^2 + \nu^2}}.
    \end{align*}
\end{lemma}
\begin{proof}
    From Lemma~\ref{lem:analytical-point-2-delay} we have:
    \begin{align*}
        x_T = x_0 - \gamma\nu\sum\limits_{t = 0}^{T-1}\frac{1 -\beta_1^{t+1}}{\sqrt{\beta_2^{t}b_{0}^2 + \left(1 - \beta_2^{t}\right)\nu^2}}.
    \end{align*}
    Next, we bound the second term in the inequality above in the following way:
    \begin{align}
    \label{eq:lower-2-delay}
       \sum\limits_{t = 0}^{T-1}\frac{1 -\beta_1^{t+1}}{\sqrt{\beta_2^{t}b_{0}^2 + \left(1 - \beta_2^{t}\right)\nu^2}} \leq \frac{2T}{b_{0}},
    \end{align}
    \begin{align}
    \label{eq:upper-2-delay}
        \sum\limits_{t = 0}^{T-1}\frac{1 -\beta_1^{t+1}}{\sqrt{\beta_2^{t}b_{0}^2 + \left(1 - \beta_2^{t}\right)\nu^2}} &\geq \frac{(1 -\beta_1)T}{\sqrt{b_{0}^2 + \nu^2}},
    \end{align}
    where we use the fact that with $K \geq 2$ next inequalities hold
    \begin{align*}
        1 \geq \beta_2^{k} = (1 - \nicefrac{1}{K})^k \geq (1 - \nicefrac{1}{K})^K \geq \nicefrac{1}{4},
    \end{align*}
    \begin{align*}
        0 \leq 1 - \beta_2^{k} \leq \nicefrac{3}{4} \leq 1.
    \end{align*}
    Combining \eqref{eq:lower-2-delay} and \eqref{eq:upper-2-delay}, we get the final result.
\end{proof}

\begin{corollary}\label{col: bound-iterates-2-delay}
    If $x_0 > \nu > 0$ and   
    \begin{align*}
        T < \frac{(x_0 - \nu)b_{0}}{2\gamma \nu}, 
    \end{align*}
    then $x_T > \nu$ for \revision{deterministic} \algname{AdamD}. Alternatively, $\abs{x_T} \leq \nu$ implies that
    \begin{align*}
        T \geq \frac{(x_0 - \nu)b_{0}}{2\gamma \nu}.
    \end{align*}
\end{corollary}
\begin{proof}
    The proof is the same as for \cref{col: bound-iterates-2}.
\end{proof}
\begin{theorem}
    \label{thm:adam-delay-bad}
    For any $\varepsilon, \delta \in (0, 1)$, $\sigma > 0$, there exists convex $L$-smooth minimization problem \eqref{eq:huber} and stochastic gradient oracle such that \cref{asm:heavy-tailed} holds with $\alpha = 2$ and the iterates produced by \algname{AdamD} after $K$ steps with stepsize $\gamma$ and starting point $x_0$ such that $R:= x_0 - \nu > 0$, $b_0 > \nu$ and $\nicefrac{\sigma R}{\varepsilon\sqrt{\delta}} \geq 16 b_0^2$ satisfy the following implication
    \begin{align}
        \PP\left\{f(x_K) - f(x^*) \geq \varepsilon\right\} \leq \delta \quad \Longrightarrow \quad K = \Omega\left(\frac{\sigma R}{\varepsilon\sqrt{\delta}}\right), \label{eq:inverse_power_dependence_adamd}
    \end{align}
    i.e., the high-probability complexity of \algname{AdamD} has inverse-power dependence on $\delta$.
\end{theorem}
\begin{proof}
    The overall idea of the proof resembles the one for \cref{thm:adam-bad} -- we combine the lower bound for the number of iterations from \cref{col: bound-iterates-2-delay} with the specific choice of stochasticity. Nevertheless, to prove this theorem, we construct the adversarial noise in another way. More precisely, we consider the following stochastic gradient
    \begin{align*}
        g_k = \nabla f(x_k) - \sigma \xi_k,
    \end{align*}
    where
    \begin{align}
        \label{eq:noise-2-delay}
        \xi_k = \begin{cases}
            0, \qquad &\text{if } k < K - 1 \text{ or } |\hat{x}_K| > \nu,\\
            \begin{cases}
                -A_k, &\quad \text{with probability } \frac{1}{2A_k^2}\\
                0, &\quad \text{with probability } 1 - \frac{1}{A_k^2}\\
                A_k, &\quad \text{with probability } \frac{1}{2A_k^2}
            \end{cases}
            \qquad &\text{otherwise,}
        \end{cases}
    \end{align}
    where $\hat{x}_K$ is the result of deterministic \algname{AdamD} after $K$ iterations and $A_k = \max\left\{1, \frac{2\nu b_{k}}{(1-\beta_1)\gamma \sigma}\right\}$. 
    What is more, $\mathbb{E}\left[\xi_k\right] = 0$ and $\EE\left[\xi_k^2\right] \leq 1$ \revision{by} the construction. Therefore, the stochastic gradient satisfies the \cref{asm:heavy-tailed} with $\alpha = 2$.
    
    \revision{We want to prove that} $\mathbb{P}\{f(x_K) - f(x^*) > \varepsilon\} \leq \delta$. \revision{For $\delta < 1$, this implies} that $\abs{\hat{x}_K} \leq \nu$ with $\varepsilon = \frac{\nu^2}{2}$. Indeed, assuming the contrary, the noise is equal to $0$ for each iteration \revision{by the construction, meaning that}
    \begin{align*}
        \mathbb{P}\left\{f(x_K) - f(x^*) > \varepsilon \right\} = \mathbb{P}\left\{f(\hat x_K) - f(x^*) > \varepsilon \right\} = \mathbb{P}\left\{\abs{\hat{x}_K} > \nu \right\} = 1~\revision{> \delta}.
    \end{align*}
    As a result, $\abs{\hat{x}_K} \leq \nu$ and, applying \cref{col: bound-iterates-2-delay}, \revision{we} obtain
    \begin{align*}
        K \geq \frac{(x_0 - \nu)b_{0}}{2\gamma \nu}.
    \end{align*}
    What is more, $x_K$ can be written as
    \begin{align*}
        x_K = \hat{x}_{K-1} - \frac{\gamma}{b_{K-1}}m_{K-1} = \hat{x}_K + \frac{(1-\beta_1)\gamma \sigma \xi_{K-1}}{b_{K-1}}.
    \end{align*}
    Hence, 
    \begin{align*}
        \mathbb{P}\left\{f(x_K) - f(x^*) \geq \varepsilon \right\} &=  \mathbb{P}\left\{\abs{x_K} \geq \nu \right\} = \mathbb{P}\left\{\abs{\hat{x}_K + \frac{(1-\beta_1)\gamma \sigma \xi_{K-1}}{b_{K-1}}} \geq \nu \right\} \\&\geq \mathbb{P}\left\{\abs{\frac{(1-\beta_1)\gamma \sigma \xi_{K-1}}{b_{K-1}}} \geq \nu + \hat{x}_K\right\} \geq \mathbb{P}\left\{\abs{\frac{(1-\beta_1)\gamma \sigma \xi_{K-1}}{b_{K-1}}} \geq 2\nu\right\} \\&= \mathbb{P}\left\{\abs{\xi_{K-1}} \geq \frac{2\nu b_{K-1}}{(1-\beta_1)\gamma \sigma}\right\}.
    \end{align*}
    If $\max\left\{1, \frac{2\nu b_{K-1}}{(1-\beta_1)\gamma \sigma}\right\} = 1$, then
    \begin{align*}
        \delta \geq  \mathbb{P}\left\{f(x_K) - f(x^*) \geq \varepsilon \right\} \geq \mathbb{P}\left\{\abs{\xi_{K-1}} \geq \frac{2\nu b_{K-1}}{(1-\beta_1)\gamma \sigma}\right\} = 1,
    \end{align*}
    \revision{which} leads us to the contradiction. Therefore $\max\left\{1, \frac{2\nu b_{K-1}}{\gamma \sigma}\right\} = \frac{2\nu b_{K-1}}{(1-\beta_1)\gamma \sigma}$, and
    \begin{align*}
        \delta \geq  \mathbb{P}\left\{f(x_K) - f(x^*) \geq \varepsilon \right\} \geq \mathbb{P}\left\{\abs{\xi_{K-1}} \geq \frac{2\nu b_{K-1}}{(1-\beta_1)\gamma \sigma}\right\} = \frac{1}{A_{K-1}^2} = \frac{(1-\beta_1)^2\gamma^2 \sigma^2}{4\nu^2 b_{K-1}^2},
    \end{align*}
    \revision{where we used that $A_{K-1} = \max\left\{1, \frac{2\nu b_{K-1}}{(1-\beta_1)\gamma \sigma}\right\}$ and the noise structure.} 
    Consequently, $\gamma \leq \frac{2\nu b_{K-1}\sqrt{\delta}}{(1-\beta_1)\sigma}$. What is more, $b_{K-1}$ can be bounded as 
    \begin{align*}
        b_{K-1} \leq \sqrt{b_0^2 + \nu^2}
    \end{align*}
    since the gradient of $f$ is uniformly bounded by $\nu$. Hence, \revision{we} obtain with $b_0 \geq \nu$
    \begin{align*}
        K \geq \frac{(x_0 - \nu)b_{0}}{2\gamma \nu} \geq \frac{(1-\beta_1)(x_0 - \nu)\sigma b_{0}}{4\sqrt{b_{0}^2 + \nu^2} \nu^2\sqrt{\delta}} \geq \frac{(1-\beta_1)(x_0 - \nu)\sigma}{8\nu^2\sqrt{\delta}} = \frac{(1-\beta_1)R\sigma}{16\varepsilon\sqrt{\delta}},
    \end{align*}
    which finishes the proof.
    
\end{proof}

\newpage

\section{Missing Proofs from \cref{section:convergence}}
\label{appendix:proofs-convergence}
In this section, we provide missing proofs for \cref{alg:clip_adagrad_adam_general} in the convex and non-convex cases. For each case, the proof consists of two parts -- descent lemma and main theorem. Moreover, for convenience of the proofs, we consider a reweighted version of \cref{alg:clip_adagrad_adam_general} summarized in Algorithm~\ref{alg:clip_adagrad_adam_general_reweighted}, which has an additional parameter $\eta > 0$ appearing in the update rule for $b_t$. However, Algorithms~\ref{alg:clip_adagrad_adam_general} and \ref{alg:clip_adagrad_adam_general_reweighted} are equivalent: if we divide $b_t$ and $\gamma$ in Algorithm~\ref{alg:clip_adagrad_adam_general_reweighted} by $\sqrt{\eta}$, the method reduces to Algorithm~\ref{alg:clip_adagrad_adam_general} but produces exactly the same points as before (given the same initialization and source of stochasticity, i.e., seed), since $\nicefrac{\gamma}{b_t}$ remains unchanged.

\begin{algorithm}[H]
   \caption{Reweighted \algname{Clip-Adam}/\algname{Clip-AdamD-Norm} and \algname{Clip-M-AdaGrad}/\algname{Clip-M-AdaGradD-Norm}}
   \label{alg:clip_adagrad_adam_general_reweighted}
   \centering
\begin{algorithmic}[1]
  \REQUIRE Stepsize $\gamma > 0$, starting point $x_0 \in \R^d$, initial constant $b_{-1} > 0$ (for \algname{Adam} and \algname{M-AdaGrad}) or $b_0 > 0$ (for \algname{AdamD} and \algname{M-AdaGradD}), momentum parameters $\beta_1,\beta_2\in [0,1]$, level of clipping $\lambda > 0$, reweighting parameter $\eta > 0$
    \STATE Set $m_{-1} = 0$
   \FOR{$t=0,1,\dotsc$}
        \STATE\label{line:momentum_clipping} $m_t = \beta_1 m_{t-1} + (1-\beta_1)\clip\left(\nabla f_{\xi_t}(x_t), \lambda\right)$
        \IF{no delay}
        \STATE\label{line:scaling_no_delay} $b_t = \begin{cases}
            \sqrt{\beta_2 b_{t-1}^2 + \eta(1-\beta_2)\sqn{\clip\left(\nabla f_{\xi_t}(x_t), \lambda\right)}}& \text{ for \algname{Clip-Adam-Norm}}\\
            \sqrt{b_{t-1}^2 + \eta\sqn{\clip\left(\nabla f_{\xi_t}(x_t), \lambda\right)}}& \text{ for \algname{Clip-M-AdaGrad-Norm}}
        \end{cases}$ 
        \ELSE
        \STATE\label{line:scaling_delay} $b_{t+1} = \begin{cases}
            \sqrt{\beta_2 b_{t}^2 + \eta(1-\beta_2)\sqn{\clip\left(\nabla f_{\xi_t}(x_t), \lambda\right)}}& \text{ for \algname{Clip-AdamD-Norm}}\\
            \sqrt{b_{t}^2 + \eta\sqn{\clip\left(\nabla f_{\xi_t}(x_t), \lambda\right)}}& \text{ for \algname{Clip-M-AdaGradD-Norm}}
        \end{cases}$
        \ENDIF
        \STATE $x_{t+1} = x_t - \frac{\gamma}{b_{t}} m_t$
   \ENDFOR
\end{algorithmic}
\end{algorithm}

\subsection{Technical Lemmas}
Here we introduce technical lemmas for the future proofs.
\begin{lemma}
    \label{lem: sequence_b}
    Let the sequence $\{b_t\}_{t=0}$ is generated by \cref{alg:clip_adagrad_adam_general_reweighted} in $K$ iterations. Then, for every $t, r$: $t \geq r$ we get
    \begin{align*}
        b_t \geq c_m b_r,
    \end{align*}
    where the constant $c_m$ depends on the update rule for $b_t$. To be more precise, $c_m = 1$ for the \algname{Clip-M-AdaGrad}/\algname{Clip-M-AdaGradD-Norm}, and $c_m = \nicefrac{1}{2}$ for \algname{Clip-Adam}/\algname{Clip-AdamD-Norm}.
\end{lemma}
\begin{proof}
    The case of \algname{Clip-M-AdaGrad}/\algname{Clip-M-AdaGradD-Norm} is obvious since the sequence $\{b_t\}_{t=0}$ is non-decreasing. For the \algname{Clip-Adam}/\algname{Clip-AdamD-Norm} we obtain that
    \begin{align*}
        b_t^2 \geq \beta_2^{t - r}b_r^2 = \left(1 - \frac{1}{K}\right)^{t-r}b_r^2 \geq \left(1 - \frac{1}{K}\right)^{K}b_r^2 \geq \frac{1}{4}b_r^2,
    \end{align*}
    where we, without loss of generality, assume that $K \geq 2$ and apply the analytical form of $\beta_2$ with fact that $g(K) = \left(1 - \frac{1}{K}\right)^K$ is increasing function. Taking the square root from both parts, we conclude the proof.
\end{proof}
\begin{lemma}
    \label{lem: bound_momentum}
    Let the sequence $\{m_t\}_{t=0}$ is generated by \cref{alg:clip_adagrad_adam_general_reweighted} in $K$ iterations. Then, for every $0 \leq t \leq K-1$ it holds that
    \begin{align*}
        m_t = \sum\limits_{k=0}^t \beta_1^{t-k}(1 - \beta_1)g_k.
    \end{align*}
    Moreover, $\sqn{m_t}$ can be bounded in the following way:
    \begin{align*}
        \sqn{m_t} \leq (1 - \beta_1^{t+1})\sum\limits_{k=0}^t \beta_1^{t-k}(1 - \beta_1)\sqn{g_k}.
    \end{align*}
\end{lemma}
\begin{proof}
    The first part of the lemma is the direct consequence of update rule of momentum $m_t$. For the second part we need to apply the Jensen's inequality as follows:
    \begin{align*}
        \sqn{\sum\limits_{k=0}^t \frac{\beta_1^{t-k}(1 - \beta_1)}{1 - \beta_1^{t+1}}g_k} \leq \sum\limits_{k=0}^t \frac{\beta_1^{t-k}(1 - \beta_1)}{1 - \beta_1^{t+1}}\sqn{g_k},
    \end{align*}
    where we use the convexity of $\sqn{\cdot}$ and $\sum\limits_{k=0}^t \beta_1^{t-k}(1 - \beta_1) = 1 - \beta_1^{t+1}$. Multiplying both sides by $(1 - \beta_1^{t+1})^2$, we get the final result.
\end{proof}

\subsection{Non-Convex Case: Methods with Delay}

\begin{lemma}[Descent lemma]
\label{lem:descent-nonconvex}
Let \cref{asm:smoothness} hold on $Q = \Big\{x \in \R^d \ | \ \exists y \in \R^d: \ f(y) \leq f_* + 2\Delta \ and \norm{x - y} \leq \frac{\sqrt{\Delta}}{20\sqrt{L}}\Big\}$, where $f(x_0) - f_* = \Delta_0 \leq \Delta$. Then, after $T$ iterations of \algname{Clip-M-AdaGradD}/\algname{Clip-AdamD-Norm} with $b_0 \geq \nicefrac{2\gamma L}{(1-\beta_1)^2c_m^2}$, if $x_t \in Q \ \forall t = \overline{0, T}$, we have
    \begin{align*}
        \sum\limits_{t=0}^{T-1}\frac{\gamma C_t}{2} \sqn{\nabla f(x_t)} &\leq \Delta_0 - \Delta_T - \sum\limits_{t=0}^{T-1}(\gamma C_t - 2A_t) \ev{\nabla f(x_t), \theta_t^u} + \sum\limits_{t=0}^{T-1}\gamma C_t\sqn{\theta_t^b} + \sum\limits_{t=0}^{T-1}2A_t\sqn{\theta_t^u},
    \end{align*}
    where $C_t = \sum\limits_{k=t}^{T-1}\frac{1-\beta_1}{b_k}\beta_1^{k-t}$, $A_t = \sum\limits_{k=t}^{T-1}\frac{L\gamma^2(1-\beta_1)}{c_mb_kb_0}(k-t + 1)\beta_1^{k-t}$ and $c_m$ is taken from \cref{lem: sequence_b}.
\end{lemma}
\begin{proof}
    We start with the $L$-smoothness of $f$:
    \begin{align}
        \label{eq: smooth}
        f(x_{t+1}) - f(x_t) &\leq \ev{\nabla f(x_t), x_{t+1} - x_t} + \frac{L}{2}\sqn{x_{t+1} - x_t} = -\frac{\gamma}{b_t}\ev{\nabla f(x_t), m_t} + \frac{L\gamma^2}{2b_t^2}\sqn{m_t}.
    \end{align}
    Using the update rule of \cref{alg:clip_adagrad_adam_general_reweighted}, we can obtain
    \begin{align*}
        -\ev{\nabla f(x_t), m_t} &= -\beta_1\ev{\nabla f(x_t), m_{t-1}} - (1 - \beta_1)\ev{\nabla f(x_t), g_t} \\&= -\beta_1\ev{\nabla f(x_t) - \nabla f(x_{t-1}), m_{t-1}} -\beta_1\ev{\nabla f(x_{t-1}), m_{t-1}} \\&- (1 - \beta_1)\ev{\nabla f(x_t), g_t} \\&\leq -\beta_1\ev{\nabla f(x_{t-1}), m_{t-1}} + \beta_1\norm{\nabla f(x_t) - \nabla f(x_{t-1})}\norm{m_{t-1}}\\&- (1 - \beta_1)\ev{\nabla f(x_t), g_t} \\&\leq
        -\beta_1\ev{\nabla f(x_{t-1}), m_{t-1}} + \beta_1 L\norm{x_t - x_{t-1}}\norm{m_{t-1}}\\&- (1 - \beta_1)\ev{\nabla f(x_t), g_t} \\&=
        -\beta_1\ev{\nabla f(x_{t-1}), m_{t-1}} + \frac{\gamma \beta_1L}{b_{t-1}}\sqn{m_{t-1}}\\&- (1 - \beta_1)\ev{\nabla f(x_t), g_t}, 
    \end{align*}
    where we use the Cauchy-Schwarz inequality and $L$-smoothness of $f$. Applying the same idea for the $t-1, t-2, \ldots, 0$ and noting that $m_{-1} = 0$, we get
    \begin{align}
    \label{eq: recur-sc-pr-nonc}
        -\ev{\nabla f(x_t), m_t} \leq -(1 - \beta_1)\sum\limits_{k=0}^t\beta_1^{t-k}\ev{\nabla f(x_k), g_k} + L\gamma \sum\limits_{k=0}^{t-1}\frac{\beta_1^{t-k}}{b_k}\sqn{m_k}.
    \end{align}
    Therefore, substituting \eqref{eq: recur-sc-pr-nonc} into \eqref{eq: smooth}, we have
    \begin{align*}
        f(x_{t+1}) - f(x_t) &\leq -\frac{(1 - \beta_1)\gamma}{b_t}\sum\limits_{k=0}^t\beta_1^{t-k}\ev{\nabla f(x_k), g_k} + \frac{L\gamma^2}{b_t} \sum\limits_{k=0}^{t-1}\frac{\beta_1^{t-k}}{b_k}\sqn{m_k} + \frac{L\gamma^2}{2b_t^2}\sqn{m_t} \\&\leq -\frac{(1 - \beta_1)\gamma}{b_t}\sum\limits_{k=0}^t\beta_1^{t-k}\ev{\nabla f(x_k), g_k} + \frac{L\gamma^2}{b_t} \sum\limits_{k=0}^{t}\frac{\beta_1^{t-k}}{b_k}\sqn{m_k}.
    \end{align*}
    Applying \cref{lem: bound_momentum} with $1 - \beta_1^{k+1} \leq 1$, we can rewrite the inequality above as follows:
    \begin{align}
    \label{eq: new_descent}
        f(x_{t+1}) - f(x_t) &\leq -\frac{(1 - \beta_1)\gamma}{b_t}\sum\limits_{k=0}^t\beta_1^{t-k}\ev{\nabla f(x_k), g_k} + \frac{L\gamma^2}{b_t} \sum\limits_{k=0}^{t}\frac{\beta_1^{t-k}}{b_k}\sum\limits_{j=0}^k \beta_1^{k-j}(1 - \beta_1)\sqn{g_j} \nonumber\\&= -\frac{(1 - \beta_1)\gamma}{b_t}\sum\limits_{k=0}^t\beta_1^{t-k}\ev{\nabla f(x_k), g_k} + \frac{L\gamma^2}{b_t} \sum\limits_{j=0}^{t}\sum\limits_{k=j}^t\frac{\beta_1^{t-k}}{b_k} \beta_1^{k-j}(1 - \beta_1)\sqn{g_j},
    \end{align}
    where we change the limits of summation. Now let us bound the second term. Applying \cref{lem: sequence_b}, we obtain that $b_k \geq c_m b_0$ (the constant $c_m$ is taken from \cref{lem: sequence_b}). Consequently,
    \begin{align}
        \label{eq: 123}
        \frac{L\gamma^2}{b_t} \sum\limits_{j=0}^{t}\sum\limits_{k=j}^t\frac{\beta_1^{t-k}}{b_k} \beta_1^{k-j}(1 - \beta_1)\sqn{g_j} &\leq \frac{L\gamma^2(1 -\beta_1)}{c_m b_t b_0} \sum\limits_{j=0}^{t}\sum\limits_{k=j}^t\beta_1^{t-k} \beta_1^{k-j}\sqn{g_j} \nonumber\\&= \frac{L\gamma^2(1 -\beta_1)}{c_m b_t b_0} \sum\limits_{j=0}^{t}\beta_1^{t-j}(t - j + 1)\sqn{g_j}. 
    \end{align}
    Thus, substituting \eqref{eq: 123} into \eqref{eq: new_descent}, we get
    \begin{align*}
         f(x_{t+1}) - f(x_t) &\leq -\frac{(1 - \beta_1)\gamma}{b_t}\sum\limits_{k=0}^t\beta_1^{t-k}\ev{\nabla f(x_k), g_k} + \frac{L\gamma^2(1 -\beta_1)}{c_m b_t b_0} \sum\limits_{k=0}^{t}\beta_1^{t-k}(t - k + 1)\sqn{g_k}. 
    \end{align*}
    After summing over $t = 0, \ldots T-1$,
    \begin{align*}
        f(x_T) - f(x_0) &\leq -\sum\limits_{t=0}^{T-1}\frac{(1 - \beta_1)\gamma}{b_t}\sum\limits_{k=0}^t\beta_1^{t-k}\ev{\nabla f(x_k), g_k} + \sum\limits_{t=0}^{T-1}\frac{L\gamma^2(1 -\beta_1)}{c_m b_t b_0} \sum\limits_{k=0}^{t}\beta_1^{t-k}(t - k + 1)\sqn{g_k}.
    \end{align*}
    The main idea is to estimate the coefficients corresponding to $\ev{\nabla f(x_r), g_r}$ and $\sqn{g_r}$. These multiplicative factors can be estimated as
    \begin{align}
        \label{eq: coef-scal-non}
        -\sum\limits_{t=r}^{T-1}\frac{\gamma(1-\beta_1)}{b_t}\beta_1^{t-r}
    \end{align}
    for the scalar product and
    \begin{align}
        \label{eq: coef-norm-non}
        \sum\limits_{t=r}^{T-1}\frac{L\gamma^2(1-\beta_1)}{c_mb_tb_0}(t-r + 1)\beta_1^{t-r}
    \end{align}
    for the squared norm, respectively. For \eqref{eq: coef-norm-non} we can apply \cref{lem: sequence_b} in the following way:
    \begin{align*}
        \sum\limits_{t=r}^{T-1}\frac{L\gamma^2(1-\beta_1)}{c_mb_tb_0}(t-r + 1)\beta_1^{t-r} &\leq \sum\limits_{t=r}^{T-1}\frac{L\gamma^2(1-\beta_1)}{c_m^2b_rb_0}(t-r + 1)\beta_1^{t-r} = \frac{L\gamma^2(1-\beta_1)}{c_m^2b_rb_0}\sum\limits_{t=r}^{T-1}(t-r + 1)\beta_1^{t-r}.
    \end{align*}
    Applying \cref{lem: numerical}, and using that $\sum\limits_{t=r}^{T-1}\beta_1^{t-r} \leq \frac{1}{1 - \beta_1}$, we get
    \begin{align}
        \label{eq: square-coef}
        A_r = \sum\limits_{t=r}^{T-1}\frac{L\gamma^2(1-\beta_1)}{c_mb_tb_0}(t-r + 1)\beta_1^{t-r} \leq \frac{L\gamma^2}{c_m^2b_kb_0(1 - \beta_1)}
    \end{align}
    for each $k = 0, \ldots, r$. Moreover, let us denote the factor corresponding to the scalar product \eqref{eq: coef-scal-non} as $-\gamma C_r$. $C_r$ can be bounded as follows:
    \begin{align*}
        \frac{(1 - \beta_1)}{b_r}\leq \sum\limits_{t=r}^{T-1}\frac{(1-\beta_1)}{b_t}\beta_1^{t-r} \leq \sum\limits_{t=r}^{T-1}\frac{(1-\beta_1)}{c_m b_0}\beta_1^{t-r} \leq \frac{1}{c_m b_0},
    \end{align*}
    where we apply \cref{lem: sequence_b}. Therefore, the descent lemma can be formulated as
    \begin{align*}
        f(x_T) - f(x_0) &\leq - \sum\limits_{t=0}^{T-1}\gamma C_t \ev{\nabla f(x_t), g_t} + \sum\limits_{t=0}^{T-1}A_t\sqn{g_t}.
    \end{align*}
    Substituting the analytical form of $g_t$, we have
    \begin{align*}
        f(x_T) - f(x_0) &\leq -\sum\limits_{t=0}^{T-1}\gamma C_t \ev{\nabla f(x_t), g_t} + \sum\limits_{t=0}^{T-1}A_t\sqn{g_t} \\&= -\sum\limits_{t=0}^{T-1}\gamma C_t \left(\ev{\nabla f(x_t), \theta_t} + \sqn{\nabla f(x_t)}\right) \\&+ \sum\limits_{t=0}^{T-1}A_t\left(\sqn{\theta_t} + 2\ev{\nabla f(x_t), \theta_t} + \sqn{\nabla f(x_t)}\right) \\&=
        -\sum\limits_{t=0}^{T-1}(\gamma C_t - A_t) \sqn{\nabla f(x_t)} - \sum\limits_{t=0}^{T-1}(\gamma C_t - 2A_t) \ev{\nabla f(x_t), \theta_t} \\&+ \sum\limits_{t=0}^{T-1}A_t\sqn{\theta_t}.
    \end{align*}
    Choosing $\gamma \leq \frac{(1 - \beta_1)^2c_m^2b_0}{2L}$, we get that $\gamma C_t - 2A_t \geq 0$ since the boundary $C_t \geq \frac{1 - \beta_1}{b_t}$ and \eqref{eq: square-coef} hold with $k=t$. Therefore, using that $\theta_t = \theta_t^u + \theta_t^b$, one can obtain
    \begin{align*}
         f(x_T) - f(x_0) &\leq -\sum\limits_{t=0}^{T-1}(\gamma C_t - A_t) \sqn{\nabla f(x_t)} - \sum\limits_{t=0}^{T-1}(\gamma C_t - 2A_t) \ev{\nabla f(x_t), \theta_t} + \sum\limits_{t=0}^{T-1}A_t\sqn{\theta_t} \\&\leq -\sum\limits_{t=0}^{T-1}(\gamma C_t - A_t) \sqn{\nabla f(x_t)} - \sum\limits_{t=0}^{T-1}(\gamma C_t - 2A_t) \ev{\nabla f(x_t), \theta_t^u}\\&+ \sum\limits_{t=0}^{T-1}2A_t\left(\sqn{\theta_t^u} + \sqn{\theta_t^b}\right) + \sum\limits_{t=0}^{T-1}\left(\frac{\gamma C_t}{2} - A_t\right)\sqn{\nabla f(x_t)} + \sum\limits_{t=0}^{T-1}\left(\frac{\gamma C_t}{2} -A_t\right)\sqn{\theta_t^b} \\&=
         -\sum\limits_{t=0}^{T-1}\frac{\gamma C_t}{2} \sqn{\nabla f(x_t)} - \sum\limits_{t=0}^{T-1}(\gamma C_t - 2A_t) \ev{\nabla f(x_t), \theta_t^u} + \sum\limits_{t=0}^{T-1}2A_t\sqn{\theta_t^u} + 
         \sum\limits_{t=0}^{T-1}\left(\frac{\gamma C_t}{2} + A_t\right)\sqn{\theta_t^b}.
    \end{align*}
    Using that $\frac{\gamma C_t}{2} \geq A_t$, and rearranging terms with $\Delta_t = f(x_t) - f_*$, we get the final result.
\end{proof}
\begin{remark}
    It is important to note that $Q$ can be any non-empty subset of $\R^d$ as long as the iterates belong to it. In this sense, the form of $Q$ is not that important for the proof (a similar observation holds for \cref{lem:descent} in the convex case). Nevertheless, $Q$ plays a key role in the next part of the proof.    
\end{remark}

\begin{theorem}
\label{thm:nonconvex-case-conv}
Let \revision{Assumptions~\ref{asm:heavy-tailed} and \ref{asm:smoothness}} hold on $Q = \Big\{x \in \R^d \ | \ \exists y \in \R^d: \ f(y) \leq f_* + 2\Delta \ and \ \norm{x - y} \leq \frac{\sqrt{\Delta}}{20\sqrt{L}}\Big\}$ with $f(x_0) - f_* = \Delta_0 \leq \Delta$. Then, after $K+1$ iterations of \algname{Clip-M-AdaGradD}/\algname{Clip-AdamD-Norm} with
    \begin{align}
    \label{eq:gamma-choice-2}
        \gamma \leq \min\Bigg\{&\frac{(1-\beta_1)^2c_m^2b_0(K+1)^{\frac{1 - \alpha}{3\alpha - 2}}}{80L\ln\frac{4(K+1)}{\delta}}, \frac{c_m\revision{\sqrt{1 - \beta_1}}35^{\frac{1}{\alpha}}b_0\sqrt{\Delta}}{432^{\frac{1}{\alpha}}\cdot 20\sqrt{L}\sigma (K+1)^{\frac{\alpha}{3\alpha - 2}}\ln^{\frac{\alpha - 1}{\alpha}}\frac{4(K+1)}{\delta} },
        \nonumber\\
        &\frac{c_m\revision{(1 - \beta_1)^{\frac{\alpha - 1}{2\alpha - 1}}}b_0 \Delta^{\frac{\alpha}{2\alpha-1}}}{4^{\frac{\alpha+1}{2\alpha-1}}\cdot 20^{\frac{2\alpha-2}{2\alpha-1}} \sigma^{\frac{2\alpha}{2\alpha-1}} L^{\frac{\alpha-1}{2\alpha-1}} (K+1)^{\frac{\alpha}{3\alpha-2}} \ln^{\frac{2\alpha-2}{2\alpha-1}}\left(\frac{4(K+1)}{\delta}\right)}\Bigg\}, \quad \eta = \frac{L\gamma^2(1-\beta_1)^2}{\Delta},
    \end{align}
    and
    \begin{align}
    \label{eq:lambda-choice-2}
        \lambda = \frac{c_m \revision{\sqrt{1-\beta_1}} b_0\sqrt{\Delta} (K+1)^{\frac{1 - \alpha}{3\alpha - 2}}}{20\sqrt{L}\gamma \ln\left(\frac{4(K+1)}{\delta}\right)}
    \end{align}
    the bound 
    \begin{align*}
         \sum\limits_{k=0}^{K}\frac{\gamma C_k}{2}\sqn{\nabla f(x_k)} \leq 2\Delta
    \end{align*}
    holds with probability at least $1 - \delta$. In particular, when $\gamma$ equals the minimum from \eqref{eq:gamma-choice-2}, the iterates produced by \algname{Clip-M-AdaGradD}/\algname{Clip-AdamD-Norm} satisfy
    \begin{align*}
        &\frac{1}{K+1}\sum\limits_{k=0}^{K}\sqn{\nabla f(x_k)} \\
        &= \cO\left(\max\left\{\frac{L\Delta \ln\frac{K+1}{\delta}}{(1-\beta_1)^3(K+1)^{\frac{2\alpha-1}{3\alpha-2}}}, \frac{\sqrt{L\Delta}\sigma \ln^{\frac{\alpha-1}{\alpha}}\frac{K+1}{\delta}}{(1 - \beta_1)^{\frac{3}{2}}(K+1)^\frac{2\alpha-2}{3\alpha-2}}, \frac{\sigma^{\frac{2\alpha}{2\alpha-1}}(L\Delta)^\frac{\alpha-1}{2\alpha - 1} \ln^{\frac{2\alpha-2}{2\alpha-1}}\frac{K+1}{\delta}}{(1 - \beta_1)^{\frac{3\alpha - 2}{2\alpha - 1}}(K+1)^\frac{2\alpha-2}{3\alpha-2}}\right\}\right)
    \end{align*}
    with probability at least $1 - \delta$.
\end{theorem}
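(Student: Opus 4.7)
The plan is to mirror the structure of the convex analysis in Theorem~\ref{thm:convex-case-conv}, replacing the descent inequality coming from Lemma~\ref{lem:descent} with the non-convex descent inequality of Lemma~\ref{lem:descent-nonconvex}, and to carry the argument by induction on an event that simultaneously controls the running sum of squared gradients and the sublevel-set membership required by the definition of $Q$. Concretely, for $T = 0, 1, \ldots, K+1$, I would introduce the event $E_T$ on which, for all $t \leq T$,
\begin{gather*}
\Delta_t \leq 2\Delta,\qquad \sum_{l=0}^{t-1}\frac{\gamma}{2b_l}\sqn{\nabla f(x_l)} \leq 2\Delta,\\
-\sum_{l=0}^{t-1}\Bigl(\tfrac{\gamma}{b_l}-\tfrac{L\gamma^2}{b_l^2}\Bigr)\ev{\nabla f(x_l),\theta_l^u} + \sum_{l=0}^{t-1}\tfrac{\gamma}{b_l}\sqn{\theta_l^b} + \sum_{l=0}^{t-1}\tfrac{L\gamma^2}{b_l^2}\sqn{\theta_l^u} \leq \Delta,
\end{gather*}
and to prove $\PP(E_T)\geq 1 - T\delta/(K+1)$ by induction on $T$. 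The base case $T=0$ is immediate from $\Delta_0\leq\Delta$. For the inductive step, $E_{T-1}$ implies $x_t\in Q$ for $t\leq T-1$ (since $f(x_t)\leq f_\ast+2\Delta$) and, using $\norm{x_T-x_{T-1}}\leq \gamma\lambda/b_0$ together with the clipping-level choice \eqref{eq:lambda-choice-2}, also $x_T\in Q$; hence Lemma~\ref{lem:descent-nonconvex} applies up to step $T$.

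On $E_{T-1}$, the norms of the true gradients can be bounded as $\norm{\nabla f(x_t)} \leq \sqrt{2L\Delta_t}\leq 2\sqrt{L\Delta}$, which by \eqref{eq:lambda-choice-2} is at most $\lambda/2$ for the proposed parameter regime. This allows me to invoke Lemma~\ref{lem: clip-effect} to obtain $\norm{\theta_t^u}\leq 2\lambda$, $\norm{\theta_t^b}\leq 2^\alpha\sigma^\alpha/\lambda^{\alpha-1}$ and $\E_{\xi_t}\sqn{\theta_t^u}\leq 18\lambda^{2-\alpha}\sigma^\alpha$. I would then decompose the noise contribution in Lemma~\ref{lem:descent-nonconvex} into five summands, analogous to $\circledOne$--$\circledFive$ of Theorem~\ref{thm:convex-case-conv}: the martingale-difference parts $-(\gamma/b_t-L\gamma^2/b_t^2)\ev{\nabla f(x_t),\theta_t^u}$ and $(L\gamma^2/b_t^2)(\sqn{\theta_t^u}-\E_{\xi_t}\sqn{\theta_t^u})$ are controlled by Bernstein's inequality (Lemma~\ref{lem: bernstein}), whereas the deterministic-sign pieces $(\gamma/b_t)\sqn{\theta_t^b}$, $(L\gamma^2/b_t^2)\E_{\xi_t}\sqn{\theta_t^u}$ are bounded directly. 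The key difference from the convex case is that $\norm{\nabla f(x_t)}$ plays the role of $\norm{\eta_t}$ in the Bernstein bound; substituting $\norm{\nabla f(x_t)}\leq 2\sqrt{L\Delta}$ and choosing each summand to be at most $\Delta/5$ yields the parameter constraints \eqref{eq:gamma-choice-2} and \eqref{eq:lambda-choice-2}, with the third term in $\gamma$ appearing specifically to make the $\sqn{\theta_t^b}$ summand small when $\alpha<2$.

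Combining the five bounds gives the inductive closure, and a union bound over the two Bernstein failure events together with the induction hypothesis yields $\PP(E_{K+1})\geq 1-\delta$. To convert the sum $\sum_k \frac{\gamma}{2b_k}\sqn{\nabla f(x_k)}\leq 2\Delta$ into the desired bound on $\frac{1}{K+1}\sum_k\sqn{\nabla f(x_k)}$, I follow the same device as in the convex proof: bound $b_K^2\leq b_0^2+\eta\sum_k(2\sqn{\nabla f(x_k)}+2\sqn{\theta_k})$, use the already-established control on $\sum(\gamma^2/b_k^2)\sqn{\theta_k}$ together with $\eta = L\gamma^2/\Delta$ to get a quadratic inequality in $b_K$, solve it to obtain $b_K=O\!\bigl(\gamma^{-1}\max\{L\eta\Delta,b_0\gamma,b_0\sqrt{\eta\Delta/L}\}\bigr)$, and finally substitute the value of $\gamma$ from \eqref{eq:gamma-choice-2} to read off the three terms in the complexity bound.

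The main obstacle I expect is the bookkeeping that forces the iterates to stay in $Q$: unlike in the convex case, where a ball of radius $2R$ around $x^\ast$ was convenient, here $Q$ is defined as a tube around the sublevel set $\{f\leq f_\ast+2\Delta\}$, so the inductive control must guarantee both $\Delta_T\leq 2\Delta$ (so that $x_T$ lies in the sublevel set itself) and $\norm{x_{T}-x_{T-1}}\leq \sqrt{\Delta}/(20\sqrt{L})$ (so that even if $\Delta_T$ overshoots $2\Delta$ on a single step, $x_T$ is still in the tube). Balancing these two constraints against the Bernstein deviations is what dictates the appearance of the third, noise-dominated term in the stepsize rule \eqref{eq:gamma-choice-2} and the $(K+1)^{(1-\alpha)/(3\alpha-2)}$ factor in $\lambda$; the careful verification that these choices simultaneously satisfy all inequalities encountered in the five-way decomposition is the most delicate part of the argument.
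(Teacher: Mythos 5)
Your proposal follows essentially the same inductive structure, parameter choices, and final conversion as the paper's proof, so the plan is correct. The one slip is a bookkeeping inconsistency: you write that you would decompose into ``five summands, analogous to $\circledOne$--$\circledFive$ of Theorem~\ref{thm:convex-case-conv}'' and bound each by $\Delta/5$, yet the four terms you actually list (the two martingale pieces $-(\gamma/b_t-L\gamma^2/b_t^2)\langle\nabla f(x_t),\theta_t^u\rangle$ and $(L\gamma^2/b_t^2)(\|\theta_t^u\|^2-\E_{\xi_t}\|\theta_t^u\|^2)$, and the two deterministic pieces $(\gamma/b_t)\|\theta_t^b\|^2$ and $(L\gamma^2/b_t^2)\E_{\xi_t}\|\theta_t^u\|^2$) are precisely the paper's $\circledOne$--$\circledFour$, and the paper bounds each by $\Delta/4$. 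The convex descent lemma yields five noise pieces because it involves an inner-product term $\langle x_t-x^*,\theta_t^b\rangle$ and a separate quadratic in $\theta_t^b$; the non-convex descent lemma (Lemma~\ref{lem:descent-nonconvex}) already absorbs the cross term into $\|\theta_t^b\|^2$, so only four pieces remain. Beyond that, your induction invariant adds the running-sum condition $\sum_l(\gamma/2b_l)\|\nabla f(x_l)\|^2\leq 2\Delta$, which the paper derives rather than postulates (it follows from the two conditions the paper carries), and your expansion of $b_K^2$ uses $g_k=\nabla f(x_k)+\theta_k$ with factor $2$ rather than the paper's three-way split $g_k=\nabla f(x_k)+\theta_t^u+\theta_t^b$ with factor $3$; both are harmless. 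Everything else -- staying in $Q$ via $\|x_T-x_{T-1}\|\leq\gamma\lambda/b_0\leq\sqrt{\Delta}/(20\sqrt{L})$, clipping $\|\nabla f(x_t)\|$ at $2\sqrt{L\Delta}$ so it plays the role of $\|\eta_t\|$, the Bernstein applications, the identification of the third stepsize constraint as the one controlling the bias term $\circledTwo$ for $\alpha<2$, and solving the quadratic inequality in $b_K$ -- matches the paper's argument.
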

\begin{proof}
    Our proof is induction-based (similarly to the one for \algname{Clip-SGD} by \citet{sadiev2023high}). We introduce probability event $E_k$ as follows: inequalities
    \begin{align*}
       - \sum\limits_{l=0}^{t-1}(\gamma C_l - 2A_l) \ev{\nabla f(x_l), \theta_l^u} + \sum\limits_{l=0}^{t-1}\gamma C_l\sqn{\theta_l^b} + \sum\limits_{l=0}^{t-1}2A_l\sqn{\theta_l^u} &\leq \Delta,\\
        \Delta_t &\leq 2\Delta
    \end{align*}
    hold simultaneously $\forall t = 0, 1, \ldots, k$. We want to show that $\mathbb{P}\{E_k\} \geq 1 - \frac{k\delta}{K+1} \ \forall k= 0, 1, \ldots, K+1$. The case when $k=0$ is obvious. Now let us make an induction step: let \revision{the statement} hold for some $k = T-1 \leq K$: $\mathbb{P}\{E_{T-1}\} \geq 1 - \frac{(T-1)\delta}{K+1}$. \revision{It remains to prove that $\mathbb{P}\{E_{T}\} \geq 1 - \frac{T\delta}{K+1}$.} The event $E_{T-1}$ implies that $x_t \in \{y \in \mathbb{R}^d: f(y) \leq f_* + 2\Delta\}$  $\forall t = 0, \ldots, T-1$ \revision{and} 
    \begin{align*}
        \norm{x_T - x_{T-1}} = \frac{\gamma}{b_t}\norm{m_{T-1}} \leq \frac{\gamma \lambda}{c_m b_0} \leq \frac{\sqrt{\Delta}}{20\sqrt{L} \ln\frac{4(K+1)}{\delta}} \leq \frac{\sqrt{\Delta}}{20\sqrt{L}}.
    \end{align*}
    Hence, event $E_{T-1}$ implies $\{x_t\}_{t=0}^T \subseteq Q$ and we can apply \cref{lem:descent-nonconvex}:
    \begin{align*}
        \sum\limits_{l=0}^{t-1}\frac{\gamma C_l}{2} \sqn{\nabla f(x_l)} &\leq \Delta_0 - \Delta_t - \sum\limits_{l=0}^{t-1}(\gamma C_l - 2A_l) \ev{\nabla f(x_l), \theta_l^u} + \sum\limits_{l=0}^{t-1}\gamma C_l\sqn{\theta_l^b} + \sum\limits_{l=0}^{t-1}2A_l\sqn{\theta_l^u}
    \end{align*}
    $\forall t=1, \ldots, T$ and $\forall t=1, \ldots T-1$ it implies that
    \begin{align*}
       \sum\limits_{l=0}^{t-1}\frac{\gamma C_l}{2} \sqn{\nabla f(x_l)} &\leq \Delta_0 - \Delta_t - \sum\limits_{l=0}^{t-1}(\gamma C_l - 2A_l) \ev{\nabla f(x_l), \theta_l^u} + \sum\limits_{l=0}^{t-1}\gamma C_l\sqn{\theta_l^b} + \sum\limits_{l=0}^{t-1}2A_l\sqn{\theta_l^u} \leq 2\Delta.
    \end{align*}
    Taking into account that $\sum\limits_{l=0}^{t-1}\frac{\gamma C_l}{2} \sqn{\nabla f(x_l)} \geq 0$ for all $t$, we get that $E_{T-1}$ implies
    \begin{align*}
        \Delta_T &\leq \Delta_0 - \sum\limits_{t=0}^{T-1}(\gamma C_t - 2A_t) \ev{\nabla f(x_t), \theta_t^u} + \sum\limits_{t=0}^{T-1}\gamma C_t\sqn{\theta_t^b} + \sum\limits_{t=0}^{T-1}2A_t\sqn{\theta_t^u} \nonumber \\
        &= \Delta_0 - \sum\limits_{t=0}^{T-1}(\gamma C_t - 2A_t) \ev{\nabla f(x_t), \theta_t^u} + \sum\limits_{t=0}^{T-1}\gamma C_t\sqn{\theta_t^b} \notag\\ 
        &+ \sum\limits_{t=0}^{T-1}2A_t\left(\sqn{\theta_t^u} - \E_{\xi_t}\sqn{\theta_t^u}\right) + \sum\limits_{t=0}^{T-1}2A_t \E_{\xi_t}\sqn{\theta_t^u}.
    \end{align*}
    Next, for vectors
    \begin{align*}
        \eta_t = \begin{cases}
            \nabla f(x_t), \qquad &\norm{\nabla f(x_t)} \leq 2\sqrt{L\Delta}\\
            0, \qquad &\text{otherwise}
        \end{cases}
    \end{align*}
    for all $t = 0, 1, \ldots, T-1$, we have that that with probability $1$
    \begin{align}
        \norm{\eta_t} \leq 2\sqrt{L\Delta}. \label{eq:eta-bound-2}
    \end{align}
    What is more, for all $t = 0, \ldots T-1$ $E_{T-1}$ implies
    \begin{align*}
        \norm{\nabla f(x_t)} \leq \sqrt{2L\Delta_t} \leq 2\sqrt{L\Delta} \overset{\eqref{eq:lambda-choice-2}}{\leq} \frac{\lambda}{2}.
    \end{align*}
    Thus, $E_{T-1}$ implies $\eta_t = \nabla f(x_t)$ for $t = 0, 1, \ldots, T-1$ \revision{and
    \begin{align}
    \label{eq:for-nonconvex}
        \Delta_T &\leq \Delta_0 \underbrace{- \sum\limits_{t=0}^{T-1}(\gamma C_t - 2A_t)\ev{\eta_t, \theta_t^u}}_{\circledOne} + \underbrace{\sum\limits_{t=0}^{T-1} \gamma C_t\sqn{\theta_t^b}}_{\circledTwo} + \underbrace{\sum\limits_{t=0}^{T-1}2A_t\left(\sqn{\theta_t^u} - \E_{\xi_t}\sqn{\theta_t^u}\right)}_{\circledThree} + \underbrace{\sum\limits_{t=0}^{T-1}2A_t \E_{\xi_t}\sqn{\theta_t^u}}_{\circledFour}.
    \end{align}}

    It remains to bound each term in \eqref{eq:for-nonconvex} separately with high probability. Before we move on, we also note that event $E_{T-1}$ implies $\norm{\nabla f(x_t)} \leq \frac{\lambda}{2}$. Therefore, one can apply \cref{lem: clip-effect} and get
    \begin{align}
        \norm{\theta_t^u} &\leq 2\lambda, \label{eq:ndjkjdbvjfdbvjdf_2}\\
        \norm{\theta_t^b} &\leq \frac{2^\alpha \sigma^\alpha}{\lambda^{\alpha - 1}}, \label{eq:ndjkvjfbdvjfhbvdbfvdf_2}\\
        \E_{\xi_t}\sqn{\theta_t^u} &\leq 18\lambda^{2 - \alpha}\sigma^\alpha. \label{eq:djbjdbvjdbhvjbdfhjvbdhfbdf_2}
    \end{align}
    
    \textbf{Bound for \circledOne.} \revision{The} definition of $\theta_t^u$ \revision{implies}
    \begin{align*}
        \E_{\xi_t}\left[-\left(\gamma C_t - 2A_t\right)\ev{\eta_t, \theta_t^u}\right] = 0.
    \end{align*}
    What is more, since $C_t \leq \frac{1}{c_m b_0}$, we get
    \begin{align*}
        \left|\left(\gamma C_t - 2A_t\right)\ev{\eta_t, \theta_t^u}\right| \leq \gamma C_t\norm{\eta_t}\norm{\theta_t^u} \overset{\eqref{eq:eta-bound-2},\eqref{eq:ndjkjdbvjfdbvjdf_2}}{\leq} \frac{4\gamma \lambda \sqrt{L\Delta}}{c_m b_0} \leq \frac{\Delta}{5 \ln\left(\frac{4(K+1)}{\delta}\right)} = c.
    \end{align*}
    Let us define $\sigma^2_t = \E_{\xi_t}\left[\left(\gamma C_t - 2A_t\right)^2\ev{\eta_t, \theta_t^u}^2\right]$. Hence,
    \begin{align}
        \sigma_t^2 \overset{\eqref{eq:eta-bound-2}}{\leq} \left(\gamma C_t - 2A_t\right)^2 \cdot 4L\Delta \E_{\xi_t}\sqn{\theta_t^u} \leq  \frac{4\gamma^2 L\Delta}{c_m^2b_0^2}\E_{\xi_t}\sqn{\theta_t^u}.\label{eq:hdjbjbdcjbdcjdcbjd}
    \end{align}
    Therefore, we can apply Bernstein's inequality (Lemma~\ref{lem: bernstein}) with $G = \frac{7\Delta^2}{480\ln\frac{4(K+1)}{\delta}}$:
    \begin{align*}
        \mathbb{P}\left\{\left|- \sum\limits_{t=0}^{T-1}\left(\gamma C_t - 2A_t\right)\ev{\nabla f(x_t), \theta_t^u}\right| > \frac{\Delta}{4} \text{ and } \sum\limits_{t=0}^{T-1} \sigma_t^2 \leq G\right\} &\leq 2\exp\left(-\frac{\Delta^2}{16\left(2G + \frac{\Delta c}{6}\right)}\right) = \frac{\delta}{2(K+1)}.
    \end{align*}
    Thus, we get
    \begin{align*}
        \mathbb{P}\left\{\text{either }\left|- \sum\limits_{t=0}^{T-1}\left(\gamma C_t - 2A_t\right)\ev{\nabla f(x_t), \theta_t^u}\right| \leq \frac{\Delta}{4} \text{ or } \sum\limits_{t=0}^{T-1} \sigma_t^2 > G\right\} \geq 1 - \frac{\delta}{2(K+1)}.
    \end{align*}
    Moreover, event $E_{T-1}$ implies
    \begin{align*}
        \sum\limits_{t=0}^{T-1}\sigma_t^2 &\overset{\eqref{eq:djbjdbvjdbhvjbdfhjvbdhfbdf_2}}{\leq} \frac{72\gamma^2\lambda^{2-\alpha}\sigma^\alpha L\Delta T}{c_m^2b_0^2} \overset{\eqref{eq:lambda-choice-2}}{=} \frac{72c_m^{2-\alpha}(1-\beta_1)^{1 - \frac{\alpha}{2}}\gamma^\alpha b_0^{2-\alpha}\sqrt{\Delta}^{2-\alpha}(K+1)^{\frac{\alpha^2 - 3\alpha +2}{3\alpha -2}}\sigma^\alpha L\Delta T}{c_m^2 20^{2-\alpha}\sqrt{L}^{2 - \alpha}b_0^2 \ln^{2-\alpha}\frac{4(K+1)}{\delta}} \\&\overset{\eqref{eq:gamma-choice-2}}{\leq} 
        \frac{7\Delta^2}{480\ln\frac{4(K+1)}{\delta}}.
    \end{align*}

    \textbf{Bound for \circledTwo.} For the second term, we get that $E_{T-1}$ implies
    \begin{align*}
        \sum\limits_{t=0}^{T-1} \gamma C_t\sqn{\theta_t^b} &\leq \sum\limits_{t=0}^{T-1} \frac{\gamma}{c_m b_0}\sqn{\theta_t^b} \overset{\eqref{eq:ndjkvjfbdvjfhbvdbfvdf_2}}{\leq} \frac{4^\alpha \sigma^{2\alpha}\gamma T}{c_m\lambda^{2\alpha - 2}b_0}\\
        &\overset{\eqref{eq:lambda-choice-2}}{\leq} \frac{4^\alpha \sigma^{2\alpha}\gamma (K+1)}{c_mb_0} \cdot \frac{20^{2\alpha-2}L^{\alpha-1} \gamma^{2\alpha-2} (K+1)^{\frac{(\alpha-1)(2\alpha-2)}{3\alpha-2}}\ln^{2\alpha-2}\left(\frac{4(K+1)}{\delta}\right)}{c_m^{2\alpha - 2}(1-\beta_1)^{\alpha-1}b_0^{2\alpha-2}\Delta^{\alpha-1}}\\
        &= \frac{4^\alpha \cdot 20^{2\alpha-2}\sigma^{2\alpha} L^{\alpha-1} (K+1)^{\frac{\alpha(2\alpha-1)}{3\alpha-2}}\ln^{2\alpha-2}\left(\frac{4(K+1)}{\delta}\right)}{c_m^{2\alpha-1}(1-\beta_1)^{\alpha-1}b_0^{2\alpha-1}\Delta^{\alpha-1}} \cdot \gamma^{2\alpha-1}\\
        &\overset{\eqref{eq:gamma-choice-2}}{\leq} \frac{\Delta}{4},
    \end{align*}
    where in the last step, we apply the third condition on $\gamma$ from \eqref{eq:gamma-choice-2}.

    \textbf{Bound for \circledThree.} \revision{Similarly to $\circledOne$}, we have unbiased and bounded terms in the sum:
    \begin{align*}
        \E_{\xi_t}\left[2A_t\left(\sqn{\theta_t^u} - \E_{\xi_t}\sqn{\theta_t^u}\right)\right] = 0
    \end{align*}
    and, since \eqref{eq: square-coef} from \cref{lem:descent-nonconvex} hold with $k=0$, 
    \begin{align}
        \left|2A_t\left(\sqn{\theta_t^u} - \E_{\xi_t}\sqn{\theta_t^u}\right)\right| \overset{\eqref{eq:ndjkjdbvjfdbvjdf_2}}{\leq} \frac{16L\lambda^2\gamma^2}{c_m^2b_0^2(1 - \beta_1)} \leq \frac{\Delta}{25 \ln\frac{4(K+1)}{\delta}} \leq \frac{15\Delta}{47 \ln\frac{4(K+1)}{\delta}} = c. \label{eq:jknvkjdnvfkdf}
    \end{align}
    \revision{Next, we} define $\hat{\sigma}_t^2 =  \E_{\xi_t}\left[4A_t^2\left(\sqn{\theta_t^u} - \E_{\xi_t}\sqn{\theta_t^u}\right)^2\right]$. \revision{For the introduced quantities, we have} 
    \begin{align}
        \hat{\sigma}_t^2 \overset{\eqref{eq:jknvkjdnvfkdf}}{\leq} c \E_{\xi_t}\left[2A_t\left|\left(\sqn{\theta_t^u} - \E_{\xi_t}\sqn{\theta_t^u}\right)\right|\right] \leq \frac{4L\gamma^2c}{c_m^2b_0^2(1-\beta_1)}\E_{\xi_t}\sqn{\theta_t^u}.\label{eq:jvdknnjvkdfnjkdnvjkdnvj}
    \end{align}
    Therefore, we can apply Bernstein's inequality (Lemma~\ref{lem: bernstein}) with $G = \frac{7\Delta^2}{1504\ln\frac{4(K+1)}{\delta}}$:
    \begin{align*}
        \mathbb{P}\left\{\left| \sum\limits_{t=0}^{T-1}2A_t\left(\sqn{\theta_t^u} - \E_{\xi_t}\sqn{\theta_t^u}\right)\right| > \frac{\Delta}{4} \text{ and } \sum\limits_{t=0}^{T-1} \hat{\sigma}_t^2 \leq G\right\} &\leq 2\exp\left(-\frac{\Delta^2}{16\left(2G + \frac{\Delta c}{6}\right)}\right) = \frac{\delta}{2(K+1)}.
    \end{align*}
    Thus, we get
    \begin{align*}
        \mathbb{P}\left\{\text{either }\left|\sum\limits_{t=0}^{T-1}2A_t\left(\sqn{\theta_t^u} - \E_{\xi_t}\sqn{\theta_t^u}\right)\right| \leq \frac{\Delta}{4} \text{ or } \sum\limits_{t=0}^{T-1} \hat{\sigma}_t^2 > G\right\} \geq 1 - \frac{\delta}{2(K+1)}.
    \end{align*}
    Moreover, event $E_{T-1}$ implies
    \begin{align*}
        \sum\limits_{t=0}^{T-1}\hat{\sigma}_t^2 &\overset{\eqref{eq:jvdknnjvkdfnjkdnvjkdnvj}, \eqref{eq:ndjkjdbvjfdbvjdf_2}}{\leq} \frac{72L\gamma^2c\lambda^{2 - \alpha}\sigma^\alpha T}{c_m^2b_0^2(1 - \beta_1)} \overset{\eqref{eq:lambda-choice-2}}{\leq} \frac{72c\gamma^\alpha b_0^{2-\alpha}\sqrt{\Delta}^{2-\alpha}(K+1)^{\frac{\alpha^2 - 3\alpha +2}{3\alpha -2}}\sigma^\alpha LT}{20^{2-\alpha}c_m^{\alpha}(1- \beta_1)^{\frac{\alpha}{2}}\sqrt{L}^{2 - \alpha}b_0^2 \ln^{2-\alpha}\frac{4(K+1)}{\delta}} \\
        &\overset{\eqref{eq:gamma-choice-2}}{\leq} \frac{7\Delta c}{480} \leq \frac{7\Delta^2}{1504\ln\frac{4(K+1)}{\delta}}.
    \end{align*}

    \textbf{Bound for \circledFour.} For the last term, we have that $E_{T-1}$ implies
    \begin{align*}
        \sum\limits_{t=0}^{T-1}2A_t \E_{\xi_t}\sqn{\theta_t^u} &\leq \sum\limits_{t=0}^{T-1}\frac{2L\gamma^2}{c_m^2 b_0^2 (1 - \beta_1)} \E_{\xi_t}\sqn{\theta_t^u}\\ &\overset{\eqref{eq:ndjkjdbvjfdbvjdf_2}}{\leq} \frac{36L\gamma^2\lambda^{2 - \alpha}\sigma^\alpha T}{c_m^2b_0^2(1-\beta_1)} \overset{\eqref{eq:lambda-choice-2}}{\leq} \frac{36\gamma^\alpha b_0^{2-\alpha}\sqrt{\Delta}^{2-\alpha}(K+1)^{\frac{\alpha^2 - 3\alpha +2}{3\alpha -2}}\sigma^\alpha LT}{20^{2-\alpha}(1-\beta_1)^{\frac{\alpha}{2}}\sqrt{L}^{2 - \alpha}b_0^2 \ln^{2-\alpha}\frac{4(K+1)}{\delta}} \\
        &\overset{\eqref{eq:gamma-choice-2}}{\leq} 
        \frac{7\Delta}{960\ln\frac{4(K+1)}{\delta}} \leq \frac{\Delta}{4}.
    \end{align*}
    Thus, taking into account the bounds above, the probability event $E_{T-1} \cap E_1 \cap E_2$ implies that
    \begin{align*}
       \Delta_T \leq \Delta + 4\frac{\Delta}{4} = 2\Delta,
    \end{align*}
    where
    \begin{align*}
        E_1 &= \left\{\text{either }\left|- \sum\limits_{t=0}^{T-1}\left(\frac{\gamma}{b_t} - \frac{L\gamma^2}{b_t^2}\right)\ev{\nabla f(x_t), \theta_t^u}\right| \leq \frac{\Delta}{4} \text{ or } \sum\limits_{t=0}^{T-1} \sigma_t^2 > \frac{7\Delta^2}{480\ln\frac{4(K+1)}{\delta}}\right\},\\
        E_2 &= \left\{\text{either }\left|\sum\limits_{t=0}^{T-1}\frac{L\gamma^2}{b_t^2}\left(\sqn{\theta_t^u} - \E_{\xi_t}\sqn{\theta_t^u}\right)\right| \leq \frac{\Delta}{4} \text{ or } \sum\limits_{t=0}^{T-1} \hat{\sigma}_t^2 > \frac{7\Delta^2}{1504\ln\frac{4(K+1)}{\delta}}\right\}.
    \end{align*}
    Therefore, 
    \begin{align*}
        \mathbb{P}\left\{E_T\right\} &\geq \mathbb{P}\left\{E_{T-1} \cap E_1 \cap E_2\right\}  = 1 - \mathbb{P}\left\{\overline{E}_{T-1} \cup \overline{E}_1 \cup \overline{E}_2\right\} \geq 1 - \mathbb{P}\left\{\overline{E}_{T-1}\right\} - \mathbb{P}\left\{\overline{E}_1\right\} - \mathbb{P}\left\{\overline{E}_2\right\} \geq 1 - \frac{T\delta}{K+1}.
    \end{align*}
    Hence, for all $k = 0, \ldots, K+1$ we get $\mathbb{P}(E_k) \geq 1 - \frac{k\delta}{K+1}$. As \revision{revision} result, event $E_{K+1}$ implies that 
    \begin{align}
    \label{eq:total-nonconvex}
         \sum\limits_{k=0}^{K}\frac{\gamma C_k}{2}\sqn{\nabla f(x_k)} \leq 2\Delta
    \end{align}
   holds with probability at least $1 - \delta$.

    Therefore, we get that with probability at least $1-\delta$
    \begin{align*}
        \sum\limits_{k=0}^{K}\sqn{\nabla f(x_k)} \leq \frac{4\Delta}{\gamma} \max_{k \in [0, K]}{\frac{1}{C_k}}.
    \end{align*}
    and, since $C_k \geq \frac{1 - \beta_1}{b_k}$, we obtain
    \begin{align}
        \label{eq:njdkvnjkdfnkd}
        \sum\limits_{k=0}^{K}\sqn{\nabla f(x_k)} \leq \frac{4\Delta}{\gamma(1 - \beta_1)}\max_{k \in [0, K]}b_k. 
    \end{align}
    Moreover,
    \begin{align}
        \label{eq: adagrad-bound}
        b_{k}^2 \leq b_0^2 + \eta \sum\limits_{k=0}^{K}\left(3\sqn{\nabla f(x_k)} + 3\sqn{\theta_k^u} + 3\sqn{\theta_k^b}\right)
    \end{align}
    for the \algname{Clip-AdaGradD} of $b_k$ and
    \begin{align}
        \label{eq: rmsprop-bound}
        b_{k}^2 \leq b_0^2 + \frac{\eta}{K + 1} \sum\limits_{k=0}^{K}\left(3\sqn{\nabla f(x_k)} + 3\sqn{\theta_k^u} + 3\sqn{\theta_k^b}\right)
    \end{align}
    for the \algname{Clip-AdamD}, respectively.
    Next, we use that the event $E_{K+1}$ implies
    \begin{align*}
        \sum\limits_{k=0}^K \frac{\gamma}{c_m b_0} \sqn{\theta_k^b} \leq \frac{\Delta}{4}; \qquad
        \sum\limits_{k=0}^K \frac{2L\gamma^2}{c_m^2b_0^2(1 - \beta_1)}\sqn{\theta_k^u} \leq \frac{\Delta}{2}
    \end{align*}
    because we could replace $b_t \rightarrow c_m b_0$ into $C_t$ and $A_t$, and all steps in $\circledTwo, \circledThree$ and $\circledFour$ will be the same. Therefore, with applying \cref{lem: sequence_b}, next bounds
    \begin{align*}
        \sum\limits_{k=0}^{K}\sqn{\nabla f(x_k)} \leq \frac{4\Delta}{\gamma(1 - \beta_1)}\sqrt{b_0^2 + 3\eta \sum\limits_{k=0}^{K}\sqn{\nabla f(x_k)} + \frac{3\eta b_0\Delta }{4\gamma} + \frac{3\eta b_0^2 (1-\beta_1)\Delta}{4L\gamma^2}};\\
        \sum\limits_{k=0}^{K}\sqn{\nabla f(x_k)} \leq \frac{4\Delta}{\gamma(1 - \beta_1)}\sqrt{b_0^2 + \frac{3\eta}{K+1} \sum\limits_{k=0}^{K}\sqn{\nabla f(x_k)} + \frac{3\eta b_0\Delta }{8\gamma (K+1)} + \frac{3\eta b_0^2 (1-\beta_1)\Delta}{16L\gamma^2 (K+1)}}
    \end{align*}
    hold with probability at least $1-\delta$, where we substitute different $c_m$ from \cref{lem: sequence_b} and \eqref{eq: adagrad-bound}, \eqref{eq: rmsprop-bound} for \algname{Clip-M-AdaGradD-Norm} and \algname{Clip-AdamD-Norm}, respectively. Next, solving quadratic inequalities above with respect to $\sum\limits_{k=0}^{K}\sqn{\nabla f(x_k)}$, we obtain
    \begin{align*}
        \sum\limits_{k=0}^{K}\sqn{\nabla f(x_k)} &\leq \frac{\frac{48\eta\Delta^2}{\gamma^2(1 - \beta_1)^2} + \sqrt{\frac{9\cdot4^4\eta^2\Delta^4}{\gamma^4(1 - \beta_1)^4} + \frac{16\Delta^2}{\gamma^2(1 - \beta_1)^2}\left(\frac{3\eta b_0\Delta }{4\gamma} + \frac{3\eta b_0^2 (1-\beta_1)\Delta}{4L\gamma^2} + b_0^2\right) }}{2} \\&=
        \frac{24\eta\Delta^2}{\gamma^2(1 - \beta_1)^2} + \sqrt{\frac{576\eta^2\Delta^4}{\gamma^4(1 - \beta_1)^4} + \left(\frac{3\eta b_0\Delta^3 }{\gamma^3(1 - \beta_1)^2} + \frac{3\eta b_0^2\Delta^3}{L\gamma^4(1 - \beta_1)} + \frac{4b_0^2\Delta^2}{\gamma^2(1 - \beta_1)^2}\right) } \\&=
        \frac{\Delta}{\gamma^2}\left(\frac{24\eta\Delta}{(1 - \beta_1)^2} + \sqrt{\frac{576\eta^2\Delta^2}{(1 - \beta_1)^4} + \left(\frac{3\eta b_0\gamma\Delta}{(1 - \beta_1)^2} + \frac{3\eta b_0^2\Delta}{L(1 - \beta_1)} + \frac{4b_0^2\gamma^2}{(1 - \beta_1)^2}\right) }\right)
    \end{align*}
    for \algname{Clip-M-AdaGradD-Norm} and
    \begin{align*}
        &\sum\limits_{k=0}^{K}\sqn{\nabla f(x_k)} \leq \frac{24\eta\Delta^2}{\gamma^2(1 - \beta_1)^2(K+1)} \\&+ \sqrt{\frac{9\cdot4^3\eta^2\Delta^4}{\gamma^4(1 - \beta_1)^4(K+1)^2} + \frac{4\Delta^2}{\gamma^2(1 - \beta_1)^2}\left(\frac{3\eta b_0\Delta }{8\gamma(K+1)} + \frac{3\eta b_0^2 (1-\beta_1)\Delta}{16L\gamma^2(K+1)} + b_0^2\right) } \\&=
        \frac{24\eta\Delta^2}{\gamma^2(1 - \beta_1)^2(K+1)} \\&+ \sqrt{\frac{576\eta^2\Delta^4}{\gamma^4(1 - \beta_1)^4(K+1)^2} + \left(\frac{3\eta b_0\Delta^3 }{2\gamma^3(1 - \beta_1)^2(K+1)} + \frac{3\eta b_0^2\Delta^3}{4L\gamma^4(1 - \beta_1)(K+1)} + \frac{4b_0^2\Delta^2}{\gamma^2(1 - \beta_1)^2}\right) } \\&=
        \frac{\Delta}{\gamma^2}\Bigg(\frac{24\eta\Delta}{(1 - \beta_1)^2(K+1)} \\&+ \sqrt{\frac{576\eta^2\Delta^2}{(1 - \beta_1)^4(K+1)^2} + \left(\frac{3\eta b_0\gamma\Delta}{2(1 - \beta_1)^2(K+1)} + \frac{3\eta b_0^2\Delta}{4L(1 - \beta_1)(K+1)} + \frac{4b_0^2\gamma^2}{(1 - \beta_1)^2}\right) }\Bigg)
    \end{align*}
    for the \algname{Clip-AdamD-Norm}. Substituting $\eta = \frac{L\gamma^2(1-\beta_1)^2}{\Delta}$ and applying $\sqrt{a^2 + b^2 + c^2 + d^2} \leq a + b + c + d$ for non-negative numbers, one can obtain the bound for \algname{Clip-M-AdaGradD-Norm}:
    \begin{align}
        \label{eq: adagr-non-semi}
        \frac{1}{K+1}\sum\limits_{k=0}^{K}\sqn{\nabla f(x_k)} &\leq \frac{\Delta}{(K+1)\gamma^2}\left(48L\gamma^2 + \sqrt{ 3L\gamma^3b_0} + \sqrt{3\gamma^2b_0^2(1 - \beta_1)} + \frac{2\gamma b_0}{1 - \beta_1}\right) \nonumber \\&\leq \frac{\Delta}{(K+1)\gamma^2}\left(49L\gamma^2 + 3\sqrt{\gamma^2b_0^2(1 - \beta_1)} + \frac{2\gamma b_0}{1 - \beta_1}\right)\nonumber\\&\leq
        \frac{\Delta}{(K+1)\gamma^2}\left(49L\gamma^2 + 3\gamma b_0 + \frac{2\gamma b_0}{1 - \beta_1}\right) \nonumber \\&\leq
        \frac{2\Delta}{(K+1)\gamma^2}\max\left\{49L\gamma^2, \frac{5\gamma b_0}{1 - \beta_1}\right\} \nonumber \\&=
        \max\left\{\frac{98L\Delta}{K+1}, \frac{10\Delta b_0}{\gamma(K+1)(1 - \beta_1)}\right\}
    \end{align}
    and for \algname{Clip-AdamD-Norm}:
    \begin{align}
    \label{eq: adam-non-semi}
        \frac{1}{K+1}\sum\limits_{k=0}^{K}\sqn{\nabla f(x_k)} &\leq \frac{\Delta}{(K+1)\gamma^2}\left(\frac{48L\gamma^2}{K+1} + \sqrt{ \frac{3L\gamma^3b_0}{2(K+1)}} + \sqrt{\frac{3\gamma^2b_0^2(1 - \beta_1)}{4(K+1)}} + \frac{2\gamma b_0}{1 - \beta_1}\right) \nonumber \\&\leq
         \frac{\Delta}{(K+1)\gamma^2}\left(\frac{48L\gamma^2}{K+1} + 2\sqrt{\frac{L\gamma^3b_0}{(K+1)}} + \gamma b_0 + \frac{2\gamma b_0}{1 - \beta_1}\right) \nonumber \\&\leq 
         \frac{\Delta}{(K+1)\gamma^2}\left(\frac{49L\gamma^2}{K+1} + \frac{4\gamma b_0}{1 - \beta_1}\right) \nonumber \\&\leq
         \frac{2\Delta}{(K+1)\gamma^2}\max\left\{\frac{49L\gamma^2}{K+1}, \frac{4\gamma b_0}{1 - \beta_1}\right\} \nonumber \\&=
        \max\left\{\frac{98L\Delta}{(K+1)^2}, \frac{8\Delta b_0}{\gamma(K+1)(1 - \beta_1)}\right\},
    \end{align}
    where we use that $2\sqrt{ab}\leq a+b$. Consequently, after substitution of \eqref{eq:gamma-choice-2} into \eqref{eq: adagr-non-semi}, \eqref{eq: adam-non-semi}, we get final bounds for \algname{Clip-M-AdaGradD}/\algname{Clip-AdamD-Norm}:

\begin{align*}
        &\frac{1}{K+1}\sum\limits_{k=0}^{K}\sqn{\nabla f(x_k)} \\&= \cO\left(\max\left\{\frac{L\Delta \ln\frac{K+1}{\delta}}{(1-\beta_1)^3(K+1)^{\frac{2\alpha-1}{3\alpha-2}}}, \frac{\sqrt{L\Delta}\sigma \ln^{\frac{\alpha-1}{\alpha}}\frac{K+1}{\delta}}{(1 - \beta_1)^{\frac{3}{2}}(K+1)^\frac{2\alpha-2}{3\alpha-2}}, \frac{\sigma^{\frac{2\alpha}{2\alpha-1}}(L\Delta)^\frac{\alpha-1}{2\alpha - 1} \ln^{\frac{2\alpha-2}{2\alpha-1}}\frac{K+1}{\delta}}{(1 - \beta_1)^{\frac{3\alpha - 2}{2\alpha - 1}}(K+1)^\frac{2\alpha-2}{3\alpha-2}}\right\}\right)
    \end{align*}
holds with probability at least $1 - \delta$.
\end{proof}

\subsection{Convex Case: Methods with Delay}
\begin{lemma}[Descent lemma]
    \label{lem:descent}
    Let Assumptions~\ref{asm:smoothness} and \ref{asm:convexity} hold on $Q = B_{2R}(x^*)$, where $\norm{x_0 - x^*} \leq R$. Assume that $x_t \in Q \ \forall t = \overline{0, T}$. Then, after $T$ iterations of \algname{Clip-M-AdaGradD-Norm}/\algname{Clip-AdamD-Norm} with $b_0 \geq \frac{8\gamma L}{(1-\beta_1)^2 c_m^2}$, we have
    \begin{align*}
        \sum\limits_{t=0}^{T-1}\gamma C_t\left(f(x_t) - f_*\right) &\leq R_0^2 - R_t^2 - \sum\limits_{t=0}^{T-1}2\gamma C_t\ev{x_t - x^*, \theta_t} + \sum\limits_{t=0}^{T-1}2A_t\sqn{\theta_t},
    \end{align*}
    where $C_t = \sum\limits_{i=t}^{T-1}\frac{1-\beta_1}{b_i}\beta_1^{i-t}$ and $A_t =  \sum\limits_{i=t}^{T-1}\frac{2\gamma^2(1 - \beta_1)}{c_mb_ib_0}\beta_1^{i-t}(i - t + 1)$.
\end{lemma}
\begin{proof}
    According to the update rule of \cref{alg:clip_adagrad_adam_general_reweighted}, we have
    \begin{align*}
        \sqn{x_{t+1} - x^*} &= \sqn{x_t - x^*} - \frac{2\gamma}{b_t}\ev{x_t - x^*, m_t} + \frac{\gamma^2}{b_t^2}\sqn{m_t}.
    \end{align*} 
    To bound the scalar product, we substitute the update rule for $m_t$:
    \begin{align*}
        -\ev{x_t - x^*, m_t} &= -\beta_1\ev{x_t - x^*, m_{t-1}} - (1-\beta_1)\ev{x_t - x^*, g_t}\\ &= -\beta_1\ev{x_t - x_{t-1}, m_{t-1}} -\beta_1\ev{x_{t-1} - x^*, m_{t-1}} \\&- (1-\beta_1)\ev{x_t - x^*, g_t} \\&\leq -\beta_1\ev{x_{t-1} - x^*, m_{t-1}} - (1-\beta_1)\ev{x_t - x^*, g_t} \\&+ \beta_1\norm{x_t - x_{t-1}}\norm{m_{t-1}} \\&=  -\beta_1\ev{x_{t-1} - x^*, m_{t-1}} - (1-\beta_1)\ev{x_t - x^*, g_t} \\&+ \frac{\gamma\beta_1}{b_{t-1}}\norm{m_{t-1}}^2.
    \end{align*}
    Applying the same idea for $t-1, t-2, \ldots, 0$ and using that $m_{-1} = 0$, one can obtain
    \begin{align*}
         -\ev{x_t - x^*, m_t} &\leq -\sum\limits_{k=0}^t (1-\beta_1)\beta_1^{t-k}\ev{x_k - x^*, g_k}+\sum\limits_{k=0}^{t-1} \frac{\gamma \beta_1^{t-k}}{b_k} \sqn{m_k}.
    \end{align*}
    Therefore, we get
    \begin{align*}
        \sqn{x_{t+1} - x^*} &\leq \sqn{x_t - x^*} - \frac{2\gamma}{b_t}\sum\limits_{k=0}^t (1-\beta_1)\beta_1^{t-k}\ev{x_k - x^*, g_k} + \frac{2\gamma^2}{b_t}\sum\limits_{k=0}^{t} \frac{ \beta_1^{t-k}}{b_k} \sqn{m_k}.
    \end{align*}
    Substituting the bound for $\sqn{m_k}$ from \cref{lem: bound_momentum} with $1 - \beta_1^{k+1} \leq 1$, we have
    \begin{align*}
        \sqn{x_{t+1} - x^*} &\leq \sqn{x_t - x^*} - \frac{2\gamma}{b_t}\sum\limits_{k=0}^t (1-\beta_1)\beta_1^{t-k}\ev{x_k - x^*, g_k} + \frac{2\gamma^2}{b_t}\sum\limits_{k=0}^{t} \frac{\beta_1^{t-k}}{b_k}\sum\limits_{j=0}^k \beta_1^{k-j}(1 - \beta_1)\sqn{g_j} \\&= \sqn{x_t - x^*} - \frac{2\gamma}{b_t}\sum\limits_{k=0}^t (1-\beta_1)\beta_1^{t-k}\ev{x_k - x^*, g_k} + \frac{2\gamma^2}{b_t}\sum\limits_{k=0}^{t} \sum\limits_{j=0}^k \frac{\beta_1^{t-j}}{b_k}(1 - \beta_1)\sqn{g_j}.
    \end{align*}
    Applying the same technique as in \cref{lem:descent-nonconvex} (see \eqref{eq: 123}), one can obtain
    \begin{align*}
        \sqn{x_{t+1} - x^*} &\leq \sqn{x_t - x^*} - \frac{2\gamma(1-\beta_1)}{b_t}\sum\limits_{k=0}^t \beta_1^{t-k}\ev{x_k - x^*, g_k} + \frac{2\gamma^2(1 - \beta_1)}{c_mb_tb_0}\sum\limits_{j=0}^{t}\beta_1^{t-j}(t - j + 1)\sqn{g_j}.
    \end{align*}
    After summing over $t$:
    \begin{align}
    \label{eq: conv-sum}
        \sqn{x_{T} - x^*} &\leq \sqn{x_0 - x^*} - \sum\limits_{t=0}^{T-1}\frac{2\gamma(1-\beta_1)}{b_t}\sum\limits_{k=0}^t \beta_1^{t-k}\ev{x_k - x^*, g_k} \nonumber\\&+ \sum\limits_{t=0}^{T-1}\frac{2\gamma^2(1 - \beta_1)}{c_mb_tb_0}\sum\limits_{j=0}^{t}\beta_1^{t-j}(t - j + 1)\sqn{g_j}.
    \end{align}
    Therefore, multiplicative factors for $\ev{x_r - x^*, g_r}$ and $\sqn{g_r}$ are equal to
    \begin{align*}
        -\sum\limits_{t=r}^{T-1}\frac{2\gamma(1-\beta_1)}{b_t}\beta_1^{t-r}\qquad \text{ and } \qquad \sum\limits_{t=r}^{T-1}\frac{2\gamma^2(1 - \beta_1)}{c_mb_tb_0}\beta_1^{t-r}(t - r + 1),
    \end{align*}
    respectively. Let us denote them as $-2\gamma C_r$ and $A_r$. Using the same idea as in \cref{lem:descent-nonconvex}, we get
    \begin{align*}
        \frac{(1-\beta_1)}{b_r} \leq C_r \leq \frac{1}{c_m b_p}
    \end{align*}
    and
    \begin{align*}
        A_r \leq \frac{2\gamma^2}{c_m^2b_pb_0(1 - \beta_1)}
    \end{align*}
    for all $p = 0, \ldots r$ because of \cref{lem: sequence_b}. Rewriting \eqref{eq: conv-sum} in terms of $C_r, A_r$, 
    \begin{align*}
        \sqn{x_{T} - x^*} &\leq \sqn{x_0 - x^*} - \sum\limits_{t=0}^{T-1}2\gamma C_t\ev{x_t - x^*, g_t} + \sum\limits_{t=0}^{T-1}A_t\sqn{g_t}.
    \end{align*}
    Consequently,
    \begin{align*}
        \sqn{x_{T} - x^*} - \sqn{x_0 - x^*} &\leq - \sum\limits_{t=0}^{T-1}2\gamma C_t\ev{x_t - x^*, g_t} + \sum\limits_{t=0}^{T-1}A_t\sqn{g_t} \\&= -\sum\limits_{t=0}^{T-1}2\gamma C_t\ev{x_t - x^*, \nabla f(x_t) + \theta_t} + \sum\limits_{t=0}^{T-1}A_t\sqn{\nabla f(x_t) + \theta_t} \\&\leq 
        -\sum\limits_{t=0}^{T-1}2\gamma C_t\ev{x_t - x^*, \nabla f(x_t)} -\sum\limits_{t=0}^{T-1}2\gamma C_t\ev{x_t - x^*, \theta_t} \\&+ \sum\limits_{t=0}^{T-1}2A_t\sqn{\nabla f(x_t)}+ \sum\limits_{t=0}^{T-1}2A_t\sqn{\theta_t}.
    \end{align*}
    Using Assumptions \ref{asm:smoothness} and \ref{asm:convexity}, one can obtain
    \begin{align*}
        \sum\limits_{t=0}^{T-1}\left(2\gamma C_t -  4L A_t\right)\left(f(x_t) - f_*\right) &\leq \sum\limits_{t=0}^{T-1} \left(2\gamma C_t\ev{x_t - x^*, \nabla f(x_t)} - 2A_t\sqn{f(x_t)}\right)\\ &\leq \sqn{x_0 - x^*} - \sqn{x_{T} - x^*} - \sum\limits_{t=0}^{T-1}2\gamma C_t\ev{x_t - x^*, \theta_t}+ \sum\limits_{t=0}^{T-1}2A_t\sqn{\theta_t}.
    \end{align*} 
    If we choose $\gamma \leq \frac{(1-\beta_1)^2c_m^2 b_0}{8L}$, then $2\gamma C_t - 4L A_t \geq \gamma C_t$ because of lower bound on $C_t$ and upper bound for $A_t$. This finishes the proof.
\end{proof}
\begin{theorem}
\label{thm:convex-case-conv}
    Let Assumptions~\ref{asm:heavy-tailed}, \ref{asm:smoothness}, and \ref{asm:convexity} hold on $Q = B_{2R}(x^*)$ with $\norm{x_0 - x^*} \leq R$, Then, after $K+1$ iterations of \algname{Clip-M-AdaGradD-Norm}/\algname{Clip-AdamD-Norm} with
    \begin{align}
    \label{eq:gamma-choice}
        \gamma \leq \min\left\{\frac{(1 - \beta_1)^2c_m^2b_0}{160 L \ln\left(\frac{4(K+1)}{\delta}\right)}, \frac{\sqrt{1 - \beta_1}c_m R b_0}{40 \cdot 9^{\frac{1}{\alpha}}\sigma (K+1)^{\frac{1}{\alpha}} \ln^{\frac{\alpha - 1}{\alpha}}\left(\frac{4(K+1)}{\delta}\right)}\right\},\quad \eta = \frac{\gamma^2(1 - \beta_1)^2}{R^2},
    \end{align}
    and
    \begin{align}
    \label{eq:lambda-choice}
        \lambda = \frac{\sqrt{1 - \beta_1}c_mb_0R}{40\gamma \ln\left(\frac{4(K+1)}{\delta}\right)}
    \end{align}
    the bound 
    \begin{align*}
         \sum\limits_{k=0}^{K} \gamma C_k\left(f(x_k) - f_*\right) \leq 2R^2
    \end{align*}
    holds with probability at least $1 - \delta$. In particular, when $\gamma$ equals the minimum from \eqref{eq:gamma-choice}, the iterates produced by \algname{Clip-M-AdaGradD-Norm}/\algname{Clip-AdamD-Norm} satisfy
    \begin{align*}
    f(\overline{x}_K) - f(x^*) = \cO\left(\max\left\{\frac{LR^2 \ln\frac{K+1}{\delta}}{(1 - \beta_1)^3(K+1)}, \frac{\sigma R \ln^{\frac{\alpha-1}{\alpha}}\frac{K+1}{\delta}}{(1 - \beta_1)^\frac{3}{2}(K+1)^\frac{\alpha-1}{\alpha}}\right\}\right)
    \end{align*}
    with probability at least $1 - \delta$, where $\overline{x}_K = \frac{1}{K+1}\sum_{k=0}^K x_k$.
\end{theorem}
\begin{proof}
    Our proof is induction-based (similarly to the one for \algname{Clip-SGD} by \citet{sadiev2023high}). We introduce probability event $E_k$ \revision{as follows}: inequalities
    \begin{align*}
        - \sum\limits_{l=0}^{t-1} 2\gamma C_l\ev{x_l - x^*, \theta_l} + \sum\limits_{l=0}^{t-1} 2A_l\sqn{\theta_l} &\leq R^2,\\
        R_t &\leq \sqrt{2}R
    \end{align*}
    hold simultaneously $\forall t = 0, 1, \ldots, k$. We want to show that $\mathbb{P}\{E_k\} \geq 1 - \frac{k\delta}{K+1} \ \forall k= 0, 1, \ldots, K+1$. The case when $k=0$ is obvious. Now let us make an induction step: let \revision{the statement} hold for some $k = T-1 \leq K$: $\mathbb{P}\{E_{T-1}\} \geq 1 - \frac{(T-1)\delta}{K+1}$. \revision{It remains to prove that $\mathbb{P}\{E_{T}\} \geq 1 - \frac{T\delta}{K+1}$.} The event $E_{T-1}$ implies $x_t \in B_{\sqrt{2}R}(x^*)$ $\forall t = 0, \ldots, T-1$. Hence, \revision{$E_{T-1}$ also implies}
    \begin{align*}
        \norm{x_T - x^*} \leq \norm{x_{T-1} - x^*} + \frac{\gamma}{b_{T-1}}\norm{m_{T-1}} \leq \sqrt{2}R + \frac{\gamma \lambda}{b_{T-1}} \leq \sqrt{2}R + \frac{\gamma \lambda}{c_m b_{0}} \leq 2R. 
    \end{align*}
    Therefore, $E_{T-1}$ implies $\{x_t\}_{t=0}^T \subseteq B_{2R}(x^*)$ and we can apply \cref{lem:descent}:
    \begin{align*}
    \sum\limits_{l=0}^{t-1} \gamma C_l\left(f(x_l) - f_*\right) \leq R_0^2 - R_t^2 - \sum\limits_{l=0}^{t-1} 2\gamma C_l\ev{x_l - x^*, \theta_l} + \sum\limits_{l=0}^{t-1} 2A_l\sqn{\theta_l} 
    \end{align*}
    $\forall t=1, \ldots, T$ and $\forall t=1, \ldots T-1$ it implies that
    \begin{align*}
        \sum\limits_{l=0}^{t-1} \gamma C_l\left(f(x_l) - f_*\right) \leq R_0^2 - \sum\limits_{l=0}^{t-1} 2\gamma C_l\ev{x_l - x^*, \theta_l} + \sum\limits_{l=0}^{t-1} 2A_l\sqn{\theta_l} \leq 2R^2. 
    \end{align*}
    Taking into account that $\sum\limits_{l=0}^{t-1} \gamma C_l\left(f(x_l) - f_*\right) \geq 0$, we get that $E_{T-1}$ implies
    \begin{align}
        \label{eq:convex-start}
        R_T^2 \leq R_0^2 - \sum\limits_{t=0}^{T-1} 2\gamma C_t\ev{x_t - x^*, \theta_t} + \sum\limits_{t=0}^{T-1} 2A_t\sqn{\theta_t}.
    \end{align}
    Next, for vectors
    \begin{align*}
        \eta_t = \begin{cases}
            x_t - x^*, \qquad &\norm{x_t - x^*} \leq \sqrt{2}R\\
            0, \qquad &\text{otherwise}
        \end{cases}
    \end{align*}
    for all $t = 0, 1, \ldots, T-1$, we have that with probability $1$
    \begin{align}
        \norm{\eta_t} \leq \sqrt{2}R. \label{eq:bound_eta}
    \end{align}
    Then, $E_{T-1}$ implies that $\eta_t = x_t - x^*$ for all $t = 0, \ldots T-1$. What is more, for all $t = 0, \ldots T-1$ $E_{T-1}$ implies
    \begin{align*}
        \norm{\nabla f(x_t)} \leq L \norm{x_t - x^*} \leq \sqrt{2}LR \overset{\eqref{eq:lambda-choice}}{\leq} \frac{\lambda}{2}.
    \end{align*}
    Hence, using the notation from Appendix~\ref{appendix-auxiliary}, we have \revision{that $E_{T-1}$ implies}
    \begin{align}
        \label{eq:convex-require}
        R_T^2 &\leq R_0^2 \underbrace{- \sum\limits_{t=0}^{T-1} 2\gamma C_t\ev{x_t - x^*, \theta_t^u}}_{\circledOne} \underbrace{- \sum\limits_{t=0}^{T-1} 2\gamma C_t\ev{x_t - x^*, \theta_t^b}}_{\circledTwo} +  \underbrace{\sum\limits_{t=0}^{T-1} 4A_t\left(\sqn{\theta_t^u} - \E_{\xi_t}\sqn{\theta_t^u}\right)}_{\circledThree} \notag\\
        &+ \underbrace{\sum\limits_{t=0}^{T-1} 4A_t\E_{\xi_t}\sqn{\theta_t^u}}_{\circledFour} + \underbrace{\sum\limits_{t=0}^{T-1} 4A_t\sqn{\theta_t^b}}_{\circledFive}.
    \end{align}
    Next, we bound each term separately with high probability. Before we move on, we also note that event $E_{T-1}$ implies $\norm{\nabla f(x_t)} \leq \frac{\lambda}{2}$. Therefore, one can apply \cref{lem: clip-effect} and get
    \begin{align}
        \norm{\theta_t^u} &\leq 2\lambda, \label{eq:ndjkjdbvjfdbvjdf}\\
        \norm{\theta_t^b} &\leq \frac{2^\alpha \sigma^\alpha}{\lambda^{\alpha - 1}}, \label{eq:ndjkvjfbdvjfhbvdbfvdf}\\
        \E_{\xi_t}\sqn{\theta_t^u} &\leq 18\lambda^{2 - \alpha}\sigma^\alpha. \label{eq:djbjdbvjdbhvjbdfhjvbdhfbdf}
    \end{align}
    
    \textbf{Bound for \circledOne.} \revision{The} definition of $\theta_t^u$ \revision{implies}
    \begin{align*}
        \E_{\xi_t} \left[-2\gamma C_t \ev{\eta_t, \theta_t^u}\right] = 0.
    \end{align*}
    Moreover, applying the bound on $C_t$: $C_t \leq \frac{1}{c_m b_0}$ from \cref{lem:descent},
    \begin{align*}
        \left|-2\gamma C_t \ev{\eta_t, \theta_t^u}\right| \leq 2\gamma C_t \norm{\eta_t} \norm{\theta_t^u} \overset{\eqref{eq:bound_eta}, \eqref{eq:ndjkjdbvjfdbvjdf}}{\leq} \frac{6\gamma \lambda R}{c_m b_0} \overset{\eqref{eq:lambda-choice}}{\leq} \frac{3R^2}{20 \ln\left(\frac{4(K+1)}{\delta}\right)}
        = c.
    \end{align*}
    \revision{For} $\sigma_t^2 = \E_{\xi_t}\left[4\gamma^2C_t^2\ev{\eta_t, \theta_t^u}^2\right]$ \revision{we also derive}
    \begin{align}
        \sigma_t^2 \leq 4\gamma^2 C_t^2 \E_{\xi_t}\sqn{\theta_t^u} \sqn{\eta_t} \leq \frac{8\gamma^2R^2}{c_m^2b_0^2} \E_{\xi_t}\sqn{\theta_t^u}. \label{eq:vjbdjvfjhdbfvjhdjf}
    \end{align}
    Hence, we can apply Bernstein's inequality (Lemma~\ref{lem: bernstein}) with $c$ defined above and $G = \frac{R^{4}}{100 \ln\left(\frac{4(K+1)}{\delta}\right)}$:
    \begin{align*}
        \mathbb{P}\left\{- \sum\limits_{t=0}^{T-1} \frac{2\gamma}{b_t}\ev{x_t - x^*, \theta_t^u} > \frac{R^2}{5} \text{ and } \sum\limits_{t=0}^{T-1}{\sigma}_t^2 \leq G \right\} &\leq 2\exp\left(-\frac{R^4}{25\left(2G + \frac{2cR^2}{15}\right)}\right)\\ &= \frac{\delta}{2(K+1)}.
    \end{align*}
    Therefore,
    \begin{align*}
        \mathbb{P}\left\{\text{either } - \sum\limits_{t=0}^{T-1} \frac{2\gamma}{b_t}\ev{x_t - x^*, \theta_t^u} \leq \frac{R^2}{5} \text{ or } \sum\limits_{t=0}^{T-1}{\sigma}_t^2 > G \right\} \geq 1 - \frac{\delta}{2(K+1)}.
    \end{align*}
    In addition, event $E_{T-1}$ implies that \revision{(due to \eqref{eq:vjbdjvfjhdbfvjhdjf} and \eqref{eq:djbjdbvjdbhvjbdfhjvbdhfbdf})}
    \begin{align*}
        \sum\limits_{t=0}^{T-1}\sigma_t^2 &\leq 
        \frac{144\gamma^2\lambda^{2 - \alpha}\sigma^\alpha R^2 T}{c_m^2b_0^2} \overset{\eqref{eq:lambda-choice}}{\leq} \frac{144(1 - \beta_1)^{1 - \frac{\alpha}{2}}\gamma^{\alpha}b_0^{2 - \alpha}\sigma^\alpha R^{4 - \alpha} T}{40^{2 - \alpha}c_m^\alpha b_0^2 \ln^{2 - \alpha}\left(\frac{4(K+1)}{\delta}\right)}
        \\&\overset{\eqref{eq:gamma-choice}}{\leq} \frac{144(1 - \beta_1) R^{4} T}{9 \cdot 40^{2}(K+1) \ln\left(\frac{4(K+1)}{\delta}\right)} \leq \frac{R^{4}}{100 \ln\left(\frac{4(K+1)}{\delta}\right)}.
    \end{align*}

    \textbf{Bound for \circledTwo.} For the second term, one can obtain from \eqref{eq:gamma-choice}, \eqref{eq:lambda-choice} and $\alpha \leq 2$ that $E_{T-1}$ implies
    \begin{align*}
        - \sum\limits_{t=0}^{T-1} 2\gamma C_t\ev{x_t - x^*, \theta_t^b} &\leq \sum\limits_{t=0}^{T-1} \frac{2\gamma}{c_m b_0} \norm{\eta_t}\norm{\theta^b_t} \overset{\eqref{eq:bound_eta}, \eqref{eq:ndjkvjfbdvjfhbvdbfvdf}}{\leq} \frac{2\sqrt{2} \cdot 2^\alpha\sigma^\alpha\gamma TR}{c_mb_0 \lambda^{\alpha-1}} \\ 
        & \overset{\eqref{eq:lambda-choice}}{=} \frac{4 \cdot 2^\alpha 40^{\alpha}\sigma^\alpha\gamma^\alpha TR^{2 - \alpha}}{40(1 - \beta_1)^{\frac{\alpha}{2} - 1}c_m^\alpha b_0^\alpha \ln^{1 - \alpha}\left(\frac{4(K+1)}{\delta}\right)} \overset{\eqref{eq:gamma-choice}}{\leq} \frac{4 \cdot 2^\alpha(1 - \beta_1) TR^2}{360\cdot (K+1) } \\&\leq  \frac{2R^2}{45} \leq \frac{R^2}{5}.
    \end{align*}
    
    \textbf{Bound for \circledThree.} For the third part, we \revision{have}
    \begin{align*}
        \E_{\xi_t}\left[4A_t\left(\sqn{\theta_t^u} - \E_{\xi_t}\sqn{\theta_t^u}\right)\right] = 0.
    \end{align*}
    What is more, 
    \begin{align}
        \left|4A_t\left(\sqn{\theta_t^u} - \E_{\xi_t}\sqn{\theta_t^u}\right)\right| &\leq 4A_t\left(\sqn{\theta_t^u} + \E_{\xi_t}\sqn{\theta_t^u}\right) \overset{\eqref{eq:ndjkjdbvjfdbvjdf}}{\leq} \frac{64\gamma^2\lambda^2}{c_m^2b_0^2(1-\beta_1)} \overset{\eqref{eq:lambda-choice}}{=} \frac{R^2}{25\ln^2\left(\frac{4(K+1)}{\delta}\right)} \notag \\
        &\leq \frac{3R^2}{20\ln\left(\frac{4(K+1)}{\delta}\right)} = c. \label{eq:djkfbjdbfjdvfjd}
    \end{align}
    \revision{We also} define 
    \begin{align*}
        \hat{\sigma}_t^2 = \E_{\xi_t}\left[16A_t^2\left(\sqn{\theta_t^u} - \E_{\xi_t}\sqn{\theta_t^u}\right)^2\right].
    \end{align*}
    Hence, 
    \begin{align*}
        \hat{\sigma}_t^2 &\overset{\eqref{eq:djkfbjdbfjdvfjd}}{\leq} \frac{3R^2}{20\ln\left(\frac{4(K+1)}{\delta}\right)}\E_{\xi_t}\left[\left|4A_t\left(\sqn{\theta_t^u} - \E_{\xi_t}\sqn{\theta_t^u}\right)\right|\right] \\&\leq \frac{12\gamma^2R^2}{5c_m^2b_0^2(1-\beta_1)\ln\left(\frac{4(K+1)}{\delta}\right)}\E_{\xi_t}\sqn{\theta_t^u}.
    \end{align*}
    Therefore, 
    we can apply Bernstein's inequality (Lemma~\ref{lem: bernstein}) with $c$ defined above and $G = \frac{R^{4}}{100\ln\left(\frac{4(K+1)}{\delta}\right)}$:
    \begin{align*}
        \mathbb{P}\left\{\sum\limits_{t=0}^{T-1} 4A_t\left(\sqn{\theta_t^u} - \E_{\xi_t}\sqn{\theta_t^u}\right) > \frac{R^2}{5} \text{ and } \sum\limits_{t=0}^{T-1}\hat{\sigma}_t^2 \leq G \right\} &\leq 2\exp\left(-\frac{R^4}{25\left(2G + \frac{2cR^2}{15}\right)}\right) \\&= \frac{\delta}{2(K+1)}.
    \end{align*}
    Consequently,
    \begin{align*}
        \mathbb{P}\left\{\text{either }\sum\limits_{t=0}^{T-1} 4A_t\left(\sqn{\theta_t^u} - \E_{\xi_t}\sqn{\theta_t^u}\right) \leq \frac{R^2}{5} \text{ or } \sum\limits_{t=0}^{T-1}\hat{\sigma}_t^2 > G \right\} \geq 1 - \frac{\delta}{2(K+1)}.
    \end{align*}
    Moreover, event $E_{T-1}$ implies that
    \begin{align*}
        \sum\limits_{t=0}^{T-1} \hat{\sigma}_t^2 &\leq \sum\limits_{t=0}^{T-1} \frac{12\gamma^2R^2}{5c_m^2b_0^2(1-\beta_1)\ln\left(\frac{4(K+1)}{\delta}\right)}\E_{\xi_t}\sqn{\theta_t^u} \overset{\eqref{eq:djbjdbvjdbhvjbdfhjvbdhfbdf}}{\leq} \frac{18 \cdot 12\gamma^2\lambda^{2 - \alpha}\sigma^{\alpha}R^2T}{5c_m^2b_0^2(1-\beta_1)\ln\left(\frac{4(K+1)}{\delta}\right)} 
        \\&\overset{\eqref{eq:lambda-choice}}{=} \frac{18 \cdot 12 \cdot 40^\alpha \gamma^\alpha \sigma^{\alpha}R^{4 - \alpha}T}{5\cdot 40^2c_m^\alpha (1 - \beta_1)^{\frac{\alpha}{2}}b_0^\alpha \ln^{3 - \alpha}\left(\frac{4(K+1)}{\delta}\right)} \overset{\eqref{eq:gamma-choice}}{\leq} \frac{18 \cdot 12 R^{4}T}{9 \cdot 5\cdot 40^2(K+1)\ln^{2}\left(\frac{4(K+1)}{\delta}\right)} \\&\leq \frac{R^{4}}{100\ln\left(\frac{4(K+1)}{\delta}\right)}.
    \end{align*}

    \textbf{Bound for \circledFour.} For the fourth part, we get that $E_{T-1}$ implies
    \begin{align*}
         \sum\limits_{t=0}^{T-1} 4A_tE_{\xi_t}\sqn{\theta_t^u} &\leq \sum\limits_{t=0}^{T-1} \frac{8\gamma^2}{c_m^2 b_0^2 (1 - \beta_1)}E_{\xi_t}\sqn{\theta_t^u} \overset{\eqref{eq:djbjdbvjdbhvjbdfhjvbdhfbdf}}{\leq} \frac{144\gamma^2\lambda^{2 - \alpha}\sigma^\alpha T}{c_m^2b_0^2(1-\beta_1)} \\&\overset{\eqref{eq:gamma-choice}}{=} \frac{144\gamma^\alpha 40^\alpha R^{2 - \alpha}\sigma^\alpha T}{40^2c_m^\alpha b_0^\alpha(1-\beta_1)^{\frac{\alpha}{2}} \ln^{2 - \alpha}\left(\frac{4(K+1)}{\delta}\right)}       \overset{\eqref{eq:gamma-choice}}{\leq} \frac{144 R^{2}T}{9 \cdot 40^2(K+1) \ln\left(\frac{4(K+1)}{\delta}\right)} \\&\leq  \frac{R^{2}}{100} \leq \frac{R^2}{5}.
    \end{align*}

    \textbf{Bound for \circledFive.} For the last term, $E_{T-1}$ implies
    \begin{align*}
         \sum\limits_{t=0}^{T-1} 4A_t\sqn{\theta_t^b} \leq \sum\limits_{t=0}^{T-1} \frac{8\gamma^2}{c_m^2 b_0^2(1-\beta_1)}\sqn{\theta_t^b} &\overset{\eqref{eq:ndjkvjfbdvjfhbvdbfvdf}}{\leq} \frac{8 \cdot 4^\alpha \sigma^{2\alpha}\gamma^2 T}{c_m^2b_0^2(1-\beta_1) \lambda^{2(\alpha-1)}} \\&\overset{\eqref{eq:lambda-choice}}{=} \frac{8 \cdot 4^\alpha 40^{2\alpha} \sigma^{2\alpha}\gamma^{2\alpha} T \ln^{2(\alpha-1)}\left(\frac{4(K+1)}{\delta}\right)}{40^2c_m^{2\alpha} b_0^{2\alpha}(1 - \beta_1)^\alpha R^{2(\alpha-1)}} \\ 
         & \overset{\eqref{eq:gamma-choice}}{\leq}
         \frac{8 \cdot 4^\alpha R^{2} T}{360^2 (K+1)^2} \leq \frac{8R^2}{45^2} \leq \frac{R^2}{5}. 
    \end{align*}
    Thus, taking into account the bounds above, the probability event $E_{T-1} \cap E_1 \cap E_2$ implies that
    \begin{align*}
        R_T^2 \leq R^2 + 5\frac{R^2}{5} = 2R^2,
    \end{align*}
    where
    \begin{align*}
        E_1 &= \left\{\text{either } - \sum\limits_{t=0}^{T-1} \frac{2\gamma}{b_t}\ev{x_t - x^*, \theta_t^u} \leq \frac{R^2}{5} \text{ or } \sum\limits_{t=0}^{T-1}{\sigma}_t^2 >  \frac{R^{4}}{100 \ln\left(\frac{4(K+1)}{\delta}\right)}\right\},\\
        E_2 &= \left\{\text{either }\sum\limits_{t=0}^{T-1} \frac{4\gamma^2}{b_t^2}\left(\sqn{\theta_t^u} - \E_{\xi_t}\sqn{\theta_t^u}\right) \leq \frac{R^2}{5} \text{ or } \sum\limits_{t=0}^{T-1}\hat{\sigma}_t^2 > \frac{R^{4}}{100\ln\left(\frac{4(K+1)}{\delta}\right)}\right\}.
    \end{align*}
    Therefore, 
    \begin{align*}
        \mathbb{P}\left\{E_T\right\} &\geq \mathbb{P}\left\{E_{T-1} \cap E_1 \cap E_2\right\}  = 1 - \mathbb{P}\left\{\overline{E}_{T-1} \cup \overline{E}_1 \cup \overline{E}_2\right\} \geq 1 - \mathbb{P}\left\{\overline{E}_{T-1}\right\} - \mathbb{P}\left\{\overline{E}_1\right\} - \mathbb{P}\left\{\overline{E}_2\right\} \geq 1 - \frac{T\delta}{K+1}.
    \end{align*}
    Hence, for all $k = 0, \ldots, K+1$ we get $\mathbb{P}\{E_k\} \geq 1 - \frac{k\delta}{K+1}$. As \revision{the} result, event $E_{K+1}$ implies that 
    \begin{align}
    \label{eq:total}
         \sum\limits_{k=0}^{K} \gamma C_k\left(f(x_k) - f_*\right) \leq 2R^2
    \end{align}
    with probability at least $1 - \delta$.
    \revision{Next, from \eqref{eq:total} we get that with probability at least $1-\delta$}
    \begin{align*}
        \sum\limits_{k=0}^{K}\left(f(x_k) - f_*\right) \leq  \frac{2R^2}{\gamma} \max_{k \in [0, K]} \frac{1}{C_k}.
    \end{align*}
    Moreover, $\frac{1}{C_k}$ can be bounded in the following way (from \cref{lem:descent}):
    \begin{align*}
        \frac{1}{C_k} \leq \frac{b_k}{(1 - \beta_1)}.
    \end{align*}
    Hence, we get
    \begin{align}
        \sum\limits_{k=0}^{K}\left(f(x_k) - f_*\right) \leq  \frac{2R^2}{\gamma(1 - \beta_1)} \max_{k \in [0, K]} b_k.\label{eq:dnjkfvjkdbfjhvbsjdcbsjd}
    \end{align}
    Also we can bound $b_k$ for \algname{Clip-M-AdaGradD-Norm} using that $g_k = \nabla f(x_k) + \theta_k$ and \cref{asm:smoothness}:
    \begin{align*}
        b_k^2 \leq b_0^2 + \eta\sum\limits_{k=0}^{K}\left(4L\left(f(x_k) - f_*\right) + 2\sqn{\theta_k}\right)
    \end{align*}
    and for \algname{Clip-AdamD-Norm}, respectively
    \begin{align*}
        b_k^2 \leq b_0^2 + \frac{\eta}{K+1}\sum\limits_{k=0}^{K}\left(4L\left(f(x_k) - f_*\right) + 2\sqn{\theta_k}\right).
    \end{align*}
    Therefore, due to the fact that the event $E_{K+1}$ implies \revision{(see the bounds for $\circledThree$, $\circledFour$ and $\circledFive$)}
    \begin{align*}
         \sum\limits_{k=0}^{K} \frac{4\gamma^2}{c_m^2 b_0^2 (1-\beta_1)}\sqn{\theta_k} \leq \frac{3R^2}{5},
    \end{align*}
    \revision{we get}
    \begin{align*}
        b_k^2 \leq b_0^2 + \eta\sum\limits_{k=0}^{K}4L\left(\left(f(x_k) - f_*\right)\right) + \frac{3\eta (1-\beta_1)b_0^2R^2}{10\gamma^2}
    \end{align*}
    for \algname{Clip-M-AdaGradD-Norm} scheme and
    \begin{align*}
        b_k^2 \leq b_0^2 + \frac{\eta}{K+1}\sum\limits_{k=0}^{K}4L\left(\left(f(x_k) - f_*\right)\right) + \frac{3\eta (1-\beta_1)b_0^2R^2}{40\gamma^2(K+1)}
    \end{align*}
    for \algname{Clip-AdamD-Norm}, where we substitute the constant $c_m$ from \cref{lem: sequence_b}.
    Consequently, substituting bounds above in \eqref{eq:dnjkfvjkdbfjhvbsjdcbsjd}, we get
    \begin{align*}
    \left(\sum\limits_{k=0}^{K}\left(f(x_k) - f_*\right)\right)^2 \leq  \frac{4R^4}{\gamma^2(1-\beta_1)^2}\left(b_0^2 + \eta\sum\limits_{k=0}^{K}\left(4L\left(f(x_k) - f_*\right)\right) + \frac{3\eta(1-\beta_1) R^2b_0^2}{10\gamma^2}\right)
    \end{align*}
    for \algname{Clip-M-AdaGradD-Norm} and
    \begin{align*}
    \left(\sum\limits_{k=0}^{K}\left(f(x_k) - f_*\right)\right)^2 \leq  \frac{4R^4}{\gamma^2(1-\beta_1)^2}\left(b_0^2 + \frac{\eta}{K+1}\sum\limits_{k=0}^{K}\left(4L\left(f(x_k) - f_*\right)\right) + \frac{3\eta(1-\beta_1) R^2b_0^2}{40\gamma^2(K+1)}\right)
    \end{align*}
    for \algname{Clip-AdamD-Norm}, respectively. Solving these quadratic inequalities, we have that $E_{K+1}$ implies
    \begin{align*}
        \sum\limits_{k=0}^{K}\left(f(x_k) - f_*\right) &\leq \frac{2R^2}{\gamma^2}\left(\frac{4L\eta R^2}{(1 - \beta_1)^2} + \sqrt{\frac{16L^2 \eta^2R^4}{(1 - \beta_1)^4} + b_0^2\left(\frac{\gamma^2}{(1 - \beta_1)^2} + \frac{3\eta R^2}{10(1 - \beta_1)}\right)}\right) \\& \leq \frac{6R^2}{\gamma^2}\max\left\{\frac{8L\eta R^2}{(1 - \beta_1)^2}, \frac{b_0\gamma}{1 - \beta_1}, b_0R\sqrt{\frac{\eta}{1 - \beta_1}}\right\} 
    \end{align*}
    and
    \begin{align*}
        \sum\limits_{k=0}^{K}\left(f(x_k) - f_*\right) &\leq \frac{2R^2}{\gamma^2}\Bigg(\frac{4L\eta R^2}{(1 - \beta_1)^2(K+1)} \\&+ \sqrt{\frac{16L^2 \eta^2R^4}{(1 - \beta_1)^4(K+1)^2} + b_0^2\left(\frac{\gamma^2}{(1 - \beta_1)^2} + \frac{3\eta R^2}{40(1 - \beta_1)(K+1)}\right)}\Bigg) \\& \leq \frac{6R^2}{\gamma^2}\max\left\{\frac{8L\eta R^2}{(1 - \beta_1)^2(K+1)}, \frac{b_0\gamma}{1 - \beta_1}, b_0R\sqrt{\frac{\eta}{(1 - \beta_1)(K + 1)}}\right\}. 
    \end{align*}
    \revision{with probability at least $1 - \delta$.} Choosing $\eta = \frac{\gamma^2(1 - \beta_1)^2}{R^2}$, \revision{$\gamma$ equal to the minimum from} \eqref{eq:gamma-choice} and using that $2\sqrt{ab} \leq a + b$, we obtain the bound for \algname{Clip-M-AdaGradD}/\algname{Clip-AdamD-Norm} for the convex case:
    \begin{align*}
        \frac{1}{K+1}\sum\limits_{k=0}^{K}\left(f(x_k) - f_*\right) = \cO\left(\max\left\{\frac{LR^2 \ln\frac{K+1}{\delta}}{(1 - \beta_1)^3(K+1)}, \frac{\sigma R \ln^{\frac{\alpha-1}{\alpha}}\frac{K+1}{\delta}}{(1 - \beta_1)^\frac{3}{2}(K+1)^\frac{\alpha-1}{\alpha}}\right\}\right)
    \end{align*}
    with probability at least $1 - \delta$. To get the final result, it remains to apply Jensen's inequality.
\end{proof}

\subsection{Non-Convex Case: Methods without Delay}
\begin{lemma}[Descent lemma]
\label{lem:descent-nonconvex-without-delay}
Let Assumptions \ref{asm:smoothness} and \ref{asm:bounded-function} hold. Then, after $T$ iterations of \algname{Clip-M-AdaGrad}/\algname{Clip-Adam}, we have
    \begin{align*}
        \sum\limits_{t=0}^{T-1}\frac{\gamma C_t}{2}\sqn{\nabla f(x_t)} &\leq \left(2M + \frac{2L\gamma^2}{\eta (1 - \beta_1)}\right)\sqrt{b_{-1}^2 + \eta\sum\limits_{t=0}^{T-1}\sqn{g_t}} -\sum\limits_{t=0}^{T-1}\gamma C_t\ev{\nabla f(x_t), \theta_t^u}
        + \sum\limits_{t=0}^{T-1}\frac{\gamma C_t}{2}\sqn{\theta_t^b}
    \end{align*} 
    for \algname{Clip-M-AdaGrad-Norm}, where $C_t = \sum\limits_{k=t}^{T-1}(1-\beta_1)\beta_1^{k-t}$, and
    \begin{align*}
        \sum\limits_{t=0}^{T-1}\frac{\gamma C_t}{2}\sqn{\nabla f(x_t)} &\leq
        \left(3M + \frac{16KL\gamma^2}{\eta (1 - \beta_1)}\right)\sqrt{b_{-1}^2 + \frac{\eta}{K}\sum\limits_{t=0}^{T-1}\sqn{g_t}} -\sum\limits_{t=0}^{T-1}\gamma C_t\ev{\nabla f(x_t), \theta_t^u} + \sum\limits_{t=0}^{T-1}\frac{\gamma C_t}{2}\sqn{\theta_t^b}
    \end{align*}
    for \algname{Clip-Adam-Norm}, where  $C_t = \sum\limits_{k=t}^{T-1}\nicefrac{(1-\beta_1)\beta_1^{k-t}}{(\sqrt{\beta_2})^k}$.
\end{lemma}
\begin{proof}
    The first part of the proof is similar to the \cref{lem:descent-nonconvex}. We start with the $L$-smoothness of $f$:
    \begin{align}
        \label{eq: smooth-no-delay}
        f(x_{t+1}) - f(x_t) &\leq \ev{\nabla f(x_t), x_{t+1} - x_t} + \frac{L}{2}\sqn{x_{t+1} - x_t} = -\frac{\gamma}{b_t}\ev{\nabla f(x_t), m_t} + \frac{L\gamma^2}{2b_t^2}\sqn{m_t}.
    \end{align}
    Using the update rule of \cref{alg:clip_adagrad_adam_general_reweighted}, we can obtain
    \begin{align*}
        -\ev{\nabla f(x_t), m_t} &= -\beta_1\ev{\nabla f(x_t), m_{t-1}} - (1 - \beta_1)\ev{\nabla f(x_t), g_t} \\&= -\beta_1\ev{\nabla f(x_t) - \nabla f(x_{t-1}), m_{t-1}} -\beta_1\ev{\nabla f(x_{t-1}), m_{t-1}} \\&- (1 - \beta_1)\ev{\nabla f(x_t), g_t} \\&\leq -\beta_1\ev{\nabla f(x_{t-1}), m_{t-1}} + \beta_1\norm{\nabla f(x_t) - \nabla f(x_{t-1})}\norm{m_{t-1}}\\&- (1 - \beta_1)\ev{\nabla f(x_t), g_t} \\&\leq
        -\beta_1\ev{\nabla f(x_{t-1}), m_{t-1}} + \beta_1 L\norm{x_t - x_{t-1}}\norm{m_{t-1}}\\&- (1 - \beta_1)\ev{\nabla f(x_t), g_t} \\&=
        -\beta_1\ev{\nabla f(x_{t-1}), m_{t-1}} + \frac{\gamma \beta_1L}{b_{t-1}}\sqn{m_{t-1}}\\&- (1 - \beta_1)\ev{\nabla f(x_t), g_t}, 
    \end{align*}
    where we use the Cauchy-Schwarz inequality and $L$-smoothness of $f$. Applying the same idea for the $t-1, t-2, \ldots, 0$ and noting that $m_{-1} = 0$, we get
    \begin{align}
    \label{eq: recur-sc-pr-nonc-no-delay}
        -\ev{\nabla f(x_t), m_t} \leq -(1 - \beta_1)\sum\limits_{k=0}^t\beta_1^{t-k}\ev{\nabla f(x_k), g_k} + L\gamma \sum\limits_{k=0}^{t-1}\frac{\beta_1^{t-k}}{b_k}\sqn{m_k}.
    \end{align}
    Therefore, substituting \eqref{eq: recur-sc-pr-nonc-no-delay} into \eqref{eq: smooth-no-delay}, we have
    \begin{align*}
        f(x_{t+1}) - f(x_t) &\leq -\frac{(1 - \beta_1)\gamma}{b_t}\sum\limits_{k=0}^t\beta_1^{t-k}\ev{\nabla f(x_k), g_k} + \frac{L\gamma^2}{b_t} \sum\limits_{k=0}^{t-1}\frac{\beta_1^{t-k}}{b_k}\sqn{m_k} + \frac{L\gamma^2}{2b_t^2}\sqn{m_t} \\&\leq -\frac{(1 - \beta_1)\gamma}{b_t}\sum\limits_{k=0}^t\beta_1^{t-k}\ev{\nabla f(x_k), g_k} + \frac{L\gamma^2}{b_t} \sum\limits_{k=0}^{t}\frac{\beta_1^{t-k}}{b_k}\sqn{m_k}.
    \end{align*}
    Applying \cref{lem: bound_momentum} with $1 - \beta_1^{k+1} \leq 1$, we can rewrite the inequality above as follows:
    \begin{align}
        f(x_{t+1}) - f(x_t) &\leq -\frac{(1 - \beta_1)\gamma}{b_t}\sum\limits_{k=0}^t\beta_1^{t-k}\ev{\nabla f(x_k), g_k} + \frac{L\gamma^2}{b_t} \sum\limits_{k=0}^{t}\frac{\beta_1^{t-k}}{b_k}\sum\limits_{j=0}^k \beta_1^{k-j}(1 - \beta_1)\sqn{g_j} \notag\\
        &=
        -\frac{(1 - \beta_1)\gamma}{b_t}\sum\limits_{k=0}^t\beta_1^{t-k}\ev{\nabla f(x_k), g_k} + \frac{L\gamma^2}{b_t} \sum\limits_{j=0}^{t}\sum\limits_{k=j}^t\frac{\beta_1^{t-k}}{b_k} \beta_1^{k-j}(1 - \beta_1)\sqn{g_j}, \label{eq:problematic_inequality}
    \end{align}
    where we change the limits of summation. Multiplying both sides of the inequality above by $\frac{b_t}{p_t}$, where 
    \begin{align}
    \label{eq: extra-multi}
    p_t = \begin{cases}
        1, \qquad &\text{for \algname{Clip-M-AdaGrad-Norm}}\\
        (\sqrt{\beta_2})^t, \qquad &\text{for \algname{Clip-Adam-Norm}}
    \end{cases}
    \end{align}
    and using that $b_k \geq c_m b_j$ (see \cref{lem: sequence_b}), one can obtain
    \begin{align*}
        \frac{b_t}{p_t}(f(x_{t+1}) - f(x_t)) &\leq -\frac{(1 - \beta_1)\gamma}{p_t}\sum\limits_{k=0}^t\beta_1^{t-k}\ev{\nabla f(x_k), g_k} + \frac{L\gamma^2}{p_t} \sum\limits_{j=0}^{t}\frac{\beta_1^{t-j}}{c_m b_j} (1 - \beta_1)(t - j+ 1)\sqn{g_j}.
    \end{align*}
    After summing over $t$,
    \begin{align*}
        \sum\limits_{t=0}^{T-1} \frac{b_t}{p_t}(f(x_{t+1}) - f(x_t)) &\leq -(1 - \beta_1)\gamma\sum\limits_{t=0}^{T-1}\sum\limits_{k=0}^t\frac{\beta_1^{t-k}}{p_t}\ev{\nabla f(x_k), g_k} + L\gamma^2 \sum\limits_{t=0}^{T-1}\sum\limits_{j=0}^{t}\frac{\beta_1^{t-j}}{c_m b_j p_t} (1 - \beta_1)(t - j+ 1)\sqn{g_j}.
    \end{align*}
    Next, applying the same idea as in \cref{lem:descent-nonconvex}, we get that multiplicative factors are equal to
    \begin{align}
        \label{eq: coef-scal-non-no-delay}
        -\gamma C_r = -\sum\limits_{t=r}^{T-1}\frac{\gamma(1-\beta_1)\beta_1^{t-r}}{p_t}
    \end{align}
    for the scalar product $\ev{\nabla f(x_r), g_r}$ and
    \begin{align}
        \label{eq: coef-norm-non-no-delay}
        A_r = \sum\limits_{t=r}^{T-1}\frac{L\gamma^2(1-\beta_1)}{c_mb_r p_t}(t-r + 1)\beta_1^{t-r}
    \end{align}
    for the squared norm $\sqn{g_r}$, respectively. Moreover, it can be shown that $p_t \geq c_m$ for corresponding update rule of $b_t$. Hence, for \eqref{eq: coef-norm-non-no-delay} we apply \cref{lem: numerical} to obtain the next bound:
    \begin{align*}
        A_r \leq \frac{L\gamma^2}{c_m^2 b_r (1 - \beta_1)}.
    \end{align*}
    Therefore, rewriting the descent lemma in terms of \eqref{eq: coef-scal-non-no-delay} and \eqref{eq: coef-norm-non-no-delay}, we have
    \begin{align*}
         \sum\limits_{t=0}^{T-1} \frac{b_t}{p_t}(f(x_{t+1}) - f(x_t)) &\leq -\sum\limits_{t=0}^{T-1}\gamma C_t\ev{\nabla f(x_t), g_t} + \frac{L\gamma^2}{c_m^2 (1 - \beta_1)}\sum\limits_{t=0}^{T-1}\frac{\sqn{g_t}}{b_t}.
    \end{align*}
    Using that $g_t = \nabla f(x_t) + \theta_t$, we get
    \begin{align}
        \sum\limits_{t=0}^{T-1}\gamma C_t\sqn{\nabla f(x_t)}  &\leq \sum\limits_{t=0}^{T-1} \frac{b_t}{p_t}(f(x_{t}) - f(x_{t+1}))-\sum\limits_{t=0}^{T-1}\gamma C_t\ev{\nabla f(x_t), \theta_t} + \frac{L\gamma^2}{c_m^2 (1 - \beta_1)}\sum\limits_{t=0}^{T-1}\frac{\sqn{g_t}}{b_t} \notag\\
        &=
        \sum\limits_{t=0}^{T-1}\frac{b_t}{p_t}\left(f(x_{t}) - f_* - (f(x_{t+1}) - f_*)\right)-\sum\limits_{t=0}^{T-1}\gamma C_t\ev{\nabla f(x_t), \theta_t} \notag \\
        &+ \frac{L\gamma^2}{c_m^2 (1 - \beta_1)}\sum\limits_{t=0}^{T-1}\frac{\sqn{g_t}}{b_t} \notag\\
        &\leq 
        \frac{b_0}{p_0}(f(x_0) - f_*) + \sum\limits_{t=1}^{T-1}\left(\frac{b_t}{p_t} - \frac{b_{t-1}}{p_{t-1}}\right)(f(x_t) - f_*) -\sum\limits_{t=0}^{T-1}\gamma C_t\ev{\nabla f(x_t), \theta_t} \label{eq:problematic_sum}\\
        &+ \frac{L\gamma^2}{c_m^2 (1 - \beta_1)}\sum\limits_{t=0}^{T-1}\frac{\sqn{g_t}}{b_t}.\notag
    \end{align}
    Since $p_t = 1$ for \algname{Clip-M-AdaGrad-Norm}, we can use that $b_t \geq b_{t-1}$, and for \algname{Clip-Adam-Norm} we get $b_t \geq \sqrt{\beta_2}b_{t-1}$, what is equal to $\frac{b_t}{p_t} \geq \frac{b_{t-1}}{p_{t-1}}$ with $p_t = (\sqrt{\beta_2})^t$. Therefore, applying \cref{asm:bounded-function}, we obtain
    \begin{align*}
        \sum\limits_{t=0}^{T-1}\gamma C_t\sqn{\nabla f(x_t)} &\leq \frac{b_0 M}{p_0} + \frac{b_{T-1} M}{p_{T-1}} -\sum\limits_{t=0}^{T-1}\gamma C_t\ev{\nabla f(x_t), \theta_t} + \frac{L\gamma^2}{c_m^2 (1 - \beta_1)}\sum\limits_{t=0}^{T-1}\frac{\sqn{g_t}}{b_t}.
    \end{align*}
    Now we construct descent lemmas for each considering update separately. For \algname{Clip-M-AdaGrad-Norm} we directly apply \cref{lem: numerical-integral} to bound the last term:
    \begin{align}
        \label{eq: descent-adagrad}
        \sum\limits_{t=0}^{T-1}\gamma C_t\sqn{\nabla f(x_t)} &\leq 2Mb_{T-1} -\sum\limits_{t=0}^{T-1}\gamma C_t\ev{\nabla f(x_t), \theta_t} + \frac{L\gamma^2}{\eta (1 - \beta_1)}b_{T-1} \nonumber\\&=
        \left(2M + \frac{2L\gamma^2}{\eta (1 - \beta_1)}\right)b_{T-1} -\sum\limits_{t=0}^{T-1}\gamma C_t\ev{\nabla f(x_t), \theta_t} \nonumber\\&\leq 
        \left(2M + \frac{2L\gamma^2}{\eta (1 - \beta_1)}\right)b_{T-1} -\sum\limits_{t=0}^{T-1}\gamma C_t\ev{\nabla f(x_t), \theta_t^u} \nonumber \\&+ \sum\limits_{t=0}^{T-1}\frac{\gamma C_t}{2}\sqn{\nabla f(x_t)} + \sum\limits_{t=0}^{T-1}\frac{\gamma C_t}{2}\sqn{\theta_t^b},
    \end{align}
    where we use that $c_m = 1$ and $p_t = 1$ for \algname{Clip-M-AdaGrad-Norm}.
    For the \algname{Clip-Adam-Norm}, we get
    \begin{align*}
        \sum\limits_{t=0}^{T-1}\frac{\sqn{g_t}}{b_t} &= \frac{1}{\eta}\sum\limits_{t=0}^{T-1}\frac{\eta\sqn{g_t}}{\sqrt{\beta_2^{t+1}b_{-1}^2 + (1 - \beta_2)\eta\sum_{k=0}^t\beta_2^{t-k}\sqn{g_k}}} \\&\leq \frac{K}{\eta}\sum\limits_{t=0}^{T-1}\frac{2\frac{\eta}{K}\sqn{g_t}}{\sqrt{b_{-1}^2 + \frac{\eta}{K}\sum_{k=0}^t\sqn{g_k}}} \\&\leq 
        \frac{4K}{\eta} \sqrt{b_{-1}^2 + \frac{\eta}{K}\sum\limits_{t=0}^{T-1}\sqn{g_t}},
    \end{align*}
    where we use that $\beta_2^k \geq \nicefrac{1}{4}$ for all $k = 0, \ldots, K$. 
    Consequently, with upper bound on $b_t$ and $c_m = \nicefrac{1}{2}$, for \algname{Clip-Adam-Norm} one can obtain
    \begin{align*}
        \sum\limits_{t=0}^{T-1}\gamma C_t\sqn{\nabla f(x_t)} &\leq b_0 M + \frac{b_{T-1} M}{(\sqrt{\beta_2})^{T-1}} -\sum\limits_{t=0}^{T-1}\gamma C_t\ev{\nabla f(x_t), \theta_t}+ 
        \frac{16KL\gamma^2}{\eta (1 - \beta_1)}\sqrt{b_{-1}^2 + \frac{\eta}{K}\sum\limits_{k=0}^t\sqn{g_k}} \nonumber\\&\leq
        \left(3M + \frac{16KL\gamma^2}{\eta (1 - \beta_1)}\right)\sqrt{b_{-1}^2 + \frac{\eta}{K}\sum\limits_{t=0}^{T-1}\sqn{g_t}} -\sum\limits_{t=0}^{T-1}\gamma C_t\ev{\nabla f(x_t), \theta_t} \\&\leq
        \left(3M + \frac{16KL\gamma^2}{\eta (1 - \beta_1)}\right)\sqrt{b_{-1}^2 + \frac{\eta}{K}\sum\limits_{t=0}^{T-1}\sqn{g_t}}-\sum\limits_{t=0}^{T-1}\gamma C_t\ev{\nabla f(x_t), \theta_t^u} \nonumber \\&+ \sum\limits_{t=0}^{T-1}\frac{\gamma C_t}{2}\sqn{\nabla f(x_t)} + \sum\limits_{t=0}^{T-1}\frac{\gamma C_t}{2}\sqn{\theta_t^b}.
    \end{align*}
    After substitution of the analytical form of $b_{T-1}$ in \eqref{eq: descent-adagrad} and different options of $p_t$, we claim the final result. 
\end{proof}
\begin{theorem}
    \label{thm: adagrad-no-delay}
    Let \revision{Assumptions~\ref{asm:heavy-tailed}, \ref{asm:smoothness}} and \ref{asm:bounded-function} hold. Then, after $K$ iterations of \algname{Clip-M-AdaGrad-Norm}/\algname{Clip-Adam-Norm} with
    \begin{align}
    \label{eq:gamma-choice-no-delay}
        \gamma \leq \min\Bigg\{&\frac{b_{-1}K^{\frac{1 - \alpha}{3\alpha - 2}}}{48L\ln\left(\frac{4}{\delta}\right)}, \frac{b_{-1}\sqrt{M}}{4^{\frac{1}{\alpha}}\cdot 12\sqrt{L}\sigma (K+1)^{\frac{\alpha}{3\alpha - 2}}\ln^{\frac{\alpha - 1}{\alpha}}\left(\frac{4}{\delta}\right)},
        \nonumber\\
        &\frac{b_{-1}M^{\frac{\alpha}{2\alpha-1}}}{4^{\frac{\alpha}{2\alpha-1}}\cdot 12^{\frac{2\alpha-2}{2\alpha-1}} \sigma^{\frac{2\alpha}{2\alpha-1}} L^{\frac{\alpha-1}{2\alpha-1}} (K+1)^{\frac{\alpha}{3\alpha-2}} \ln^{\frac{2\alpha-2}{2\alpha-1}}\left(\frac{4}{\delta}\right)}\Bigg\}, \quad \eta = \frac{L\gamma^2}{M(1-\beta_1)},
    \end{align}
    and
    \begin{align}
    \label{eq:lambda-choice-no-delay}
        \lambda = \frac{b_{-1}\sqrt{M} (K+1)^{\frac{1 - \alpha}{3\alpha - 2}}}{12\sqrt{L}\gamma \ln\left(\frac{4}{\delta}\right)}
    \end{align}
    the bound
    \begin{align*}
        &\frac{1}{K}\sum\limits_{k=0}^{K-1}\sqn{\nabla f(x_k)} = \cO\left(\frac{1}{(1 - \beta_1)^{\frac{3}{2}}}\max\left\{\frac{LM\ln\left(\frac{4}{\delta}\right)}{K^{\frac{2\alpha-1}{3\alpha-2}}}, \frac{\sqrt{LM}\sigma\ln^{\frac{\alpha - 1}{\alpha}}\left(\frac{4}{\delta}\right)}{K^\frac{2\alpha-2}{3\alpha-2}}, \frac{\sigma^{\frac{2\alpha}{2\alpha-1}}(LM)^\frac{\alpha-1}{2\alpha - 1}\ln^{\frac{2\alpha-2}{2\alpha-1}}\left(\frac{4}{\delta}\right)}{K^\frac{2\alpha-2}{3\alpha-2}}\right\}\right)
    \end{align*}
    holds with probability at least $1 - \delta$.
\end{theorem}
\begin{proof}
    The main idea of the proof is similar to the proof of \cref{thm:nonconvex-case-conv}, but we do not need to introduce any probabilistic events since according to \cref{asm:bounded-function} the norm of gradient is always bounded:
    \begin{align*}
        \norm{\nabla f(x_t)} \leq \sqrt{2L\left(f(x_t) - f_*\right)} \leq  \sqrt{2LM} \overset{\eqref{eq:lambda-choice-no-delay}}{\leq} \frac{\lambda}{2}.
    \end{align*}
    Therefore, one can apply \cref{lem: clip-effect} and get
    \begin{align}
        \norm{\theta_t^u} &\leq 2\lambda, \label{eq:ndjkjdbvjfdbvjdf_3}\\
        \norm{\theta_t^b} &\leq \frac{2^\alpha \sigma^\alpha}{\lambda^{\alpha - 1}}, \label{eq:ndjkvjfbdvjfhbvdbfvdf_3}\\
        \E_{\xi_t}\sqn{\theta_t^u} &\leq 18\lambda^{2 - \alpha}\sigma^\alpha. \label{eq:djbjdbvjdbhvjbdfhjvbdhfbdf_3}
    \end{align}
    According to the \cref{lem:descent-nonconvex-without-delay}, we get
    \begin{align*}
        \sum\limits_{t=0}^{T-1}\frac{\gamma C_t}{2}\sqn{\nabla f(x_t)} &\leq\left(2M + \frac{2L\gamma^2}{\eta (1 - \beta_1)}\right)\sqrt{b_{-1}^2 + \eta\sum\limits_{t=0}^{T-1}\sqn{g_t}} -\sum\limits_{t=0}^{T-1}\gamma C_t\ev{\nabla f(x_t), \theta_t^u} + \sum\limits_{t=0}^{T-1}\frac{\gamma C_t}{2}\sqn{\theta_t^b}
    \end{align*}
    with $C_t = \sum\limits_{k=t}^{T-1}(1-\beta_1)\beta_1^{k-t}$ for \algname{Clip-M-AdaGrad-Norm} and
    \begin{align*}
        \sum\limits_{t=0}^{T-1}\frac{\gamma C_t}{2}\sqn{\nabla f(x_t)} &\leq
        \left(3M + \frac{16KL\gamma^2}{\eta (1 - \beta_1)}\right)\sqrt{b_{-1}^2 + \frac{\eta}{K}\sum\limits_{t=0}^{T-1}\sqn{g_t}} -\sum\limits_{t=0}^{T-1}\gamma C_t\ev{\nabla f(x_t), \theta_t^u} + \sum\limits_{t=0}^{T-1}\frac{\gamma C_t}{2}\sqn{\theta_t^b}
    \end{align*}
    with $C_t = \sum\limits_{k=t}^{T-1}\nicefrac{(1-\beta_1)\beta_1^{k-t}}{(\sqrt{\beta_2})^k}$ for \algname{Clip-Adam-Norm}.
    Let us bound $C_t$ regardless of the method. In can be shown that
    \begin{align*}
        1 - \beta_1 \leq C_t(\text{\algname{Clip-M-AdaGrad-Norm}}) \leq \sum\limits_{k=0}^{\infty}(1-\beta_1)\beta_1^{k} = 1
    \end{align*}
    and
    \begin{align*}
        1 - \beta_1 \leq C_t(\text{\algname{Clip-Adam-Norm}}) \leq 2\sum\limits_{k=0}^{\infty}(1-\beta_1)\beta_1^{k} = 2,
    \end{align*}
    since $(\sqrt{\beta_2})^{T-1} \geq \nicefrac{1}{2}$. Therefore, descent lemmas for \algname{Clip-M-AdaGrad-Norm} and \algname{Clip-Adam-Norm} can be rewritten in the following way:
    \begin{align}
    \label{eq: adagrad-almost}
        \frac{\gamma(1 - \beta_1)}{2}\sum\limits_{t=0}^{T-1}\sqn{\nabla f(x_t)} &\leq\left(2M + \frac{2L\gamma^2}{\eta (1 - \beta_1)}\right)\sqrt{b_{-1}^2 + \eta\sum\limits_{t=0}^{T-1}\sqn{g_t}} \nonumber\\&-\sum\limits_{t=0}^{T-1}\gamma C_t\ev{\nabla f(x_t), \theta_t^u} + \sum\limits_{t=0}^{T-1}\gamma\sqn{\theta_t^b}
    \end{align}
    for \algname{Clip-M-AdaGrad-Norm} and
    \begin{align}
    \label{eq: adam-almost}
        \frac{\gamma(1 - \beta_1)}{2}\sum\limits_{t=0}^{T-1}\sqn{\nabla f(x_t)} &\leq
        \left(3M + \frac{16KL\gamma^2}{\eta (1 - \beta_1)}\right)\sqrt{b_{-1}^2 + \frac{\eta}{K}\sum\limits_{t=0}^{T-1}\sqn{g_t}} \nonumber\\&-\sum\limits_{t=0}^{T-1}\gamma C_t\ev{\nabla f(x_t), \theta_t^u} + \sum\limits_{t=0}^{T-1}\gamma\sqn{\theta_t^b}
    \end{align}
    for \algname{Clip-Adam-Norm}. Moreover, $\sum\limits_{t=0}^{T-1}\sqn{g_t}$ can be bounded as follows:
    \begin{align}
        \label{eq: extra-for-no-delay}
        \sum\limits_{t=0}^{T-1}\sqn{g_t} \leq 3\sum\limits_{t=0}^{T-1}\left(\sqn{\nabla f(x_t)} + \left(\sqn{\theta_t^u} - \E_{\xi_t}\sqn{\theta_t^u}\right) + \E_{\xi_t}\sqn{\theta_t^u}+ \sqn{\theta_t^b}\right).
    \end{align}
    The main idea is to give upper bounds for the next terms for all $T \leq K$:
    \begin{align*}
        \underbrace{\sum\limits_{t=0}^{T-1}\frac{L\gamma^2}{b_{-1}^2}\left(\sqn{\theta_t^u} - \E_{\xi_t}\sqn{\theta_t^u}\right)}_{\circledOne}, \ \underbrace{\sum\limits_{t=0}^{T-1}\frac{L\gamma^2}{b_{-1}^2}\E_{\xi_t}\sqn{\theta_t^u}}_{\circledTwo}, \
        \underbrace{\sum\limits_{t=0}^{T-1}\frac{\gamma}{b_{-1}}\sqn{\theta_t^b}}_{\circledThree}, \ \underbrace{-\sum\limits_{t=0}^{T-1}\frac{\gamma}{b_{-1}} C_t\ev{\nabla f(x_t), \theta_t^u}}_{\circledFour}. 
    \end{align*}
    In cases of $\circledOne, \circledTwo$ and $\circledThree$ we multiply sums from \eqref{eq: extra-for-no-delay} to the factors to move to the corresponding type of sums from \cref{thm:nonconvex-case-conv}. 

    \textbf{Bound for \circledOne}. We have bounded and unbiased terms in the sum:
    \begin{align*}
        \E_{\xi_t}\left[\frac{L\gamma^2}{b_{-1}^2}\left(\sqn{\theta_t^u} - \E_{\xi_t}\sqn{\theta_t^u}\right)\right] = 0
    \end{align*}
    and
    \begin{align*}
        \abs{\frac{L\gamma^2}{b_{-1}^2}\left(\sqn{\theta_t^u} - \E_{\xi_t}\sqn{\theta_t^u}\right)} \overset{\eqref{eq:ndjkjdbvjfdbvjdf_3}}{\leq} \frac{8L\gamma^2\lambda^2}{b_{-1}^2} \leq \frac{24 M}{19\ln{\frac{4}{\delta}}} = c.
    \end{align*}
    Next, we define $\hat{\sigma}_t^2 = \E_{\xi_t}\left[\frac{L^2\gamma^4}{b_{-1}^4}\left(\sqn{\theta_t^u} - \E_{\xi_t}\sqn{\theta_t^u}\right)\right]$. For the introduced quantities, we have
    \begin{align*}
        \hat{\sigma}_t^2 \leq \frac{cL\gamma^2}{b_{-1}^2} \E_{\xi_t}\abs{\sqn{\theta_t^u} - \E_{\xi_t}\sqn{\theta_t^u}} \leq \frac{2cL\gamma^2}{b_{-1}^2} \E_{\xi_t}\sqn{\theta_t^u}.
    \end{align*}
    Therefore, we can apply Bernstein's inequality (\cref{lem: bernstein}) with $G = \frac{3M^2}{38\ln\left(\frac{4}{\delta}\right)}$:
    \begin{align*}
        \mathbb{P}\left\{\left| \sum\limits_{t=0}^{T-1}\frac{L\gamma^2}{b_{-1}^2}\left(\sqn{\theta_t^u} - \E_{\xi_t}\sqn{\theta_t^u}\right)\right| > M \text{ and } \sum\limits_{t=0}^{T-1} \hat{\sigma}_t^2 \leq G\right\} \leq 2\exp\left(-\frac{M^2}{2G + \frac{2cM}{3}}\right) = \frac{\delta}{2}.
    \end{align*}
    Thus, we get
    \begin{align*}
        \mathbb{P}\left\{\text{either }\left|\sum\limits_{t=0}^{T-1}\frac{L\gamma^2}{b_{-1}^2}\left(\sqn{\theta_t^u} - \E_{\xi_t}\sqn{\theta_t^u}\right)\right| \leq M \text{ or } \sum\limits_{t=0}^{T-1} \hat{\sigma}_t^2 > G\right\} \geq 1 - \frac{\delta}{2}.
    \end{align*}
    Moreover,
    \begin{align*}
        \sum\limits_{t=0}^{T-1}\hat{\sigma}_t^2 &\overset{\eqref{eq:djbjdbvjdbhvjbdfhjvbdhfbdf_3}}{\leq} \frac{36cTL\gamma^2\lambda^{2-\alpha}\sigma^\alpha}{b_{-1}^2} \overset{\eqref{eq:lambda-choice-no-delay}}{\leq} \frac{36cTL\gamma^{\alpha}\sqrt{M}^{2-\alpha} K^{\frac{(1-\alpha)(2 - \alpha)}{3\alpha - 2}}}{12^{2-\alpha}b_{-1}^\alpha\sqrt{L}^{2-\alpha} \ln^{2 - \alpha}\left(\frac{4}{\delta}\right)} \overset{\eqref{eq:gamma-choice-no-delay}}{\leq} \frac{3M^2}{38\ln\left(\frac{4}{\delta}\right)}.
    \end{align*}
    \textbf{Bound for \circledTwo}. For the second term, we get
    \begin{align*}
        \sum\limits_{t=0}^{T-1}\frac{L\gamma^2}{b_{-1}^2}\E_{\xi_t}\sqn{\theta_t^u} &\overset{\eqref{eq:djbjdbvjdbhvjbdfhjvbdhfbdf_3}}{\leq} \frac{18TL\gamma^2\lambda^{2-\alpha}\sigma^{\alpha}}{b_{-1}^2} \overset{\eqref{eq:lambda-choice-no-delay}}{\leq} \frac{18TL\gamma^{\alpha}\sqrt{M}^{2 - \alpha}K^{\frac{(1 - \alpha)(2 - \alpha)}{3\alpha - 2}}}{12^{2 - \alpha}b_{-1}^\alpha\sqrt{L}^{2 - \alpha}\ln^{2 - \alpha}\left(\frac{4}{\delta}\right)} \overset{\eqref{eq:gamma-choice-no-delay}}{\leq} \frac{M}{32} \leq M.
    \end{align*}
    \textbf{Bound for \circledThree}. For the third sum, we obtain
    \begin{align*}
        \sum\limits_{t=0}^{T-1} \frac{\gamma}{b_{-1}}\sqn{\theta_t^b} \overset{\eqref{eq:ndjkvjfbdvjfhbvdbfvdf_3}}{\leq} \frac{4^\alpha \sigma^{2\alpha}\gamma T}{b_{-1}\lambda^{2\alpha -2}} \overset{\eqref{eq:lambda-choice-no-delay}}{=} \frac{4^\alpha 12^{2\alpha-2} \sigma^{2\alpha}\gamma^{2\alpha - 1} T L^{\alpha - 1} \ln^{2\alpha - 2}\left(\frac{4}{\delta}\right)}{b_{-1}^{2\alpha-1}M^{\alpha-1} K^{\frac{(1-\alpha)(2\alpha - 2)}{3\alpha-2}}} \overset{\eqref{eq:gamma-choice-no-delay}}{\leq} M,
    \end{align*}
    where we choose the third option for $\gamma$.

    \textbf{Bound for \circledFour}. Similarly to \circledOne, we have unbiased and bounded terms in sum:
    \begin{align*}
        \E_{\xi_t}\left[-\frac{\gamma C_t}{b_{-1}}\ev{\nabla f(x_t), \theta_t^u}\right] = 0
    \end{align*}
    and
    \begin{align*}
        \abs{-\frac{\gamma C_t}{b_{-1}}\ev{\nabla f(x_t), \theta_t^u}} \leq \frac{2\gamma}{b_{-1}}\norm{\nabla f(x_t)}\norm{\theta_t^u} \overset{\eqref{eq:ndjkjdbvjfdbvjdf_3}}{\leq} \frac{4\gamma\lambda\sqrt{2LM}}{b_{-1}} \leq \frac{3M}{4\ln\left(\frac{4}{\delta}\right)} = c.
    \end{align*}
    Let us define $\sigma_t^2 = \E_{\xi_t}\left[\frac{\gamma^2 C_t^2}{b_{-1}^2}\ev{\nabla f(x_t), \theta_t^u}^2\right]$. Hence,
    \begin{align*}
        \sigma_t^2 \leq \frac{8\gamma^2LM}{b_{-1}^2}\E_{\xi_t}\sqn{\theta_t^u}.
    \end{align*}
    Therefore, we can apply Bernstein's inequality (Lemma~\ref{lem: bernstein}) with $G = \frac{M^2}{4\ln\left(\frac{4}{\delta}\right)}$:
    \begin{align*}
        \mathbb{P}\left\{\left|- \sum\limits_{t=0}^{T-1}\frac{\gamma C_t}{b_{-1}}\ev{\nabla f(x_t), \theta_t^u}\right| > M \text{ and } \sum\limits_{t=0}^{T-1} \sigma_t^2 \leq G\right\} \leq 2\exp\left(-\frac{M^2}{2G + \frac{2cM}{3}}\right) = \frac{\delta}{2}.
    \end{align*}
    Thus, we get
    \begin{align*}
        \mathbb{P}\left\{\text{either }\left|- \sum\limits_{t=0}^{T-1}\frac{\gamma C_t}{b_{-1}}\ev{\nabla f(x_t), \theta_t^u}\right| \leq M \text{ or } \sum\limits_{t=0}^{T-1} \sigma_t^2 > G\right\} \geq 1 - \frac{\delta}{2}.
    \end{align*}
    Moreover, 
    \begin{align*}
        \sum\limits_{t=0}^{T-1}\sigma_t^2 &\overset{\eqref{eq:djbjdbvjdbhvjbdfhjvbdhfbdf_3}}{\leq} \frac{144\gamma^2LMT\lambda^{2 - \alpha}\sigma^{\alpha}}{b_{-1}^2}
        \overset{\eqref{eq:lambda-choice-no-delay}}{=} \frac{144\sqrt{M}^{2-\alpha}K^{\frac{(1-\alpha)(2-\alpha)}{3\alpha-2}}\gamma^\alpha LMT\sigma^{\alpha}}{12^{2-\alpha}b_{-1}^\alpha \sqrt{L}^{2-\alpha} \ln^{2-\alpha}\left(\frac{4}{\delta}\right)} \overset{\eqref{eq:gamma-choice-no-delay}}{\leq} 
        \frac{M^2}{4\ln\left(\frac{4}{\delta}\right)}.
    \end{align*}
    Consequently, next inequality holds with probability at least $1 - \delta$ for all $T \leq K$:
    \begin{align*}
        \sum\limits_{t=0}^{T-1}\sqn{g_t} \leq 3\sum\limits_{t=0}^{T-1}\sqn{\nabla f(x_t)} + \frac{6 Mb_{-1}^2}{L\gamma^2} + \frac{3 Mb_{-1}}{\gamma}.
    \end{align*}
    Let us specify $\eta$ for each method. This parameter can be chosen as follows:
    \begin{align*}
        \eta = \begin{cases}
            \frac{L\gamma^2}{M (1 - \beta_1)}, \qquad &\text{for \algname{Clip-M-AdaGrad-Norm}}\\
            \frac{K L\gamma^2}{M (1 - \beta_1)}, \qquad &\text{for \algname{Clip-Adam-Norm}}
        \end{cases}
    \end{align*}
    Therefore, \eqref{eq: adagrad-almost} and $\eqref{eq: adam-almost}$ can be rewritten in an unified form with $T = K$ and \circledOne, \circledTwo, \circledThree \ and \circledFour:
    \begin{align*}
        \frac{\gamma(1 - \beta_1)}{2}\sum\limits_{k=0}^{K-1}\sqn{\nabla f(x_k)} &\leq
        19M\sqrt{b_{-1}^2 + \frac{3L\gamma^2}{M(1 - \beta_1)}\sum\limits_{k=0}^{K-1}\sqn{\nabla f(x_k)} + \frac{6b_{-1}^2}{1 - \beta_1} + \frac{3L\gamma b_{-1}}{1 - \beta_1}} + 2Mb_{-1}
    \end{align*}
    holds with probability at least $1 - \delta$ for both algorithms. Denoting $\sum\limits_{k=0}^{K-1}\sqn{\nabla f(x_k)}$ as $S_K$ and squaring the inequality above, we get
    \begin{align*}
        \frac{\gamma^2(1 - \beta_1)^2}{4}S_K^2 &\leq
        \left(19M\sqrt{b_{-1}^2 + \frac{3L\gamma^2}{M(1 - \beta_1)}S_K + \frac{6b_{-1}^2}{1 - \beta_1} + \frac{3L\gamma b_{-1}}{1 - \beta_1}} + 2M\right)^2 \\&\leq 762M^2\left(b_{-1}^2 + \frac{3L\gamma^2}{M(1 - \beta_1)}S_K + \frac{6b_{-1}^2}{1 - \beta_1} + \frac{3L\gamma b_{-1}}{1 - \beta_1}\right) + 8M^2b_{-1}^2,
    \end{align*}
    where we use the fact that $(a + b)^2 \leq 2a^2 + 2b^2$. Rearranging the terms, we have
    \begin{align*}
        S_K^2 - \frac{6 \cdot 38^2 LM}{(1 - \beta_1)^3}S_K -\frac{2 \cdot 38^2 M^2}{\gamma^2(1-\beta_1)^2}\left(b_{-1}^2 + \frac{8b_{-1}^2}{762} + \frac{6b_{-1}}{1 - \beta_1} + \frac{3L\gamma b_{-1}}{1 - \beta_1}\right) \leq 0.
    \end{align*}
    Solving the quadratic inequality and using that $\sqrt{a^2 + b^2} \leq a + b$, one can obtain
    \begin{align*}
        S_K &\leq \frac{6 \cdot 38^2 LM}{(1 - \beta_1)^3} + \frac{38\sqrt{2}M}{\gamma(1-\beta_1)}\sqrt{b_{-1}^2 + \frac{8b_{-1}^2}{762} + \frac{6b_{-1}^2}{1 - \beta_1} + \frac{3L\gamma b_{-1}}{1 - \beta_1}} \\&\leq\frac{6 \cdot 38^2 LM}{(1 - \beta_1)^3} + \frac{38\sqrt{2}M}{\gamma(1-\beta_1)}\left(\frac{21b_{-1}}{19} + \frac{3b_{-1}}{\sqrt{1 - \beta_1}}\right),
    \end{align*}
    because $L\gamma \leq \frac{b_{-1}}{48}$. Therefore, after division of both sides by $K$ and substitution of $\gamma$ from \eqref{eq:gamma-choice-no-delay}, we get the final bound for \algname{Clip-M-AdaGrad-Norm}/\algname{Clip-Adam-Norm}:
    \begin{align*}
        &\frac{1}{K}\sum\limits_{k=0}^{K-1}\sqn{\nabla f(x_k)} = \cO\left(\frac{1}{(1 - \beta_1)^{\frac{3}{2}}}\max\left\{\frac{LM\ln\left(\frac{4}{\delta}\right)}{K^{\frac{2\alpha-1}{3\alpha-2}}}, \frac{\sqrt{LM}\sigma\ln^{\frac{\alpha - 1}{\alpha}}\left(\frac{4}{\delta}\right)}{K^\frac{2\alpha-2}{3\alpha-2}}, \frac{\sigma^{\frac{2\alpha}{2\alpha-1}}(LM)^\frac{\alpha-1}{2\alpha - 1}\ln^{\frac{2\alpha-2}{2\alpha-1}}\left(\frac{4}{\delta}\right)}{K^\frac{2\alpha-2}{3\alpha-2}}\right\}\right)
    \end{align*}
    with probability at least $1 - \delta$.
\end{proof}

\subsection{Non-Convex Case: Coordinate-wise Methods with Delay}\label{appendix:coordinate-wise_methods}
\revisionicml{Similarly to the methods with scalar stepsizes, for convenience, we consider an reweighted forms of the coordinate-wise methods, which are equivalent to non-reweighted ones.}

\begin{algorithm}[H]
   \caption{\algname{Clip-Adam}/\algname{Clip-AdamD} and \algname{Clip-M-AdaGrad}/\algname{Clip-M-AdaGradD}}
   \label{alg:clip_adagrad_adam_coordinate}
   \centering
\begin{algorithmic}[1]
  \REQUIRE Stepsize $\gamma > 0$, starting point $x_0 \in \R^d$, initial constant $b_{-1} > 0$ (for \algname{Adam} and \algname{M-AdaGrad}) or $b_0 > 0$ (for \algname{AdamD} and \algname{M-AdaGradD}), momentum parameters $\beta_1,\beta_2\in [0,1]$, level of clipping $\lambda_i > 0$ for each coordinate\revisionicml{, reweighting parameter $\eta > 0$}
    \STATE Set $m_{-1} = 0$
   \FOR{$t=0,1,\dotsc$}
        \STATE\label{line:momentum_clipping-coord} $m_{t,i} = \beta_1 m_{t-1, i} + (1-\beta_1)\clip\left([\nabla f_{\xi_t}(x_t)]_i, \lambda_i\right)$
        \IF{no delay}
        \STATE\label{line:scaling_no_delay-coord} $b_{t,i} = \begin{cases}
            \sqrt{\beta_2 b_{t-1, i}^2 + \eta(1-\beta_2)\abs{\clip\left([\nabla f_{\xi_t}(x_t)]_i, \lambda_i\right)}^2}& \text{ for \algname{Clip-Adam}}\\
            \sqrt{b_{t-1,i}^2 + \eta\abs{\clip\left([\nabla f_{\xi_t}(x_t)]_i, \lambda_i\right)}^2}& \text{ for \algname{Clip-M-AdaGrad}}
        \end{cases}$ 
        \ELSE
        \STATE\label{line:scaling_delay-coord} $b_{t+1,i} = \begin{cases}
            \sqrt{\beta_2 b_{t,i}^2 + \eta(1-\beta_2)\abs{\clip\left([\nabla f_{\xi_t}(x_t)], \lambda_i\right)}^2}& \text{ for \algname{Clip-AdamD}}\\
            \sqrt{b_{t_i}^2 + \eta\abs{\clip\left([\nabla f_{\xi_t}(x_t)], \lambda_i\right)}^2}& \text{ for \algname{Clip-M-AdaGradD}}
        \end{cases}$
        \ENDIF
        \STATE $x_{t+1,i} = x_{t,i} - \frac{\gamma}{b_{t,i}} m_{t,i}$
   \ENDFOR
\end{algorithmic}
\end{algorithm}

To improve the readability of the proof of coordinate-wise case, we introduce the following notation for this subsection:
\begin{itemize}
    \item $\nabla_{t,i} \eqdef [\nabla f(x_t)]_i,$
    \item $g_{t,i} \eqdef [g_t]_i,$
    \item $\theta_{t,i} \eqdef g_{t,i} - \nabla_{t, i},$
    \item $\theta_{t,i}^u \eqdef g_{t,i} - \mathbb{E}_{\xi_t}[g_{t,i}],$
    \item $\theta_{t,i}^b \eqdef \mathbb{E}_{\xi_t}[g_{t,i}] - \nabla_{t,i}.$
\end{itemize}

\begin{lemma}[Descent lemma]
    \label{lem:descent-coord}
    Let \cref{asm:smoothness} hold on $Q = \Big\{x \in \R^d \ | \ \exists y \in \R^d: \ f(y) \leq f_* + 2\Delta \ and \norm{x - y} \leq \frac{\sqrt{\Delta}}{20\sqrt{L}}\Big\}$, where $f(x_0) - f_* = \Delta_0 \leq \Delta$. Then, after $T$ iterations of \algname{Clip-M-AdaGradD}/\algname{Clip-AdamD} with $b_0 \geq \nicefrac{\gamma L}{c_m}$, if $x_t \in Q \ \forall t = \overline{0, T}$, we have
    \begin{align*}
        \sum\limits_{t=0}^{T-1}\sum\limits_{i=1}^d \frac{\gamma C_{t,i}}{2}\nabla_{t,i}^2 &\leq (f(x_0) - f^*) - (f(x_T) - f^*) - \sum\limits_{t=0}^{T-1}\sum\limits_{i=1}^d (\gamma C_{t,i} - 2A_{t,i})\nabla_{t,i}\theta^u_{t,i} \\&+ \sum\limits_{t=0}^{T-1} \sum\limits_{i=1}^d 2A_{t,i}(\theta^u_{t,i})^2 + \sum\limits_{t=0}^{T-1} \sum\limits_{i=1}^d \gamma C_{t,i}(\theta^b_{t,i})^2,
    \end{align*}
    where $C_{t,i} = (1 - \beta_1)\sum\limits_{k=t}^{T-1}\frac{\beta_1^{k-t}}{b_{k,i}}$ and $A_{t,i} = \frac{L\gamma^2(1-\beta_1)}{c_m}\sum\limits_{k=t}^{T-1}\sum\limits_{p=t}^k\frac{\beta_1^{k - t}}{b_{p,i}^2}$.
\end{lemma}
\begin{proof}
    Starting with $L$-smoothness, we derive
    \begin{align}
    f(x_{t+1}) - f(x_t) &\leq \ev{\nabla f(x_t), x_{t+1} - x_t} + \frac{L}{2}\sqn{x_{t+1} - x_t} = -\gamma \sum\limits_{i=1}^d \frac{\nabla_{t, i} m_{t,i}}{b_{t,i}} + \frac{L\gamma^2}{2} \sum\limits_{i=1}^d\frac{m_{t,i}^2}{b_{t,i}^2} \notag\\
    &= \sum\limits_{i=1}^d\left[-\gamma \frac{\nabla_{t, i} m_{t,i}}{b_{t,i}} + \frac{L\gamma^2}{2}\frac{m_{t,i}^2}{b_{t,i}^2}\right].\label{eq: lem-start-coord}
\end{align}
Similarly to the proof of \cref{lem:descent-nonconvex}, we get
\begin{align}
    -\nabla_{t,i} m_{t,i} &= -\beta_1 \nabla_{t,i} m_{t - 1,i} - (1 - \beta_1)\nabla_{t,i} g_{t,i} \nonumber \\&= -\beta_1 (\nabla_{t,i} - \nabla_{t-1,i}) m_{t - 1,i} -\beta_1 \nabla_{t-1,i} m_{t - 1,i} - (1 - \beta_1)\nabla_{t,i} g_{t,i} \nonumber\\&\leq \beta_1 \abs{\nabla_{t,i} - \nabla_{t-1,i}}\abs{m_{t - 1,i}} -\beta_1 \nabla_{t-1,i} m_{t - 1,i} - (1 - \beta_1)\nabla_{t,i}g_{t,i}. \notag
\end{align}
Using that $m_{-1, i} = 0$ with the above recursive inequality, we get
\begin{align*}
    -\nabla_{t,i} m_{t,i} \leq -(1 - \beta_1)\sum\limits_{k=0}^t \beta_1^{t-k}\nabla_{k,i}g_{k,i} + \sum\limits_{k=0}^{t-1} \beta_1^{t-k} \abs{\nabla_{k+1,i} - \nabla_{k,i}}\abs{m_{k,i}}. 
\end{align*}
Dividing both sides by $b_{t,i}$ and summing over $i$, we have 
\begin{align}
\label{eq: before-cauchy}
    -\sum\limits_{i=1}^d \frac{\nabla_{t,i} m_{t,i}}{b_{t,i}} \leq -(1 - \beta_1)\sum\limits_{k=0}^t \beta_1^{t-k}\sum\limits_{i=1}^d\frac{\nabla_{k,i}g_{k,i}}{b_{t,i}} + \sum\limits_{k=0}^{t-1} \beta_1^{t-k} \sum\limits_{i=1}^d\frac{\abs{\nabla_{k+1,i} - \nabla_{k,i}}\abs{m_{k,i}}}{b_{t,i}}.
\end{align}
Then, we apply Cauchy-Schwarz inequality to the last term in the right-hand side:
\begin{align*}
    \sum\limits_{k=0}^{t-1} \beta_1^{t-k} \sum\limits_{i=1}^d\frac{\abs{\nabla_{k+1,i} - \nabla_{k,i}}\abs{m_{k,i}}}{b_{t,i}} &\leq \sum\limits_{k=0}^{t-1} \beta_1^{t-k} \left(\sqrt{\sum\limits_{i=1}^d (\nabla_{k+1,i} - \nabla_{k,i})^2}\right)\left(\sqrt{\sum\limits_{i=1}^d\frac{m_{k,i}^2}{b_{t,i}^2}}\right)
    \\&=\sum\limits_{k=0}^{t-1} \beta_1^{t-k} \norm{\nabla f(x_{k+1}) - \nabla f(x_k)}\left(\sqrt{\sum\limits_{i=1}^d\frac{m_{k,i}^2}{b_{t,i}^2}}\right) 
    \\&\leq \sum\limits_{k=0}^{t-1} \beta_1^{t-k} L\norm{x_{k+1} - x_k}\left(\sqrt{\sum\limits_{i=1}^d\frac{m_{k,i}^2}{b_{t,i}^2}}\right) \\&=  L\gamma\sum\limits_{k=0}^{t-1} \beta_1^{t-k} \left(\sqrt{\sum\limits_{i=1}^d\frac{m_{k,i}^2}{b_{k,i}^2}}\right)\left(\sqrt{\sum\limits_{i=1}^d\frac{m_{k,i}^2}{b_{t,i}^2}}\right) \\&\leq \frac{L\gamma}{c_m}\sum\limits_{k=0}^{t-1} \beta_1^{t-k} \sum\limits_{i=1}^d\frac{m_{k,i}^2}{b_{k,i}^2},
\end{align*}
where in the second inequality we apply $L$-smoothness, and in the last inequality we apply \cref{lem: sequence_b}. Therefore, substituting the inequality above into \eqref{eq: before-cauchy} and combining it with \eqref{eq: lem-start-coord}, we derive
\begin{align*}
    f(x_{t+1}) - f(x_t) &\leq -(1 - \beta_1)\gamma\sum\limits_{k=0}^t \beta_1^{t-k}\sum\limits_{i=1}^d\frac{\nabla_{k,i}g_{k,i}}{b_{t,i}} + \frac{L\gamma^2}{c_m}\sum\limits_{k=0}^{t-1} \beta_1^{t-k} \sum\limits_{i=1}^d\frac{m_{k,i}^2}{b_{k,i}^2} + \frac{L\gamma^2}{2}\sum\limits_{i=1}^d\frac{m_{t,i}^2}{b_{t,i}^2} \\&\leq -(1 - \beta_1)\gamma\sum\limits_{k=0}^t \beta_1^{t-k}\sum\limits_{i=1}^d\frac{\nabla_{k,i}g_{k,i}}{b_{t,i}} + \frac{L\gamma^2}{c_m}\sum\limits_{k=0}^{t} \beta_1^{t-k} \sum\limits_{i=1}^d\frac{m_{k,i}^2}{b_{k,i}^2}. 
\end{align*}
Applying \cref{lem: bound_momentum} to the last term, we get
\begin{align*}
    f(x_{t+1}) - f(x_t) &\leq -(1 - \beta_1)\gamma\sum\limits_{k=0}^t \beta_1^{t-k}\sum\limits_{i=1}^d\frac{\nabla_{k,i}g_{k,i}}{b_{t,i}} + \frac{L\gamma^2(1-\beta_1)}{c_m}\sum\limits_{k=0}^{t} \beta_1^{t-k} \sum\limits_{i=1}^d\sum\limits_{j=0}^k\beta_1^{k-j}\frac{g_{j,i}^2}{b_{k,i}^2}. 
\end{align*}
Summing over $t$, one can obtain
\begin{align}
\label{eq: coord-almost}
    (f(x_T) - f^*) - (f(x_0) - f^*) &\leq -(1 - \beta_1)\gamma\sum\limits_{t=0}^{T-1}\sum\limits_{k=0}^t \beta_1^{t-k}\sum\limits_{i=1}^d\frac{\nabla_{k,i}g_{k,i}}{b_{t,i}} \nonumber \\&+ \frac{L\gamma^2(1-\beta_1)}{c_m}\sum\limits_{t=0}^{T-1}\sum\limits_{k=0}^{t} \beta_1^{t-k} \sum\limits_{i=1}^d\sum\limits_{j=0}^k\beta_1^{k-j}\frac{g_{j,i}^2}{b_{k,i}^2}.
\end{align}
The rest of the proof follows \cref{lem:descent-nonconvex}. Let us denote $-\gamma C_{r,i}$ and $A_{r,i}$ as the coefficients in front of $\nabla_{r,i} g_{r,i}$ and $g_{r,i}^2$ in \eqref{eq: coord-almost}, respectively. These coefficients equal to
\begin{align}
\label{eq: coef-coord-scalar}
    -\gamma C_{r,i} &= -(1 - \beta_1)\gamma\sum\limits_{t=r}^{T-1}\frac{\beta_1^{t-r}}{b_{t,i}},\\
\label{eq: coef-coord-norm}
    A_{r,i} &= \frac{L\gamma^2(1-\beta_1)}{c_m}\sum\limits_{t=r}^{T-1}\sum\limits_{k=r}^t\frac{\beta_1^{t - r}}{b_{k,i}^2}.
\end{align}
Following the same steps as in the proof of \cref{lem:descent-nonconvex}, we get useful bounds (will be used later)
\begin{align}
\label{eq: bounds-coord}
    \frac{(1-\beta_1)}{b_{r,i}} \leq C_{r,i} \leq \frac{1}{c_m b_0}, \qquad A_{r,i} \leq \frac{L\gamma^2}{c_m^3 b_{r,i}b_0(1 - \beta_1)} 
\end{align} 
due to \cref{lem: numerical} and \cref{lem: sequence_b}. 
Rewriting \eqref{eq: coord-almost} with \eqref{eq: coef-coord-scalar} and \eqref{eq: coef-coord-norm}, we have
\begin{align*}
    (f(x_T) - f^*) - (f(x_0) - f^*) &\leq -\sum\limits_{t=0}^{T-1}\sum\limits_{i=1}^d \gamma C_{t,i}\nabla_{t,i}g_{t,i} + \sum\limits_{t=0}^{T-1} \sum\limits_{i=1}^d A_{t,i}g_{t,i}^2.
\end{align*}
Following the same steps as in the proof of \cref{lem:descent-nonconvex}, we get
\begin{align*}
    (f(x_T) - f^*) - (f(x_0) - f^*) &\leq -\sum\limits_{t=0}^{T-1}\sum\limits_{i=1}^d \gamma C_{t,i}\nabla_{t,i}g_{t,i} + \sum\limits_{t=0}^{T-1} \sum\limits_{i=1}^d A_{t,i}g_{t,i}^2 \\&\leq -\sum\limits_{t=0}^{T-1}\sum\limits_{i=1}^d (\gamma C_{t,i} - 2A_{t,i})\nabla_{t,i}\theta_{t,i} -\sum\limits_{t=0}^{T-1}\sum\limits_{i=1}^d (\gamma C_{t,i} - A_{t_i})\nabla_{t,i}^2 \\&+ \sum\limits_{t=0}^{T-1} \sum\limits_{i=1}^d A_{t,i}\theta_{t,i}^2. 
\end{align*}
Using the notation of $\theta^u_{t,i}$ and $\theta^b_{t,i}$, with $\gamma \leq \frac{(1 - \beta_1)^2c_m^3 b_0}{2L}$, which implies $\gamma C_{t,i} - 2A_{t,i} \geq 0$ due to \eqref{eq: bounds-coord}, 
\begin{align*}
    (f(x_T) - f^*) - (f(x_0) - f^*) &\leq -\sum\limits_{t=0}^{T-1}\sum\limits_{i=1}^d (\gamma C_{t,i} - 2A_{t,i})\nabla_{t,i}\theta_{t,i} -\sum\limits_{t=0}^{T-1}\sum\limits_{i=1}^d (\gamma C_{t,i} - A_{t_i})\nabla_{t,i}^2 \\&+ \sum\limits_{t=0}^{T-1} \sum\limits_{i=1}^d A_{t,i}\theta_{t,i}^2 \\&\leq -\sum\limits_{t=0}^{T-1}\sum\limits_{i=1}^d (\gamma C_{t,i} - 2A_{t,i})\nabla_{t,i}\theta^u_{t,i} -\sum\limits_{t=0}^{T-1}\sum\limits_{i=1}^d (\gamma C_{t,i} - A_{t_i})\nabla_{t,i}^2 \\&+ \sum\limits_{t=0}^{T-1} \sum\limits_{i=1}^d 2A_{t,i}((\theta^u_{t,i})^2 + (\theta^b_{t,i})^2) + \sum\limits_{t=0}^{T-1} \sum\limits_{i=1}^d \left(\frac{\gamma C_{t,i}}{2} - A_{t,i}\right)(\theta^b_{t,i})^2 \\&+ \sum\limits_{t=0}^{T-1}\sum\limits_{i=1}^d \left(\frac{\gamma C_{t,i}}{2} - A_{t,i}\right)\nabla_{t,i}^2 \\&=  -\sum\limits_{t=0}^{T-1}\sum\limits_{i=1}^d \frac{\gamma C_{t,i}}{2}\nabla_{t,i}^2 - \sum\limits_{t=0}^{T-1}\sum\limits_{i=1}^d (\gamma C_{t,i} - 2A_{t,i})\nabla_{t,i}\theta^u_{t,i} \\&+ \sum\limits_{t=0}^{T-1} \sum\limits_{i=1}^d 2A_{t,i}(\theta^u_{t,i})^2 + \sum\limits_{t=0}^{T-1} \sum\limits_{i=1}^d \gamma C_{t,i}(\theta^b_{t,i})^2,
\end{align*}
where we use the fact that $\frac{\gamma C_{t,i}}{2} \geq A_{t,i}$. Rearranging the terms, we conclude the proof.
\end{proof}

Next, to reflect tighter dependencies on the variance of different coordinates of the stochastic gradient, we make the following assumption.
\begin{assumption}\label{asm:heavy-tailed_coordinate}
    There exists set $Q \subseteq \R^d$ and $\sigma \geq 0, \alpha \in (1,2]$ such that the oracle satisfies $\E\left[\nabla f_{\xi}(x)\right] =  \nabla f(x)$ and
    \begin{align}
        \E\left[\norm{\nabla_i f_{\xi}(x) - \nabla_i f(x)}^\alpha\right] \leq \sigma_i^\alpha,\quad \forall i\in [d] \text{ and } \forall x\in Q. \label{eq:bounded_alpha_mom_coordinate}
    \end{align}
    Moreover, we assume that $\{\nabla f_{\xi}(x)\}_{i=1}^d$ are independent.
\end{assumption}

\begin{theorem}
    Let \revision{Assumptions~\ref{asm:heavy-tailed_coordinate} and \ref{asm:smoothness}} hold on $Q = \Big\{x \in \R^d \ | \ \exists y \in \R^d: \ f(y) \leq f_* + 2\Delta \ and \ \norm{x - y} \leq \frac{\sqrt{\Delta}}{20\sqrt{L}}\Big\}$ with $f(x_0) - f_* = \Delta_0 \leq \Delta$. Then, after $K+1$ iterations of \algname{Clip-M-AdaGradD}/\algname{Clip-AdamD} with
    \begin{align}
    \label{eq:gamma-choice-coord}
        \gamma \leq \min\Bigg\{&\frac{(1-\beta_1)^2 c_m b_0(K+1)^{\frac{1 - \alpha}{3\alpha - 2}}}{40L\sqrt{d}\ln\frac{4(K+1)}{\delta}}, \frac{35^{\frac{1}{\alpha}}c_m \sqrt{1-\beta_1}b_0\sqrt{\Delta}\sqrt{d}^{\frac{2 - \alpha}{\alpha}}}{432^{\frac{1}{\alpha}}\cdot 20\sqrt{L}\left(\sum_{i=1}^d \sigma_i^\alpha\right)^{\frac{1}{\alpha}} (K+1)^{\frac{\alpha}{3\alpha - 2}}\ln^{\frac{\alpha - 1}{\alpha}}\frac{4(K+1)}{\delta} },
        \nonumber\\
        &\frac{c_m (1-\beta_1)^\frac{\alpha - 1}{2\alpha - 1} b_0\Delta^{\frac{\alpha}{2\alpha-1}}}{4^{\frac{\alpha+1}{2\alpha-1}}\cdot 20^{\frac{2\alpha-2}{2\alpha-1}} \left(\sum_{i=1}^d \sigma_i^{2\alpha}\right)^{\frac{1}{2\alpha-1}} d^{\frac{\alpha -1 }{2\alpha -1}} L^{\frac{\alpha-1}{2\alpha-1}} (K+1)^{\frac{\alpha}{3\alpha-2}} \ln^{\frac{2\alpha-2}{2\alpha-1}}\left(\frac{4(K+1)}{\delta}\right)}\Bigg\}, \quad \eta = \frac{L\gamma^2(1-\beta_1)^2}{\Delta},
    \end{align}
    and
    \begin{align}
    \label{eq:lambda-choice-coord}
        \lambda_i \equiv \lambda = \frac{c_m\sqrt{1-\beta_1} b_0\sqrt{\Delta} (K+1)^{\frac{1 - \alpha}{3\alpha - 2}}}{20\sqrt{d}\sqrt{L}\gamma \ln\left(\frac{4(K+1)}{\delta}\right)}
    \end{align}
    the bound 
    \begin{align*}
        \sum\limits_{k=0}^{K}\sum\limits_{i=1}^d\frac{\gamma C_{k,i}}{2} \nabla_{k,i}^2 \leq 2\Delta
    \end{align*}
    holds with probability at least $1 - \delta$. In particular, when $\gamma$ equals the minimum from \eqref{eq:gamma-choice-coord}, the iterates produced by \algname{Clip-M-AdaGradD}/\algname{Clip-AdamD} satisfy
    \begin{align*}
        &\frac{1}{K+1}\sum\limits_{k=0}^{K}\sqn{\nabla f(x_k)} \\
        &= \cO\left(\max\left\{\frac{\sqrt{d}L\Delta \ln\frac{K+1}{\delta}}{(1-\beta_1)^3(K+1)^{\frac{2\alpha-1}{3\alpha-2}}}, \frac{\sqrt{L\Delta}\left(\sum_{i=1}^d\sigma_i^\alpha\right)^\frac{1}{\alpha} \ln^{\frac{\alpha-1}{\alpha}}\frac{K+1}{\delta}}{(1-\beta_1)^\frac{3}{2}\sqrt{d}^{\frac{2 - \alpha}{\alpha}}(K+1)^\frac{2\alpha-2}{3\alpha-2}}, \right.\right.\\
        &\qquad\qquad\qquad\qquad\qquad \left.\left. \frac{d^{\frac{\alpha-1}{2\alpha - 1}}\left(\sum_{i=1}^d\sigma_i^{2\alpha}\right)^\frac{1}{2\alpha - 1}(L\Delta)^\frac{\alpha-1}{2\alpha - 1} \ln^{\frac{2\alpha-2}{2\alpha-1}}\frac{K+1}{\delta}}{(1-\beta_1)^\frac{\alpha - 1}{2\alpha -1}(K+1)^\frac{2\alpha-2}{3\alpha-2}}\right\}\right)
    \end{align*}
    with probability at least $1 - \delta$.
\end{theorem}
\begin{proof}
    We construct the proof in a similar way as the proof of \cref{thm:nonconvex-case-conv}. The probability event $E_k$ is defined as follows: inequalities
    \begin{align*}
        \sum\limits_{l=0}^{t-1}\sum\limits_{i=1}^d \bigg[-\left(\gamma C_{l,i} - 2A_{l,i}\right)\nabla_{l, i}\theta_{l,i}^u + 2A_{l,i}(\theta_{l,i}^u)^2 + \gamma C_{l,i}(\theta_{l,i}^b)^2\bigg] &\leq \Delta, \\
        \Delta_t &\leq 2\Delta.
    \end{align*}
    hold simultaneously $\forall t = \overline{0, k}$. The main idea is to show that $\mathbb{P}\{E_k\} \geq 1 - \frac{k\delta}{K+1}$ $\forall k = \overline{0, K+1}$. The case $k = 0$ is obvious. According to an induction step, we assume that this statement holds for some $k = T - 1 \leq K:$ $\mathbb{P}\{E_{T-1}\} \geq 1 - \frac{(T-1)\delta}{K+1}$. We need to prove that $\mathbb{P}\{E_T\} \geq 1 - \frac{T\delta}{K+1}$. The event $E_{T-1}$ implies that $x_t \in \{y \in \mathbb{R}^d \ : \ f(y) \leq f^* + 2\Delta\}$ $\forall t = 0, \ldots, T-1$ and
    \begin{align*}
        \norm{x_T - x_{T-1}} = \gamma\sqrt{\sum\limits_{i=1}^d \frac{m_{t,i}^2}{b_{t,i}^2}} \leq \gamma\sqrt{\sum\limits_{i=1}^d \frac{m_{t,i}^2}{c_m b_{0}^2}} \leq \frac{\gamma}{c_m b_0}\sqrt{\sum\limits_{i=1}^d \lambda_i^2} \leq \frac{\sqrt{\Delta}}{20\sqrt{L}}.
    \end{align*}
    Hence, event $E_{T- 1}$ implies $\{x_t\}_{t-0}^{T-1} \subseteq Q$ and according to \cref{lem:descent-coord},
    \begin{align*}
        \sum\limits_{l=0}^{t-1}\sum\limits_{i=1}^d \gamma C_{l,i} \nabla_{l,i}^2 \leq \Delta_0 - \Delta_t + \sum\limits_{l=0}^{t-1}\sum\limits_{i=1}^d \bigg[-\left(\gamma C_{l,i} - 2 A_{l,i}\right)\nabla_{l, i}\theta_{l,i}^u + 2A_{l,i}(\theta_{l,i}^u)^2 + \gamma C_{l,i}(\theta_{l,i}^b)^2\bigg]
    \end{align*}
    $\forall t = \overline{1, T}$ and $\forall t = \overline{1, T-1}$ it implies that 
    \begin{align*}
        \sum\limits_{l=0}^{t-1}\sum\limits_{i=1}^d \gamma C_{l,i} \nabla_{l,i}^2 &\leq \Delta_0 - \Delta_t + \sum\limits_{l=0}^{t-1}\sum\limits_{i=1}^d \bigg[-\left(\gamma C_{l,i} - 2 A_{l,i}\right)\nabla_{l, i}\theta_{l,i}^u + 2A_{l,i}(\theta_{l,i}^u)^2 + \gamma C_{l,i}(\theta_{l,i}^b)^2\bigg] \\&\leq 2\Delta.
    \end{align*}
    Taking into account that left-hand side is greater than zero (since every term is nonnegative), $E_{T-1}$ implies
    \begin{align*}
        \Delta_T \leq \Delta_0 + \sum\limits_{l=0}^{t-1}\sum\limits_{i=1}^d \bigg[-\left(\gamma C_{l,i} - 2 A_{l,i}\right)\nabla_{l, i}\theta_{l,i}^u + 2A_{l,i}(\theta_{l,i}^u)^2 + \gamma C_{l,i}(\theta_{l,i}^b)^2\bigg].
    \end{align*}
    Next, let us denote 
    \begin{align}
        \label{eq: eta-coord}
        \eta_{t, i} = \begin{cases}
            \nabla_{t, i}, \quad &\norm{\nabla f(x_t)} \leq 2\sqrt{L\Delta}\\
            0, \quad &\text{otherwise}
        \end{cases}
    \end{align}
    $\forall t = 0, \ldots, T-1$. Therefore, the event $E_{T-1}$ implies $\eta_{t,i} = \nabla_{t,i}$ since
    \begin{align*}
        \abs{\nabla_{t,i}} \leq \norm{\nabla f(x_t)} \leq \sqrt{2L\Delta_t} \leq 2\sqrt{L\Delta} \overset{\eqref{eq:lambda-choice-coord}}{\leq} \frac{\lambda_i}{2}.
    \end{align*}
    Thus, we obtain that $E_{T-1}$ implies
    \begin{align}
        \label{eq: coord-boundary-delta-t}
        \Delta_t &\leq \Delta_0 + \sum\limits_{t=0}^{T-1}\sum\limits_{i=1}^d \bigg[-\left(\gamma C_{t,i} - 2 A_{t,i}\right)\eta_{t, i}\theta_{t,i}^u\bigg] \nonumber \\&+ \sum\limits_{t=0}^{T-1}\sum\limits_{i=1}^d \bigg[2A_{t,i}(\theta_{t,i}^u)^2\bigg] + \sum\limits_{t=0}^{T-1}\sum\limits_{i=1}^d \bigg[\gamma C_{t,i}(\theta_{t,i}^b)^2\bigg] \nonumber \\&= \underbrace{\sum\limits_{t=0}^{T-1}\sum\limits_{i=1}^d \bigg[-\left(\gamma C_{t,i} - 2 A_{t,i}\right)\eta_{t, i}\theta_{t,i}^u\bigg]}_{\circledOne} + \underbrace{\sum\limits_{t=0}^{T-1}\sum\limits_{i=1}^d \bigg[2A_{t,i}\left[(\theta_{t,i}^u)^2 - \mathbb{E}_{\xi_t}\left[\left(\theta_{t,i}^u\right)^2\right]\right]\bigg]}_{\circledTwo} \nonumber \\&+\underbrace{\sum\limits_{t=0}^{T-1}\sum\limits_{i=1}^d \bigg[2A_{t,i}\mathbb{E}_{\xi_t}(\theta_{t,i}^u)^2\bigg]}_{\circledThree} + \underbrace{\sum\limits_{t=0}^{T-1}\sum\limits_{i=1}^d \bigg[\gamma C_{t,i}(\theta_{t,i}^b)^2\bigg]}_{\circledFour}. 
    \end{align}
    It remains to give upper bounds for each term in \eqref{eq: coord-boundary-delta-t}. Moreover, due to \eqref{eq:lambda-choice-coord} we get $\abs{\nabla_{t,i}} \leq \frac{\lambda_i}{2}$. Therefore, we can apply \cref{lem: clip-effect} and get
    \begin{align}
        \abs{\theta_{t, i}^u} &\leq 2\lambda_i, \label{eq:coord_1}\\
        \abs{\theta_{t,i}^b} &\leq \frac{2^\alpha \sigma_i^\alpha}{\lambda_i^{\alpha - 1}}, \label{eq:coord_2}\\
        \E_{\xi_t}\left[(\theta_{t,i}^u)^2\right] &\leq 18\lambda_i^{2 - \alpha}\sigma_i^\alpha. \label{eq:coord_3}
    \end{align}
    \textbf{Bound for $\circledOne$}. The definition of $\theta_{t,i}^u$ implies
    \begin{align*}
        \mathbb{E}_{\xi_t}\left[ \sum\limits_{i=1}^d \bigg[-\left(\gamma C_{t,i} - 2 A_{t,i}\right)\eta_{t, i}\theta_{t,i}^u\bigg] \right] = 0.
    \end{align*}
    What is more, we have
    \begin{align*}
        \abs{\sum\limits_{i=1}^d \bigg[-\left(\gamma C_{t,i} - 2 A_{t,i}\right)\eta_{t, i}\theta_{t,i}^u\bigg]} &\leq \sqrt{\sum\limits_{i=1}^d \eta_{t,i}^2}\sqrt{\sum\limits_{i=1}^d \gamma^2 C_{t,i}^2(\theta_{t,i}^u)^2} \overset{\eqref{eq: eta-coord},\eqref{eq:coord_1}}{\leq} \frac{4\sqrt{d}\gamma\lambda \sqrt{L\Delta}}{c_m b_0} \\&\leq \frac{\Delta}{5 \ln\left(\frac{4(K+1)}{\delta}\right)} = c,
    \end{align*}
    where we use that $\left(\gamma C_{t,i} - 2 A_{t,i}\right) \geq 0$ due to the choice of $\gamma$. Also let us define $\sigma_t^2 = \mathbb{E}_{\xi_t}\left[\left(\sum\limits_{i=1}^d \bigg[-\left(\gamma C_{t,i} - 2 A_{t,i}\right)\eta_{t, i}\theta_{t,i}^u\bigg]\right)^2\right]$. Therefore, using \eqref{eq: bounds-coord},
    \begin{align}
        \label{eq: prob-coord-1}
        \sigma_t^2 \leq \frac{\gamma^2}{c_m^2 b_0^2}\left(\sum\limits_{i=1}^d \eta_{t,i}^2\right)\left(\sum\limits_{i=1}^d \mathbb{E}_{\xi_t}\left[\left(\theta_{t,i}^u\right)^2\right]\right) \overset{\eqref{eq: eta-coord}}{\leq} \frac{4\gamma^2L\Delta}{c_m^2 b_0^2} \sum\limits_{i=1}^d \mathbb{E}\left[\left(\theta_{t,i}^u\right)^2\right].
    \end{align}
    Hence, one can apply Bernstein's inequality (\cref{lem: bernstein}) with $G = \frac{7\Delta^2}{480\ln\frac{4(K+1)}{\delta}}$:
    \begin{align*}
        \mathbb{P}\left\{\left|- \sum\limits_{t=0}^{T-1}\sum\limits_{i=1}^d \bigg[-\left(\gamma C_{t,i} - 2 A_{t,i}\right)\eta_{t, i}\theta_{t,i}^u\bigg]\right| > \frac{\Delta}{4} \text{ and } \sum\limits_{t=0}^{T-1} \sigma_t^2 \leq G\right\} &\leq 2\exp\left(-\frac{\Delta^2}{16\left(2G + \frac{\Delta c}{6}\right)}\right) \\&= \frac{\delta}{2(K+1)}.
    \end{align*}
    Thus, we get
    \begin{align*}
        \mathbb{P}\left\{\text{either }\left|\sum\limits_{t=0}^{T-1} \sum\limits_{i=1}^d \bigg[-\left(\gamma C_{t,i} - 2 A_{t,i}\right)\eta_{t, i}\theta_{t,i}^u\bigg]\right| \leq \frac{\Delta}{4} \text{ or } \sum\limits_{t=0}^{T-1} \sigma_t^2 > G\right\} \geq 1 - \frac{\delta}{2(K+1)}.
    \end{align*}
    Moreover, event $E_{T-1}$ implies
    \begin{align*}
        \sum\limits_{t=0}^{T-1}\sigma_t^2 &{\leq} \sum\limits_{t=0}^{T-1}\frac{4\gamma^2 L\Delta}{c_m^2b_0^2}\sum\limits_{i=1}^d \mathbb{E}\left[\left(\theta_{t,i}^u\right)^2\right] \overset{\eqref{eq:coord_3}}{\leq} \frac{72\gamma^2 \lambda^{2-\alpha}L\Delta T}{c_m^2b_0^2}\sum\limits_{i=1}^d \sigma_i^\alpha \\
        &\overset{\eqref{eq:lambda-choice-coord}}{=} \frac{72(1-\beta_1)^{1 - \frac{\alpha}{2}}c_m^{2-\alpha}\gamma^\alpha b_0^{2-\alpha}\sqrt{\Delta}^{2-\alpha}(K+1)^{\frac{\alpha^2 - 3\alpha +2}{3\alpha -2}}L\Delta T}{20^{2-\alpha}c_m^2b_0^2\sqrt{L}^{2 - \alpha}\sqrt{d}^{2 - \alpha}\ln^{2-\alpha}\frac{4(K+1)}{\delta}}\sum\limits_{i=1}^d \sigma_i^\alpha \overset{\eqref{eq:gamma-choice-coord}}{\leq} 
        \frac{7\Delta^2}{480\ln\frac{4(K+1)}{\delta}}.
    \end{align*}
    \textbf{Bound for $\circledTwo$}. Here we also apply Bernstein's inequality. One can check that terms into $\circledTwo$ are unbiased and bounded:
    \begin{align*}
        \mathbb{E}_{\xi_t}\left[\sum\limits_{i=1}^d \bigg[2A_{t,i}\left[(\theta_{t,i}^u)^2 - \mathbb{E}_{\xi_t}\left[\left(\theta_{t,i}^u\right)\right]^2\right]\bigg]\right] = 0
    \end{align*}
    because of the definition of $\theta_{t,i}^u$ and
    \begin{align*}
        \abs{\sum\limits_{i=1}^d \bigg[2A_{t,i}\left[(\theta_{t,i}^u)^2 - \mathbb{E}_{\xi_t}(\theta_{t,i}^u)^2\right]\bigg]} \leq \frac{4dL\gamma^2\lambda^2}{c_m^2 b_0^2(1-\beta_1)} \leq \frac{\Delta}{100 \ln \left(\frac{4(K+1)}{\delta}\right)} \leq \frac{15\Delta}{47 \ln \left(\frac{4(K+1)}{\delta}\right)} = c. 
    \end{align*}
    What is more, let us define $\hat{\sigma}_t = \mathbb{E}_{\xi_t}\left[\left(\sum\limits_{i=1}^d \bigg[2A_{t,i}\left[(\theta_{t,i}^u)^2 - \mathbb{E}_{\xi_t}(\theta_{t,i}^u)^2\right]\bigg]\right)^2\right]$. Hence, we get
    \begin{align*}
        \hat{\sigma}_t \leq c \mathbb{E}_{\xi_t}\left[\sum\limits_{i=1}^d \bigg[2A_{t,i}\abs{\left[(\theta_{t,i}^u)^2 - \mathbb{E}_{\xi_t}(\theta_{t,i}^u)^2\right]}\bigg]\right] \leq \frac{4L\gamma^2c}{c_m^2 b_0^2(1-\beta_1)}\sum\limits_{i=1}^d \mathbb{E}_{\xi_t}\left[\left(\theta_{t,i}^u\right)^2\right].
    \end{align*}
    Therefore, one can apply Bernstein's inequality (see \cref{lem: bernstein}) with $G = \frac{7\Delta^2}{1504\ln\left(\frac{4(K+1)}{\delta}\right)}$:
    \begin{align*}
        \mathbb{P}\left\{\left| \sum\limits_{t=0}^{T-1}\sum\limits_{i=1}^d \bigg[2A_{t,i}\left[(\theta_{t,i}^u)^2 - \mathbb{E}_{\xi_t}\left[\left(\theta_{t,i}^u\right)^2\right]\right]\bigg]\right| > \frac{\Delta}{4} \text{ and } \sum\limits_{t=0}^{T-1} \hat{\sigma}_t^2 \leq G\right\} &\leq 2\exp\left(-\frac{\Delta^2}{16\left(2G + \frac{\Delta c}{6}\right)}\right) \\&= \frac{\delta}{2(K+1)}.
    \end{align*}
    Thus, we get
    \begin{align*}
        \mathbb{P}\left\{\text{either }\left|\sum\limits_{t=0}^{T-1}\sum\limits_{i=1}^d \bigg[2A_{t,i}\left[(\theta_{t,i}^u)^2 - \mathbb{E}_{\xi_t}\left[\left(\theta_{t,i}^u\right)^2\right]\right]\bigg]\right| \leq \frac{\Delta}{4} \text{ or } \sum\limits_{t=0}^{T-1} \hat{\sigma}_t^2 > G\right\} \geq 1 - \frac{\delta}{2(K+1)}.
    \end{align*}
    Moreover, event $E_{T-1}$ implies
    \begin{align*}
        \sum\limits_{t=0}^{T-1}\hat{\sigma}_t^2 &\overset{\eqref{eq:coord_3}}{\leq} \frac{72L\gamma^2c\lambda^{2 - \alpha}T}{c_m^2b_0^2 (1-\beta_1)}\sum\limits_{i=1}^d \sigma_i^\alpha \overset{\eqref{eq:lambda-choice-coord}}{\leq} \frac{72c\gamma^\alpha \sqrt{\Delta}^{2-\alpha}(K+1)^{\frac{\alpha^2 - 3\alpha +2}{3\alpha -2}} LT}{20^{2-\alpha}(1-\beta_1)^{\frac{\alpha}{2}}c_m^{\alpha}b_0^\alpha \sqrt{d}^{2 - \alpha}\sqrt{L}^{2 - \alpha} \ln^{2-\alpha}\frac{4(K+1)}{\delta}} \sum\limits_{i=1}^d \sigma_i^\alpha\\
        &\overset{\eqref{eq:gamma-choice-coord}}{\leq} \frac{7\Delta c}{480} = \frac{7\Delta^2}{1504\ln\left(\frac{4(K+1)}{\delta}\right)}.
    \end{align*}
    \textbf{Bound for $\circledThree$.} For the third term, we have that the event $E_{T-1}$ implies
    \begin{align*}
        \sum\limits_{t=0}^{T-1}\sum\limits_{i=1}^d \bigg[2A_{t,i}\mathbb{E}_{\xi_t}(\theta_{t,i}^u)^2\bigg] &\overset{\eqref{eq:coord_3}}{\leq} \frac{36L\gamma^2 \lambda^{2-\alpha}T}{c_m^2 b_0^2(1-\beta_1)} \sum\limits_{i=1}^d \sigma_i^\alpha \\&\overset{\eqref{eq:lambda-choice-coord}}{=} \frac{36L\gamma^\alpha \sqrt{\Delta}^{2 - \alpha}(K+1)^{\frac{\alpha^2 - 3\alpha + 2}{3\alpha - 2}}T}{20^{2 - \alpha}(1 - \beta_1)^{\frac{\alpha}{2}} c_m^\alpha b_0^\alpha \sqrt{d}^{2-\alpha}\sqrt{L}^{2-\alpha} \ln^{2 - \alpha}\left(\frac{4(K+1)}{\delta}\right)}\sum\limits_{i=1}^d\sigma_i^\alpha \\&\overset{\eqref{eq:gamma-choice-coord}}{\leq} \frac{\Delta}{4}.
    \end{align*}
    \textbf{Bound for $\circledFour$.} Similarly to the previous bound, $E_{T-1}$ implies
    \begin{align*}
        \sum\limits_{t=0}^{T-1}\sum\limits_{i=1}^d \bigg[\gamma C_{t,i}(\theta_{t,i}^b)^2\bigg] &\overset{\eqref{eq:coord_2}}{\leq} \frac{4^\alpha\gamma T}{c_m b_0 \lambda^{2\alpha - 2}}\sum\limits_{i=1}^d \sigma_i^{2\alpha} \\&\overset{\eqref{eq:lambda-choice-coord}}{\leq}
        \frac{4^\alpha \gamma (K+1)}{c_m b_0}\sum\limits_{i=1}^d\sigma_{i}^{2\alpha} \cdot \frac{20^{2\alpha - 2}\sqrt{d}^{2\alpha - 2}\sqrt{L}^{2\alpha -2}\gamma^{2\alpha - 2}\ln^{2\alpha -2}\left(\frac{4(K+1)}{\delta}\right)}{(1-\beta_1)^{\alpha - 1}c_m^{2\alpha - 2} b_0^{2\alpha - 2} \sqrt{\Delta}^{2\alpha - 2} (K+1)^{\frac{(1 - \alpha)(2\alpha-2)}{3\alpha -2}}} \\&= \frac{4^\alpha(K+1)}{c_m b_0}\sum\limits_{i=1}^d\sigma_{i}^{2\alpha} \cdot \frac{20^{2\alpha - 2}\sqrt{d}^{2\alpha - 2}\sqrt{L}^{2\alpha -2}\ln^{2\alpha -2}\left(\frac{4(K+1)}{\delta}\right)}{(1-\beta_1)^{\alpha - 1}c_m^{2\alpha - 2} b_0^{2\alpha - 2} \sqrt{\Delta}^{2\alpha - 2} (K+1)^{\frac{(1 - \alpha)(2\alpha-2)} {3\alpha -2}}} \cdot \gamma^{2\alpha - 1} \\&\overset{\eqref{eq:gamma-choice-coord}}{\leq} \frac{\Delta}{4}.
    \end{align*}
    Therefore, the event $E_{T-1} \cap E_1 \cap E_2$ implies
    \begin{align*}
        \Delta_T \leq \Delta + \Delta = 2\Delta,
    \end{align*}
    where
    \begin{align*}
        E_1 &= \left\{\text{either }\left|\sum\limits_{t=0}^{T-1} \sum\limits_{i=1}^d \bigg[-\left(\gamma C_{t,i} - 2A_{t,i}\right)\eta_{t, i}\theta_{t,i}^u\bigg]\right| \leq \frac{\Delta}{4} \text{ or } \sum\limits_{t=0}^{T-1} \sigma_t^2 > \frac{7\Delta^2}{480\ln\frac{4(K+1)}{\delta}}\right\}\\
        E_2 &= \left\{\text{either }\left|\sum\limits_{t=0}^{T-1}\sum\limits_{i=1}^d \bigg[2A_{t,i}\left[(\theta_{t,i}^u)^2 - \mathbb{E}_{\xi_t}\left[\left(\theta_{t,i}^u\right)^2\right]\right]\bigg]\right| \leq \frac{\Delta}{4} \text{ or } \sum\limits_{t=0}^{T-1} \hat{\sigma}_t^2 > \frac{7\Delta^2}{1504\ln\left(\frac{4(K+1)}{\delta}\right)}\right\}. 
    \end{align*}
    Similarly to \cref{thm:nonconvex-case-conv}, one can obtain
    \begin{align*}
        \mathbb{P}\{E_k\} \geq 1 - \frac{k\delta}{K+1}
    \end{align*}
    for all $k = 0, \ldots, K+1$. Consequently, $E_{K+1}$ implies
    \begin{align*}              \sum\limits_{k=0}^{K}\sum\limits_{i=1}^d\frac{\gamma C_{k,i}}{2} \nabla_{k,i}^2 \leq 2\Delta
    \end{align*}
    with probability at least $1 - \delta$.
    Hence, with \eqref{eq: bounds-coord} we get
    \begin{align}
    \label{eq: bound-coord-high}
        \sum\limits_{k=0}^K \sum\limits_{i=1}^d \nabla_{k,i}^2 = \sum\limits_{k=0}^K \sqn{\nabla f(x_k)} \leq \frac{4\Delta}{\gamma(1-\beta_1)} \max_{k \in \overline{0, K}, i \in \overline{1, d}} b_{k,i}. 
    \end{align}
    What is more, 
    \begin{align*}
        \max_{k \in \overline{0, K}, i \in \overline{1, d}} b_{k,i}^2 &\leq b_0^2 + \eta_m \sum\limits_{k=0}^K\Bigg(3\sum\limits_{i=1}^d \nabla_{k,i}^2 + 3\sum\limits_{i=1}^d\left((\theta_{k,i}^u)^2 - \mathbb{E}_{\xi_t}\left[\left(\theta_{k,i}^u\right)^2\right] \right) \\&+ 3\sum\limits_{i=1}^d \mathbb{E}_{\xi_t}\left[\left(\theta_{k,i}^u\right)^2\right] + 3\sum\limits_{i=1}^d (\theta_{k,i}^b)^2\Bigg),
    \end{align*}
    where $\eta_m = \eta$ for \algname{Clip-M-AdaGradD} and $\eta_m = \frac{\eta}{K + 1}$ for 
    \algname{Clip-AdamD}. Also the event $E_{K+1}$ implies
    \begin{align*}
        \sum\limits_{k=0}^K\sum\limits_{i=1}^d \left((\theta_{k,i}^u)^2 - \mathbb{E}_{\xi_t}\left[\left(\theta_{k,i}^u\right)^2\right] 
        +  \mathbb{E}_{\xi_t}\left[\left(\theta_{k,i}^u\right)^2\right]\right) &\leq \frac{\Delta c_m^2 b_0^2(1-\beta_1)}{4L\gamma^2}, \\
        \sum\limits_{k=0}^K\sum\limits_{i=1}^d (\theta_{k,i}^b)^2 &\leq \frac{\Delta c_m b_0}{4\gamma}  
    \end{align*}
    due to the bounds $\circledTwo, \circledThree$ and $\circledFour$ (exchanging $b_{t,i}$ to $c_m b_0$ in these terms allows to obtain the \textit{same} bounds for them). Therefore,
    \begin{align}
        \label{eq: b-coord}
        \max_{k \in \overline{0, K}, i \in \overline{1, d}} b_{k,i}^2 &\leq b_0^2 + 3\eta_m \sum\limits_{k=0}^K \sqn{\nabla f(x_k)} + \frac{3\eta_m c_m b_0\Delta}{4\gamma} + \frac{3\eta_m c_m^2 b_0^2(1-\beta_1)}{4L\gamma^2}.
    \end{align} 
    Denoting the left-hand side of \eqref{eq: bound-coord-high} as $S_K$, squaring both sides and substituting \eqref{eq: b-coord} gives the quadratic inequality on $S_K$. Solving it, we have
    \begin{align*}
        S_K &\leq \frac{24\Delta^2\eta_m}{\gamma^2(1-\beta_1)^2} + \sqrt{\frac{24^2\Delta^4\eta_m^2}{\gamma^4(1-\beta_1)^4} + \frac{16b_0^2\Delta^2}{\gamma^2(1-\beta_1)^2} + 16\left(\frac{3\eta_m c_m b_0\Delta^3}{4\gamma^3(1-\beta_1)^2} + \frac{3\eta_m c_m^2 b_0^2\Delta^3}{4L\gamma^4(1-\beta_1)}\right)} \\&\leq 4\max\left\{\frac{48\Delta^2\eta_m}{\gamma^2(1-\beta_1)^2}, \frac{4b_0\Delta}{\gamma(1-\beta_1)}, 4\sqrt{\frac{\eta_m c_m b_0 \Delta^3}{\gamma^3(1-\beta_1)^2}}, 4\sqrt{\frac{\eta_m c_m^2 b_0^2 \Delta^3}{L\gamma^4(1-\beta_1)}}\right\} \\&\leq
        \frac{16\Delta}{\gamma} \max\left\{\frac{12\Delta\eta_m}{\gamma}, \frac{b_0}{1-\beta_1}, \sqrt{\frac{\eta_m c_m b_0 \Delta}{\gamma(1-\beta_1)^2}}, \sqrt{\frac{\eta_m c_m^2 b_0^2 \Delta}{L\gamma^2(1-\beta_1)}}\right\}.
    \end{align*}
\end{proof}
After substitution of $\eta_m$ and division by $K + 
1$, similar to the \cref{thm:nonconvex-case-conv}, we conclude the final result:
\begin{align*}
        &\frac{1}{K+1}\sum\limits_{k=0}^{K}\sqn{\nabla f(x_k)} \\
        &= \cO\left(\max\left\{\frac{\sqrt{d}L\Delta \ln\frac{K+1}{\delta}}{(1-\beta_1)^3(K+1)^{\frac{2\alpha-1}{3\alpha-2}}}, \frac{\sqrt{L\Delta}\left(\sum_{i=1}^d\sigma_i^\alpha\right)^\frac{1}{\alpha} \ln^{\frac{\alpha-1}{\alpha}}\frac{K+1}{\delta}}{(1-\beta_1)^\frac{3}{2}\sqrt{d}^{\frac{2 - \alpha}{\alpha}}(K+1)^\frac{2\alpha-2}{3\alpha-2}}, \right.\right.\\
        &\qquad\qquad\qquad\qquad\qquad \left.\left.\frac{d^{\frac{\alpha-1}{2\alpha - 1}}\left(\sum_{i=1}^d\sigma_i^{2\alpha}\right)^\frac{1}{2\alpha - 1}(L\Delta)^\frac{\alpha-1}{2\alpha - 1} \ln^{\frac{2\alpha-2}{2\alpha-1}}\frac{K+1}{\delta}}{(1-\beta_1)^\frac{\alpha - 1}{2\alpha -1}(K+1)^\frac{2\alpha-2}{3\alpha-2}}\right\}\right)
    \end{align*}
    with probability at least $1 - \delta$.
   
\clearpage

\section{Numerical Experiments: Additional Details and Results}\label{appendix:extra_numerical_results}

\subsection{Quadratic Problem}\label{appendix:quadratic}





In addition to the results provided in the main text, we compare the performance of different versions of \algname{AdaGrad} with $\gamma = \nicefrac{1}{128}$. The results are given in Figure~\ref{fig:128stepsize_adagrad_vs_clip}. One can notice that methods with clipping consistently outperform the methods without clipping for this stepsize as well.

Moreover, we provide the results of similar experiments for \algname{Adam} with and without clipping/delay in Figure~\ref{fig:vs_clip_adam} (for $\beta_1 = 0.9$ and $\beta_2 = 0.999$). In general, the observed results for \algname{Adam}-based methods are very similar to the ones obtained for \algname{AdaGrad}: clipped versions of \algname{Adam} show better high-probability convergence than non-clipped ones.

We also run \algname{AdaGrad} and \algname{Adam} and their clipped analogs for the same problem with scaled noise, that is, instead of $\xi$ defined in Section~\ref{section:experiments}, we use $\nicefrac{\xi}{100}$. The clipping level is chosen $100$ times smaller as well, i.e., $\lambda = 0.005$. Stepsizes were tuned for each method: for \algname{AdaGrad}, we tried $\gamma = 1, \frac{1}{16}, \frac{1}{20}, \frac{1}{64}, \frac{1}{100}$; for \algname{Clip-AdaGrad}, we tried $\gamma = 1, \frac{1}{16}, \frac{1}{64}$; for \algname{Adam}, we tried $\gamma = \frac{1}{16}, \frac{1}{20}, \frac{1}{64}, \frac{1}{100}$; for \algname{Clip-Adam}, we tried $\gamma = \frac{1}{16}, \frac{1}{20}, \frac{1}{64}, \frac{1}{100}$. The results are reported in Figure~\ref{fig:adagrad_adam_rescaled_noise}. As in the original experiments, the methods with clipping achieve a better optimization error with high probability.

\begin{figure*}[t]
            \includegraphics[width=.5\linewidth]{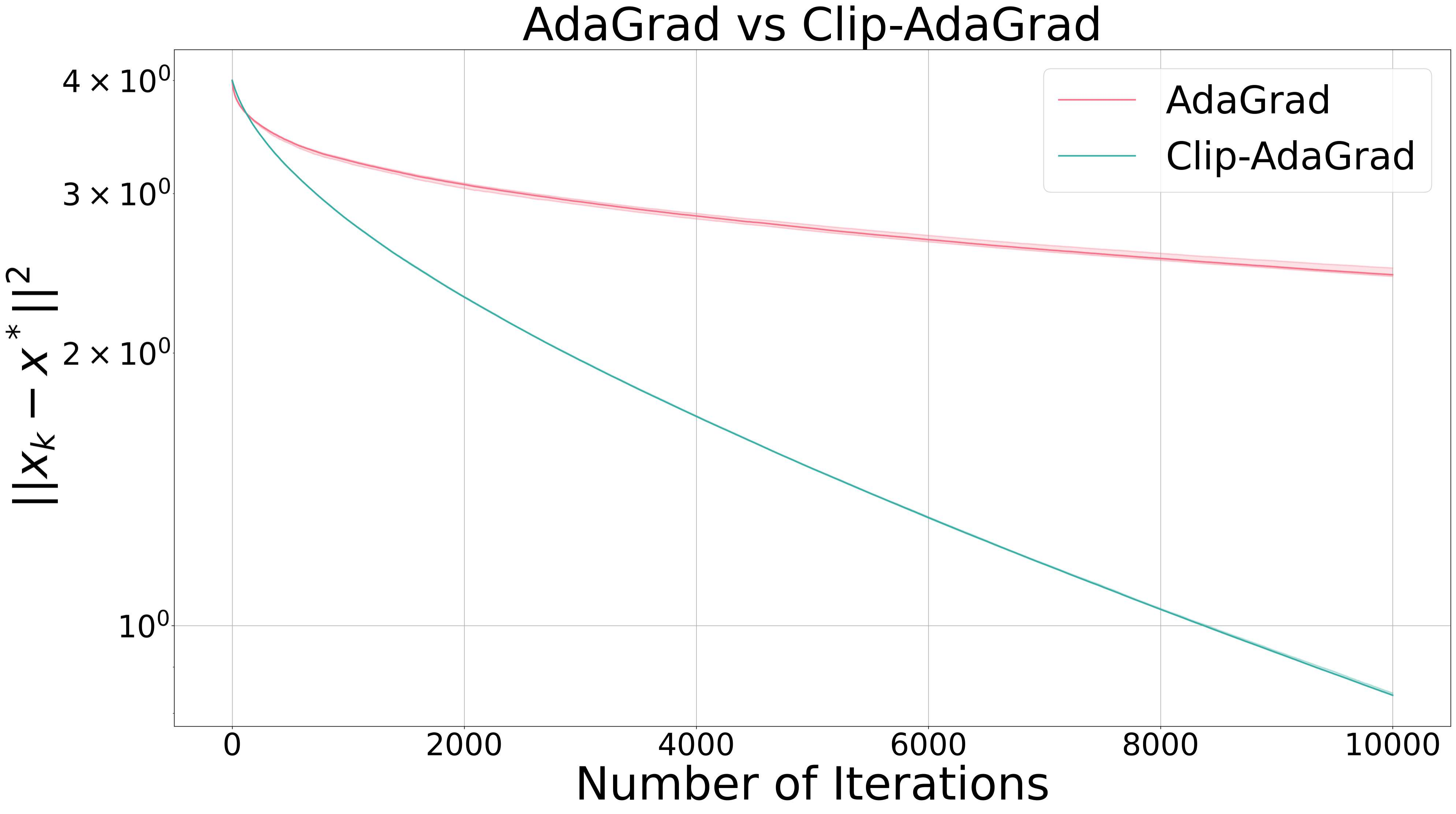}%
            \label{subfig:128_adagrad_vs_clip}%
            \includegraphics[width=.5\linewidth]{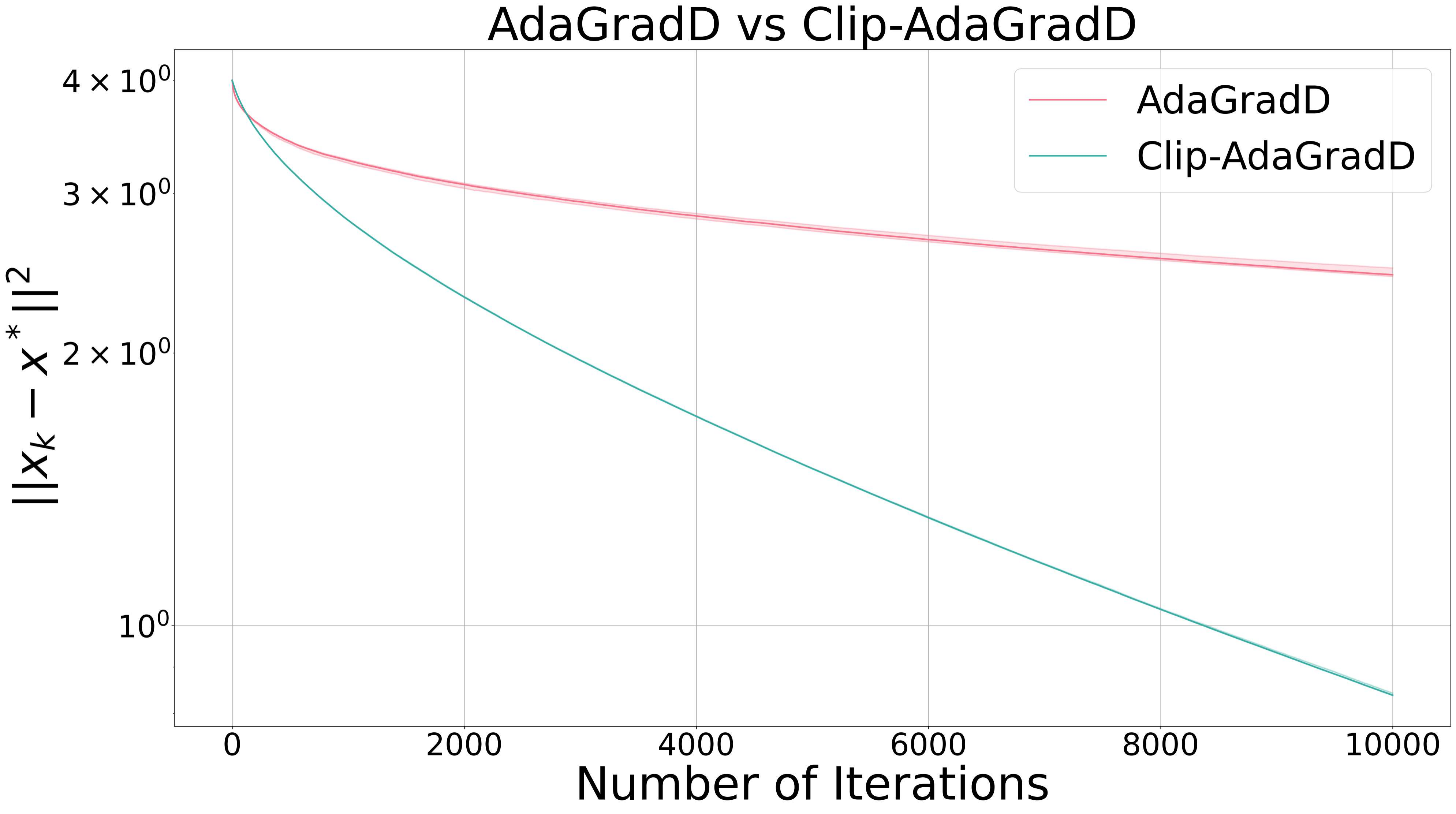}%
            \label{subfig:128_adagradd_vs_clip}%
        \caption{Performance of different versions of \algname{AdaGrad} (with and without clipping/delay) with stepsize $\gamma = 1/128$ on the quadratic problem.}
        \label{fig:128stepsize_adagrad_vs_clip}
\end{figure*}

\begin{figure*}[t]
            \includegraphics[width=.5\linewidth]{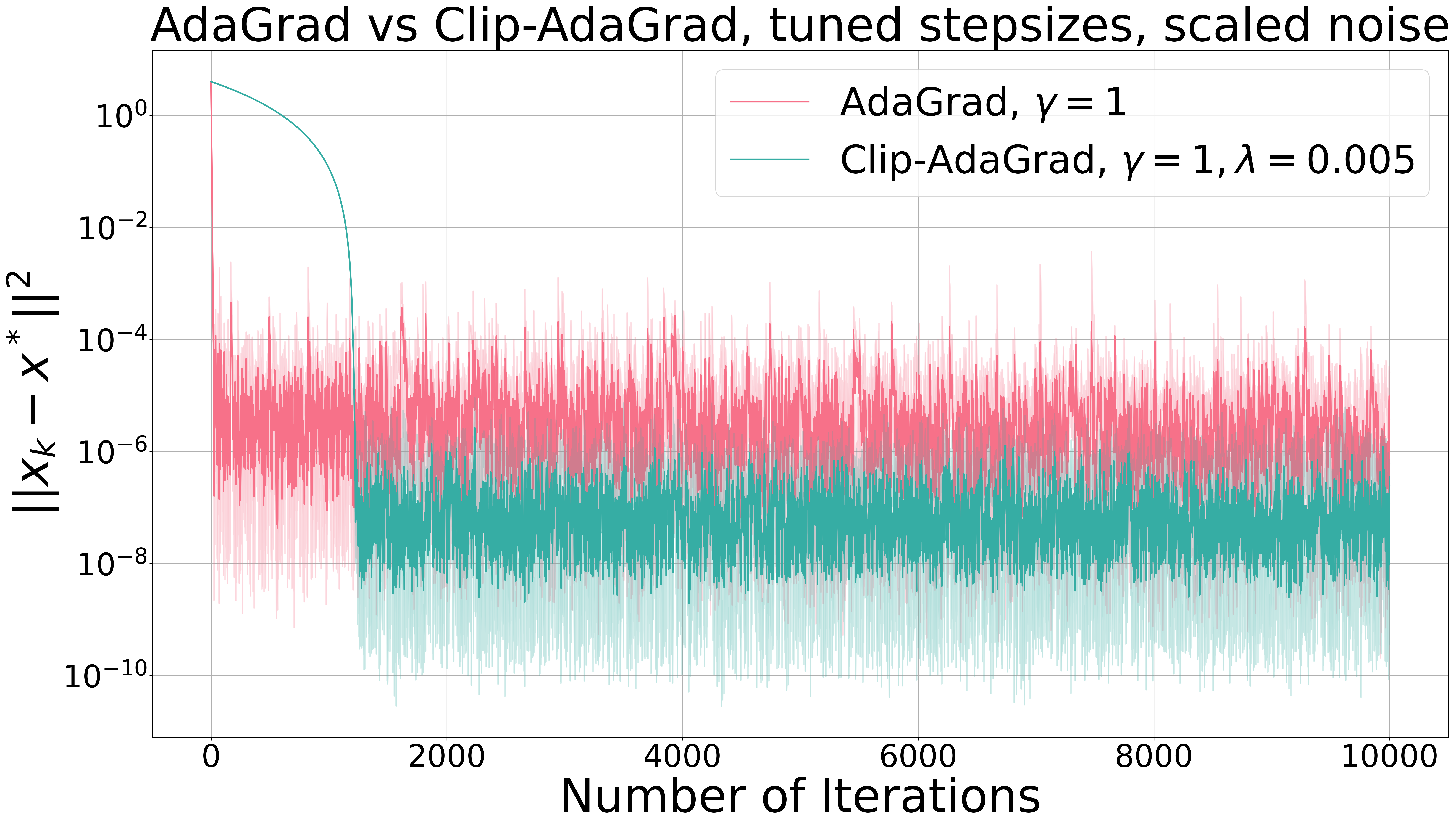}%
            \includegraphics[width=.5\linewidth]{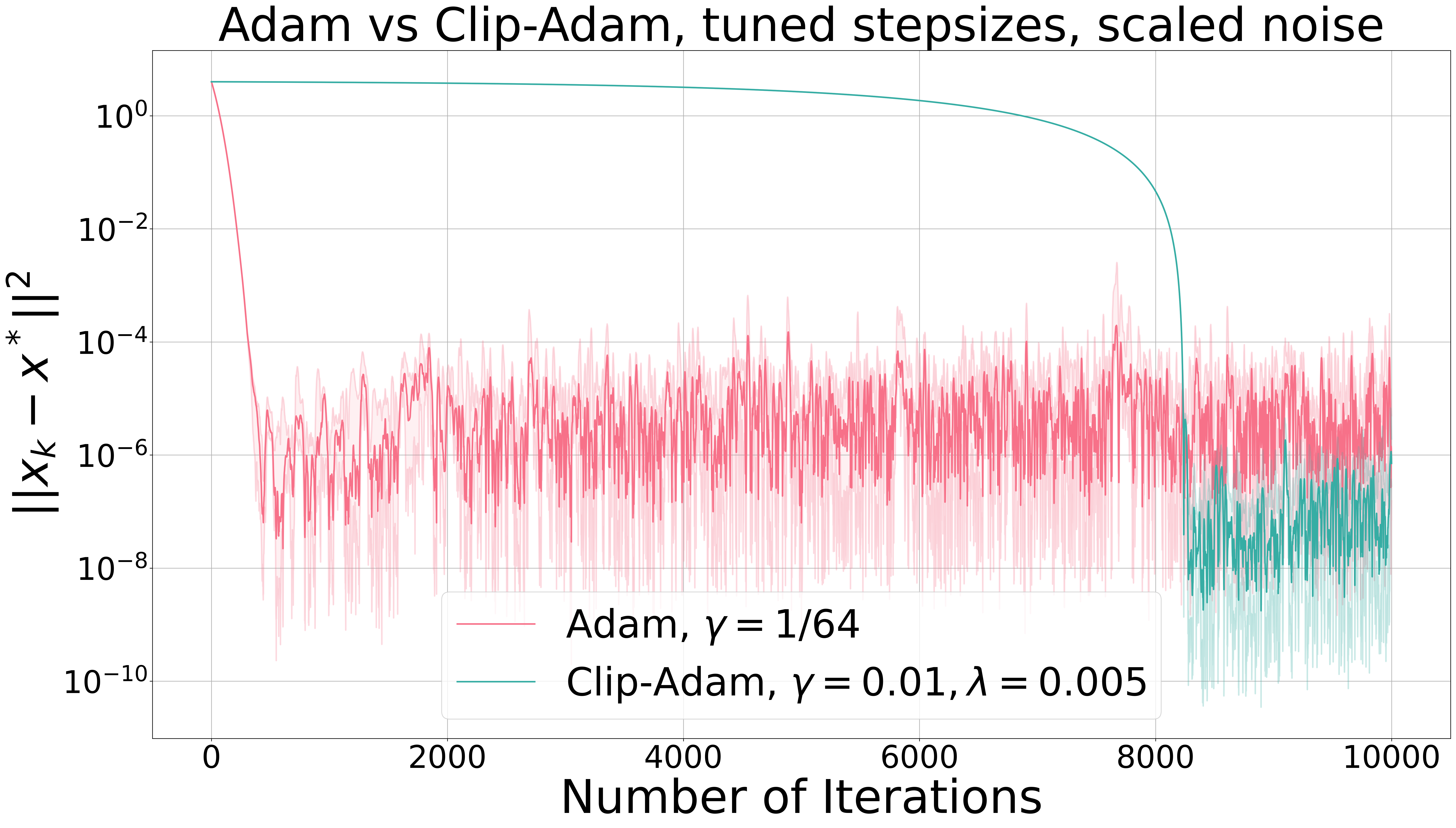}%
        \caption{Performance of different versions of \algname{AdaGrad} and \algname{Adam} (with and without clipping) on the quadratic problem with scaled noise.}
        \label{fig:adagrad_adam_rescaled_noise}
\end{figure*}


\begin{figure*}[t]
        \centering
            \includegraphics[width=.45\linewidth]{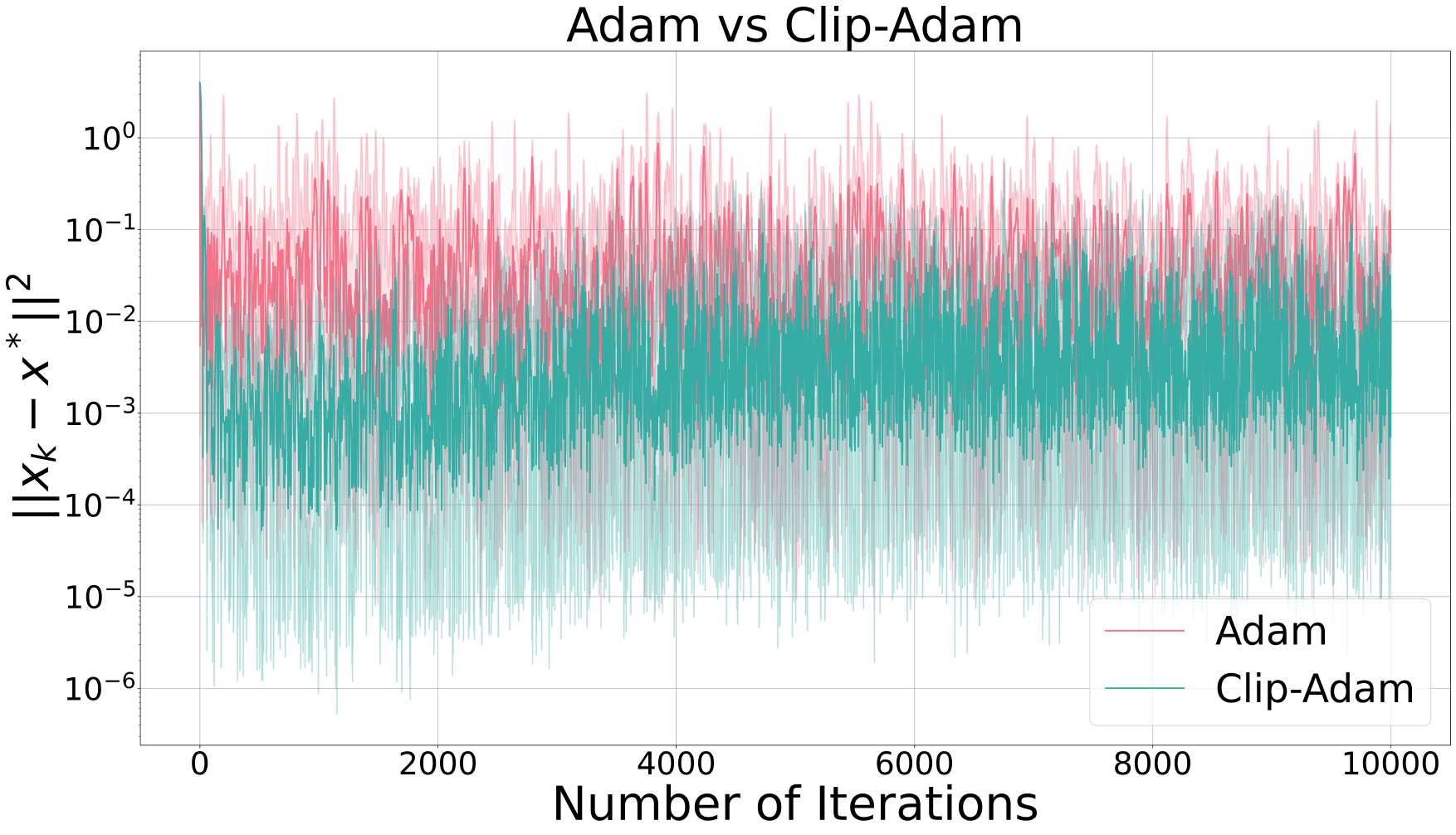}
            \label{subfig:adam_vs_clip}
            \includegraphics[width=.45\linewidth]{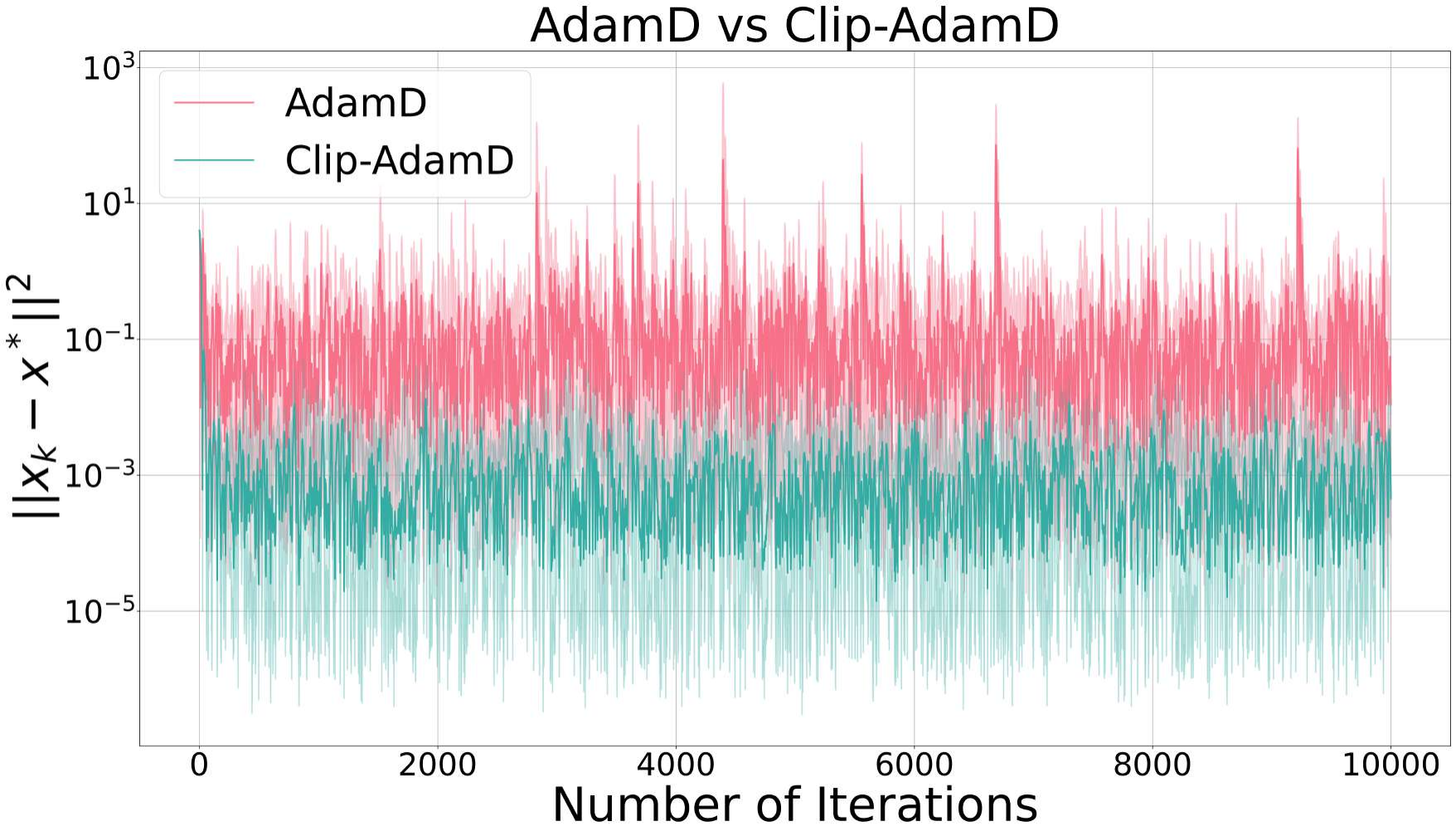}
            \label{subfig:adamd_vs_clip}
        \\
            \includegraphics[width=.45\linewidth]{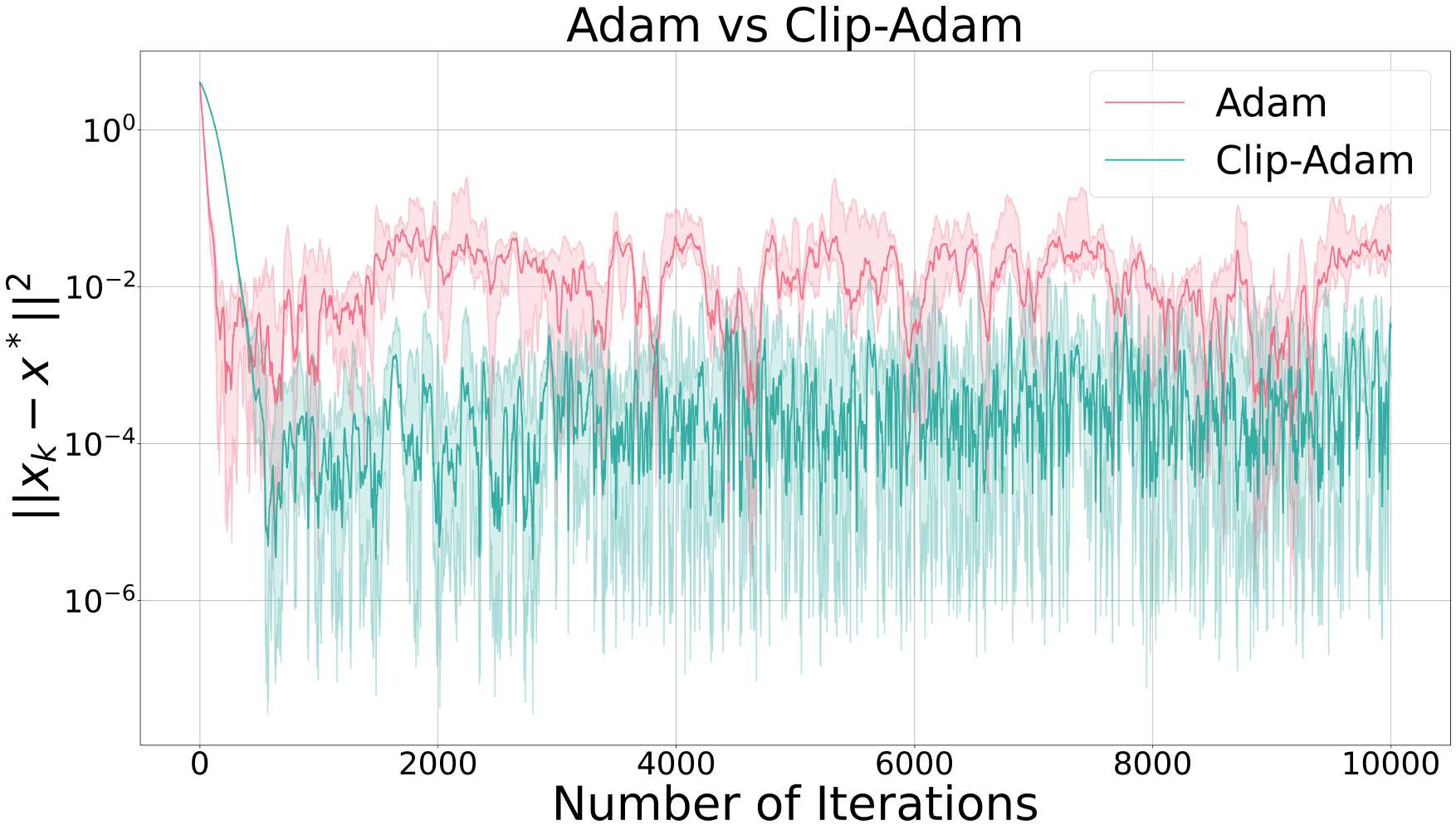}
            \label{subfig:16_adam_vs_clip}
            \includegraphics[width=.45\linewidth]{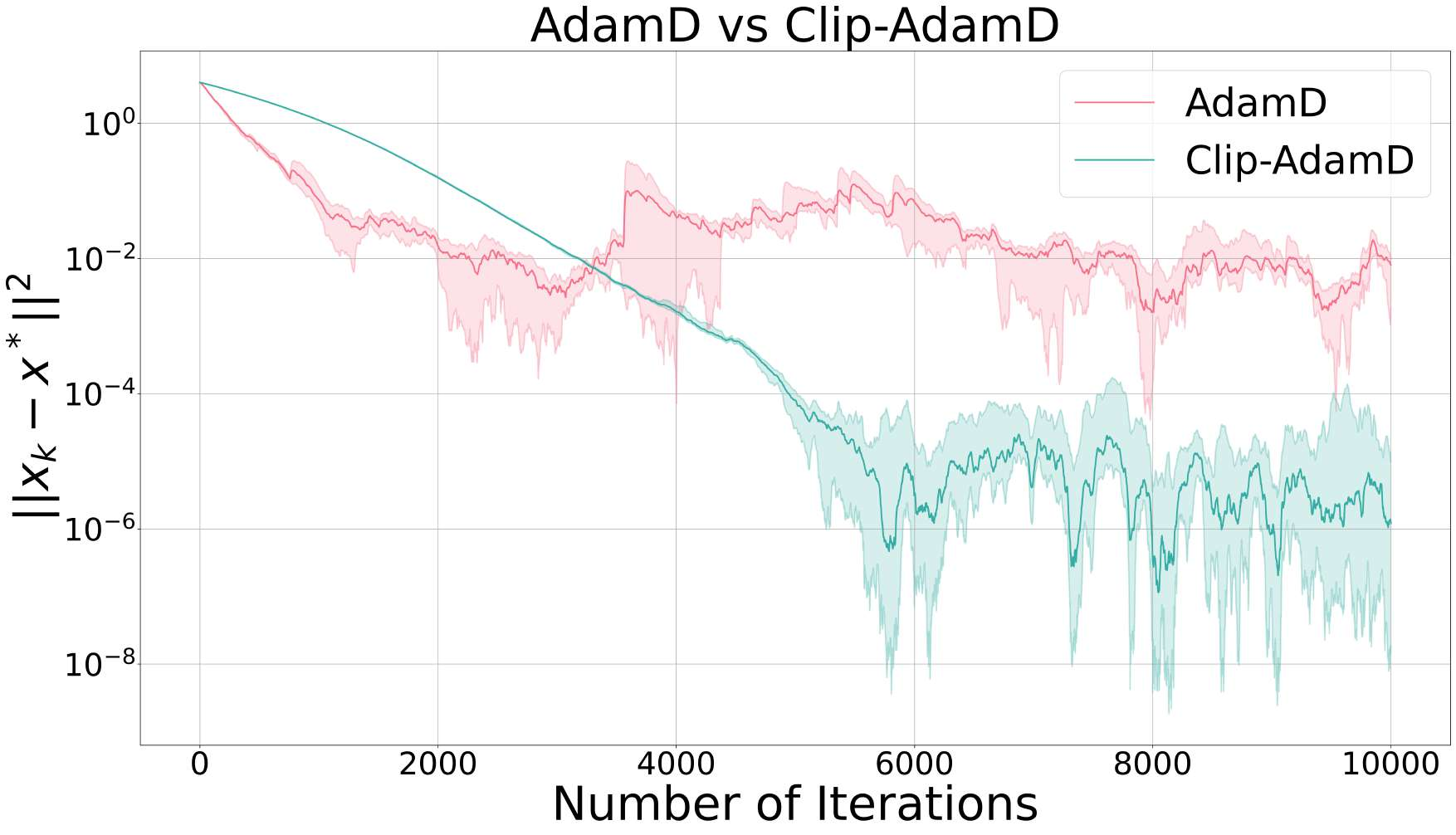}
            \label{subfig:128_adamd_vs_clip}
        \\
            \includegraphics[width=.45\linewidth]{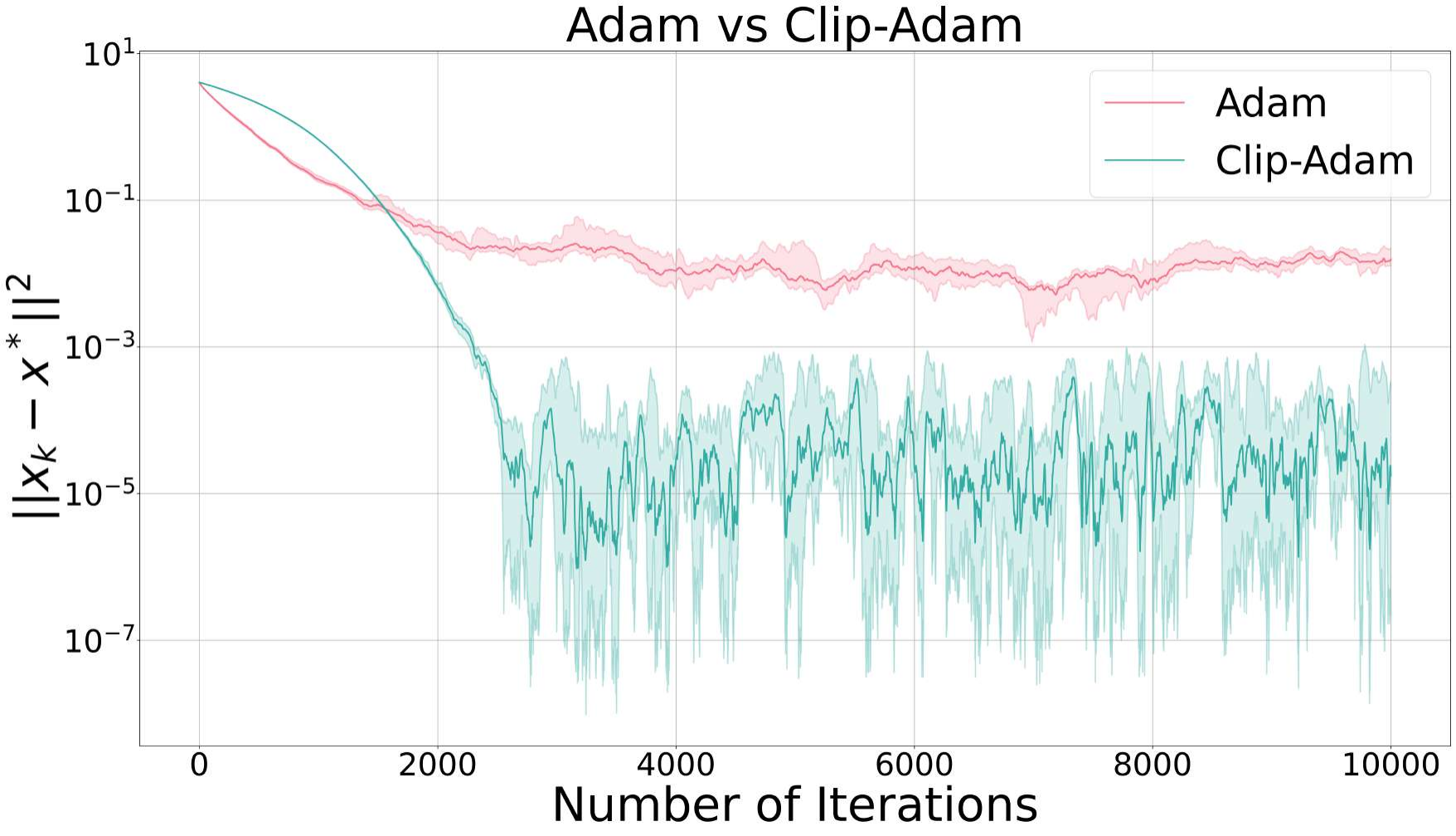}
            \label{subfig:128_adam_vs_clip}
            \includegraphics[width=.45\linewidth]{plots/128_AdamD_vs_Clip_compressed.pdf}
            \label{subfig:128_adamd_vs_clip}
        \caption{Performance of different versions of \algname{Adam} (with and without clipping/delay) under the standard setting ($\beta_1 = 0.9$, $\beta_2 = 0.999$) with stepsizes $\gamma = 1$ (first row) and $\gamma = \nicefrac{1}{16}$ (second row) on the quadratic problem.}
        \label{fig:vs_clip_adam}
\end{figure*}

\subsection{ALBERT Base v2 Fine-tuning}\label{appendix:albert}

\paragraph{Further details.} In our experiments with finetuning of the ALBERT Base v2 model on CoLa and RTE datasets, we follow a standard practice of usage \algname{Adam}, we apply bias correction to \algname{Adam} and \algname{Clip-Adam}. For the delayed version -- \algname{Clip-AdamD} -- we do not apply bias correction and tune $b_{0}$ instead.

We used linear warmup with warmup ratio being $0.1$, and hyperparameters were $\beta_1 = 0.9$, $\beta_2 = 0.999$, $b = \epsilon \bm{1}$, where $\bm{1} = (1,1,\ldots, 1)^\top \in \R^d$. We tuned batchsize and stepsize $\gamma$ for \algname{Adam} and selected best values from $\{4, 8, 16, 32\}$ for the batchsize and from $\{10^{-6}, 3\cdot 10^{-6}, 10^{-5}, 3\cdot 10^{-5}, 10^{-4}\}$ for $\gamma$. For the CoLa dataset, the best batchsize was $16$ and $\gamma = 10^{-5}$, and for the RTE dataset, the best batchsize was $8$ and $\gamma = 10^{-5}$. We tested coordinate-wise clipping with $\lambda \in \{0.001, 0.002, 0.005, 0.01, 0.02, 0.05, 0.1, 0.2, 0.5, 1\}$ and layer-wise clipping with $\lambda \in \{0.1, 0.2, 0.5, 1, 2, 5, 10\}$. For the CoLa dataset, the best results were achieved with $\lambda = 1$ for layer-wise clipping and $\lambda = 0.02$ for coordinate-wise clipping, and for the RTE dataset, the best results were achieved with $\lambda = 2$ for layer-wise clipping and $\lambda = 0.005$ for coordinate-wise clipping.

\paragraph{Noise histograms.} The histograms are provided in Figure~\ref{fig:cola_rte_distribution}, where we additionally estimate the mean and standard deviation and plot the density of the normal distribution with these parameters (black curve).  For the CoLa dataset, the noise distribution changes significantly after the start of the training, and its mean drifts to the right. However, the standard deviation does not change significantly, and, more importantly, metrics $\rho_{mR}$ and $\rho_{eR}$ remain quite large, showing that the distribution is significantly heavy-tailed. In contrast, for the RTE dataset, the noise distribution does not drift significantly, and, interestingly, $\rho_{eR}$ decreases towards the end of training and becomes zero, while $\rho_{mR}$ stays in the interval $[5, 10]$. Therefore, the noise distribution has much heavier tails for CoLa than for RTE.

\begin{figure*}[t]
            \includegraphics[width=.25\linewidth]{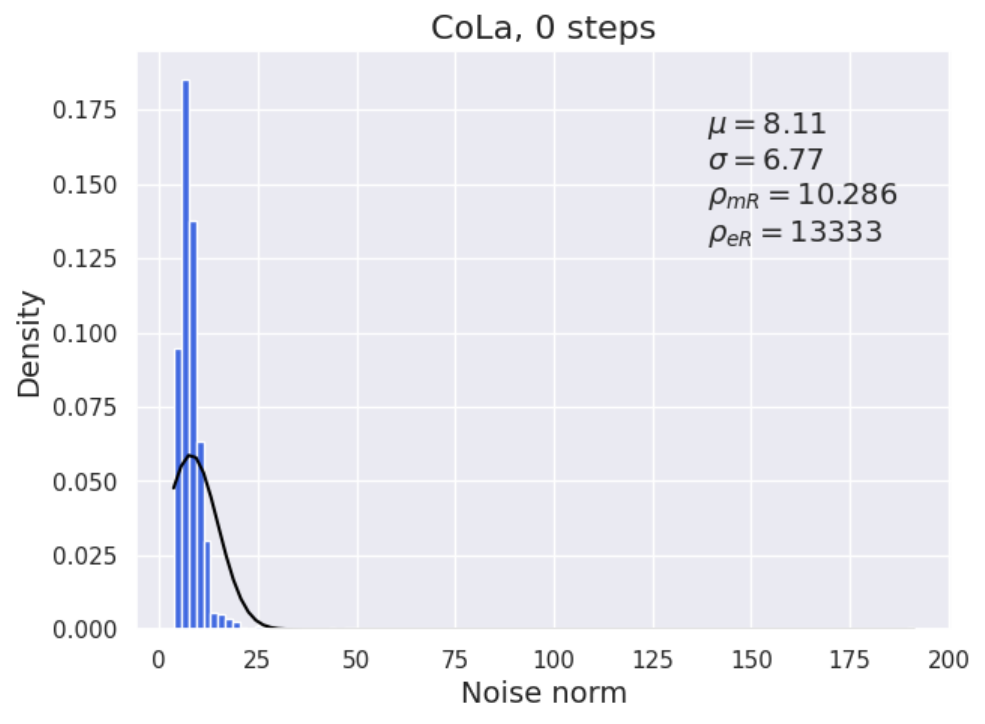}%
            \label{subfig:hist_cola_0steps}%
            \includegraphics[width=.25\linewidth]{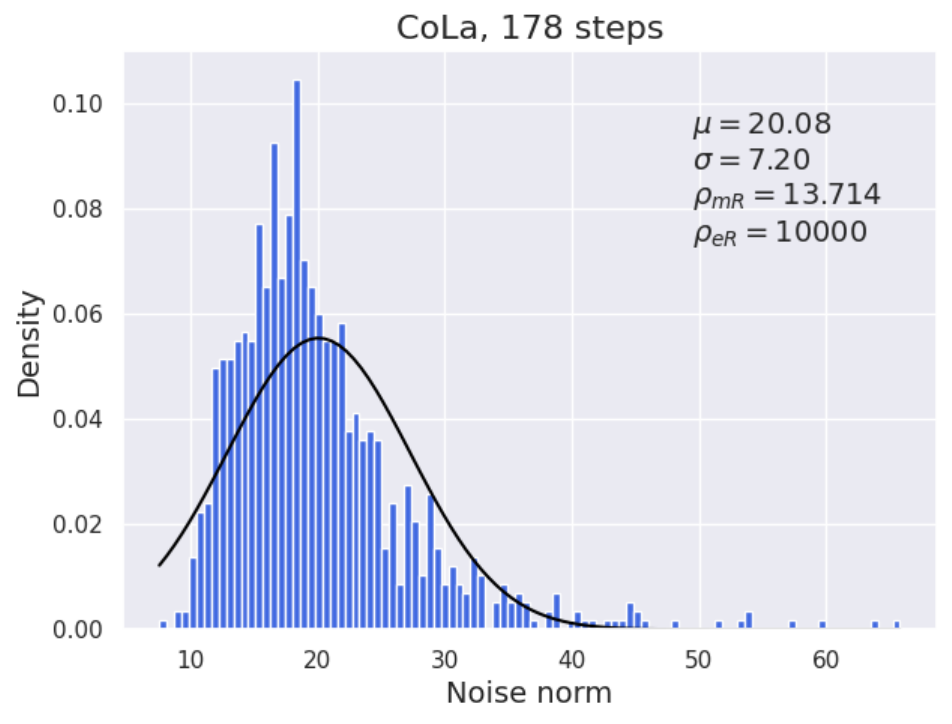}%
            \label{subfig:hist_cola_178steps}%
            \includegraphics[width=.25\linewidth]{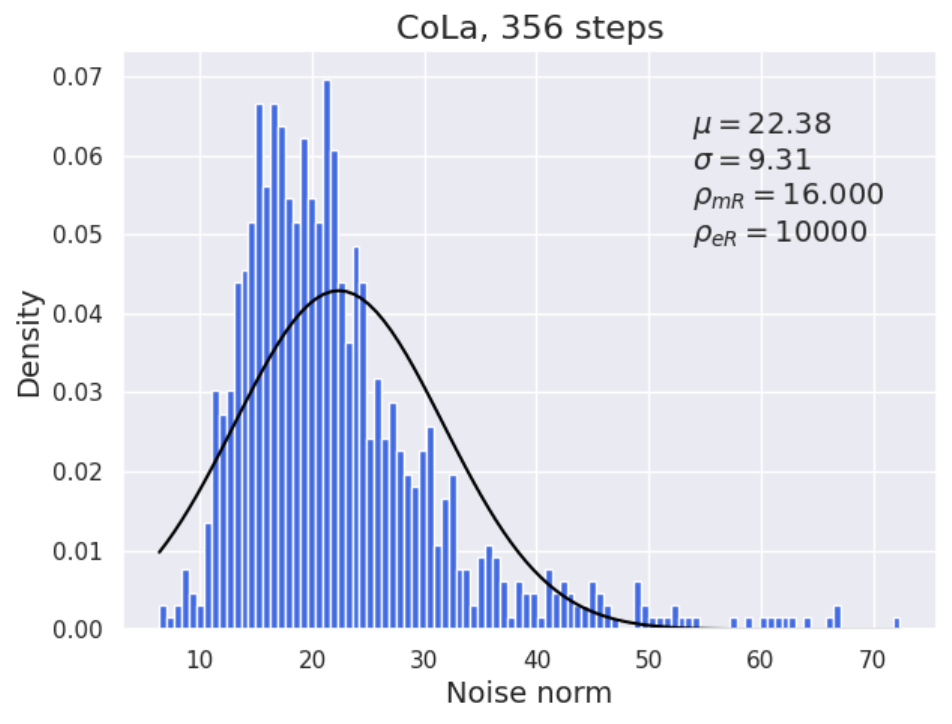}%
            \label{subfig:hist_cola_356steps}%
            \includegraphics[width=.25\linewidth]{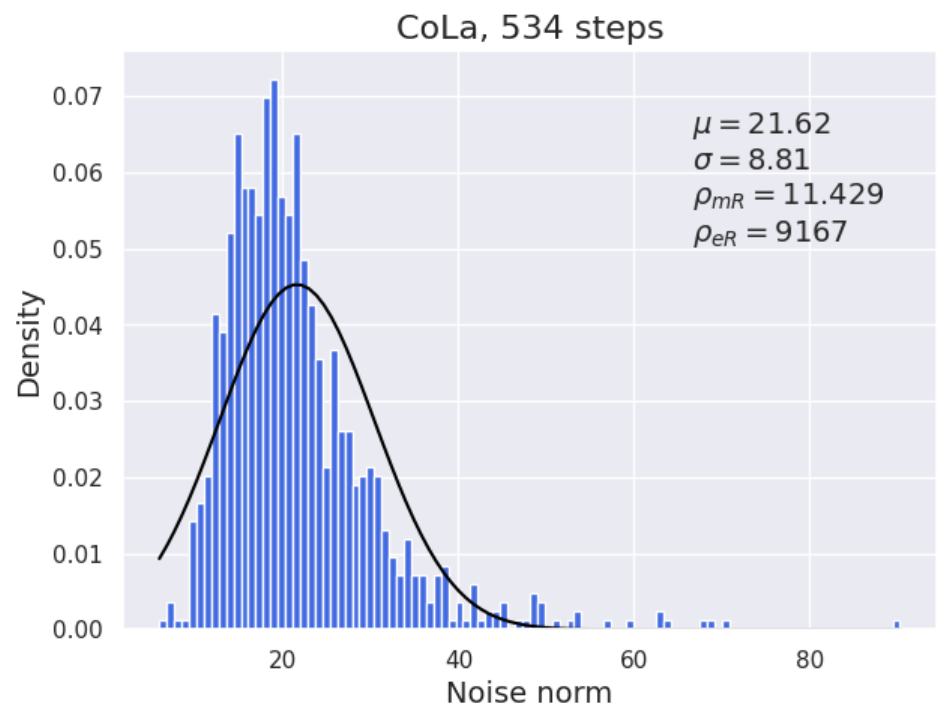}%
            \label{subfig:hist_cola_534steps}%
            \\
            \includegraphics[width=.25\linewidth]{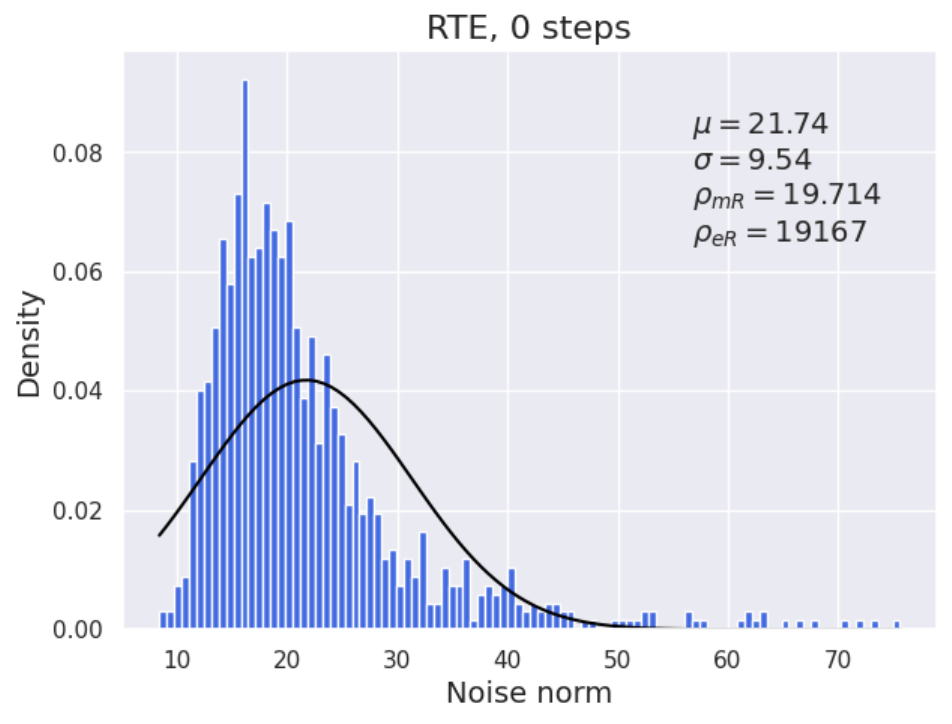}%
            \label{subfig:hist_rte_0steps}%
            \includegraphics[width=.25\linewidth]{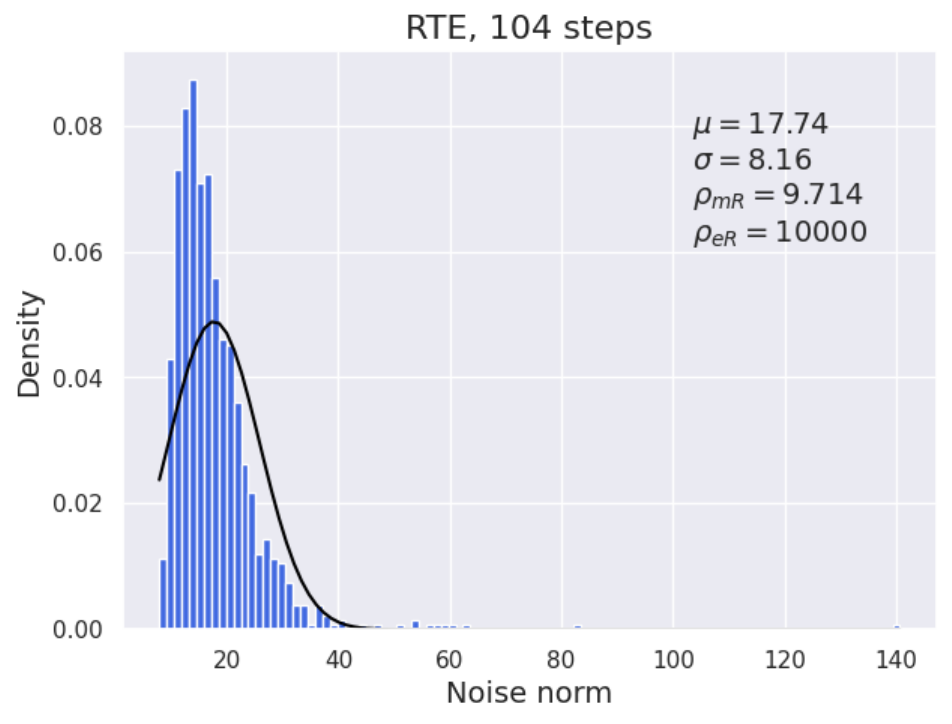}%
            \label{subfig:hist_rte_104steps}%
            \includegraphics[width=.25\linewidth]{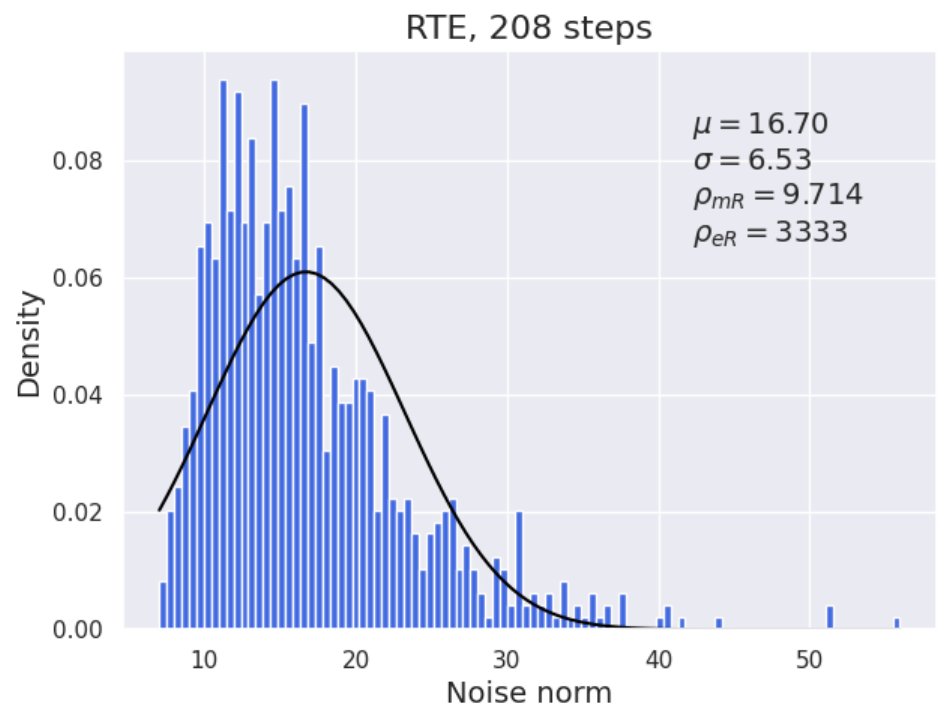}%
            \label{subfig:hist_rte_208steps}%
            \includegraphics[width=.25\linewidth]{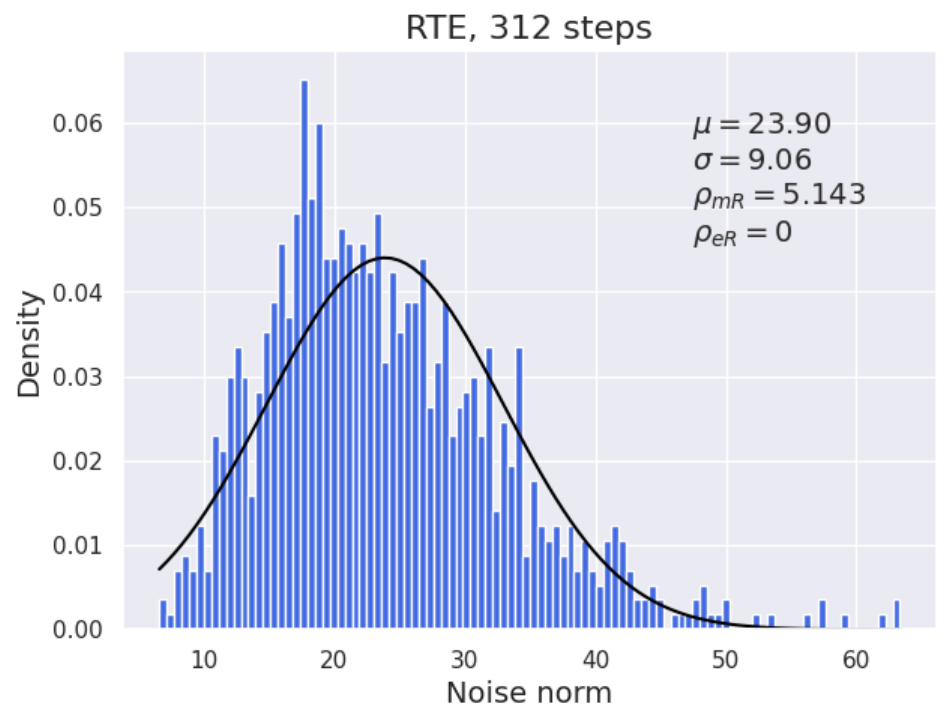}%
            \label{subfig:hist_rte_312steps}%
        \caption{Gradient noise evolution for \algname{Adam} on CoLa (the first row) and RTE (the second row) datasets. Histograms were evaluated after $0$ steps, after  $\approx \nicefrac{1}{3}$ and $\approx \nicefrac{2}{3}$ of all steps, and in the end.}
        \label{fig:cola_rte_distribution}
\end{figure*}

\begin{figure*}[t]
            \includegraphics[width=.5\linewidth]{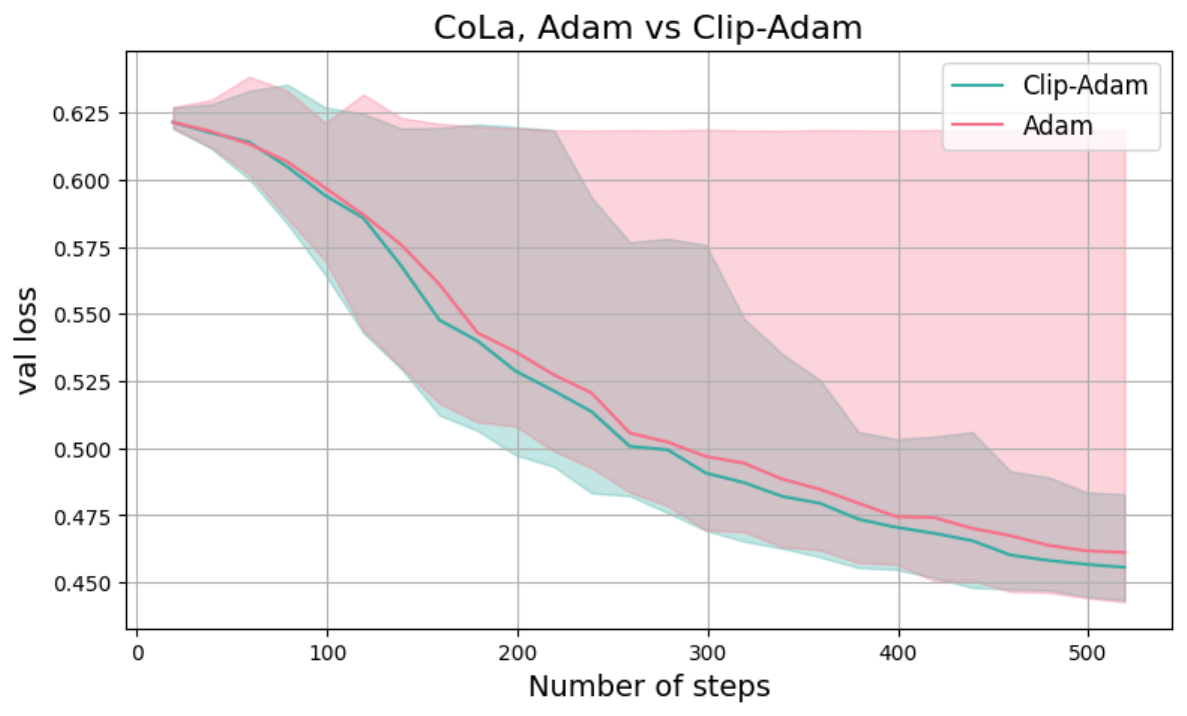}%
            \label{subfig:cola_adam_coordinate}%
            \includegraphics[width=.5\linewidth]{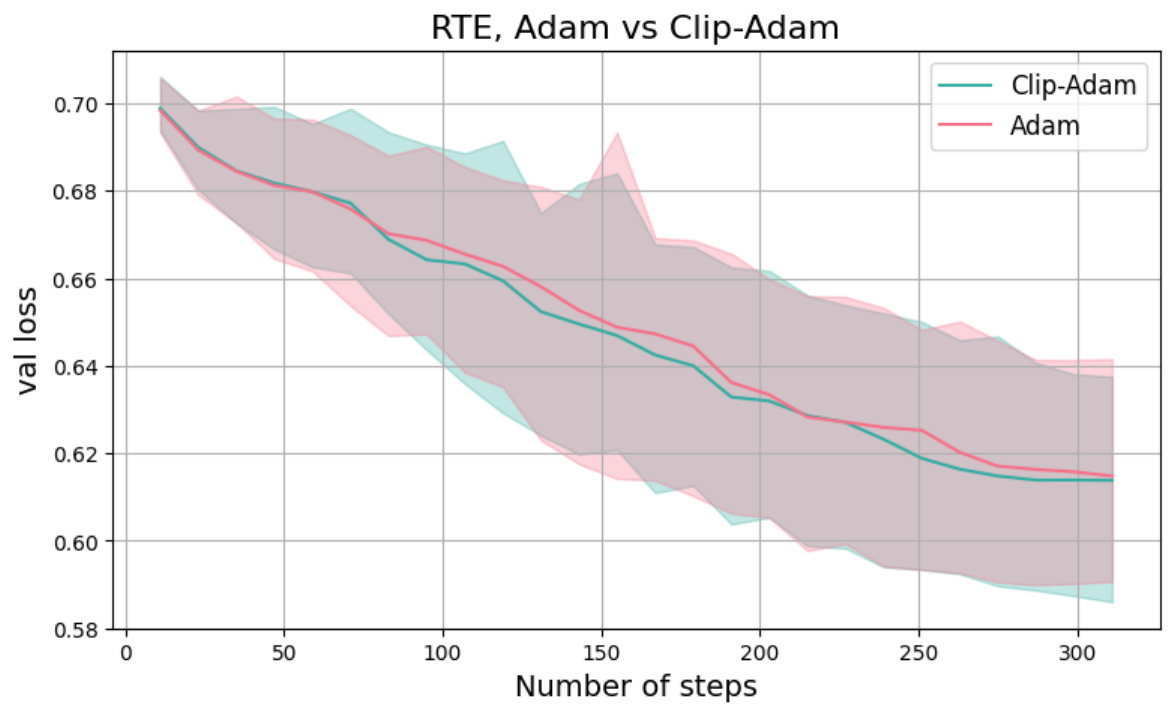}%
            \label{subfig:rte_adam_coordinate}%
        \caption{Validation loss for ALBERT Base v2 fine-tuning task on the CoLa and RTE datasets. \algname{Clip-Adam} is used with coordinate-wise clipping ($\lambda = 0.02$ for CoLa and $\lambda = 0.005$ for RTE).}
        \label{fig:adam_vs_coordinate_clip_adam}
\end{figure*}

\paragraph{Additional results.} In the main part of our work, we present the results for \algname{Clip-Adam} with layer-wise clipping. In Figure~\ref{fig:adam_vs_coordinate_clip_adam}, we provide the results in the case of coordinate-wise clipping. In general, they are quite similar to the ones given in Figure~\ref{fig:adam_vs_clip_adam}, indicating that both clipping strategies can be useful in practice and improve the high-probability convergence of \algname{Adam}.

We also conducted experiments with \algname{Clip-AdamD} and compared its performance with \algname{Clip-Adam}. We tuned parameter $\epsilon$ defining $b$ as $b = \epsilon \bm{1}$, where $\bm{1} = (1,1,\ldots, 1)^\top \in \R^d$. Tuning was performed in two phases: during the first phase, we selected the best values of $\epsilon$ from $\{10^{-8}, 10^{-7}, 10^{-6}, 10^{-5}, 10^{-4}, 10^{-3}, 10^{-2}\}$, and then for every selected $\hat\epsilon$ we tried $\epsilon \in \{0.2\hat\epsilon, 0.5\hat\epsilon, 0.8\hat\epsilon, 2\hat\epsilon, 5\hat\epsilon, 8\hat\epsilon\}$. In the case of CoLa dataset, the best $\epsilon$ was $2 \cdot 10^{-6}$, and in the case of RTE dataset, the best $\epsilon$ was $2 \cdot 10^{-6}$.

The results are presented\footnote{In the plots, we use the name \algname{Clip-RAdamD}, which is equivalent to \algname{Clip-AdamD} as explained at the beginning of Appendix~\ref{appendix:proofs-convergence}.} in Figure~\ref{fig:clip_Adam_vs_clip_RAdamD} and show that \algname{Clip-AdamD} performs worse than \algname{Clip-Adam}, especially on CoLa dataset. However, it is worth mentioning that the clipping level was selected the same for both \algname{Clip-Adam} and \algname{Clip-AdamD}. Moreover, we have not tried to use bias correction for \algname{Clip-AdamD} that could also improve its performance. Finally, the tuning of $\epsilon$ parameter over multiple runs can also improve the result of \algname{Clip-AdamD}.

\begin{figure*}[t]
            \includegraphics[width=.5\linewidth]{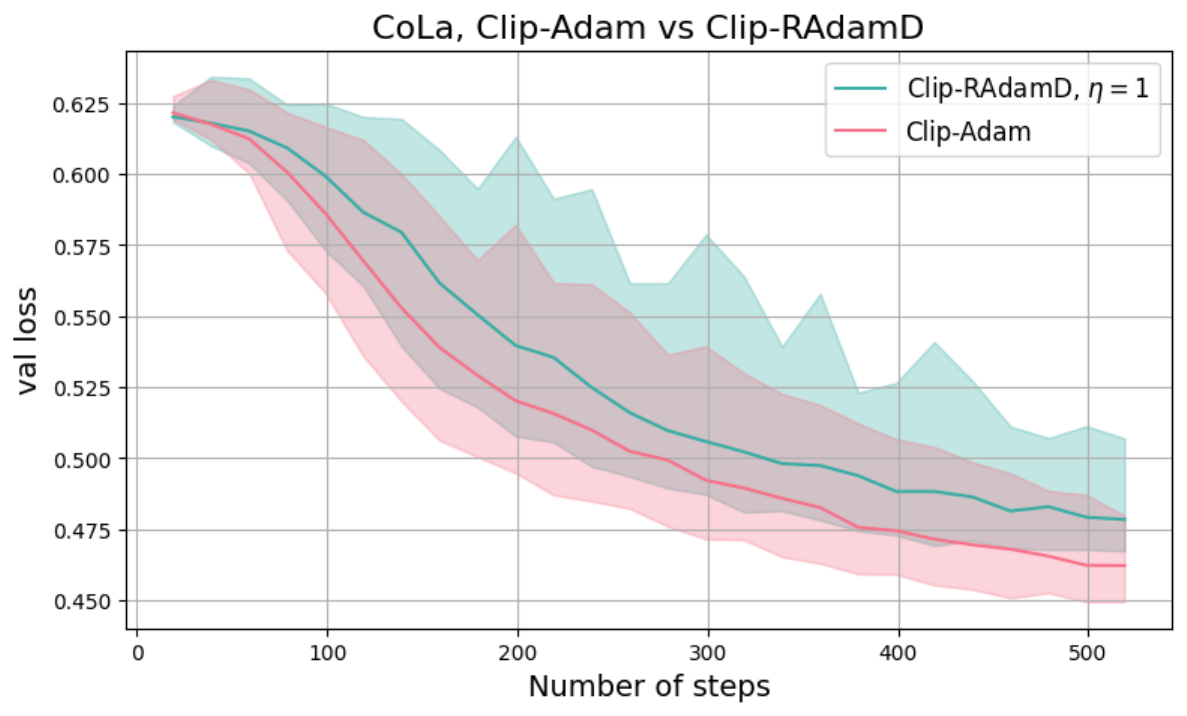}%
            \label{subfig:cola_clip_RAdamD_1}%
            \includegraphics[width=.5\linewidth]{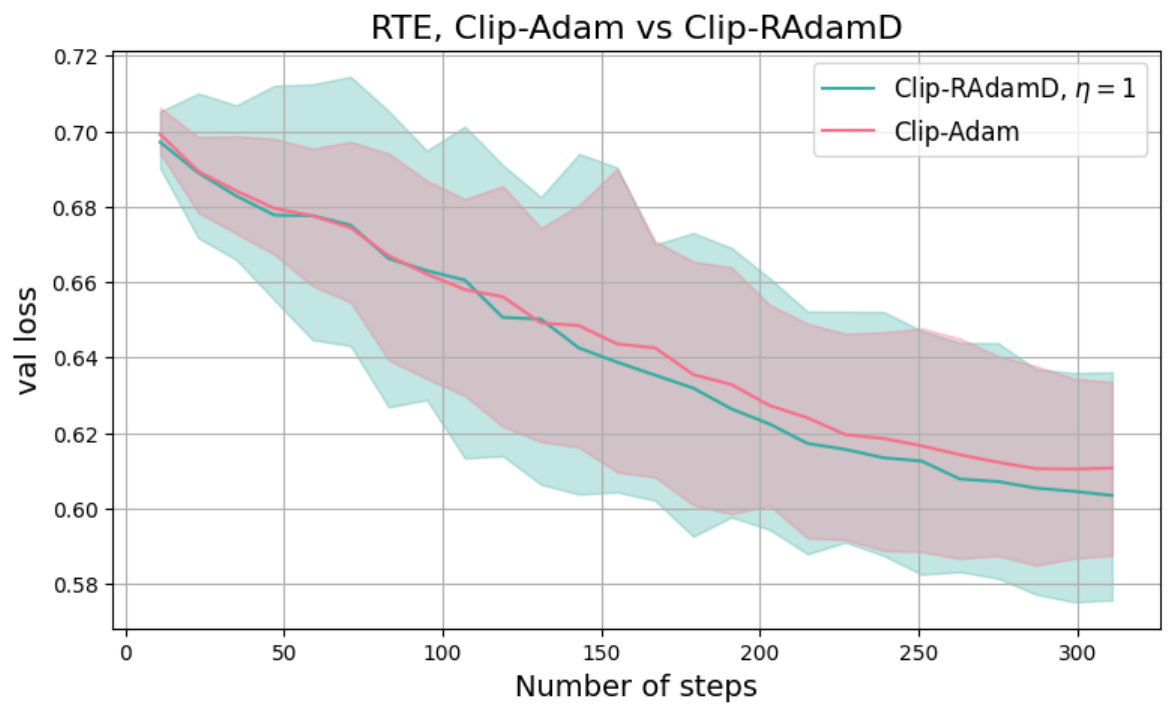}%
            \label{subfig:rte_clip_RAdamD_1}%
        \caption{Validation loss for ALBERT Base v2 fine-tuning task on the CoLa and RTE datasets.}
        \label{fig:clip_Adam_vs_clip_RAdamD}
\end{figure*}

\begin{figure*}[t]
            \includegraphics[width=.5\linewidth]{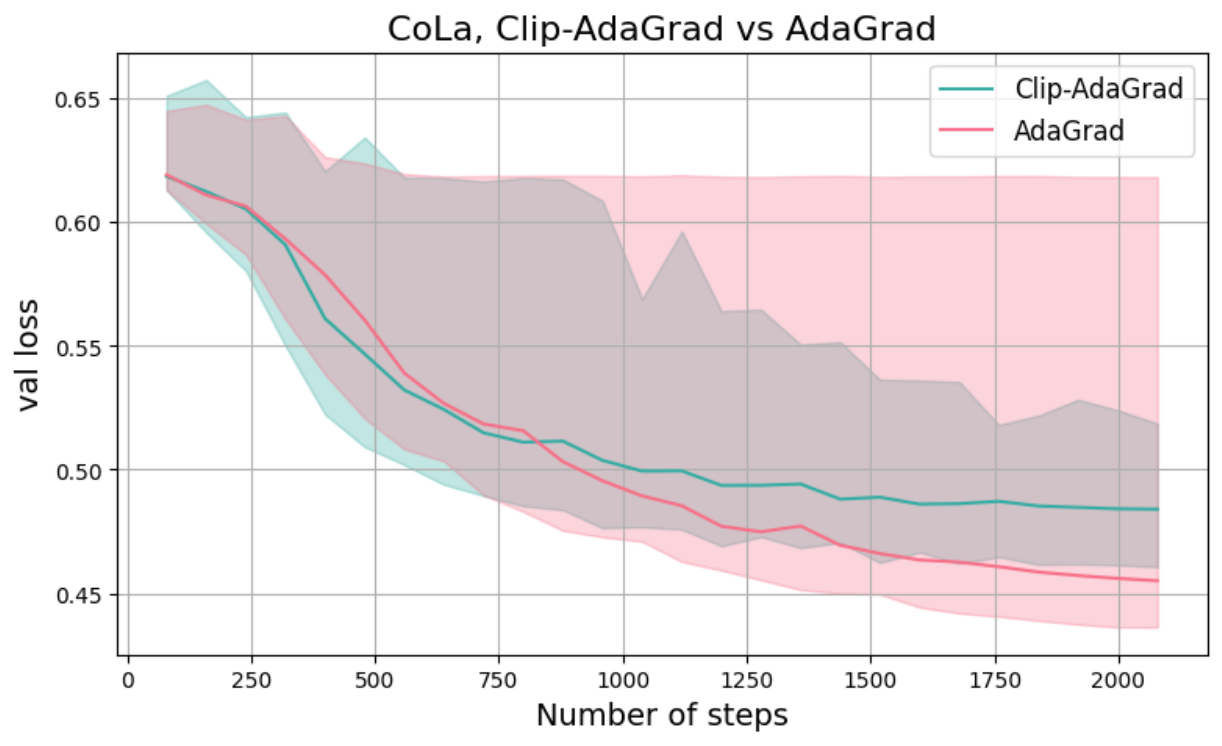}%
            \label{subfig:cola_adagrad}%
            \includegraphics[width=.5\linewidth]{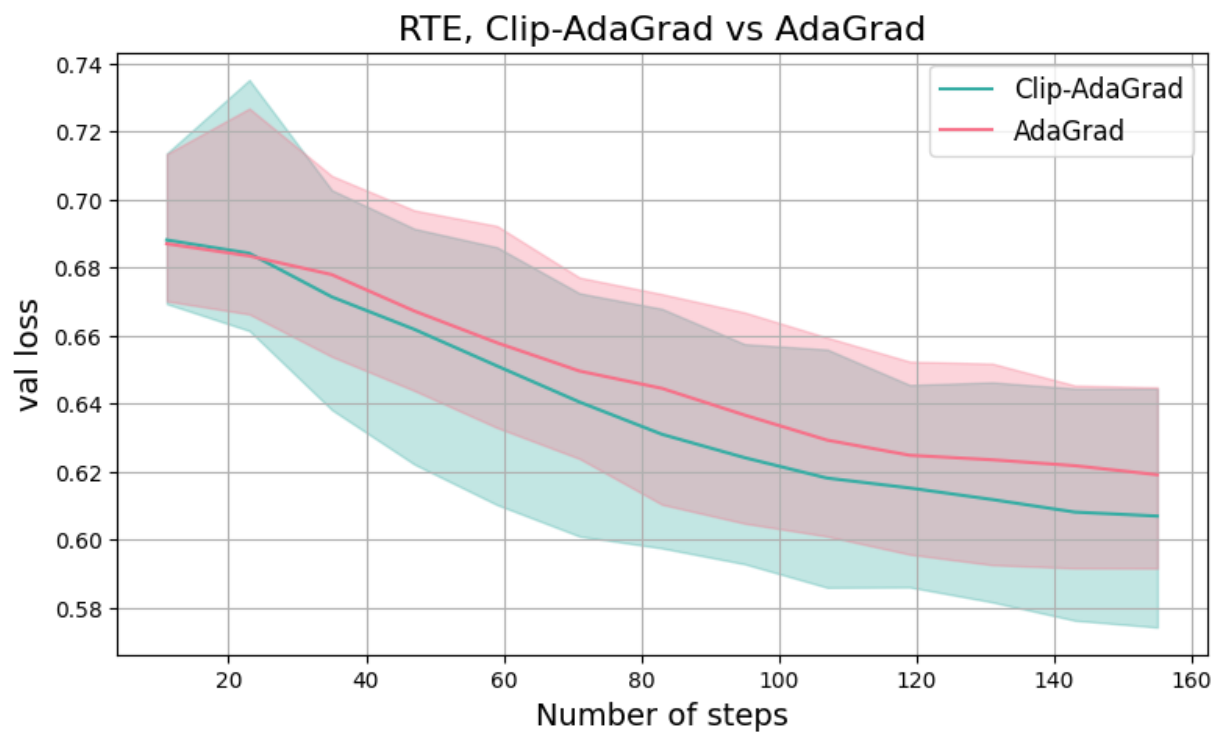}%
            \label{subfig:rte_adagrad}%
        \\
            \includegraphics[width=.5\linewidth]{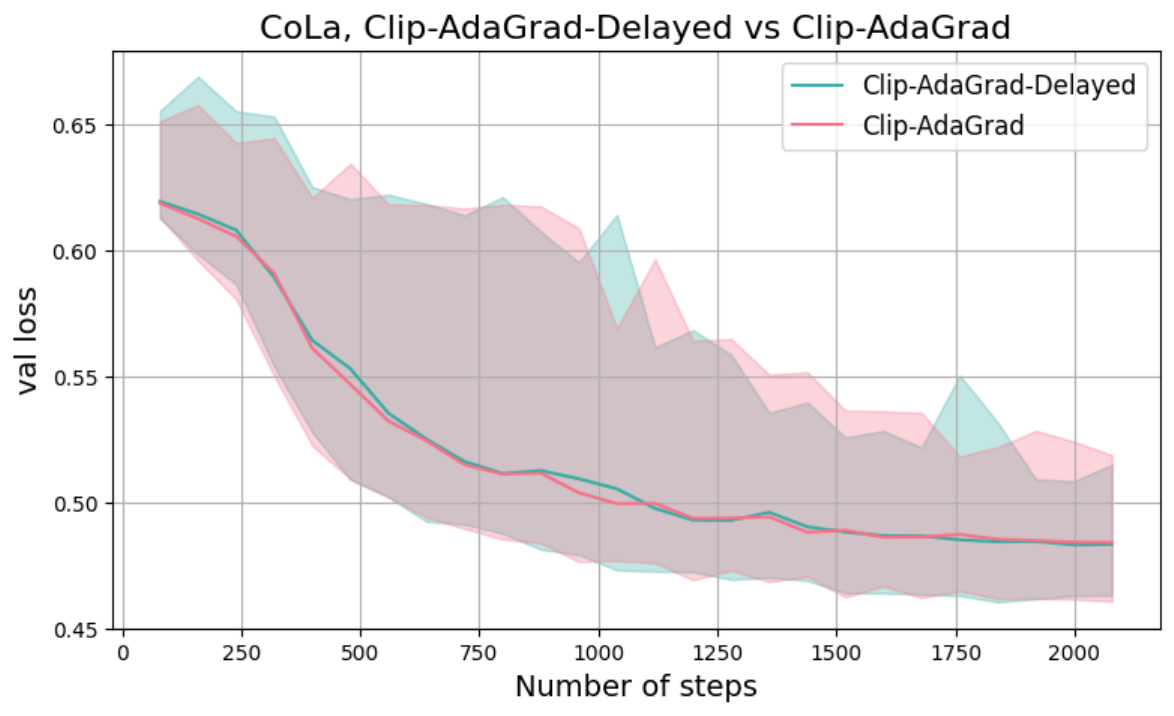}%
            \label{subfig:cola_adagrad_delayed}%
            \includegraphics[width=.5\linewidth]{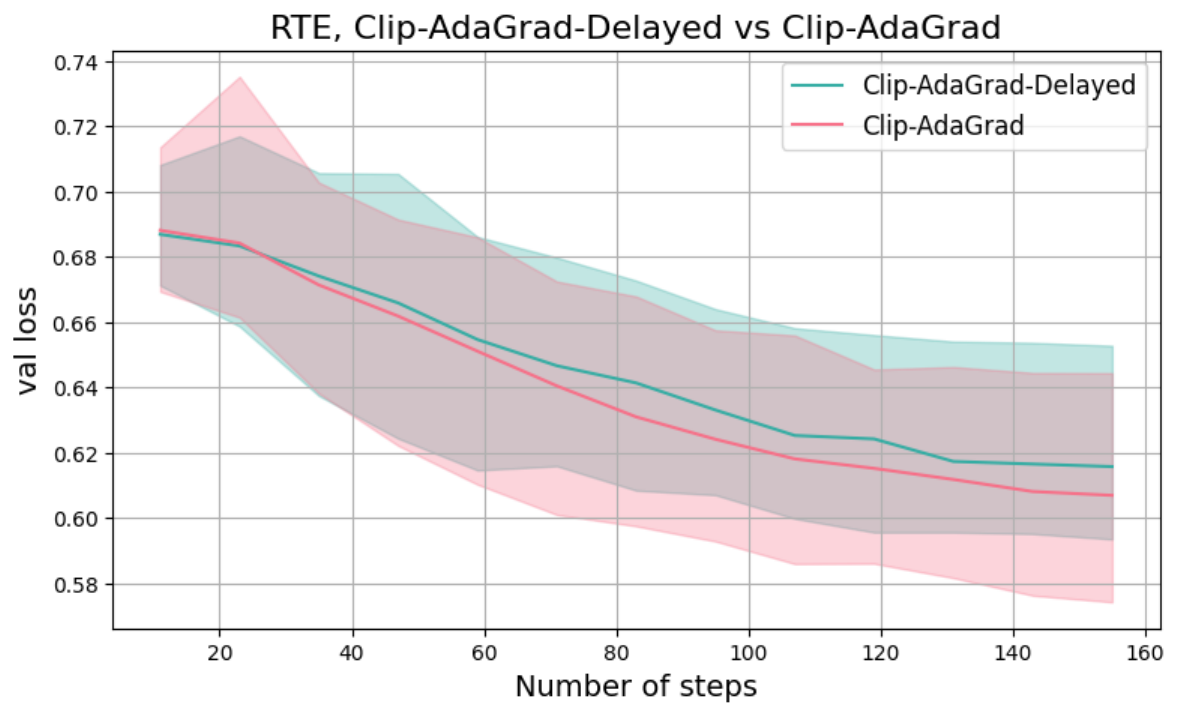}%
            \label{subfig:rte_adagrad_delayed}%
        \caption{Validation loss for ALBERT Base v2 fine-tuning task on the CoLa and RTE datasets.}
        \label{fig:adagrad_vs_clip_adagrad}
\end{figure*}

Finally, we also conducted similar experiments with \algname{AdaGrad}-based methods with and without clipping/delay. Parameter $\gamma$ and batchsize were tuned across the same values as in the case of \algname{Adam}. Moreover, similarly to the experiments with \algname{Adam}, we used standard layer-wise clipping for \algname{AdaGrad}-based methods since it gave better results. The final parameters are (i) $\gamma = 10^{-4}$, batchsize $4$, $\lambda = 5$ for \algname{(Clip-)AdaGrad} on CoLa dataset, (ii) $\gamma = 10^{-4}$, batchsize $16$, $\lambda = 1$ for \algname{(Clip-)AdaGrad} on RTE dataset, (iii) $\gamma = 10^{-4}$, batchsize $4$, $\lambda = 5$ for \algname{(Clip-)AdaGradD} on CoLa dataset, and (iv) $\gamma = 10^{-4}$, batchsize $16$, $\lambda = 0.1$ for \algname{(Clip-)AdaGradD} on RTE dataset. The results are presented in Figure~\ref{fig:adagrad_vs_clip_adagrad}. For this particular case, there is no big difference between versions of \algname{AdaGrad} with and without clipping, and only for CoLa dataset we see that \algname{Clip-AdaGrad} has much smaller error band than \algname{AdaGrad}.

\subsection{Scaling Up: Fine-Tuning of $355\mathbf{M}$ Model}
\label{appendix:scalingup}

\paragraph{Setup.}
We replicate the setup from \cref{appendix:albert} for fine-tuning the $355\mathbf{M}$ parameter RoBERTa Large model \cite{liu2019robertarobustlyoptimizedbert} on the two GLUE \cite{wang2018glue} datasets: QNLI ($116\mathbf{k}$ question-answer pairs) and CoLa ($10.7\mathbf{k}$ linguistic acceptability examples). 
Keeping identical hyperparameters, including the learning rate scheduler, warmup ratio, and optimizer parameters ($\beta_1$, $\beta_2$), we employ a moderate batch size of $16$ for both datasets to ensure comparability with the previous finding. 
Through extensive testing of layer-wise clipping with $\lambda \in {0.1, 0.2, 0.5, 1, 2, 5, 10}$, we identified $\lambda = 1$ as the best value for both tasks, aligning with prior work specialized in fine-tuning of large language models \cite{yang2022improvingstabilityfinetuningpretrained}. 
The best values that we have selected in this part are used to build noise histograms and compare algorithms with and without clipping.

\paragraph{Noise Histograms.}
We also build the noise histograms (see \cref{fig:qnli_cola_distribution}) for the RoBERTa Large model. 
In the histogram, we quantitatively present the measure the heavy-tailedness of the noise following the recipe from \cite{gorbunov2022clipped}.
After fine-tuning the model with checkpointing on the full dataset (QNLI or CoLa), we compute the true gradient $\nabla f(x)$ through many gradient accumulation steps. 
From identical initial checkpoints, we sample $1000$ mini-batched gradient estimators for CoLa and $100$ for QNLI (all of them are with the batch size of 16), calculating the norm differences $| \nabla f_{\xi}(x) - \nabla f(x)|$ for histogram construction. 
As in \cref{section:experiments,appendix:albert}, we assess the heavy-tailedness of the noise using the metrics $p_{mR}$ and $p_{eR}$. 
Both metrics remain consistently large throughout training, indicating that the noise exhibits significant heavy-tailed behavior. 
Furthermore, the histogram profiles closely resemble the Lévy $\alpha$-stable distribution, as shown in \cref{fig:qnli_cola_distribution}.

\begin{figure*}[t]
            \includegraphics[width=.25\linewidth]{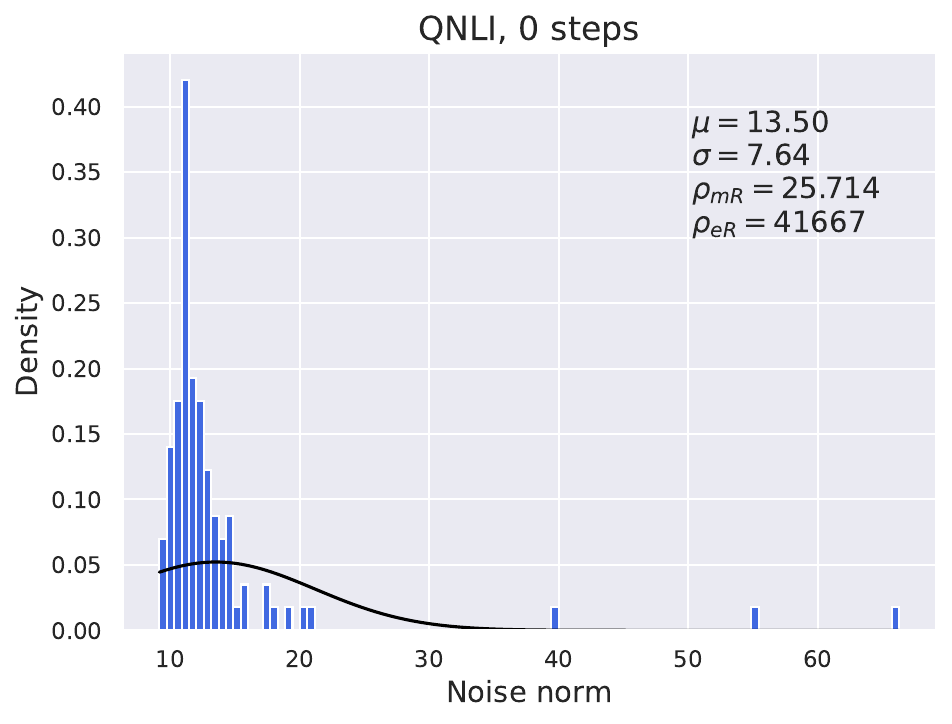}%
            \label{subfig:hist_qnli_0steps_roberta}%
            \includegraphics[width=.25\linewidth]{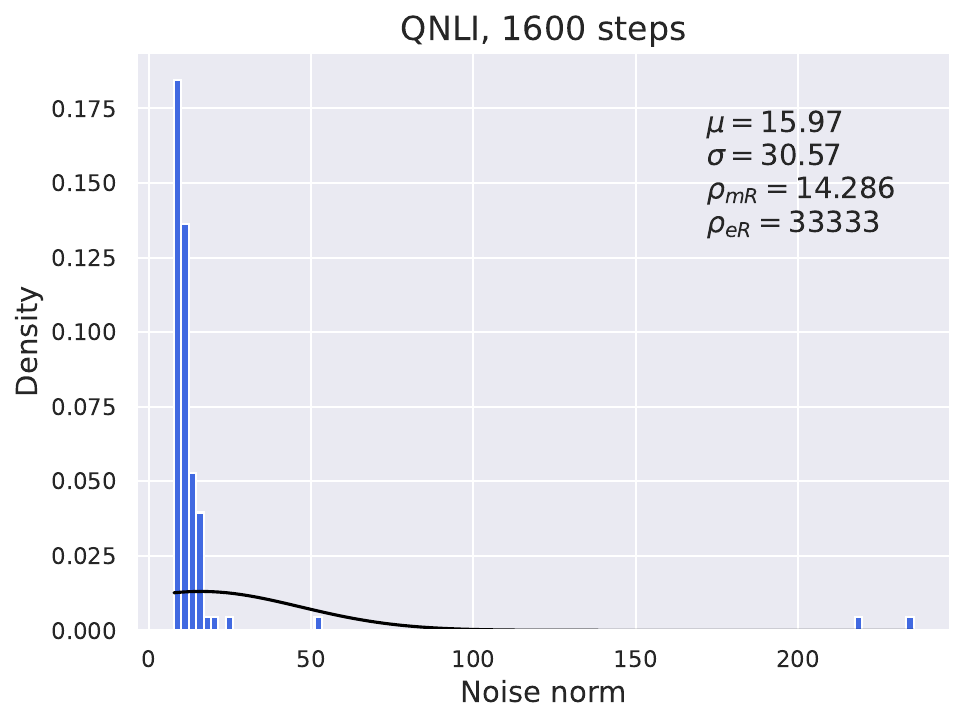}%
            \label{subfig:hist_qnli_1600steps_roberta}%
            \includegraphics[width=.25\linewidth]{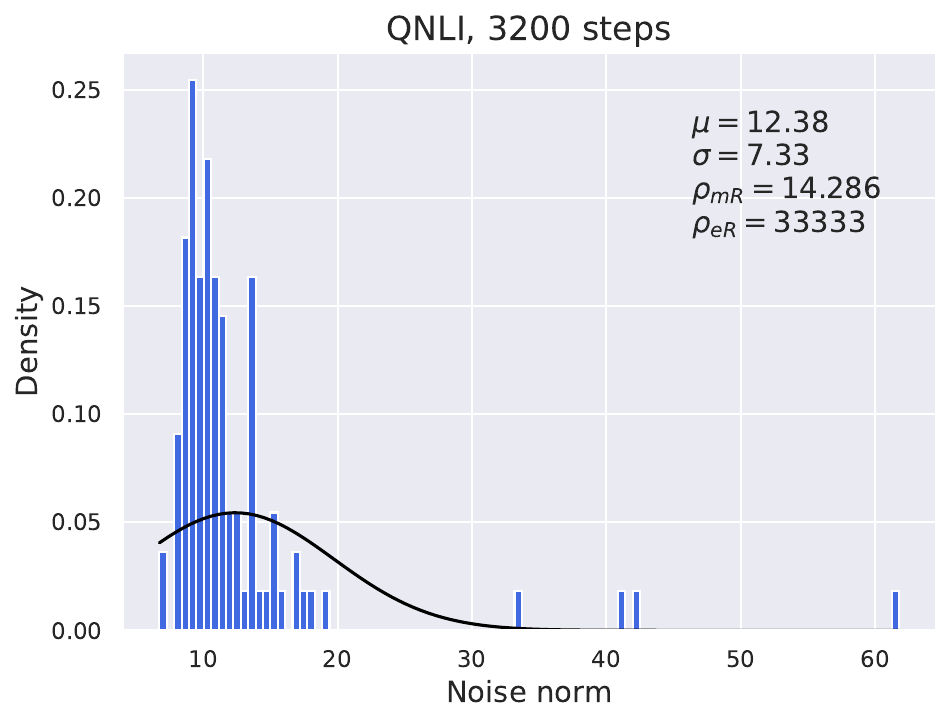}%
            \label{subfig:hist_qnli_3200steps_roberta}%
            \includegraphics[width=.25\linewidth]{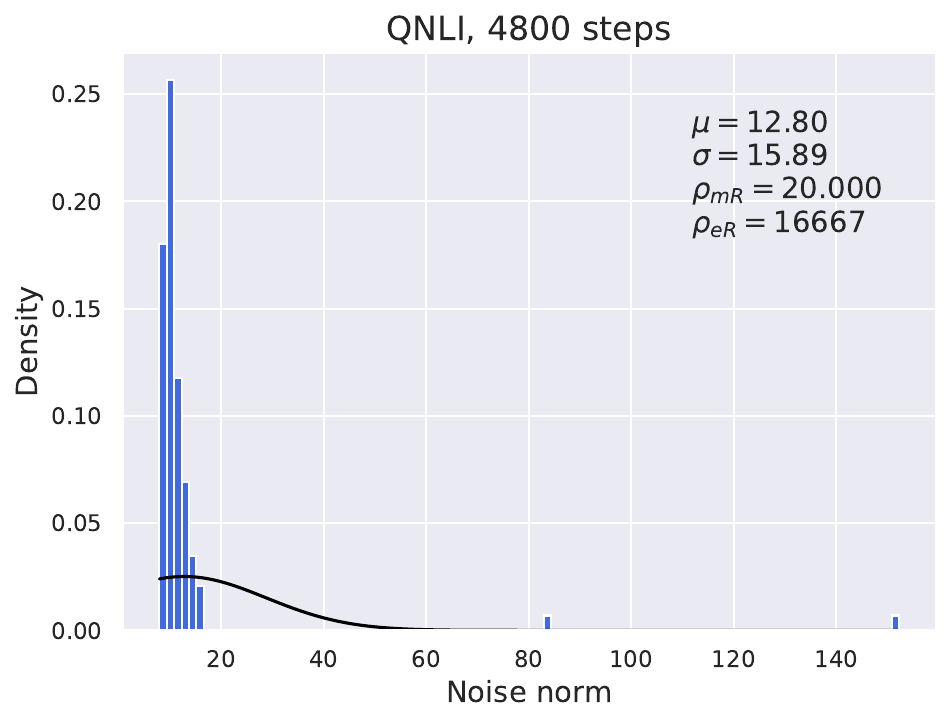}%
            \label{subfig:hist_qnli_4800steps_roberta}%
            \\
            \includegraphics[width=.25\linewidth]{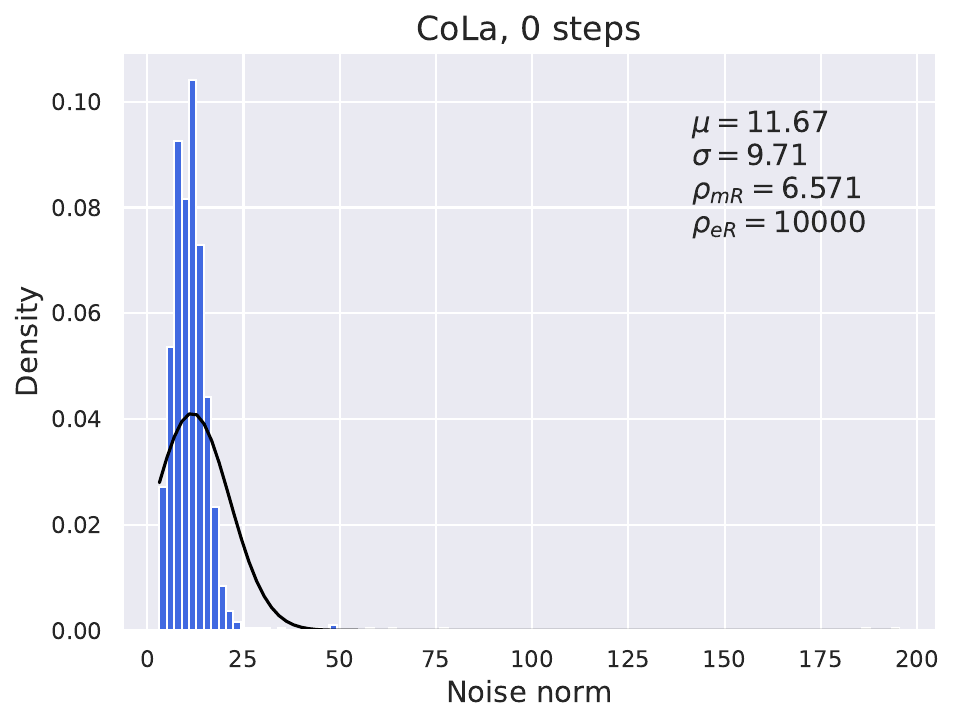}%
            \label{subfig:hist_cola_0steps_roberta}%
            \includegraphics[width=.25\linewidth]{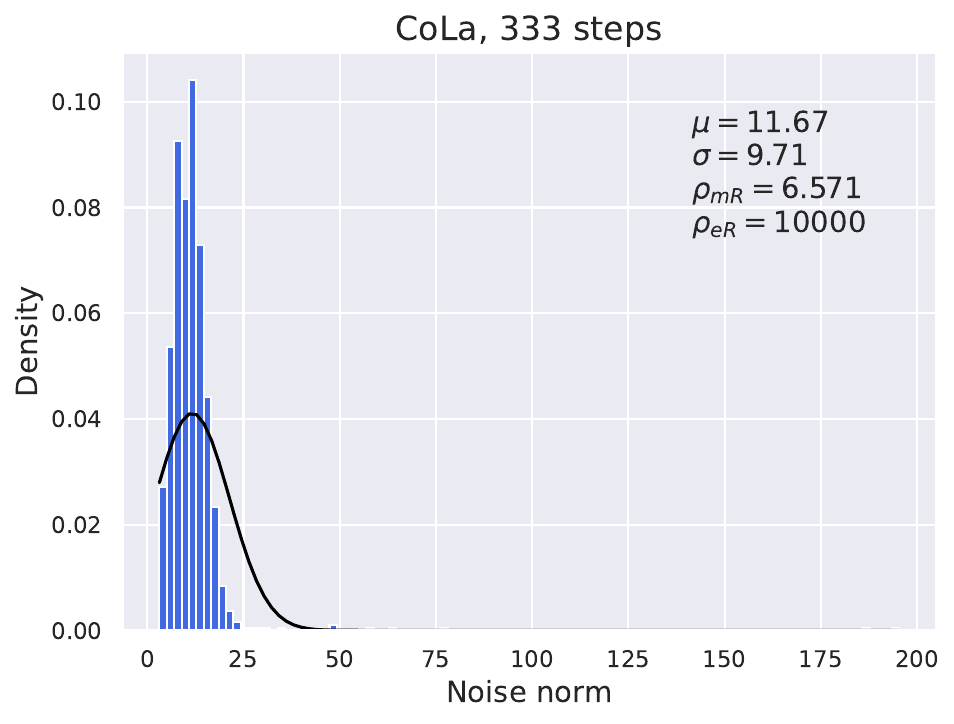}%
            \label{subfig:hist_cola_333steps_roberta}%
            \includegraphics[width=.25\linewidth]{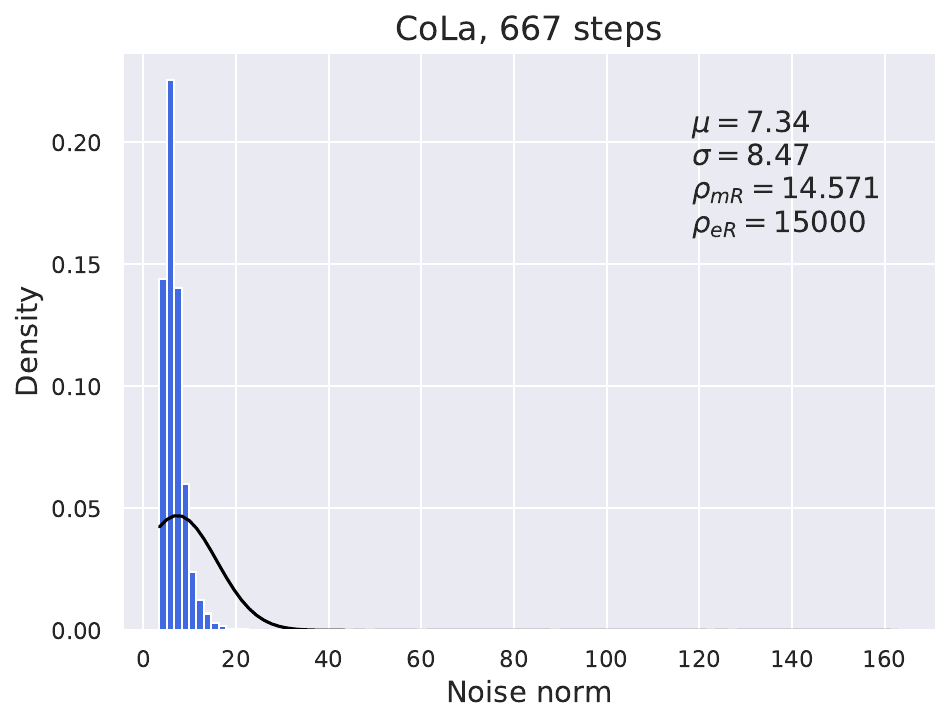}%
            \label{subfig:hist_cola_667steps_roberta}%
            \includegraphics[width=.25\linewidth]{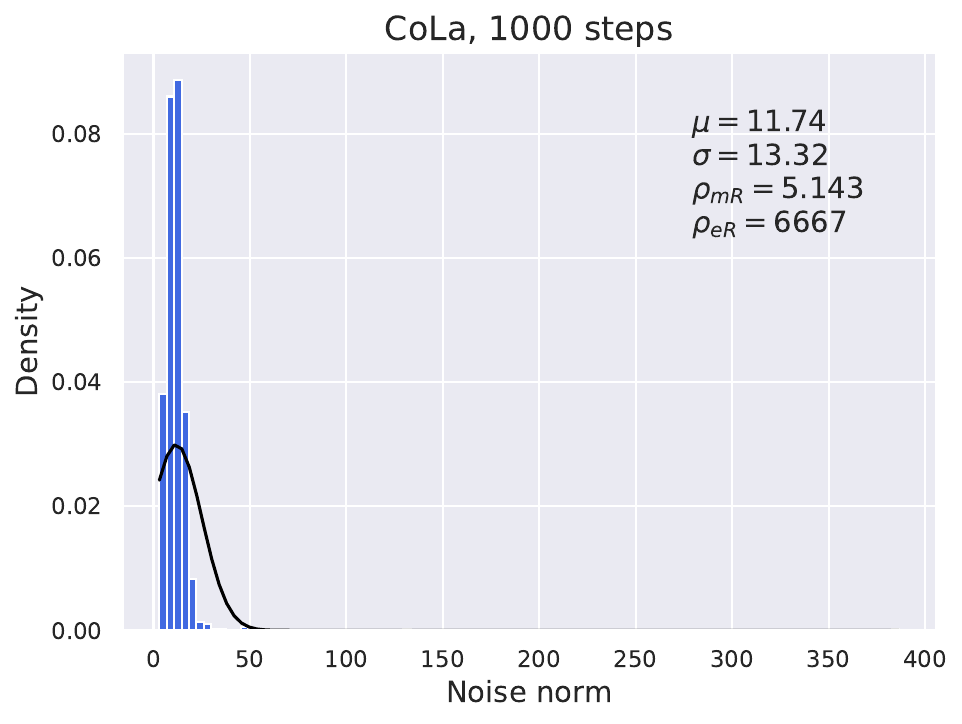}%
            \label{subfig:hist_cola_1000steps_roberta}%
        \caption{Gradient noise evolution for \algname{Adam} on QNLI (the first row) and CoLa (the second row) datasets during RoBERTa Large fine-tuning. Histograms were evaluated after $0$ steps, after  $\approx \nicefrac{1}{3}$ and $\approx \nicefrac{2}{3}$ of all steps, and in the end.}
        \label{fig:qnli_cola_distribution}
\end{figure*}

\paragraph{Comparison of Methods With and Without Clipping.}
We replicate our earlier setup with one key modification: the number of random seeds is reduced from $100$ to $5$, due to computational constraints when scaling to a $355\mathbf{M}$-parameter model—particularly given QNLI's substantially larger size compared to CoLa or RTE.
At each step, we compute the median validation loss, along with the $5$-th and $95$-th percentiles.

The comparison between \algname{Adam} and \algname{Clip-Adam} is shown in \cref{fig:adam_vs_clip_roberta}, while the comparison between \algname{AdaGrad} and \algname{Clip-AdaGrad} is presented in \cref{fig:adagrad_vs_clip_roberta}.
Notably, for the new model and across both datasets, the clipped variants consistently outperform their unclipped counterparts.

\begin{figure*}[t]
            \includegraphics[width=.5\linewidth]{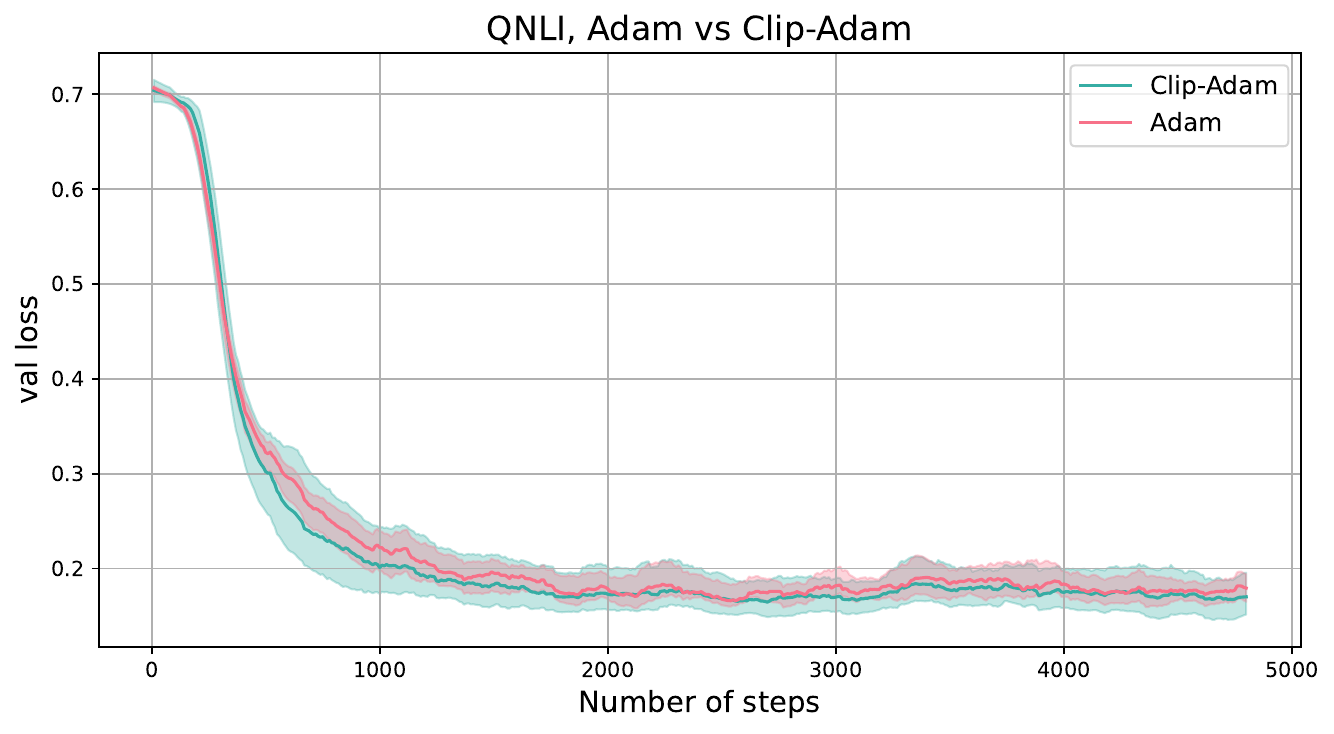}%
            \label{subfig:qnli_adam_vs_clip_roberta}%
            \includegraphics[width=.5\linewidth]{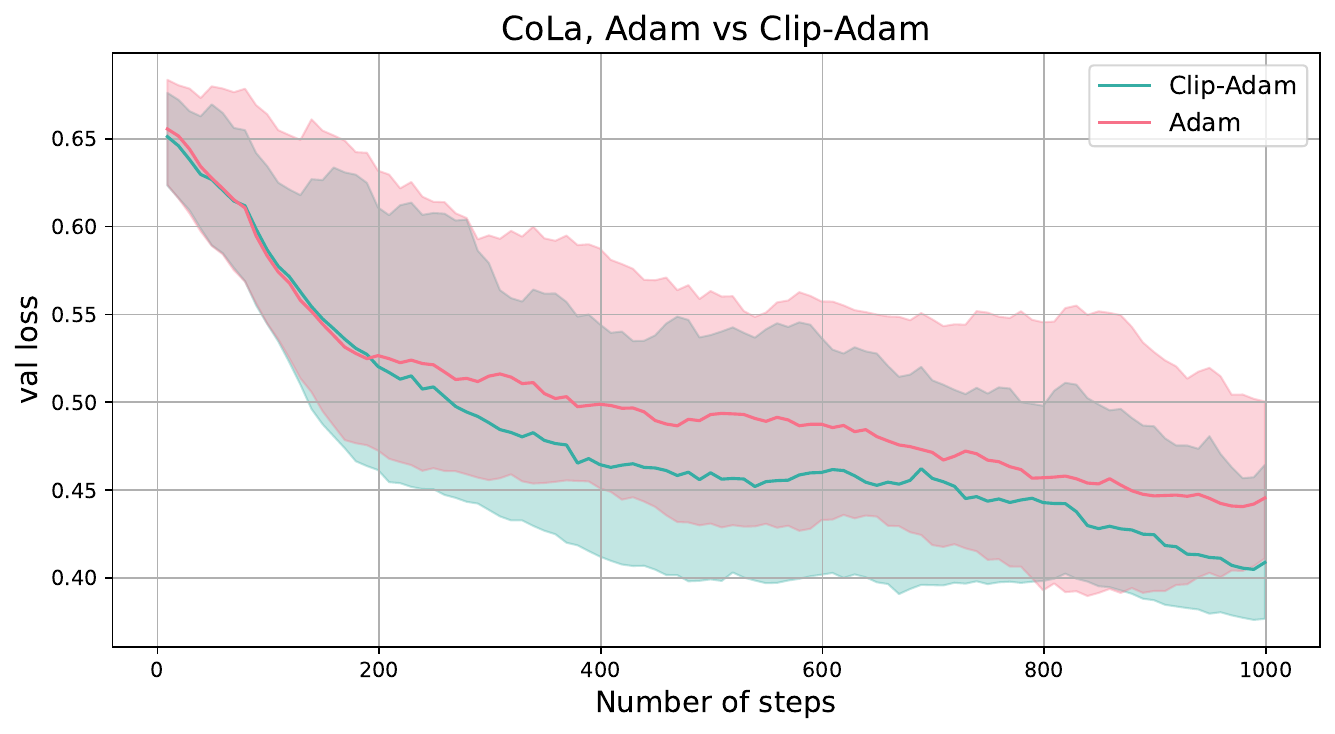}%
            \label{subfig:cola_adam_vs_clip_roberta}%
        \caption{Validation loss for RoBERTa Large fine-tuning task on the QNLI and CoLa datasets. \algname{Clip-Adam} is used with layer-wise clipping.}
        \label{fig:adam_vs_clip_roberta}
\end{figure*}

\begin{figure*}[t]
            \includegraphics[width=.5\linewidth]{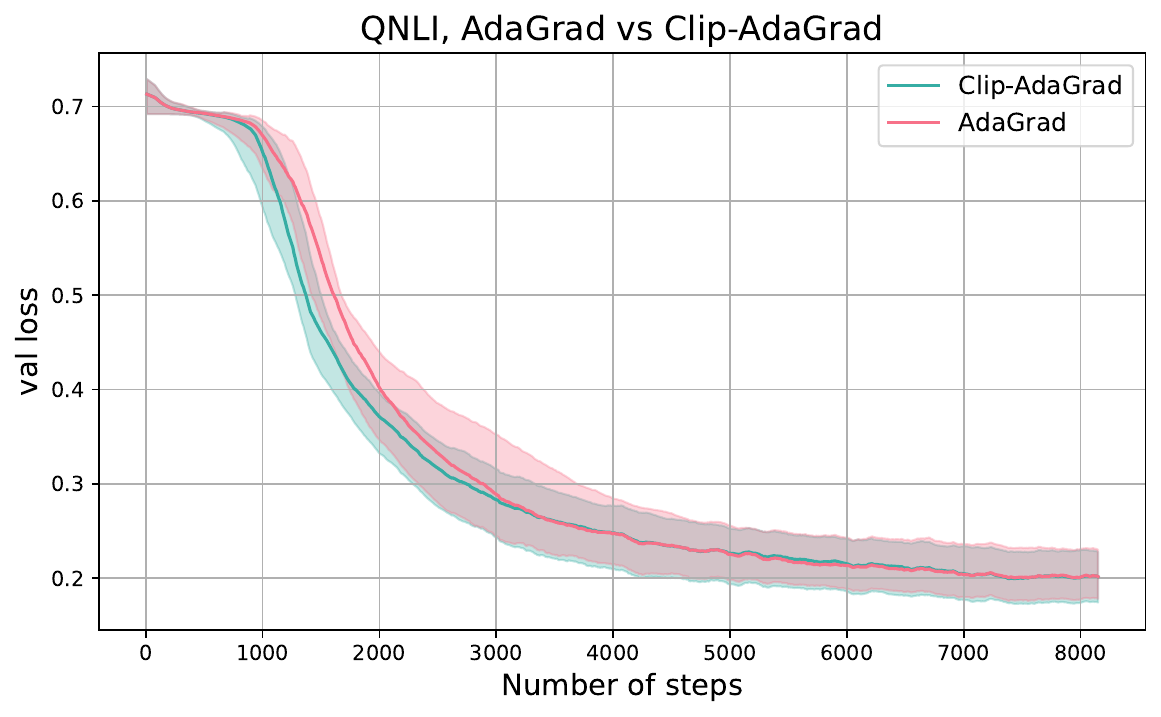}%
            \label{subfig:qnli_adagrad_vs_clip_roberta}%
            \includegraphics[width=.505\linewidth]{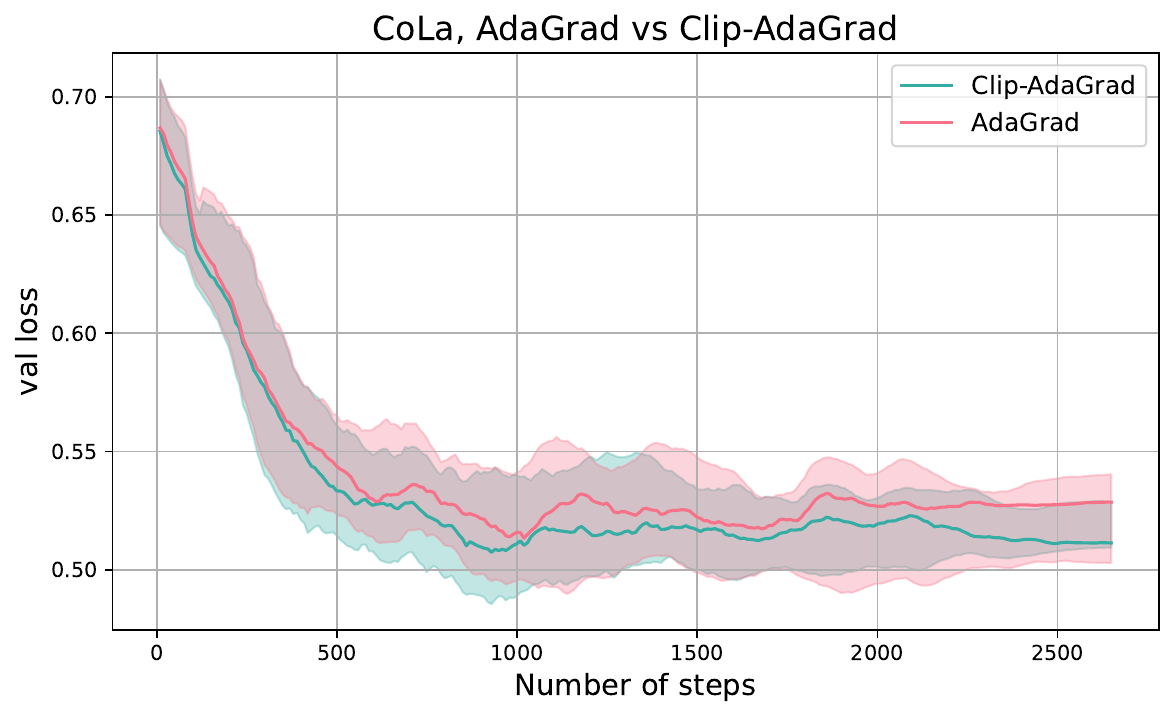}%
            \label{subfig:cola_adagrad_vs_clip_roberta}%
        \caption{Validation loss for RoBERTa Large fine-tuning task on the QNLI and CoLa datasets. \algname{Clip-AdaGrad} is used with layer-wise clipping.}
        \label{fig:adagrad_vs_clip_roberta}
\end{figure*}

\end{document}